\documentclass{article}

\pdfoutput=1

\usepackage{microtype}
\usepackage[pdftex]{graphicx}
\usepackage{subfigure}
\usepackage{booktabs} 

\usepackage{hyperref}



\usepackage[accepted]{icml2022}

\usepackage{amsmath}
\usepackage{amssymb}
\usepackage{mathtools}
\usepackage{amsthm}

\usepackage[capitalize,noabbrev]{cleveref}

\theoremstyle{plain}
\newtheorem{theorem}{Theorem}[section]

\newtheorem{lemma}[theorem]{Lemma}

\theoremstyle{definition}
\newtheorem{definition}[theorem]{Definition}

\theoremstyle{remark}

\usepackage[textsize=tiny]{todonotes}



\usepackage{stmaryrd}

\newtheorem*{theorem*}{Theorem}
\newtheorem*{proposition*}{Proposition}
\newtheorem*{lemma*}{Lemma}

\theoremstyle{definition}
\newtheorem*{assumption*}{Assumption}

\newcommand{\taud}{\tau_{\delta}}
\newcommand{\simplex}{\triangle_{K}}
\newcommand{\cK}{\mathcal{K}}
\newcommand{\cZ}{\mathcal{Z}}
\newcommand{\cM}{\mathcal{M}}
\newcommand{\cF}{\mathcal{F}}
\newcommand{\cL}{\mathcal{L}}
\newcommand{\cE}{\mathcal{E}}

\newcommand{\cC}{\mathcal{C}}
\newcommand{\cO}{\mathcal{O}}
\newcommand{\cT}{\mathcal{T}}
\newcommand{\cH}{\mathcal{H}}
\newcommand{\cX}{\mathcal{X}}
\newcommand{\cY}{\mathcal{Y}}
\newcommand{\cB}{\mathcal{B}}
\newcommand{\Real}{\mathbb{R}}
\newcommand{\Natural}{\mathbb{N}}
\newcommand{\bbS}{\mathbb{S}}

\newcommand{\1}{\mathbf{1}}
\newcommand{\probability}{\mathbb{P}}
\DeclareMathOperator*{\expectedvalue}{\mathbb{E}}

\newcommand{\gaussdistr}{\mathcal{N}}

\newcommand{\transpose}{^\mathsf{\scriptscriptstyle T}}
\DeclareMathOperator*{\argmin}{arg\,min}
\DeclareMathOperator*{\argmax}{arg\,max}
\DeclareMathOperator*{\add}{add}
\DeclareMathOperator*{\mul}{mul}

\newcommand{\eqdef}{\buildrel \text{def}\over =}



\begin{document}

\twocolumn[
\icmltitle{Choosing Answers in $\varepsilon$-Best-Answer Identification for Linear Bandits}



\icmlsetsymbol{equal}{*}

\begin{icmlauthorlist}
\icmlauthor{Marc Jourdan}{scool}
\icmlauthor{R\'emy Degenne}{scool}
\end{icmlauthorlist}

\icmlaffiliation{scool}{Univ. Lille, CNRS, Inria, Centrale Lille, UMR 9198-CRIStAL, F-59000 Lille, France}

\icmlcorrespondingauthor{Marc Jourdan}{marc.jourdan@inria.fr}

\icmlkeywords{Machine Learning, ICML, Bandits, Pure Exploration, Best-Arm Identification, Linear Bandits}

\vskip 0.3in
]



\printAffiliationsAndNotice{}  

\begin{abstract}
	In pure-exploration problems, information is gathered sequentially to answer a question on the stochastic environment.
While best-arm identification for linear bandits has been extensively studied in recent years, few works have been dedicated to identifying one arm that is $\varepsilon$-close to the best one (and not exactly the best one).
In this problem with several correct answers, an identification algorithm should focus on one candidate among those answers and verify that it is correct.
We demonstrate that picking the answer with highest mean does not allow an algorithm to reach asymptotic optimality in terms of expected sample complexity.
Instead, a \textit{furthest answer} should be identified.
Using that insight to choose the candidate answer carefully, we develop a simple procedure to adapt best-arm identification algorithms to tackle $\varepsilon$-best-answer identification in transductive linear stochastic bandits. Finally, we propose an asymptotically optimal algorithm for this setting, which is shown to achieve competitive empirical performance against existing modified best-arm identification algorithms.
\end{abstract}

\section{Introduction} \label{sec:section_introduction}

The multi-armed bandit (MAB) setting is an extensively studied problem in sequential decision making \citep{robbins_1952_AspectsSequentialDesign, lattimore_2020_BanditAlgorithms}. The environment is represented by a set of arms, each associated with an unknown reward distribution. The agent interacts with it by sequentially ``pulling'' arms, that is choosing an arm and observing a sample from its distribution.
We focus on the pure-exploration framework in which the objective of the agent is to answer a query as fast and reliably as possible, while disregarding the accumulated cost. We consider noisy linear observations depending on an unknown parameter $\mu$.

The pure-exploration setting for stochastic bandits was first studied in \citet{even_dar_2002_PACBoundsMultiarmed} and \citet{bubeck_2009_PureExplorationMultiArmed}. It has been studied in two major theoretical frameworks \citep{audibert_2010_BestArmIdentification,gabillon_2012_BestArmIdentification,jamieson_2014_BestarmIdentificationAlgorithms,kaufmann_2016_ComplexityBestArm}: the \textit{fixed-budget} setting and the \textit{fixed-confidence} setting. In the fixed-budget setting, the objective is to minimize the probability of misidentifying a correct answer given a fixed number of samples. We consider the fixed-confidence setting, where the agent aims at minimizing the number of pulls to identify a correct answer with confidence $1-\delta$.\looseness=-1

The most well known pure exploration setting is best-arm identification (BAI) \citep{audibert_2010_BestArmIdentification,chen_2017_InstanceOptimalBounds}, in which the goal is to return the arm with largest expected reward.
Numerous variants of BAI have been considered in recent years: linear bandits \citep{soare_2014_BestArmIdentificationLinear,zaki_2020_ExplicitBestArm, degenne_2020_GamificationPureExploration}, thresholding bandits \citep{locatelli_2016_OptimalAlgorithmThresholding,cheshire_2021_ProblemDependentView}, minimum threshold \citep{kaufmann_2018_SequentialTestLowest}, combinatorial bandits \citep{chen_2014_CombinatorialPureExploration, katz_samuels_2020_EmpiricalProcessApproach, jourdan_2021_EfficientPureExploration}, top-m identification \citep{kalyanakrishnan_2012_PACSubsetSelection, katz_samuels_2019_TopFeasibleArm, reda_2021_TopmIdentificationLinear}, matching bandits \citep{sentenac_2021_PureExplorationRegret}, logistic bandits \citep{jun_2021_ImprovedConfidenceBounds}, identifying all $\varepsilon$-optimal answers \citep{mason_2020_FindingAllEpsilon,almarjani_2022_ComplexityAllEpsilon}, etc.

When the gap between the best and the second best arm is small, BAI problems are difficult, meaning that an algorithm needs a large number of samples to be correct. To avoid wasteful queries, practitioners might be interested in the easier task of identifying one answer that is $\varepsilon$-close to the best one, but not exactly the best one.

In this work, we consider $(\varepsilon,\delta)$-PAC best-answer identification ($\varepsilon$-BAI) for transductive linear bandits. In transductive bandits \citep{fiez_2019_SequentialExperimentalDesign}, the set of arms $\cK$ that can be pulled by the agent is different from the set of answers $\cZ$ on which the identification procedure focuses. In contrast to best-answer identification, our agent aims at identifying one $\varepsilon$-optimal answer (defined below) among the existing ones \citep{mannor_2004_SampleComplexityExploration,even_dar_2006_ActionEliminationStopping,sabato_2019_EpsilonBestArmIdentificationPayPerRewarda}.
Note that the ranking and selection literature explores this question with a different approach, see \citet{hong_2021_ReviewRankingSelection} for a review. $\varepsilon$-BAI has been studied for MAB \citep{garivier_2019_NonAsymptoticSequentialTests}, spectral bandits \citep{kocak_2021_EpsilonBestArm}, on the unit sphere \citep{jedra_2020_OptimalBestarmIdentification} and as a special case of the multiple correct answer setting \citep{degenne_2019_PureExplorationMultiple}.
As we will show, the asymptotic complexity of $\varepsilon$-BAI is governed by the identification of the \textit{furthest answer}. This is the $\varepsilon$-optimal answer for which it is easiest to verify $\varepsilon$-optimality. Therefore, all $\varepsilon$-optimal answers are not equivalent.
When aiming at asymptotic optimality of the expected sample complexity, algorithms should select the candidate $\varepsilon$-optimal answer more carefully than simply using the \textit{greedy answers} defined as $z^{\star}(\mu) \eqdef \argmax_{z \in \cZ} \langle \mu, z\rangle$ (the answers for which the reward is maximal). In the special case of BAI, the assumption $|z^{\star}(\mu)| =1$ is made and algorithms use the greedy answer, which is the unique correct answer to identify.

For our algorithm, we adopt a saddle-point (or game) approach between the agent and the \textit{nature}, aiming at iteratively approximating the lower bound on the expected sample complexity. The goal is to design an asymptotically optimal algorithm with competitive empirical performance in finite-time regime.

\paragraph{Contributions} \textbf{(1)} We provide an analysis of $\varepsilon$-BAI for transductive linear bandits and highlight a phenomenon which was overlooked by previous work. The choice of the candidate $\varepsilon$-optimal answer is crucial to reach asymptotic optimality in terms of expected sample complexity and one should identify the furthest answer instead of using the greedy answers. \textbf{(2)} By carefully choosing the candidate $\varepsilon$-optimal answer and leaving the sampling rule unchanged, we develop a simple procedure to adapt BAI algorithms to be $(\varepsilon, \delta)$-PAC and empirically competitive for $\varepsilon$-BAI in transductive linear stochastic bandits. \textbf{(3)} By leveraging the concept of $\varepsilon$-optimal answer in the sampling rule, we propose an asymptotically optimal algorithm in this setting, which has competitive empirical performance.

\subsection{Problem Statement} \label{sec:subsection_problem_statement}

\paragraph{Transductive Linear Bandits} We consider the \textit{transductive linear bandits} setting, where the collection of arms $\cK \subseteq \Real^d$ and answers $\cZ \subseteq \Real^d$ are finite sets, potentially different \citep{fiez_2019_SequentialExperimentalDesign}, with cardinality $|\cK| = K$ and $|\cZ| = Z$. Taking $\cZ = \cK$ yields the \textit{linear bandits} setting.
We assume that $\cK$ spans $\Real^d$ and denote $L_{\cK} \eqdef \max_{a \in \cK} \|a\|_2$ where $\|a\|_2$ is the euclidean norm of $a \in \cK$.
The interaction with the environment goes as follows: in each round $t \geq 1$, the agent chooses an arm $a_t \in \cK$ and observes $X_{t}^{a_t} = \langle \mu, a_t \rangle + \eta_t$ where $\mu$ is an unknown \textit{mean parameter} belonging to the set $\cM \subseteq \Real^d$, which is known to the agent\footnote{Note that we use $a\in \cK$ as a superscript to denote the index of the element in $\Real^{K}$ corresponding to the vector $a$.}.
The noise $\eta_t \sim \gaussdistr(0, \sigma^2)$ is conditionally independent of the past. Without loss of generality, we consider $\sigma^2 = 1$. Prior works have lifted the Gaussian assumption by considering sub-Gaussian distributions. The focus of our work is to highlight a phenomenon which is orthogonal to the distribution, therefore restricting ourselves to Gaussians is an assumption we are willing to make.\looseness=-1

\paragraph{$(\varepsilon,\delta)$-PAC Best-Answer Identification} In $\varepsilon$-BAI ($\varepsilon \geq 0$) for transductive linear bandits, the agent aims at identifying one of the \textit{$\varepsilon$-optimal} answers by sequentially pulling arms. We address two different notions of $\varepsilon$-optimality: the \textit{additive $\varepsilon$-optimal} answers, $\cZ_{\varepsilon}^{\add}(\mu) \eqdef \left\{ z \in \cZ:  \langle \mu,  z \rangle \geq \max_{z \in \cZ} \langle \mu, z\rangle - \varepsilon \right\}$, and the \textit{multiplicative $\varepsilon$-optimal} answers, $\cZ_{\varepsilon}^{\mul}(\mu) \eqdef \left\{ z \in \cZ: \langle \mu, z \rangle \geq (1-\varepsilon)\max_{z \in \cZ} \langle \mu, z\rangle \right\}$ when $\max_{z \in \cZ} \langle \mu, z\rangle > 0$.
The notation $\cdot^{\{\add,\mul\}}$ is dropped when the statement holds for both notions of $\varepsilon$-optimality. We will deal with both notions with the same method. Previous works mostly consider the additive $\varepsilon$-optimality \citep{garivier_2019_NonAsymptoticSequentialTests,kocak_2021_EpsilonBestArm}. When $\varepsilon=0$ both notions coincide with BAI, in which there is a unique correct answer.
$\varepsilon$-BAI is often seen as a more practical objective than BAI, in cases where getting an answer close to optimal is enough: while a BAI algorithm will spend many samples distinguishing between an $\varepsilon$-optimal answer and the best answer, an $\varepsilon$-BAI algorithm will be able to stop quickly.

\paragraph{Identification Strategy} The $\sigma$-algebra $\cF_t \eqdef \sigma \left( a_1, X_{1}^{a_1}, \cdots, a_t,X_{t}^{a_t}\right)$, called \emph{history}, encompasses all the information available to the agent after $t$ rounds. In the fixed-confidence setting an \textit{identification strategy} is described by three rules: a \textit{sampling rule} $(a_t)_{t \geq 1}$ where $a_t \in \cK$ is $\cF_{t-1}$-measurable, a \textit{stopping rule} $\taud$ which is a stopping time with respect to the filtration $(\cF_t)_{t\geq1}$, also referred as the \textit{sample complexity}, and a \textit{recommendation rule} $\hat{z}$ which is $\cF_{\taud}$-measurable. While the sampling rule could depend on additional internal randomization, the algorithms proposed in this work are deterministic.

In the fixed-confidence setting, the learner is given a confidence parameter $\delta \in (0,1)$. A strategy is said to be $(\varepsilon, \delta)$-PAC if, for all $\mu \in \cM$, with probability at most $\delta$ it terminates while not recommending an $\varepsilon$-optimal answer, i.e. $\probability_{\mu} \left[ \taud < + \infty, \hat{z} \notin \cZ_{\varepsilon}(\mu)\right] \leq \delta$. Among the class of $(\varepsilon, \delta)$-PAC algorithms, our goal is to minimize the expected sample complexity $\expectedvalue_{\mu}[\taud]$.

\section{Comparing \texorpdfstring{$\varepsilon$}{}-Optimal Answers} \label{sec:section_comparing_optimal_answers}

\subsection{Lower Bound} \label{sec:subsection_lower_bound}

For any $w \in (\Real^+)^{K}$, we define the design matrix $V_{w} \eqdef \sum_{a \in \cK} w^a a a\transpose \in \Real^{d \times d}$, which is symmetric and positive semi-definite, and definite if and only if $\text{Span}(\{a \in \cK : w^a \neq 0\}) = \Real^d$. For any symmetric positive semi-definite matrix $V \in \Real^{d \times d}$, we define the semi-norm $\|x\|_{V} \eqdef \sqrt{x\transpose V x}$ for $x \in \Real^d$, which is a norm if $V$ is positive definite. The probability simplex of dimension $K-1$ is denoted by $\simplex$ for all $K \geq 2$.

\paragraph{Alternative to $z$} Given an answer $z \in \cZ$, the \textit{alternative to $z$} is defined as the set of parameters for which $z$ is not an $\varepsilon$-optimal answer, $\neg_{\varepsilon} z \eqdef \overline{\left\{ \lambda \in \cM: z \notin \cZ_{\varepsilon}(\lambda)\right\}}$ where $\overline{X}$ denotes the closure of $X$.
Rewriting it for the additive and multiplicative $\varepsilon$-optimality, we obtain: $\neg_{\varepsilon}^{\add} z = \overline{\left\{ \lambda \in \cM: \langle \lambda,  z \rangle < \max_{z \in \cZ} \langle \lambda, z\rangle - \varepsilon \right\}}$ and $\neg_{\varepsilon}^{\mul} z = \overline{\left\{ \lambda \in \cM: \langle \lambda, z \rangle < (1-\varepsilon)\max_{z \in \cZ} \langle \lambda, z\rangle \right\}}$.
Identifying an $\varepsilon$-optimal answer $z \in \cZ_{\varepsilon}(\mu)$ is equivalent to rejecting the hypothesis that the unknown mean belongs to the alternative to $z$, i.e. $\cH_{0} = \left\{\mu \in \neg_{\varepsilon} z \right\}$. Informally, if we know with enough certainty that $\mu$ does not belong to the alternative $\neg_\varepsilon z$, we can safely return the answer $z$.

\paragraph{Asymptotic Lower Bound} Theorem~\ref{thm:sample_complexity_lower_bound_epsBAI} gives an asymptotic lower bound on the expected sample complexity of any $(\varepsilon, \delta)$-PAC strategy for both additive and multiplicative $\varepsilon$-optimality. This is a corollary of Theorem 1 in \citet{degenne_2019_PureExplorationMultiple}, which holds for any multiple answer instance and sub-Gaussian distributions (Appendix~\ref{proof:sample_complexity_lower_bound_epsBAI}).

\begin{theorem}[Theorem 1 in \citet{degenne_2019_PureExplorationMultiple}] \label{thm:sample_complexity_lower_bound_epsBAI}
	For all $(\varepsilon, \delta)$-PAC strategy, for all $\mu \in \cM$,
	\begin{equation*}
		\liminf_{\delta \rightarrow 0} \frac{\expectedvalue_{\mu}[\taud]}{\ln(1/\delta)} \geq T_{\varepsilon}(\mu)
	\end{equation*}
	where the inverse of the characteristic time is
	\begin{equation} \label{eq:definition_inverse_sample_complexity}
		T_{\varepsilon}(\mu)^{-1} \eqdef \max_{z \in \cZ_{\varepsilon}(\mu)}  \max_{w \in \simplex} \inf_{\lambda \in \neg_{\varepsilon} z} \frac{1}{2} \| \mu - \lambda \|_{V_{w}}^2 \; .
	\end{equation}
\end{theorem}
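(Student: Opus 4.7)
The statement is a specialization of the general multiple-answer lower bound of \citet{degenne_2019_PureExplorationMultiple} to $\varepsilon$-BAI in the transductive linear Gaussian setting, so my plan is to reconstruct the underlying change-of-measure argument rather than invoke the black box. The workhorse is the transportation inequality of Kaufmann, Capp\'e \& Garivier: for any event $E \in \cF_{\taud}$, any $\lambda \in \cM$, and any $(\varepsilon,\delta)$-PAC strategy,
$$\sum_{a \in \cK} \expectedvalue_{\mu}[N_a(\taud)]\,\mathrm{KL}_a(\mu,\lambda) \;\geq\; \mathrm{kl}\bigl(\probability_\mu(E),\probability_\lambda(E)\bigr),$$
where $N_a(\taud)$ counts pulls of arm $a$. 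For unit-variance Gaussians, $\mathrm{KL}_a(\mu,\lambda) = \tfrac{1}{2}\langle\mu-\lambda,a\rangle^2$, so introducing the proportions $w_a = \expectedvalue_\mu[N_a(\taud)]/\expectedvalue_\mu[\taud] \in \simplex$ rewrites the left-hand side as $\expectedvalue_\mu[\taud]\cdot\tfrac{1}{2}\|\mu-\lambda\|_{V_w}^2$.

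The outer $\max_{z \in \cZ_\varepsilon(\mu)}$ in the formula reflects that a PAC strategy only commits to returning \emph{some} $\varepsilon$-optimal answer, not a prescribed one. I would apply a pigeonhole step and pick $z^\star \in \argmax_{z \in \cZ_\varepsilon(\mu)} \probability_\mu(\hat z = z)$, so that $\probability_\mu(\hat z = z^\star) \geq (1-\delta)/|\cZ_\varepsilon(\mu)|$. For any $\lambda$ in the interior of $\neg_\varepsilon z^\star$, the answer $z^\star$ is strictly not $\varepsilon$-optimal for $\lambda$, so the PAC property gives $\probability_\lambda(\hat z = z^\star) \leq \delta$. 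Instantiating the transportation inequality with $E = \{\hat z = z^\star\}$ yields
$$\expectedvalue_\mu[\taud]\cdot\tfrac{1}{2}\|\mu-\lambda\|_{V_w}^2 \;\geq\; \mathrm{kl}\bigl((1-\delta)/|\cZ_\varepsilon(\mu)|,\,\delta\bigr).$$

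Taking the infimum over $\lambda$ in the interior of $\neg_\varepsilon z^\star$ (and extending to the closure by continuity of $\lambda \mapsto \|\mu-\lambda\|_{V_w}^2$), then upper-bounding by $\sup_{w' \in \simplex}$ and by $\max_{z \in \cZ_\varepsilon(\mu)}$, gives $\expectedvalue_\mu[\taud]\cdot T_\varepsilon(\mu)^{-1} \geq \mathrm{kl}((1-\delta)/|\cZ_\varepsilon(\mu)|,\delta)$. Dividing by $\ln(1/\delta)$ and using $\mathrm{kl}((1-\delta)/|\cZ_\varepsilon(\mu)|,\delta)/\ln(1/\delta) \to 1$ as $\delta \to 0$ produces the claimed $\liminf$ bound. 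The only instance-specific obstacle is verifying, uniformly in the additive and multiplicative variants, that the interior of $\neg_\varepsilon z$ coincides with $\{\lambda \in \cM : z \notin \cZ_\varepsilon(\lambda)\}$; since both defining inequalities are strict and the closure baked into the definition of $\neg_\varepsilon z$ is exactly what allows replacing an infimum over the open set by an infimum over its closure in the characteristic time, this reduces to a routine topological check that goes through identically for both notions.
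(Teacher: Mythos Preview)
Your argument has a genuine gap at the final asymptotic step. You claim that $\mathrm{kl}\bigl((1-\delta)/|\cZ_\varepsilon(\mu)|,\delta\bigr)/\ln(1/\delta) \to 1$, but this is false whenever $|\cZ_\varepsilon(\mu)| \geq 2$. For fixed $p \in (0,1)$ one has $\mathrm{kl}(p,\delta) = p\ln(1/\delta) + O(1)$ as $\delta \to 0$, so the limit is $p$, not $1$. With $p = (1-\delta)/|\cZ_\varepsilon(\mu)| \to 1/|\cZ_\varepsilon(\mu)|$, your chain of inequalities only delivers
\[
\liminf_{\delta \to 0} \frac{\expectedvalue_\mu[\taud]}{\ln(1/\delta)} \;\geq\; \frac{T_\varepsilon(\mu)}{|\cZ_\varepsilon(\mu)|}\,,
\]
which is off by exactly the factor you introduced in the pigeonhole step. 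This is not a cosmetic slip: the loss is precisely the obstruction that makes the multiple-correct-answer setting harder than BAI, and the transportation-plus-pigeonhole route cannot recover the tight constant.

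The paper's own proof simply invokes Theorem~1 of \citet{degenne_2019_PureExplorationMultiple} as a black box and specializes the KL and the answer sets. The underlying argument in that reference (and visible in the paper's proof of Lemma~\ref{lem:greedy_sample_complexity_lower_bound_epsBAI}) does \emph{not} go through the transportation inequality. Instead it bounds $\expectedvalue_\mu[\taud] \geq T\bigl(1 - \probability_\mu[\taud \leq T]\bigr)$ via Markov, partitions $\{\taud \leq T\}$ according to the recommended answer, and uses a low-level change-of-measure lemma (Lemma~\ref{lem:lemma_19_degenne_2019_PureExplorationMultiple} here) to show that each term $\probability_\mu[\taud \leq T,\ \hat z = z]$ tends to $0$ when $T = (1-\eta)T_\varepsilon(\mu,z)\ln(1/\delta)$. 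Because one sums finitely many vanishing terms rather than dividing a fixed probability mass, no factor $|\cZ_\varepsilon(\mu)|$ appears. If you want a self-contained proof, that is the mechanism you need to reproduce; the direct Kaufmann--Capp\'e--Garivier inequality is too coarse here.
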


An $(\varepsilon,\delta)$-PAC algorithm is said to be \textit{asymptotically optimal} if the bound is tight: for all $\mu \in \cM$, $\liminf_{\delta \rightarrow 0} \frac{\expectedvalue_{\mu}[\taud]}{\ln(1/\delta)} \leq T_{\varepsilon}(\mu)$. The first lower bound for BAI was proved in \citet{garivier_2016_OptimalBestArm}.

As noted by \citet{chernoff_1959_SequentialDesignExperiments}, the complexity $T_{\varepsilon}(\mu)^{-1}$ is the value of a zero-sum game between two players. The agent chooses an $\varepsilon$-optimal answer and a pulling proportion over arms, $(z,w) \in \cZ_{\varepsilon}(\mu) \times \simplex$. The nature plays the most confusing alternative $\lambda \in \neg_{\varepsilon} z$ with respect to a reweighted Kullback-Leibler divergence ($\|\cdot\|^2_{V_{w}}$ for Gaussians) in order to fool the agent into rejecting this answer. Our algorithm, named \hyperlink{algoLeBAI}{L$\varepsilon$BAI} (\textbf{L}inear \textbf{$\varepsilon$-BAI}), is based on this formulation. Even for known $\mu$, computing $T_{\varepsilon}(\mu)^{-1}$ is in general intractable due to the non-convexity of $\neg_{\varepsilon} z$ and the additional maximization over $\cZ_{\varepsilon}(\mu)$. When $\varepsilon$ is large enough to have $\mathcal{Z}_{\varepsilon}(\lambda) = \mathcal{Z}$ for all $\lambda \in \mathcal{M}$, then $T_{\varepsilon}(\mu) = 0$, i.e. it is \textit{so easy} that no sample is needed.

While lower bounds for BAI have been derived in the non-asymptotic regime, it remains unclear whether equivalent lower bounds hold for $\varepsilon$-BAI \citep{garivier_2019_NonAsymptoticSequentialTests}.

\paragraph{Comparison with BAI} Since $T_{\varepsilon}(\mu) \leq T_{0}(\mu)$ for all $\mu \in \cM$ (because $\neg_{\varepsilon} z \subseteq \neg_{0} z$), $\varepsilon$-BAI is easier than BAI. There exists arbitrarily hard BAI instances that can be solved if seen as an $\varepsilon$-BAI problem, e.g. when the gap between the best and the second best arm is arbitrarily small.

\subsection{Furthest Answer} \label{sec:subsection_furthest_answer}

Our contributions are linked with the concept the furthest answer: it should be leveraged in the recommendation-stopping pair (Section~\ref{sec:section_from_BAI_to_eBAI}) and in the sampling rule (Section~\ref{sec:section_lebai}). In a nutshell, to reach asymptotic optimality in terms of sample complexity one should identify that furthest answer instead of simply using the greedy answers: \textit{all $\varepsilon$-optimal answers are not equivalent}.

The furthest answer $z_{F}(\mu)$ is the $\varepsilon$-optimal answer for which it is easiest to verify that $\mu$ does not belong to its alternative (when using an optimal allocation over arms $w_{F}(\mu) \in \simplex$). Introduced in \citet{degenne_2019_PureExplorationMultiple} and \citet{garivier_2019_NonAsymptoticSequentialTests}, it is defined as
\begin{equation} \label{eq:definition_furthest_answer_and_optimal_allocation}
	(z_{F}(\mu), w_{F}(\mu)) \eqdef \argmax_{(z, w) \in \cZ_{\varepsilon}(\mu) \times \simplex} \inf_{\lambda \in \neg_{\varepsilon} z} \frac{1}{2} \| \mu - \lambda \|_{V_{w}}^2 \; .
\end{equation}

$z_F(\mu)$ belongs to the $\varepsilon$-optimal answers $\cZ_{\varepsilon}(\mu)$, as does the greedy answers $z^{\star}(\mu) = \argmax_{z \in \cZ} \langle \mu, z\rangle$, but these answers may differ. In BAI with a unique best arm the set $\cZ_{\varepsilon}(\mu)$ is a singleton, hence those two notions coincide.

We assume there is a unique furthest answer for the unknown $\mu$, i.e. $|z_{F}(\mu)|=1$.
When $|z_{F}(\mu)| > 1$, some function of $\mu$ has to have exactly the same value for all answers of the set.
This happens with probability $0$ if $\mu$ arises from an absolutely continuous distribution.
Almost all BAI algorithms make the assumption that $|z^{\star}(\mu)|=1$, which implies $|z_{F}(\mu)|=1$ in the BAI case.
Since the furthest answer is assumed unique, we abuse notation and denote by $z_{F}(\mu)$ both that answer, and the singleton containing it as in \eqref{eq:definition_furthest_answer_and_optimal_allocation}.
$z^{\star}(\mu)$ denotes a set as we don't assume $|z^{\star}(\mu)|=1$.
The dependence of $z_{F}(\mu)$ and $w_{F}(\mu)$ on $\varepsilon$ is omitted.

\paragraph{Asymptotic Sub-optimality of $z^{\star}(\mu)$}
An $(\varepsilon,\delta)$-PAC strategy is said to be \textit{asymptotically greedy} if the only $\varepsilon$-optimal answers for which the algorithm will stop asymptotically are the greedy answers $z^{\star}(\mu)$, i.e. for all $\mu \in \cM$,
\begin{equation} \label{eq:def_asym_greedy_strategy}
	\lim_{\delta \rightarrow 0}\probability_{\mu} [\taud < + \infty, \hat{z} \in \cZ_{\varepsilon}(\mu) \setminus z^{\star}(\mu) ] = 0 \: .
\end{equation}

Lemma~\ref{lem:greedy_sample_complexity_lower_bound_epsBAI} shows that any asymptotically greedy $(\varepsilon, \delta)$-PAC strategy is asymptotically sub-optimal whenever $z_{F}(\mu) \notin z^{\star}(\mu)$, i.e. it can only reach $T_{g,\varepsilon}(\mu)$ which is strictly higher than $T_{\varepsilon}(\mu)$.

\begin{lemma} \label{lem:greedy_sample_complexity_lower_bound_epsBAI}
	For all asymptotically greedy $(\varepsilon,\delta)$-PAC strategy, for all $\mu \in \cM$,
	\begin{equation*}
		\liminf_{\delta \rightarrow 0} \frac{\expectedvalue_{\mu}[\taud]}{\ln(1/\delta)} \geq T_{g,\varepsilon}(\mu)
	\end{equation*}
	where the inverse of the greedy characteristic time is
	\begin{equation} \label{eq:definition_greedy_inverse_sample_complexity}
		T_{g,\varepsilon}(\mu)^{-1} \eqdef \max_{z \in z^{\star}(\mu)} \max_{w \in \simplex} \inf_{\lambda \in \neg_{\varepsilon} z} \frac{1}{2} \| \mu - \lambda \|_{V_{w}}^2 \; ,
	\end{equation}
	and $T_{g,\varepsilon}(\mu) > T_{\varepsilon}(\mu) $ if and only if $z_{F}(\mu) \notin z^{\star}(\mu)$.
\end{lemma}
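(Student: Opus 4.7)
The plan is to adapt the change-of-measure proof of Theorem~\ref{thm:sample_complexity_lower_bound_epsBAI} by replacing the plain PAC constraint $\probability_{\mu}[\hat{z} \in \cZ_{\varepsilon}(\mu)] \geq 1 - \delta$ with the stronger constraint supplied by asymptotic greediness,
\begin{equation*}
	\probability_{\mu}[\taud < +\infty,\, \hat{z} \in z^{\star}(\mu)] \;\geq\; 1 - \delta - \alpha_{\delta}, \qquad \alpha_{\delta} \xrightarrow{\delta \to 0} 0,
\end{equation*}
which is immediate from \eqref{eq:def_asym_greedy_strategy} together with the $(\varepsilon,\delta)$-PAC property. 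Intuitively, an asymptotically greedy strategy behaves in the limit as a PAC strategy for the harder problem in which only the greedy answers $z^{\star}(\mu)$ are acceptable, so the variational formula of Theorem~\ref{thm:sample_complexity_lower_bound_epsBAI} should carry over with $\cZ_{\varepsilon}(\mu)$ replaced by $z^{\star}(\mu)$ in the outer maximum.

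Concretely, I would fix the $z \in z^{\star}(\mu)$ attaining the maximum in \eqref{eq:definition_greedy_inverse_sample_complexity} and apply the standard transportation inequality to an event separating $\mu$ from $\lambda \in \neg_{\varepsilon} z$; in the generic case $|z^{\star}(\mu)|=1$ the natural choice is $E = \{\taud < +\infty,\, \hat{z} = z\}$. Since $z \notin \cZ_{\varepsilon}(\lambda)$, PAC under $\lambda$ gives $\probability_{\lambda}[E] \leq \delta$, while the display above supplies the lower bound on $\probability_{\mu}[E]$. With $w^{a}_{\delta} \eqdef \expectedvalue_{\mu}[N^{a}_{\taud}]/\expectedvalue_{\mu}[\taud]$, the Gaussian transportation bound reads $\expectedvalue_{\mu}[\taud]\,\tfrac{1}{2}\|\mu-\lambda\|^{2}_{V_{w_{\delta}}} \geq \mathrm{kl}(1-\delta-\alpha_{\delta},\delta)$. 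Infimizing over $\lambda \in \neg_{\varepsilon} z$, using $\inf_{\lambda} \tfrac{1}{2}\|\mu-\lambda\|^{2}_{V_{w_{\delta}}} \leq \sup_{w \in \simplex} \inf_{\lambda \in \neg_{\varepsilon} z} \tfrac{1}{2}\|\mu-\lambda\|^{2}_{V_{w}}$, dividing by $\ln(1/\delta)$, and invoking $\mathrm{kl}(1-o(1),\delta)/\ln(1/\delta) \to 1$ as $\delta \to 0$, I obtain $\liminf_{\delta \to 0} \expectedvalue_{\mu}[\taud]/\ln(1/\delta) \geq T_{g,\varepsilon}(\mu)$ after maximising over $z \in z^{\star}(\mu)$.

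The main technical subtlety arises when $|z^{\star}(\mu)| > 1$: asymptotic greediness controls only the total mass on $z^{\star}(\mu)$, not each individual $\probability_{\mu}[\hat{z}=z]$. I would handle this by working with the aggregate event $E = \{\taud < +\infty,\, \hat{z} \in z^{\star}(\mu)\}$ paired with $\lambda \in \bigcap_{z \in z^{\star}(\mu)} \neg_{\varepsilon} z$ so that $\probability_{\lambda}[E] \leq \delta$ is automatic from PAC, or alternatively by a subsequence and pigeonhole argument extracting a single $z \in z^{\star}(\mu)$ whose recommendation probability remains bounded away from zero along $\delta \to 0$; matching the resulting bound to the $\max_{z \in z^{\star}(\mu)}$ form of $T_{g,\varepsilon}(\mu)^{-1}$ is the place where the careful bookkeeping is needed.

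For the characterisation $T_{g,\varepsilon}(\mu) > T_{\varepsilon}(\mu) \Longleftrightarrow z_{F}(\mu) \notin z^{\star}(\mu)$, the inclusion $z^{\star}(\mu) \subseteq \cZ_{\varepsilon}(\mu)$ immediately yields $T_{g,\varepsilon}(\mu)^{-1} \leq T_{\varepsilon}(\mu)^{-1}$, hence $T_{g,\varepsilon}(\mu) \geq T_{\varepsilon}(\mu)$. If $z_{F}(\mu) \in z^{\star}(\mu)$, the maximum defining $T_{\varepsilon}(\mu)^{-1}$ is already attained inside $z^{\star}(\mu)$ so the two inverse times coincide; if $z_{F}(\mu) \notin z^{\star}(\mu)$, uniqueness of $z_{F}(\mu)$ as the maximizer in \eqref{eq:definition_furthest_answer_and_optimal_allocation} forces the restricted maximum to be strictly smaller, yielding the strict inequality $T_{g,\varepsilon}(\mu) > T_{\varepsilon}(\mu)$.
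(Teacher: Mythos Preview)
Your transportation-inequality route is genuinely different from the paper's proof, which does not use $\mathrm{kl}(\probability_{\mu}[E],\probability_{\lambda}[E])$ at all. The paper instead goes through Markov's inequality $\expectedvalue_{\mu}[\taud] \geq T\bigl(1-\probability_{\mu}[\taud\leq T]\bigr)$, partitions $\{\taud\leq T\}$ according to whether $\hat z\in z^{\star}(\mu)$, and then upper-bounds each $\probability_{\mu}[\taud\leq T,\hat z=z]$ for $z\in z^{\star}(\mu)$ via the change-of-measure Lemma~19 of \citet{degenne_2019_PureExplorationMultiple}. Choosing $T=(1-\eta)\bigl(\min_{z\in z^{\star}(\mu)}T_{\varepsilon}(\mu,z)\bigr)\ln(1/\delta)$ makes every such term vanish as $\delta\to0$, and the asymptotic-greediness and PAC hypotheses kill the remaining pieces of the partition. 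In the generic case $|z^{\star}(\mu)|=1$ your argument is a perfectly valid and more elementary alternative: the single event $E=\{\hat z=z\}$ has $\probability_{\mu}[E]\to1$ and $\probability_{\lambda}[E]\leq\delta$ for every $\lambda\in\neg_{\varepsilon}z$, and the transportation bound goes through exactly as you wrote.

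The gap is in the $|z^{\star}(\mu)|>1$ case, where neither of your two proposed fixes delivers the claimed bound. The aggregate event $E=\{\hat z\in z^{\star}(\mu)\}$ forces you to infimize over $\lambda\in\bigcap_{z\in z^{\star}(\mu)}\neg_{\varepsilon}z$, and since this intersection is smaller than each $\neg_{\varepsilon}z$ you obtain $\max_{w}\inf_{\lambda\in\cap}\frac{1}{2}\|\mu-\lambda\|_{V_w}^{2}\geq T_{g,\varepsilon}(\mu)^{-1}$, i.e.\ you divide by something too large and get a lower bound on $\expectedvalue_{\mu}[\taud]$ that can be strictly below $T_{g,\varepsilon}(\mu)\ln(1/\delta)$. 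The pigeonhole route fails for a different reason: extracting a $z$ with $\probability_{\mu}[\hat z=z]\geq c>0$ along a subsequence only gives $\mathrm{kl}(c,\delta)\sim c\ln(1/\delta)$, so the resulting bound is $c\cdot T_{g,\varepsilon}(\mu)$, off by the factor $c$ that you cannot push to $1$. The paper's Markov-based argument sidesteps this entirely because it bounds each $\probability_{\mu}[\taud\leq T,\hat z=z]$ separately (no need for any single $z$ to have probability near one) and then \emph{sums}; the uniform choice $T\leq(1-\eta)T_{\varepsilon}(\mu,z)\ln(1/\delta)$ for every $z\in z^{\star}(\mu)$ is what produces $\min_{z}T_{\varepsilon}(\mu,z)=T_{g,\varepsilon}(\mu)$.

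Your treatment of the equivalence $T_{g,\varepsilon}(\mu)>T_{\varepsilon}(\mu)\Leftrightarrow z_{F}(\mu)\notin z^{\star}(\mu)$ is correct and matches the paper's one-line justification; the strict direction does rely on the standing uniqueness assumption $|z_{F}(\mu)|=1$, which you invoke appropriately.
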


Lemma~\ref{lem:algorithms_being_greedy_asymptotically} shows that any $(\varepsilon, \delta)$-PAC strategy recommending any greedy answers $\hat{z} \in z^{\star}(\mu_{\taud})$ which succeeds in identifying $z^{\star}(\mu)$ is asymptotically greedy. Since $\mu \mapsto z^{\star}(\mu)$ is continuous and $\cZ$ is finite, it is sufficient to have a sampling rule ensuring that $\lim_{t \rightarrow + \infty} \mu_t = \mu$.

\begin{lemma} \label{lem:algorithms_being_greedy_asymptotically}
Any $(\varepsilon, \delta)$-PAC strategy recommending $\hat{z} \in z^{\star}(\mu_{\taud})$ is asymptotically greedy if the sampling rule ensures that $\lim_{\delta \rightarrow 0}\probability_{\mu} [\taud < + \infty,  z^{\star}(\mu_{\taud}) =  z^{\star}(\mu) ] = 1$.
\end{lemma}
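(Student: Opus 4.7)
The plan is to reduce the claim to a short set-inclusion argument plus a one-line probability bound. No subtle analysis or concentration is required; the hypothesis on the sampling rule already does all the work.

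First, I would unpack the definition in \eqref{eq:def_asym_greedy_strategy}: the goal is to show $\probability_{\mu}[\taud < +\infty,\, \hat{z} \in \cZ_{\varepsilon}(\mu) \setminus z^{\star}(\mu)] \to 0$ as $\delta \to 0$. The key observation is a logical one about sets. Suppose we are on the event $\{\hat{z} \in \cZ_{\varepsilon}(\mu) \setminus z^{\star}(\mu)\}$; then $\hat{z} \notin z^{\star}(\mu)$. Because the strategy recommends $\hat{z} \in z^{\star}(\mu_{\taud})$ by hypothesis, we have $\hat{z} \in z^{\star}(\mu_{\taud})$. Combining these two facts, the sets $z^{\star}(\mu_{\taud})$ and $z^{\star}(\mu)$ cannot coincide, since $\hat{z}$ belongs to the former but not to the latter. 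This yields the inclusion
\begin{equation*}
\{\taud < +\infty,\, \hat{z} \in \cZ_{\varepsilon}(\mu) \setminus z^{\star}(\mu)\} \;\subseteq\; \{\taud < +\infty,\, z^{\star}(\mu_{\taud}) \neq z^{\star}(\mu)\}.
\end{equation*}

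Second, I would take probabilities on both sides and use the complementary form of the hypothesis:
\begin{align*}
\probability_{\mu}[\taud < +\infty,\, \hat{z} \in \cZ_{\varepsilon}(\mu) \setminus z^{\star}(\mu)]
&\leq \probability_{\mu}[\taud < +\infty,\, z^{\star}(\mu_{\taud}) \neq z^{\star}(\mu)] \\
&= \probability_{\mu}[\taud < +\infty] - \probability_{\mu}[\taud < +\infty,\, z^{\star}(\mu_{\taud}) = z^{\star}(\mu)] \\
&\leq 1 - \probability_{\mu}[\taud < +\infty,\, z^{\star}(\mu_{\taud}) = z^{\star}(\mu)].
\end{align*}
By the assumed sampling-rule property, the right-hand side tends to $0$ as $\delta \to 0$, which is exactly \eqref{eq:def_asym_greedy_strategy}.

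There is essentially no obstacle: the $(\varepsilon,\delta)$-PAC assumption is not even invoked explicitly in this proof (it is only used to make sense of being ``asymptotically greedy,'' a property defined for PAC strategies). The one point that deserves care is the asymmetry in how the two sets $z^{\star}(\mu_{\taud})$ and $z^{\star}(\mu)$ are treated (the former can be a proper set in general, while under our standing uniqueness assumption the argument goes through verbatim because the distinguishing element $\hat{z}$ alone suffices to force set inequality).
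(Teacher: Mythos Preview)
Your proof is correct and slightly cleaner than the paper's. The paper argues via the inclusion $\{z^{\star}(\mu_{\taud}) = z^{\star}(\mu)\} \subseteq \{\hat z \in z^{\star}(\mu)\}$, deduces $\probability_\mu[\taud<+\infty,\ \hat z \in z^{\star}(\mu)] \to 1$, and then partitions $\{\taud<+\infty\}$ into the three pieces $\{\hat z \in z^{\star}(\mu)\}$, $\{\hat z \in \cZ_{\varepsilon}(\mu)\setminus z^{\star}(\mu)\}$, $\{\hat z \notin \cZ_{\varepsilon}(\mu)\}$, using the $(\varepsilon,\delta)$-PAC bound $\le \delta$ on the last piece to squeeze the middle one to zero. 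You instead use the reverse-direction inclusion $\{\hat z \in \cZ_{\varepsilon}(\mu)\setminus z^{\star}(\mu)\} \subseteq \{z^{\star}(\mu_{\taud}) \neq z^{\star}(\mu)\}$ and bound the right-hand side by $1 - \probability_\mu[\taud<+\infty,\ z^{\star}(\mu_{\taud}) = z^{\star}(\mu)]$ directly. This is a genuinely more economical decomposition: as you observe, the PAC property is never invoked, so your argument shows the conclusion holds for any strategy (PAC or not) satisfying the sampling-rule hypothesis. The paper's route has no real advantage here; both are one-paragraph proofs.
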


\paragraph{Asymptotic Optimality of $z_{F}(\mu)$} The furthest answer has by definition a central role in the characteristic time. Among the oracles that first choose an answer $z \in \cZ_\varepsilon(\mu)$ and then sample according to the optimal proportions to verify that $\mu \notin \neg_\varepsilon z$, the only one achieving asymptotic optimality is the one picking $z_F(\mu)$. To be asymptotically optimal, an $\varepsilon$-BAI algorithm has to implicitly identify $z_{F}(\mu)$.

The definition of $z_F(\mu)$ comes from an asymptotic lower bound, and no finite time lower bounds are available for $\varepsilon$-BAI.
It could be that for larger $\delta$ (hence small stopping times), identifying $z_F(\mu)$ among $\cZ_\varepsilon(\mu)$ is too costly to be done before stopping.
In that regime, it could be that an algorithm cannot do better than picking any $\varepsilon$-optimal answer.
This is an interesting open question for future work.
Strong moderate confidence terms (independent of $\delta$) affecting the sample complexity have been shown in different settings \citep{katz_samuels_2020_TrueSampleComplexity,mason_2020_FindingAllEpsilon}.

\begin{figure}[ht]
	\centering
	\includegraphics[width=0.485\linewidth]{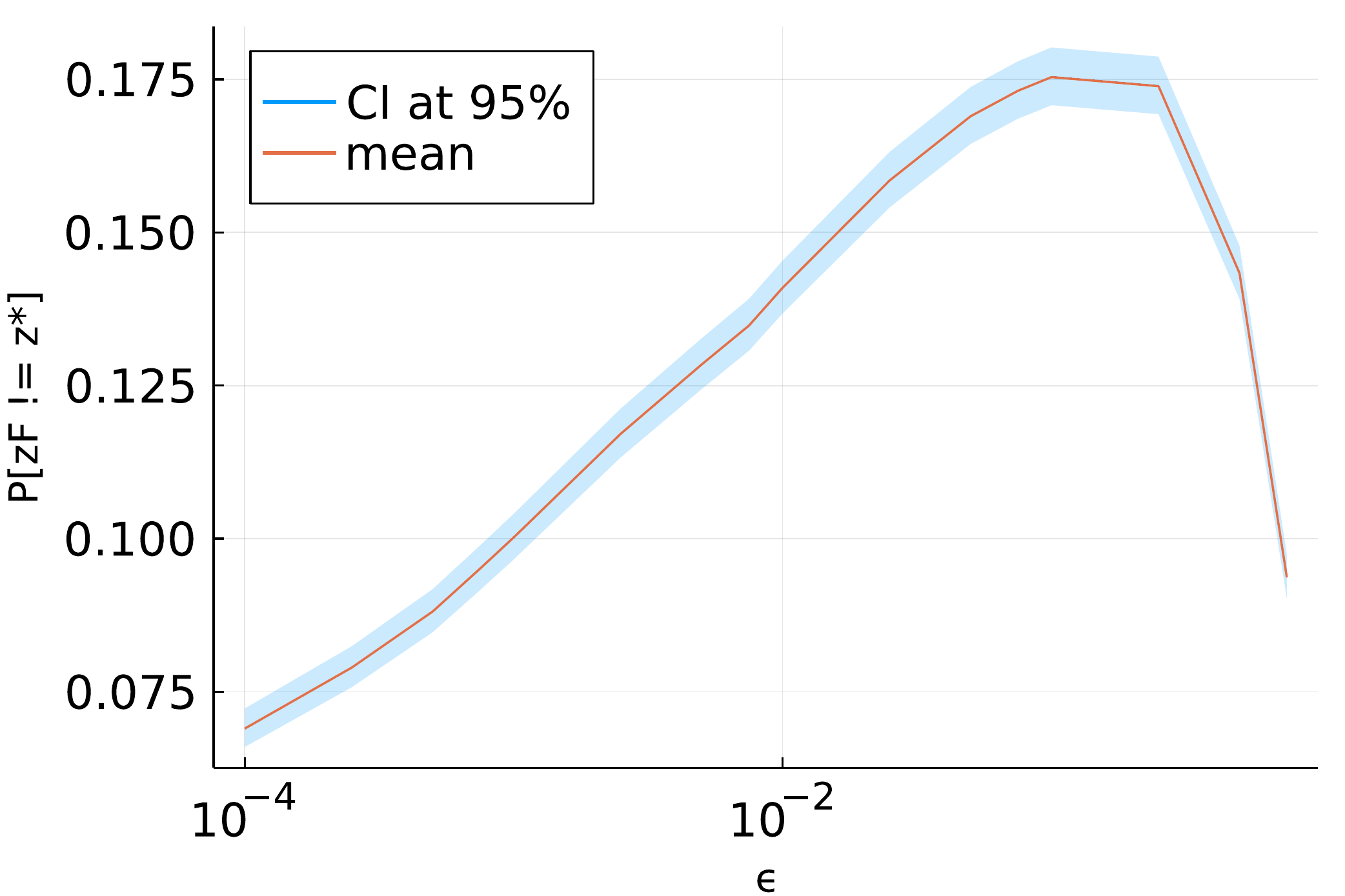}
	\includegraphics[width=0.485\linewidth]{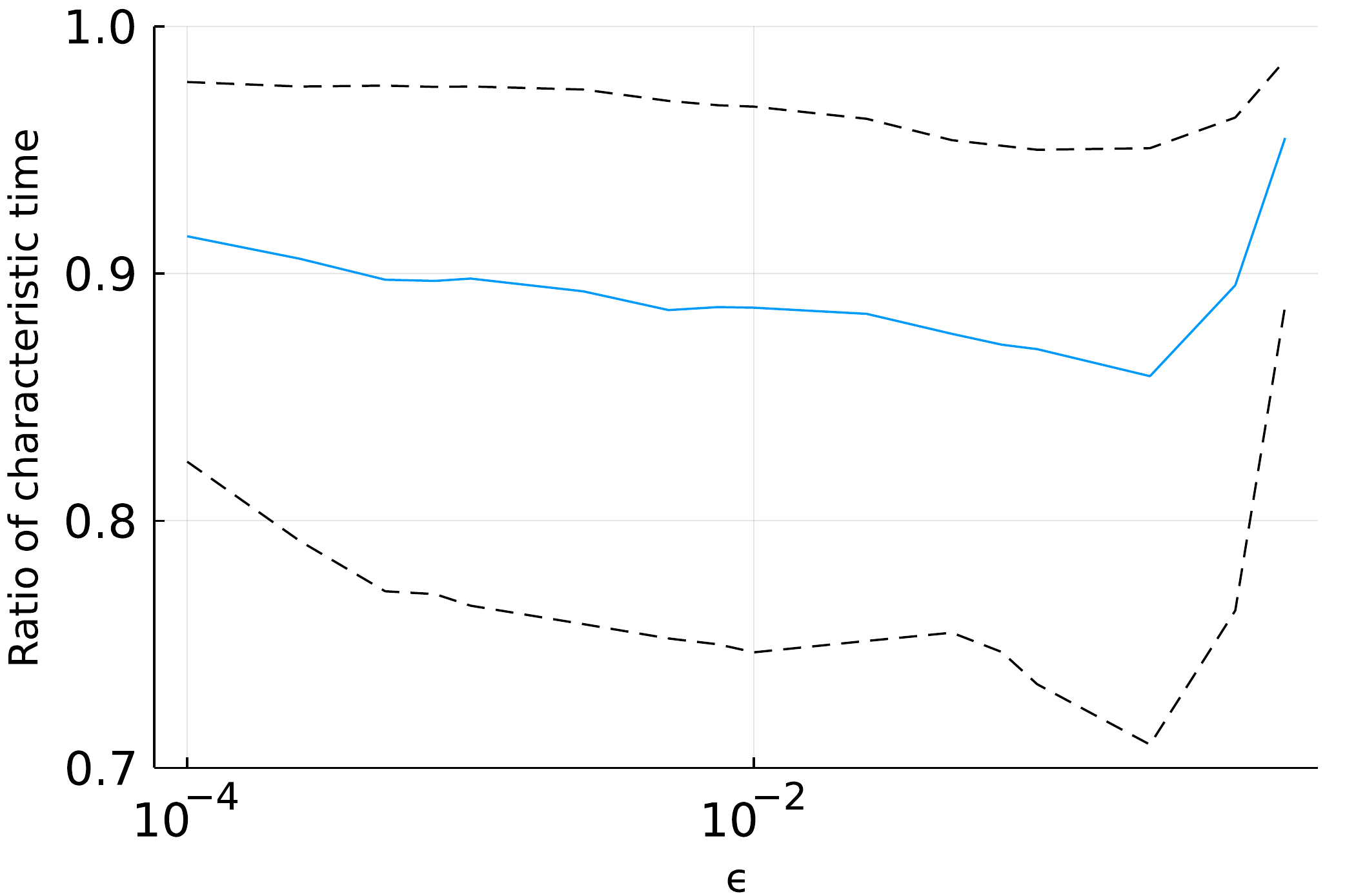}
	\caption{Influence of $\varepsilon$ on (a) the proportion of draws where $z_{F}(\mu) \notin z^{\star}(\mu)$, (b) the median (and first/third quartile) of $\frac{T_{\varepsilon}^{\mul}(\mu)}{T_{g,\varepsilon}^{\mul}(\mu)}$, when $z_{F}(\mu) \notin z^{\star}(\mu)$.}
	\label{fig:theoretical_study_greedy_vs_furthest_mul}
\end{figure}

\paragraph{Numerical Simulations} We compare the furthest and the greedy answers for the multiplicative $\varepsilon$-optimality (see Figure~\ref{fig:theoretical_study_greedy_vs_furthest_add} in Appendix~\ref{app:subsubsection_add_exp_additive} for additive $\varepsilon$-optimality).

We consider $d=2$, $\cM=\Real^2$ and $\cZ = \cK$ with $K =4$. We use $\mu = (1,0)$ and generate $25000$ random instances. In each one of them, we consider $z_1 = \mu$ and draw uniformly at random $z_2 \in \left\{(\cos(\theta), \sin(\theta)): \theta \in [-\theta_{\varepsilon}, \theta_{\varepsilon}]\right\}$ and $z_3, z_4 \in \left\{(\cos(\theta), \sin(\theta)): \theta \in (-\pi, -\theta_{\varepsilon}) \cup (\theta_{\varepsilon}, \pi]\right\}$, where $\theta_{\varepsilon} \eqdef \arccos (1-\varepsilon)$.
This yields $z_1 = z^{\star}(\mu)$, $z_2 \in \cZ_{\varepsilon}(\mu)$ and $z_3, z_4 \in  \cZ \setminus \cZ_{\varepsilon}(\mu)$. To approximate $(T_{\varepsilon}(\mu), z_{F}(\mu))$, we discretize $\triangle_{4}$ with $10000$ vectors. This is repeated for several values of $\varepsilon$. We never observed $|z^\star(\mu)| > 1$ or $|z_{F}(\mu)| > 1$.

Figure~\ref{fig:theoretical_study_greedy_vs_furthest_mul}(a) reveals that the proportion of draws where $z^{\star}(\mu) \neq z_{F}(\mu)$ is not negligible ($\approx 14 \%$). On those instances, Figure~\ref{fig:theoretical_study_greedy_vs_furthest_mul}(b) shows that $\frac{T_{\varepsilon}^{\mul}(\mu)}{T_{g,\varepsilon}^{\mul}(\mu)}$ is on average $0.9$. Therefore, when they are different, the furthest answer has a $10\%$ lower characteristic time than greedy answers.

\section{From BAI to $\varepsilon$-BAI Algorithms} \label{sec:section_from_BAI_to_eBAI}

We propose a simple procedure to convert any BAI algorithm into an $(\varepsilon, \delta)$-PAC algorithm.
While leaving the original sampling rule unchanged, the stopping-recommendation rule are carefully chosen thanks to the concept of furthest answer.

\paragraph{Structure} Since $\varepsilon$-BAI is easier than BAI, the stopping rule of BAI algorithms has to be modified for $\varepsilon$-BAI. Instead of stopping whenever a single best arm is identified, it is enough to stop when we know that an arm is $\varepsilon$-best.
In most ($\varepsilon$-)BAI algorithms, the stopping-recommendation pair and the sampling rule can be thought as two independent blocks. There exists stopping-recommendation pairs that guarantee the strategy to be $(\varepsilon, \delta)$-PAC regardless of the sampling rule (e.g. see Lemma~\ref{lem:delta_PAC_recommendation_stopping_pair}). Therefore, we can take the sampling rule from a BAI algorithm and couple it with a stopping-recommendation pair with this property.

We will now describe such a stopping-recommendation pair for $\varepsilon$-BAI in transductive linear Gaussian bandits.
Due to its generality, this procedure can be readily adapted to tackle general distributions (e.g. sub-Gaussian) and different structures (e.g. spectral bandits) by simply adapting the stopping rule and its associated threshold.

\subsection{Stopping-Recommendation Pairs} \label{sec:subsection_stopping_recommendation_pairs}

\paragraph{Estimator} Let $N_{t-1} \in (\Real^{+})^K$ denotes the counts of pulled arms at the start of round $t$, $N_{t-1}^a = \sum_{s=1}^{t-1} \1_{\{a_s = a\}}$. We denote the Ordinary Least Square (OLS) estimator by $\mu_{t-1} = V_{N_{t-1}}^{-1} \sum_{s=1}^{t-1} X_{s}^{a_s} a_s$. When $\mu_{t-1} \in \cM$, this is also the Maximum Likelihood Estimator (MLE).

\paragraph{GLR-based Stopping Rule} Given a candidate answer $z_t \in \cZ_{\varepsilon}(\mu_{t-1})$ and the history $\cF_t$, the algorithm stops as soon as the Generalized Likelihood Ratio (GLR, Appendix~\ref{app:subsection_likelihood_ratio}) exceeds a stopping threshold $\beta(t-1, \delta)$
\begin{equation} \label{eq:definition_stopping_criterion}
	\inf_{\lambda \in \neg_{\varepsilon} z_t}  \| \mu_{t-1} - \lambda \|_{V_{N_{t-1}}}^{2} > 2\beta(t-1, \delta) \: .
\end{equation}

In Lemma~\ref{lem:delta_PAC_recommendation_stopping_pair}, we show that combining a recommendation rule such that $z_t \in \cZ_{\varepsilon}(\mu_{t-1})$ and this stopping rule is sufficient to obtain a $(\varepsilon,\delta)$-PAC strategy regardless of the sampling rule. This holds even when the stopping criterion is checked only on an infinite subset of $\Natural$. The proof (Appendix~\ref{app:section_proofs_stopping_recommendation_pair}) leverages the concentration inequalities of \citet{kaufmann_2018_MixtureMartingalesRevisited}.

\begin{lemma} \label{lem:delta_PAC_recommendation_stopping_pair}
	Let $\cT \subseteq \Natural$ with $|\cT| = \infty$. Given any sampling and recommendation rules such that $z_t \in \cZ_{\varepsilon}(\mu_{t-1})$ for all $t \in \cT$, then evaluating the stopping criterion (\ref{eq:definition_stopping_criterion}) at each time $t \in \cT$ with the threshold
	\begin{equation}  \label{eq:definition_stopping_threshold}
		\beta(t, \delta) = 2 K\ln \left(4 + \ln \left( t/K \right)\right) + K \cC_{G}\left(\frac{\ln \left( 1/\delta\right)}{K}\right)
	\end{equation}
	yields an $(\varepsilon,\delta)$-PAC strategy. $\cC_{G}(x)  \approx x + \ln(x)$ as in \eqref{eq:def_C_gaussian_kaufmann_2018_MixtureMartingalesRevisited}.
\end{lemma}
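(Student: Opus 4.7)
The plan is to reduce the $(\varepsilon,\delta)$-PAC property to a pure uniform-in-time deviation statement for the OLS estimator, and then invoke the mixture-martingale concentration of \citet{kaufmann_2018_MixtureMartingalesRevisited} calibrated by the threshold \eqref{eq:definition_stopping_threshold}.

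First, fix $\mu \in \cM$ and let $\cE_{\text{err}} \eqdef \{\taud < +\infty,\ \hat{z} \notin \cZ_{\varepsilon}(\mu)\}$. On $\cE_{\text{err}}$ the algorithm stops at some time $\taud = t \in \cT$, so the stopping rule \eqref{eq:definition_stopping_criterion} fires:
\begin{equation*}
	\inf_{\lambda \in \neg_{\varepsilon} z_t} \|\mu_{t-1} - \lambda\|_{V_{N_{t-1}}}^{2} > 2\beta(t-1,\delta).
\end{equation*}
The recommendation assumption guarantees $\hat{z} = z_t \in \cZ_{\varepsilon}(\mu_{t-1})$, while by definition of $\cE_{\text{err}}$ we have $z_t \notin \cZ_{\varepsilon}(\mu)$. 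Hence $\mu$ lies in the open set $\{\lambda \in \cM : z_t \notin \cZ_\varepsilon(\lambda)\}$, whose closure is exactly $\neg_\varepsilon z_t$, so $\mu$ is a feasible point for the infimum. This gives
\begin{equation*}
	\|\mu_{t-1} - \mu\|_{V_{N_{t-1}}}^{2} \;\geq\; \inf_{\lambda \in \neg_{\varepsilon} z_t} \|\mu_{t-1} - \lambda\|_{V_{N_{t-1}}}^{2} \;>\; 2\beta(t-1,\delta).
\end{equation*}

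Second, this yields the deterministic inclusion
\begin{equation*}
	\cE_{\text{err}} \;\subseteq\; \bigcup_{t \in \cT} \bigl\{ \|\mu_{t-1} - \mu\|_{V_{N_{t-1}}}^{2} > 2\beta(t-1,\delta) \bigr\},
\end{equation*}
whose right-hand side no longer depends on the sampling or recommendation rule (only on the estimator residual under Gaussian noise). I would then conclude by applying the uniform-in-time concentration inequality of \citet{kaufmann_2018_MixtureMartingalesRevisited} to the vector of arm-wise OLS residuals, whose peeling/method-of-mixtures argument is calibrated so that the probability of the union is at most $\delta$ precisely when $\beta$ is chosen as in \eqref{eq:definition_stopping_threshold}, with $\cC_G$ being the calibration function from their Theorem. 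Since $\cT \subseteq \Natural$, restricting the union to $t \in \cT$ can only make the probability smaller, so the bound remains $\leq \delta$ when the stopping criterion is evaluated only along $\cT$.

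The main obstacle is ensuring that the threshold \eqref{eq:definition_stopping_threshold} matches an off-the-shelf concentration result in the desired weighted semi-norm $\|\cdot\|_{V_{N_{t-1}}}$. The scaling in $K$ (rather than $d$) comes from decomposing the GLR statistic arm-by-arm under Gaussian noise: in the Gaussian model, $\|\mu_{t-1}-\mu\|_{V_{N_{t-1}}}^2$ equals $\sum_{a \in \cK} N_{t-1}^a(\hat{\mu}_{t-1}^a - \langle \mu,a\rangle)^2$ after projecting onto the empirical span, which is a sum of $K$ conditionally independent sub-Gaussian deviations to which the Kaufmann--Koolen mixture bound applies directly. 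Once this reduction is checked, the factor $K$ in front of $\cC_G(\ln(1/\delta)/K)$ and the $2K\ln(4+\ln(t/K))$ term follow from their calibration, closing the argument.
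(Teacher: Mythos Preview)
Your proposal is correct and follows essentially the same approach as the paper: reduce the error event to a uniform-in-time deviation $\|\mu_{t-1}-\mu\|_{V_{N_{t-1}}}^2 > 2\beta(t-1,\delta)$ by using $\mu \in \neg_\varepsilon z_t$ when $z_t \notin \cZ_\varepsilon(\mu)$, then apply Corollary~10 of \citet{kaufmann_2018_MixtureMartingalesRevisited} to the $K$ arm-wise Gaussian deviations. The only detail you leave implicit is the Jensen step $\sum_{a}2\ln(4+\ln N_{t-1}^a)\le 2K\ln(4+\ln((t-1)/K))$ that turns the per-arm threshold into the aggregated $\beta(t-1,\delta)$, which the paper spells out explicitly.
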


Since this result holds for any sampling and recommendation rules satisfying one mild requirement, $z_t \in \cZ_{\varepsilon}(\mu_{t-1})$ for all $t \in \cT$, this leaves open the question on how to design those two rules to stop as early as possible. Algorithms that are agnostic to the choice of the candidate $\varepsilon$-optimal answer might have a higher expected sample complexity than the ones aiming at identifying the furthest answer.

\paragraph{Recommendation Rule} Taking a greedy answer $z_t \in z^{\star}(\mu_{t-1})$ is a direct choice. Thanks to its efficient implementation, using a greedy answer is the only computationally feasible recommendation rule for combinatorial or continuous answers sets. Unfortunately, when $z_{F}(\mu) \notin z^{\star}(\mu)$, this approach leads to sub-optimal algorithms in terms of asymptotic sample complexity (Lemmas~\ref{lem:greedy_sample_complexity_lower_bound_epsBAI}-\ref{lem:algorithms_being_greedy_asymptotically}).

When $Z$ is not too large or when we disregard the computational cost, a more careful choice than the greedy one alleviates this sub-optimality.
The $\varepsilon$-optimal answers for which the GLR (l.h.s of (\ref{eq:definition_stopping_criterion})) is maximized are the \textit{instantaneous furthest answers}
\begin{align*}
	z_{F}(\mu_{t-1}, N_{t-1}) \eqdef \argmax_{z \in \cZ_{\varepsilon}(\mu_{t-1})} \inf_{\lambda \in \neg_{\varepsilon} z}  \| \mu_{t-1} - \lambda \|_{V_{N_{t-1}}}^{2} \: .
\end{align*}
By definition, $z_{F}(\mu_{t-1}, N_{t-1})$ are the $\varepsilon$-optimal answers for which we have the most evidence against $\cH_{0} = \{\mu \in \neg_{\varepsilon} z\}$ at time $t$. At a lower computational cost than using a furthest answer for the current estimator $z_t \in z_{F}(\mu_{t-1})$, we will see that using an instantaneous furthest answer enjoys similar empirical performance (sample complexity).
For all the above sets of candidate answers, the ties are broken arbitrarily.
Empirically, we only observed singletons.

\paragraph{Dependence in $K$} In linear bandits, when $K$ is large, dependencies in $K$ can be replaced by $d$ \citep{lattimore_2020_BanditAlgorithms}.
The focus of our work is to highlight the importance of carefully choosing answers, therefore having $K$ instead of $d$ is a price we are willing to pay for simpler arguments.
Prior works removed the $K$ dependency in the analysis of game-based algorithms \citep{degenne_2020_GamificationPureExploration,tirinzoni_2020_AsymptoticallyOptimalPrimal,reda_2021_DealingWithMisspecification}.

\subsection{Modified BAI Algorithms} \label{ssec:subsection_modified_BAI_algorithms}

\paragraph{Modification Procedure} Given any BAI algorithm for transductive linear Gaussian bandits, we modify it to use (\ref{eq:definition_stopping_criterion}) as stopping rule while leaving the sampling rule unchanged. By Lemma~\ref{lem:delta_PAC_recommendation_stopping_pair}, the resulting algorithm is an $(\varepsilon, \delta)$-PAC strategy. For the recommendation rule, theory (Lemmas~\ref{lem:greedy_sample_complexity_lower_bound_epsBAI}-\ref{lem:algorithms_being_greedy_asymptotically}) and experiments (Figure~\ref{fig:hardinst_empirical_stop_modified_bai_add}) both suggest to use $z_t \in z_{F}(\mu_{t-1}, N_{t-1})$ instead of $z_t \in z^{\star}(\mu_{t-1})$.
We do not prove any theoretical guarantees on the sample complexity of the modified algorithms since such results depend heavily on each sampling rule.

\paragraph{BAI Benchmarks} Lots of algorithms have been designed to tackle the BAI setting and we mention below the ones used in the experiments as benchmarks. \citet{soare_2014_BestArmIdentificationLinear} proposed a static allocation design $\cX \cY$-Static and its elimination-based improvement $\cX \cY$-Adaptive, which are linked to a $G$-optimal design. In \citet{xu_2017_FullyAdaptiveAlgorithm}, LinGapE was introduced as the first gap-based BAI algorithm. All the above BAI algorithms are not shown to be asymptotically optimal and depend on $\delta$ (except $\cX \cY$-Static). Algorithm such as DKM \citep{degenne_2019_NonAsymptoticPureExploration} and LinGame \citep{degenne_2020_GamificationPureExploration} are asymptotically optimal and their sampling rule does not depend on $\delta$.

\paragraph{Other Stopping Rules} For all BAI algorithm using a GLR-based stopping rule the $\varepsilon$-BAI stopping rule (\ref{eq:definition_stopping_criterion}) is a natural modification. Some other non-GLR-based stopping rule also have a direct extension to $\varepsilon$-BAI. This is the case for the gap-based stopping rule for additive $\varepsilon$-optimality employed by LinGapE, where we can stop when the gap is smaller than $\varepsilon$ instead of stopping when it is negative.

\subsection{Experiments} \label{ssec:subsection_experiments}

We perform experiments to highlight the empirical performance of the modified BAI algorithms on additive $\varepsilon$-BAI problems. Moreover, we show that using $z_t \in z_{F}(\mu_{t-1}, N_{t-1})$ in (\ref{eq:definition_stopping_criterion}) achieves lower empirical stopping time compared to $z_t \in z^{\star}(\mu_{t-1})$, and outperforms the $\varepsilon$-gap stopping rule with $z_t \in z^{\star}(\mu_{t-1})$.
We consider linear bandits, i.e. $\cK = \cZ$, with $\cM = \Real^d$ and $(\varepsilon, \delta) = (0.05, 0.01)$, and perform $5000$ runs. The stopping-recommendation pair is updated at each time $t$.

\paragraph{Hard Instance} We adapt the usual hard instance studied in BAI for linear bandits to enforce the existence of multiple correct answers, i.e. $|\cZ_{\varepsilon}(\mu)|>1$. Taking $\mu = e_1$ with $e_{a} = (\1_{(a'=a)})_{a' \in [d]}$, the answers set is defined as $\cZ = \left\{ e_1, \cdots, e_d, a_{d+1}, a_{d+2} \right\}$ where $a_{d+1} = \cos(\phi_1)e_1 + \sin(\phi_1) e_2 \in \cZ_{\varepsilon}(\mu)$ and $a_{d+2} = \cos(\phi_2)e_1 + \sin(\phi_2) e_2 \notin \cZ_{\varepsilon}(\mu)$. Considering $d=2$, we use $\phi_1 = r_{\varepsilon} \theta_{\varepsilon}$ and $\phi_2 = (1 +r_{\varepsilon}) \theta_{\varepsilon}$ with $\theta_{\varepsilon} = \arccos(1-\varepsilon)$ and $r_{\varepsilon} = 0.1$.

On this instance, the BAI algorithms without modification require on average $545$ times more samples than compared to their modified version (Table~\ref{tab:comparison_stopping_rule_BAI_algos_add} in Appendix~\ref{app:subsection_add_exp_multiplicative}). The discrepancy is particularly striking since the hard instance for $\varepsilon$-BAI is even harder for BAI.

\begin{figure}[ht]
	\centering
	\includegraphics[width=0.98\linewidth]{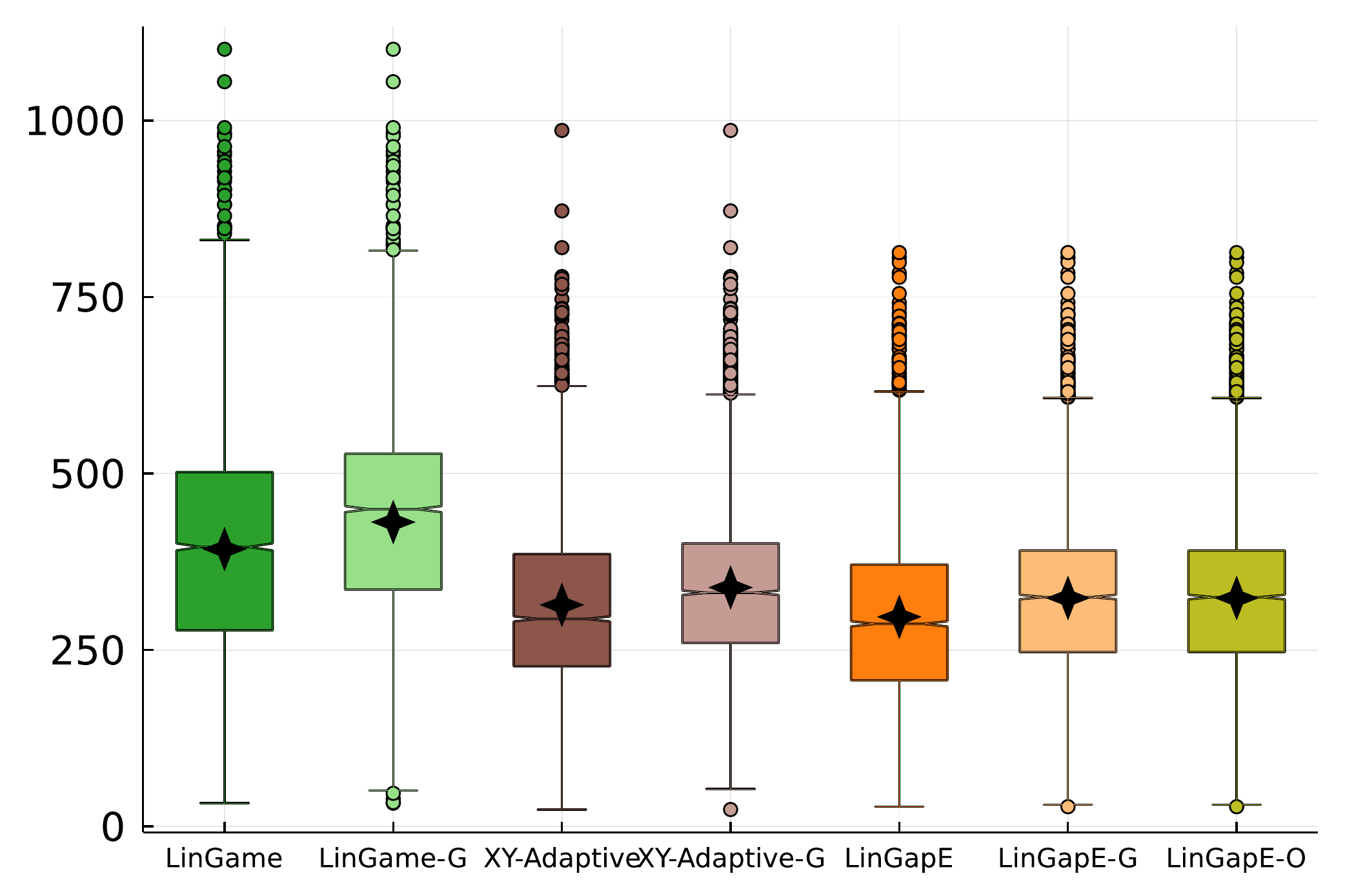}
	\caption{Empirical stopping time of the modified BAI algorithms with $z_t \in z_{F}(\mu_{t-1}, N_{t-1})$ on the hard instance (star is mean). ``-G'' denotes $z_t \in z^{\star}(\mu_{t-1})$. ``-O'' denotes the $\varepsilon$-gap stopping rule for $z_t \in z^{\star}(\mu_{t-1})$.}
	\label{fig:hardinst_empirical_stop_modified_bai_add}
\end{figure}

Figure~\ref{fig:hardinst_empirical_stop_modified_bai_add} reveals that, for all modified BAI, considering an instantaneous furthest answer instead of a greedy answer leads to lower empirical stopping time.
Their ratio is $0.92$ on average.
This matches the asymptotic observations in Figure~\ref{fig:theoretical_study_greedy_vs_furthest_add}(b) (Appendix~\ref{app:subsubsection_add_exp_additive}, equivalent of Figure~\ref{fig:theoretical_study_greedy_vs_furthest_mul}(b) for the additive setting).
The modified LinGapE using $z_t \in z_{F}(\mu_{t-1},N_{t-1})$ outperforms the $\varepsilon$-gap extension of the original stopping rule, which is equivalent to using (\ref{eq:definition_stopping_criterion}) with $z_t \in z^{\star}(\mu_{t-1})$.
While guided by the asymptotic regime, using $z_{F}(\mu_{t-1}, N_{t-1})$ instead of $z^{\star}(\mu_{t-1})$ for the stopping-recommendation pair has practical utility in the moderate confidence regime with a $10\%$ speed-up in terms of sample complexity.

\section{L$\varepsilon$BAI Algorithm} \label{sec:section_lebai}

Leveraging the concept of furthest answer in the sampling rule, we present \hyperlink{algoLeBAI}{L$\varepsilon$BAI} (\textbf{L}inear \textbf{$\varepsilon$-BAI}), an asymptotically optimal algorithm for $(\varepsilon,\delta)$-PAC best-answer identification in transductive linear bandits. It deals with both the multiplicative and the additive $\varepsilon$-optimality.
Similarly to works on linear bandits \citep{abbasi_yadkori_2011_ImprovedAlgorithmsLinear, soare_2014_BestArmIdentificationLinear}, we assume that the set of parameters is bounded, i.e. there exists $M>0$ such that for all $\mu \in \cM$, $\|\mu\|_2 \leq M$.

\begin{algorithm}[ht]
    \caption{\protect\hypertarget{algoLeBAI}{L$\varepsilon$BAI}}
    \label{algo:LeBAI}
	\begin{algorithmic}
		\STATE {\bfseries Input:} History $\cF_t$, $\cZ$-oracle $\cL^{\cZ}$ and learner $\cL^{\cK}$.
		\STATE {\bfseries Output:} Candidate $\varepsilon$-optimal answer $\hat{z}$.
        \STATE Pull once each arm $a \in \cK$, set $n_{0}=K$ and $W_{n_{0}} = 1_K$;
        \FOR{$t=n_{0} + 1, \cdots$}
        \STATE  Get $z_t \in z_{F}(\mu_{t-1}, N_{t-1})$;
        \STATE 	If (\ref{eq:definition_stopping_criterion}) holds for $z_t$ then return $z_t$;
        \STATE  Get $\left(\tilde{z}_t, w_t^{\cL^{\cK}}\right)$ from $\cL^{\cZ} \times \cL^{\cK}$;
        \STATE  Let $w_t = \frac{\1_K}{tK}  + \left( 1 - \frac{1}{t} \right) w_t^{\cL^{\cK}}$ and $W_t = W_{t-1} + w_t$;
        \STATE  Closest alternative:
        \STATE  $\quad \lambda_t \in \argmin_{\lambda \in \neg_{\varepsilon} \tilde{z}_t} \|\mu_{t-1} - \lambda\|^2_{V_{w_t}}$;
        \STATE  Optimistic gains: $\forall a \in \cK$,
        \STATE  $\quad U_t^a = \left(\|\mu_{t-1} - \lambda_t\|_{a a\transpose} + \sqrt{c_{t-1}^{a}} \right)^2$;
        \STATE  Feed $\cL^{\cK}$ with gain $g_{t}(w) = (1 - \frac{1}{t})\langle w, U_t\rangle$;
        \STATE  Pull $a_t \in \argmin_{a \in \cK} N_{t-1}^{a} - W_t^{a}$, observe $X_{t}^{a_t}$;
        \ENDFOR
	\end{algorithmic}
\end{algorithm}

\paragraph{Structure} After pulling each arm once, at each round $t \geq n_{0} + 1$, if the stopping condition (\ref{eq:definition_stopping_criterion}) for the candidate answer $z_t \in z_{F}(\mu_{t-1}, N_{t-1})$, we return $z_t$; else, the sampling rule returns an arm $a_t$ to pull. Then, the statistics are updated based on this new observation.

\paragraph{Sampling Rule} The algorithmic ingredients used in the sampling rule of \hyperlink{algoLeBAI}{L$\varepsilon$BAI} build upon the ones in LinGame \citep{degenne_2020_GamificationPureExploration}. It is a saddle-point algorithm approximating a two-player zero-sum game. At each round $t\geq n_{0}+1$, if the algorithm hasn't stopped yet, the agent chooses an $\varepsilon$-optimal answer and a pulling proportion over arms $\left(\tilde{z}_t,w_t^{\cL^{\cK}}\right) \in \cZ_{\varepsilon}(\mu_{t-1})\times \simplex$, where $\tilde{z}_t$ can be different from $z_t$. A mild logarithmic forced exploration is added, i.e. $w_t = \frac{1}{tK} \1_K + \left( 1 - \frac{1}{t} \right) w_t^{\cL^{\cK}}$. The agent will play by combining a no-regret learner on $\simplex$ (e.g. AdaHedge of \citet{derooij_2013_FollowLeaderIf}), denoted by $\cL^{\cK}$, and a $\cZ$-oracle, denoted by $\cL^{\cZ}$. While Theorem~\ref{thm:asymptotic_optimality_algorithm} was proven for $\tilde{z}_{s} \in z_{F}(\mu_{s-1})$, we obtain similar empirical performance with the heuristic $\tilde{z}_{s} \in z_{F}(\mu_{s-1}, \mu_{s-1})$ at a much lower computational cost.

Given $(\tilde{z}_t, w_t)$ from $\cL^{\cZ} \times \cL^{\cK}$, the nature plays the most confusing alternative parameter $\lambda_t \in \argmin_{\lambda \in \neg_{\varepsilon} \tilde{z}_t} \|\mu_{t-1} - \lambda\|^2_{V_{w_t}}$.
To update $\cL^{\cK}$, the agent uses gains $g_{t}(w) = (1 - \frac{1}{t})\langle w, U_t\rangle$ where the optimistic gains are defined for all $a \in \cK$ as $U_t^a = \left(\|\mu_{t-1} - \lambda_t\|_{a a\transpose} + \sqrt{c_{t-1}^{a}} \right)^2$ with $c_{t-1}^{a} = \min \left\{ 2 \beta \left(s^{2}, s^{2/3}\right)  \|a\|^2_{V_{N_{s}}^{-1}}, 4M^2L_{\cK}^2\right\}$.
Under a good event, the quantity $\langle w, U_t\rangle$ is an upper bound on the unknown $\inf_{\lambda \in \neg_{\varepsilon} z_{F}(\mu)} \|\mu - \lambda\|^2_{V_{w}}$ (Lemma~\ref{lem:optimistic_gain_is_lower_bounded_by_sample_complexity}). Finally, $a_t$ is obtained deterministically by \textit{tracking}, i.e. $a_t \in \argmin_{a \in \cK} N_{t-1}^{a} - W_t^{a}$.

The computational cost is discussed in Appendix~\ref{app:subsection_implementations_details}.
To obtain efficient implementations for combinatorial or large arms sets $K$, \hyperlink{algoLeBAI}{L$\varepsilon$BAI} should be modified by using existing improvements for game-based algorithms \citep{tirinzoni_2020_AsymptoticallyOptimalPrimal,reda_2021_DealingWithMisspecification, jourdan_2021_EfficientPureExploration}.

When $\varepsilon = 0$, \hyperlink{algoLeBAI}{L$\varepsilon$BAI} is close to LinGame, but uses one learner instead of $Z$ learners. Other differences are that LinGame uses regularization in the estimator and a stopping threshold featuring $d$.

\subsection{Upper Bound} \label{sec:subsection_sample_complexity_upper_bound}

For both the multiplicative and the additive $\varepsilon$-optimality, Theorem~\ref{thm:asymptotic_optimality_algorithm} shows that \hyperlink{algoLeBAI}{L$\varepsilon$BAI} yields an $(\varepsilon, \delta)$-PAC asymptotically optimal algorithm. The proof sketch of Theorem~\ref{thm:asymptotic_optimality_algorithm} is inspired by the one of LinGame \citep{degenne_2020_GamificationPureExploration}, hence we will only highlight the novel technical difficulties that had to be addressed.

\begin{theorem} \label{thm:asymptotic_optimality_algorithm}
Let $\cL^{\cK}$ with sub-linear regret (e.g. AdaHedge) and $\cL^{\cZ}$ returning $\tilde{z}_{t} \in z_{F}(\mu_{t-1})$.
Using (\ref{eq:definition_stopping_threshold}) as stopping threshold $\beta(t,\delta)$, \hyperlink{algoLeBAI}{L$\varepsilon$BAI} yields an $(\varepsilon, \delta)$-PAC algorithm and, for all $\mu \in \cM$ such that $|z_{F}(\mu)|=1$,
\begin{align*}
	\limsup_{\delta \rightarrow 0} \frac{\expectedvalue_{\mu} \left[ \taud \right]}{\ln \left( 1/\delta\right)} \leq T_{\varepsilon}(\mu) \: .
\end{align*}
\end{theorem}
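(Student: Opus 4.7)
This is immediate from Lemma~\ref{lem:delta_PAC_recommendation_stopping_pair}: at every round $t \geq n_{0}+1$ the algorithm evaluates the GLR stopping criterion (\ref{eq:definition_stopping_criterion}) with threshold $\beta(t-1,\delta)$ from (\ref{eq:definition_stopping_threshold}) and recommends $z_t \in z_{F}(\mu_{t-1}, N_{t-1}) \subseteq \cZ_{\varepsilon}(\mu_{t-1})$, so the hypothesis of the lemma is satisfied with $\cT = \{n_{0}+1, n_{0}+2, \ldots\}$. For the asymptotic upper bound, the plan is to follow the saddle-point template of LinGame \citep{degenne_2020_GamificationPureExploration}, while carefully tracking the role of the furthest answer in both the game and the stopping test.

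\textbf{Part II: Core argument.} I would first establish a high-probability concentration event $\cE$ on which the OLS estimator $\mu_{s-1}$ stays in a fixed small neighborhood of $\mu$ for every $s \geq t_{0}$, where $t_{0}$ is a deterministic constant depending only on $\mu$. The forced-exploration term $\1_K/(tK)$ in $w_t$ combined with the tracking rule $a_t \in \argmin_a N_{t-1}^a - W_t^a$ guarantees $N_{s-1}^a = \Omega(\ln s)$ for every arm, so by standard Gaussian concentration $\probability_\mu(\cE^c)$ decays fast enough to contribute a negligible term to $\expectedvalue_\mu[\taud]$. On $\cE$, finiteness of $\cZ$ together with $|z_{F}(\mu)|=1$ implies $\tilde{z}_s = z_{F}(\mu_{s-1}) = z_{F}(\mu)$ for all $s \geq t_{0}$, so nature's best response $\lambda_s$ is computed over $\neg_\varepsilon z_F(\mu)$. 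Lemma~\ref{lem:optimistic_gain_is_lower_bounded_by_sample_complexity} then shows that $\langle w, U_s \rangle$ upper-bounds $\inf_{\lambda \in \neg_{\varepsilon} z_{F}(\mu)} \|\mu - \lambda\|^2_{V_w}$ up to vanishing confidence widths $c_{s-1}^a$. Applying the sub-linear regret property of $\cL^{\cK}$ to the gains $g_s$, combined with the tracking guarantee $|N_{t-1}^a - W_t^a| = O(1)$, yields on $\cE$
\begin{equation*}
\inf_{\lambda \in \neg_{\varepsilon} z_{F}(\mu)} \|\mu_{t-1}-\lambda\|^2_{V_{N_{t-1}}} \;\geq\; 2(t-1)\, T_{\varepsilon}(\mu)^{-1} - o(t) .
\end{equation*}
Since $z_t \in z_{F}(\mu_{t-1}, N_{t-1})$ maximizes the empirical GLR over $\cZ_{\varepsilon}(\mu_{t-1})$, the same lower bound transfers to the stopping criterion (\ref{eq:definition_stopping_criterion}) evaluated at $z_t$, and it triggers as soon as $2(t-1)\, T_{\varepsilon}(\mu)^{-1} - o(t) \geq 2\beta(t-1,\delta) \sim 2\ln(1/\delta)$, which gives the claimed limsup after integrating over $\cE^c$ using the tail bound.

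\textbf{Main obstacle.} The delicate point is the asymmetric use of two different candidate answers in the algorithm: $\tilde{z}_s = z_{F}(\mu_{s-1})$ is fed to the game (driving both $\lambda_s$ and the optimistic gains $U_s^a$), while $z_t \in z_{F}(\mu_{t-1}, N_{t-1})$ is only used at the stopping check. The first choice is necessary so that the alternative set in the no-regret argument converges to $\neg_{\varepsilon} z_{F}(\mu)$, producing exactly the $T_{\varepsilon}(\mu)^{-1}$ from the lower bound; the second maximizes the empirical GLR, which is what makes the stopping condition trigger at the optimal rate. Bridging the two rests on showing that $z_{F}(\mu_{s-1}) \to z_{F}(\mu)$ on $\cE$, which relies on the uniqueness assumption $|z_{F}(\mu)|=1$, the joint continuity of $(\mu', w) \mapsto \inf_{\lambda \in \neg_{\varepsilon} z} \|\mu'-\lambda\|^2_{V_w}$ for each fixed $z$, and stability of $\argmax$ over a finite set under small perturbations of the objective. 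A secondary, mainly bookkeeping difficulty is propagating the various error terms (tracking $O(1)$, no-regret $o(t)$, Gaussian concentration, and vanishing $c_{s-1}^a$) through the chain of inequalities without losing the leading $t \cdot T_{\varepsilon}(\mu)^{-1}$ term, so that the ratio $\expectedvalue_\mu[\taud]/\ln(1/\delta)$ converges to $T_\varepsilon(\mu)$ rather than to a strictly larger constant.
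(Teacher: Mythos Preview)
Your Part~I and the overall architecture of Part~II (optimistic game analysis, sub-linear regret of $\cL^{\cK}$, tracking error $O(1)$, then reading the stopping rule through the instantaneous furthest answer) match the paper. The gap is in the sentence where you dispose of the ``wrong answer'' rounds: you assert that on a high-probability event $\cE$ the estimator $\mu_{s-1}$ lies in a fixed small neighborhood of $\mu$ for all $s\geq t_0$ with $t_0$ deterministic, and that $\probability_\mu[\cE^c]$ ``decays''. These two requirements are incompatible with the algorithm's forced-exploration rate. The only concentration event whose complement probability decays (and is summable, as needed for the $\expectedvalue_\mu[\taud]$ argument) is of the form $\cE_t=\{\forall s\le t:\|\mu_s-\mu\|_{V_{N_s}}^2\le f(t)\}$ with $f(t)=\Theta(\ln t)$. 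On $\cE_t$, the $1/(sK)$ forced exploration plus tracking only give $\lambda_{\min}(V_{N_{s-1}})=\Omega(\ln s)$, so $\|\mu_{s-1}-\mu\|_2^2\le f(t)/\Omega(\ln s)=O(\ln t/\ln s)$. This ratio is not uniformly small over $s\le t$; in particular it does not drop below the fixed threshold needed to force $z_F(\mu_{s-1})=z_F(\mu)$ after any deterministic $t_0$ independent of $t$. So your self-correction argument, as written, does not close.

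The paper circumvents this by \emph{not} proving pointwise $\ell_2$ closeness of $\mu_{s-1}$ to $\mu$. Instead it bounds the \emph{number} of bad rounds $N_F(t)=|\{s:\tilde z_s\neq z_F(\mu)\}|$ sublinearly, via a cumulative argument in the $V_{w_s}$ norm (not $V_{N_s}$). Concretely: (i) the optimistic-slack mechanism gives $\sum_{s}\|\mu_{s-1}-\mu\|_{V_{w_s}}^2=\mathrm{polylog}(t)$ under $\cE_t$; (ii) whenever $\tilde z_s\neq z_F(\mu)$, one has $\mu_{s-1}\in\neg_F z_F(\mu):=\{\lambda:z_F(\lambda)\neq z_F(\mu)\}$, and the assumption $|z_F(\mu)|=1$ plus upper hemicontinuity of $z_F(\cdot)$ yields a strictly positive gap $\Delta_{\min}^2=\inf_{\lambda\in\neg_F z_F(\mu)}\max_a\|\mu-\lambda\|_{aa^\top}^2>0$; (iii) because $w_s\ge \tfrac{1}{sK}\1_K$, every bad round contributes at least $\Delta_{\min}^2/(sK)$ to the cumulative sum in (i). Combining (i)--(iii) gives $N_F(t)=o(t)$, which is exactly what the downstream inequalities need. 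Your stability-of-argmax idea is the right intuition for (ii), but the quantitative control must be routed through the weighted cumulative deviation $\sum_s\|\mu_{s-1}-\mu\|_{V_{w_s}}^2$, not through $\|\mu_{s-1}-\mu\|_2$.
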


\paragraph{Technical Difficulties} In BAI, we have $|z^{\star}(\mu)|=1$. The key property used in BAI proofs which does not hold in $\varepsilon$-BAI is that for all $z \neq z^{\star}(\mu)$, $\mu$ belongs to the alternative $\neg_{0} z$. The consequence of this is that whenever the answer used by the sampling rule $\tilde{z}_t$ is wrong, the correct parameter belongs to $\neg_0 \tilde{z}_t$, hence the algorithm will sample in order to try and exclude that true parameter, which cannot succeed and will at some point correct the mistake. In $\varepsilon$-BAI we can have $\tilde{z}_t \ne z_F(\mu)$ while having $\mu \notin \neg_{\varepsilon} \tilde{z}_t$ and there is a priori no such self-correction mechanism to enforce that $\tilde{z}_t = z_F(\mu)$ after a while.

Our analysis reveals that a similar self-correction mechanism can be obtained for \hyperlink{algoLeBAI}{L$\varepsilon$BAI}.
Let $\neg_{F} z$ be the \textit{furthest alternative} to $z$, i.e. the set of parameters for which $z$ is not the unique furthest answer.
Intuitively, as it uses $\tilde{z}_t \in z_F(\mu_{t-1})$, \hyperlink{algoLeBAI}{L$\varepsilon$BAI} samples to asymptotically exclude $\neg_F \tilde{z}_t$.
Leveraging the logarithmic forced exploration, this cannot succeed when $\tilde{z}_t \ne z_F(\mu)$.
Those two choices yield a self-correction mechanism for $\varepsilon$-BAI.
More formally, we show that, under a good concentration event, the event $\tilde{z}_s \ne z_F(\mu)$ only happens a sub-linear number of times.

\section{Related Work} \label{sec:section_related_work}

\paragraph{Track-And-Stop (TaS)} First introduced in \citet{garivier_2016_OptimalBestArm} to solve BAI in MAB, TaS computes at each time step the optimal allocation, and then tracks it (with added forced exploration). When no close form solutions are available, i.e. when additional structure is considered, TaS-based algorithms suffer from intractable computational cost. TaS-based algorithms are asymptotically optimal for BAI and efficient to compute \citep{jedra_2020_OptimalBestarmIdentification}. Building on Frank-Wolfe algorithm, the computational efficient FWS has recently been introduced \citep{wang_2021_FastPureExploration}.

\paragraph{$\varepsilon$-BAI Algorithms} Tackling $\varepsilon$-BAI in MAB for additive $\varepsilon$-optimality, $\varepsilon$-TaS \citep{garivier_2019_NonAsymptoticSequentialTests} recommends $z_{F}(\mu_t,N_t)$ and uses the associated GLRT as stopping rule. The sampling rule computes $w_{F}(\mu_{t})$ and then tracks it with added forced exploration.
Addressing additive spectral bandits, SpectralTaS \citep{kocak_2021_EpsilonBestArm} recommends $z^{\star}(\mu_t)$ and uses the GLRT associated with $z_{F}(\mu_t,N_t)$ for the stopping rule. For the sampling rule, a mirror ascent algorithm is run based on a super-gradient of a function depending on any $\varepsilon$-optimal answer. While the choice of the answer is not discussed, it is our understanding that a greedy answer is used (matching their candidate answer). When considering $\varepsilon$-BAI on the unit sphere, \citet{jedra_2020_OptimalBestarmIdentification} recommend $z^{\star}(\mu_t)$ and use the associated GLRT, however their sampling rule is uniform over a spanner.

Designed for the multiple-correct answer setting, Sticky TaS \citep{degenne_2019_PureExplorationMultiple} is a modified TaS algorithm: at round $t$, they compute $\bigcup_{\mu' \in \mathcal{C}_t}z_{F}(\mu')$ where $\mathcal{C}_t$ is a continuous confidence region around $\mu_{t}$, and stick to one of those (given an arbitrary order).
For some identification problems (e.g. Any Half-Space), it rewrites as computing $\bigcup_{\mu \in \mathcal D_t }z_{F}(\mu')$, where $\mathcal D_t$ is discrete.
There is no such rewriting for $\varepsilon$-BAI.
While Sticky TaS can be implemented for $\mathcal D_t$, it is not feasible for $\mathcal{C}_t$.
Experiments suggest that it performs on par with $\varepsilon$-TaS at a higher computational cost, i.e. solving the same optimization for each parameter in a confidence region.

\section{Experiments} \label{sec:section_experiments}

We show that \hyperlink{algoLeBAI}{L$\varepsilon$BAI} has competitive empirical performance compared to existing $\varepsilon$-BAI algorithms, which are computationally expensive, and that using an instantaneous furthest answer is efficient both in terms of computational cost and sample complexity. Moreover, \hyperlink{algoLeBAI}{L$\varepsilon$BAI} performs on par with the modified BAI algorithms, which are not asymptotically optimal, on hard and random instances.

As heuristic with lower computational cost (not supported by Theorem~\ref{thm:asymptotic_optimality_algorithm}), the $\cZ$-oracle in \hyperlink{algoLeBAI}{L$\varepsilon$BAI} returns an instantaneous furthest answer, i.e. $\tilde{z}_t \in z_{F}(\mu_{t-1}, N_{t-1})$. The experiments below are considering the multiplicative $\varepsilon$-optimality (Appendix~\ref{app:subsection_add_exp_multiplicative}-\ref{app:subsubsection_add_exp_additive} for supplementary ones). We use the same experimental setup as in Section~\ref{ssec:subsection_experiments}. On the $5000$ runs, we report the standard deviation of means by using sub-samples of $100$ runs.

\begin{table}[ht]
\caption{Empirical stopping time ($\pm$ $\sigma$) on the hard instance with $\cK=\{e_1,e_2\}$.}
\label{tab:average_empirical_stopping_time_mul}
\begin{center}
\begin{tabular}{c  c  c  c }
  \toprule
  & $z^{\star}(\mu_{t-1})$ & $z_F(\mu_{t-1})$ & $z_F(\mu_{t-1}, N_{t-1})$\\
\cmidrule(l){2-4}
L$\varepsilon$BAI & $416$ $(\pm 13)$ & $383$ $(\pm 16)$ & $381$ $(\pm 17)$ \\
$\varepsilon$-TaS & $400$ $(\pm 14)$ & $371$ $(\pm 15)$ & $371$ $(\pm 15)$ \\
Fixed           & $401$ $(\pm 14)$ & $374$ $(\pm 14)$ & $374$ $(\pm 14)$ \\
Uniform         & $492$ $(\pm 16)$ & $450$ $(\pm 17)$ & $449$ $(\pm 17)$ \\
\bottomrule
\end{tabular}
\end{center}
\end{table}

\paragraph{$\varepsilon$-BAI and Candidate Answer} Even when $K$ is small, algorithms based on solving the optimization problem $(z_{F}(\mu), w_{F}(\mu))$ are intractable, i.e. $\varepsilon$-TaS or recommending the furthest answer. We evaluate their performance empirically on the hard instance with $\cK=\{e_1,e_2\}$, and discretize uniformly $\Delta_{2}$ with $500$ vectors.

In Table~\ref{tab:average_empirical_stopping_time_mul}, we combine and compare four $\varepsilon$-BAI sampling rules with three candidate answers for the stopping rule (\ref{eq:definition_stopping_criterion}). Comparing the rows of Table~\ref{tab:average_empirical_stopping_time_mul} reveals that \hyperlink{algoLeBAI}{L$\varepsilon$BAI} performs on par with $\varepsilon$-TaS and the ``oracle'' \textit{fixed} algorithm, which tracks the unknown optimal allocation $w_{F}(\mu)$. It also consistently outperforms uniform sampling ($\approx 85 \%$).

Comparing the columns of Table~\ref{tab:average_empirical_stopping_time_mul}, we see that using a greedy answer is consistently worse than a (instantaneous) furthest answer, with a ratio of stopping time being on average $0.92$ (coherent with Figure~\ref{fig:theoretical_study_greedy_vs_furthest_mul}(b)). Moreover, it highlights that using an instantaneous furthest answer achieves similar performance as a furthest answer at a lower computational cost.
In the following experiments, the stopping-recommendation pair is (\ref{eq:definition_stopping_criterion}) combined with $z_t \in z_F(\mu_{t-1}, N_{t-1})$.

\begin{figure}[ht]
	\centering
	\includegraphics[width=0.98\linewidth]{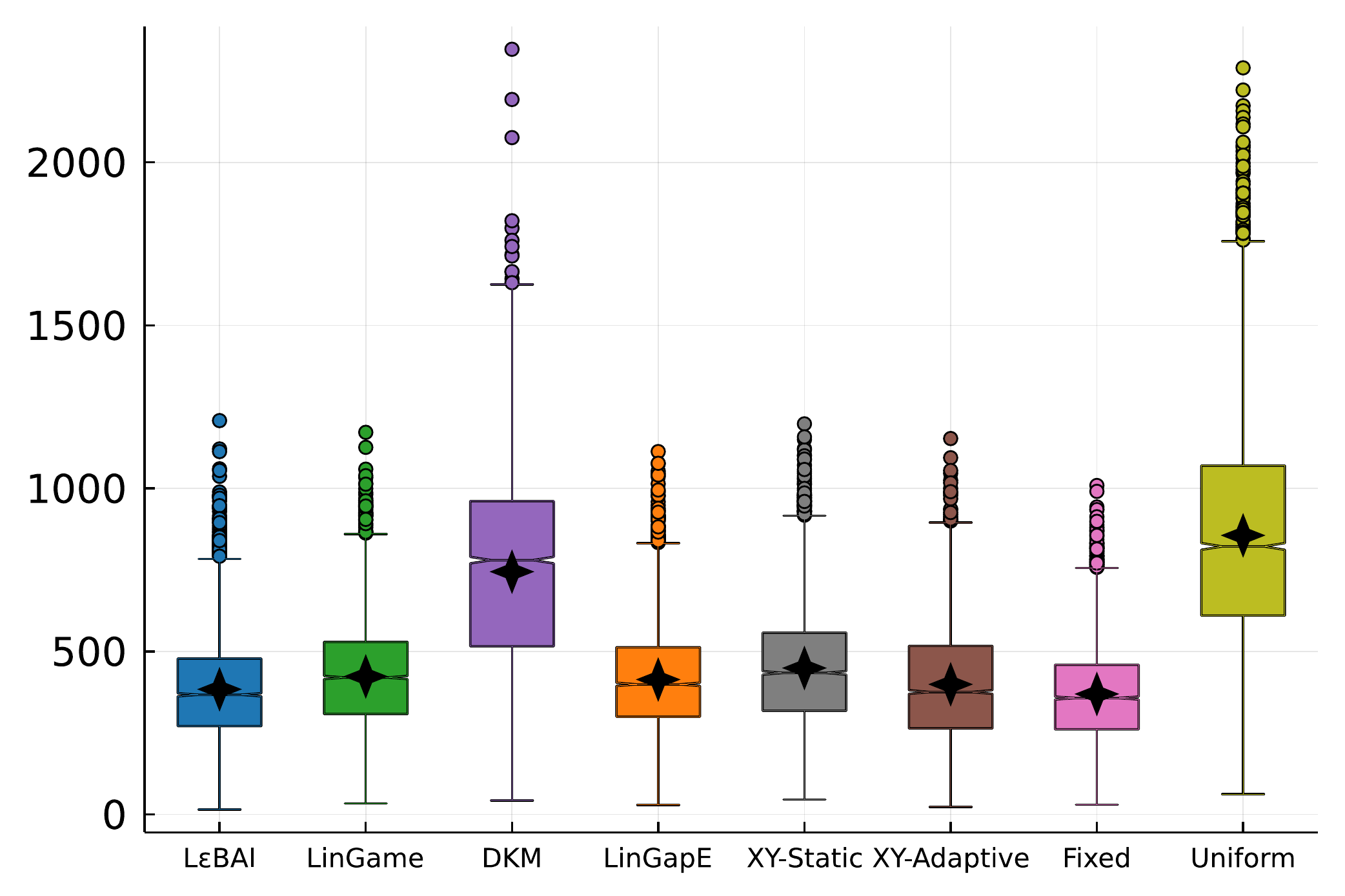}
	\caption{Empirical stopping time on the hard instance ($\cK = \cZ$). The modified BAI algorithms use (\ref{eq:definition_stopping_criterion}) with $z_t \in z_{F}(\mu_{t-1}, N_{t-1})$.}
	\label{fig:hardinst_empirical_stop_lebai_vs_modified_bai_mul}
\end{figure}

\paragraph{Modified BAI Algorithms} Figure~\ref{fig:hardinst_empirical_stop_lebai_vs_modified_bai_mul} compares \hyperlink{algoLeBAI}{L$\varepsilon$BAI} with the modified BAI algorithms, all using the same stopping-recommendation pair. We see that \hyperlink{algoLeBAI}{L$\varepsilon$BAI} slightly outperforms the modified LinGapE and $\cX\cY$-Adaptive, performs better than  the modified LinGame and $\cX\cY$-Static and is on par with the ``oracle'' \textit{fixed} algorithm. Uniform sampling and the modified DKM perform poorly.

\begin{figure*}[ht]
	\centering
	\includegraphics[width=0.24\linewidth]{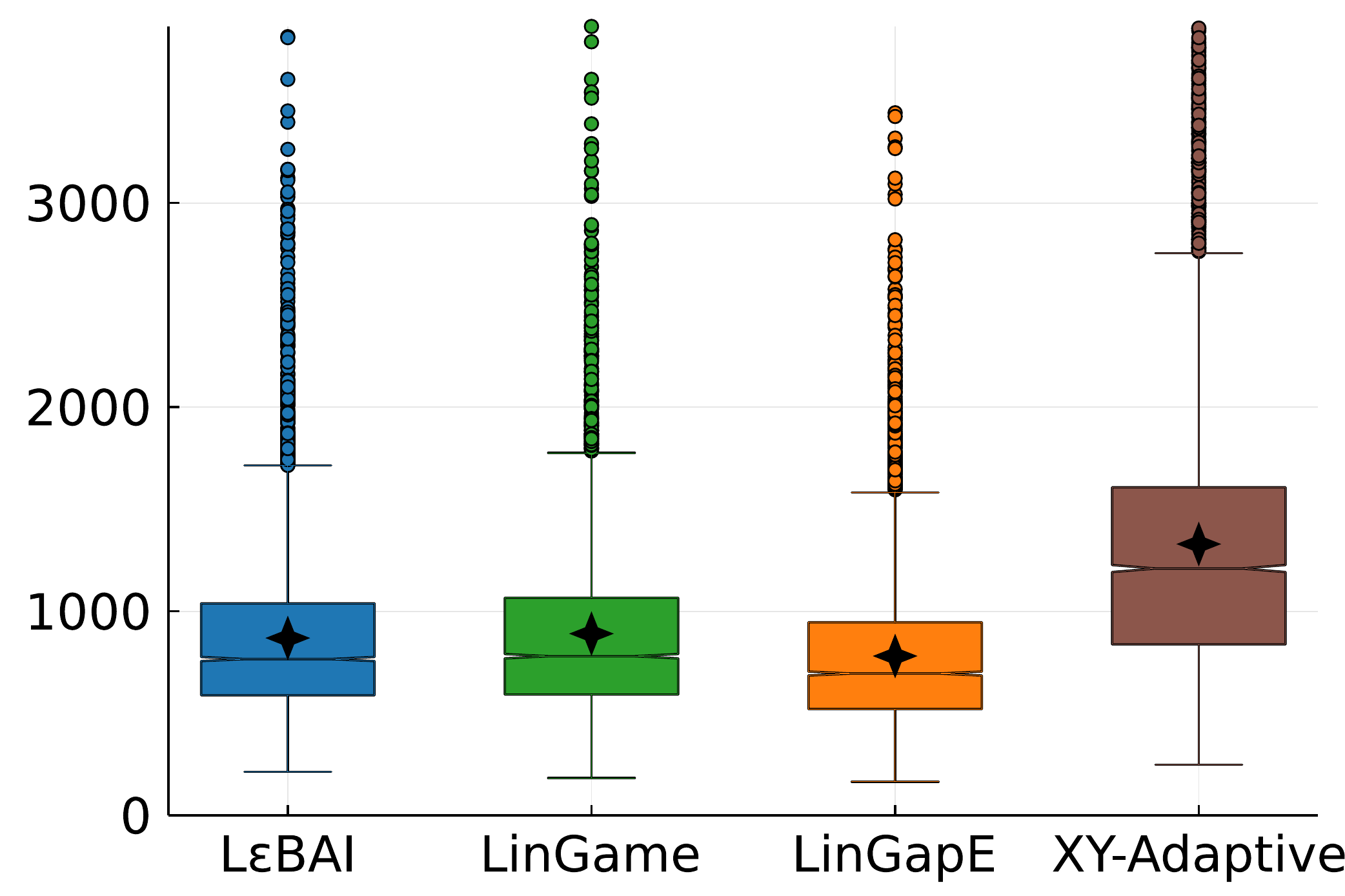}
	\includegraphics[width=0.24\linewidth]{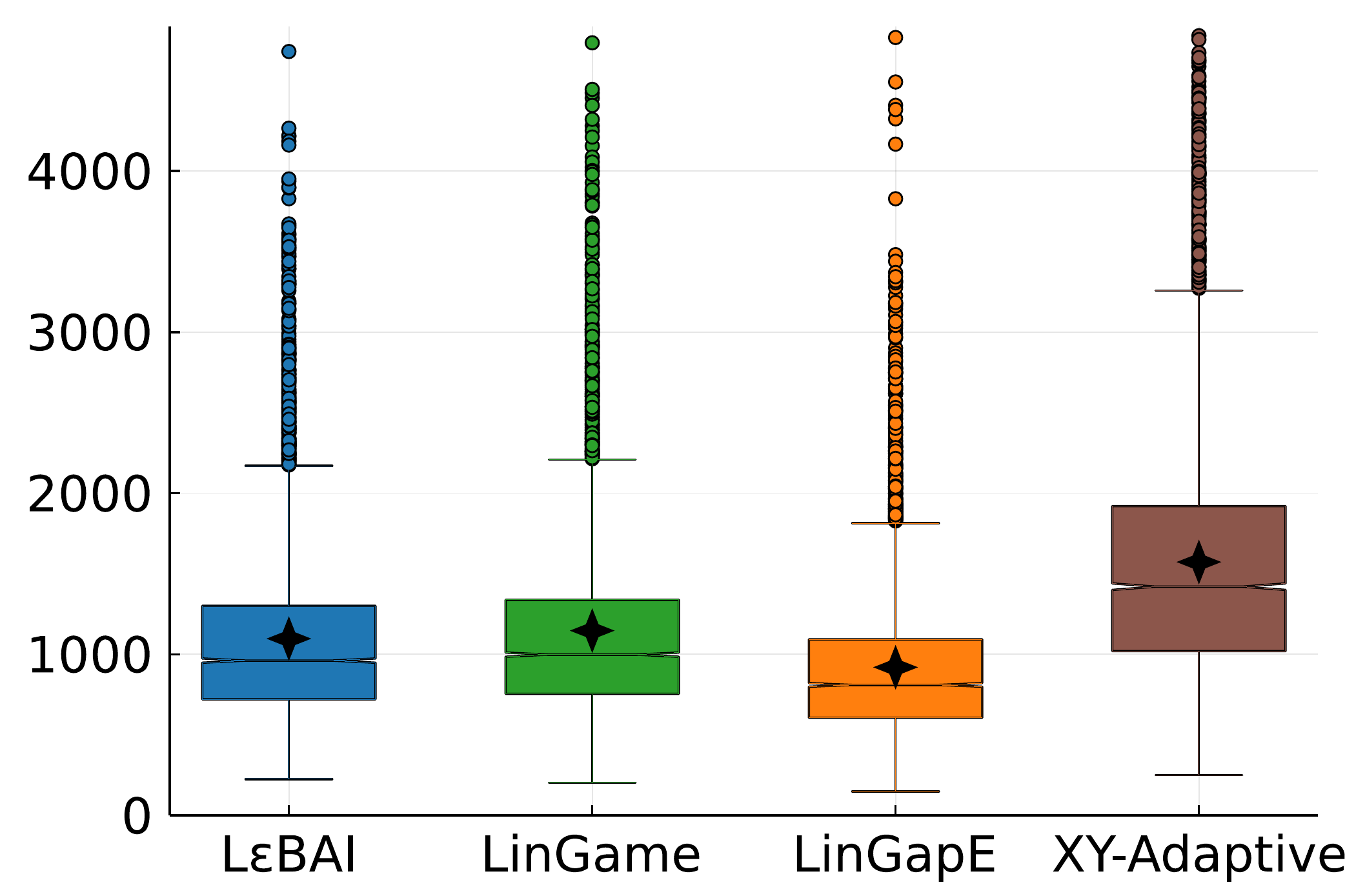}
	\includegraphics[width=0.24\linewidth]{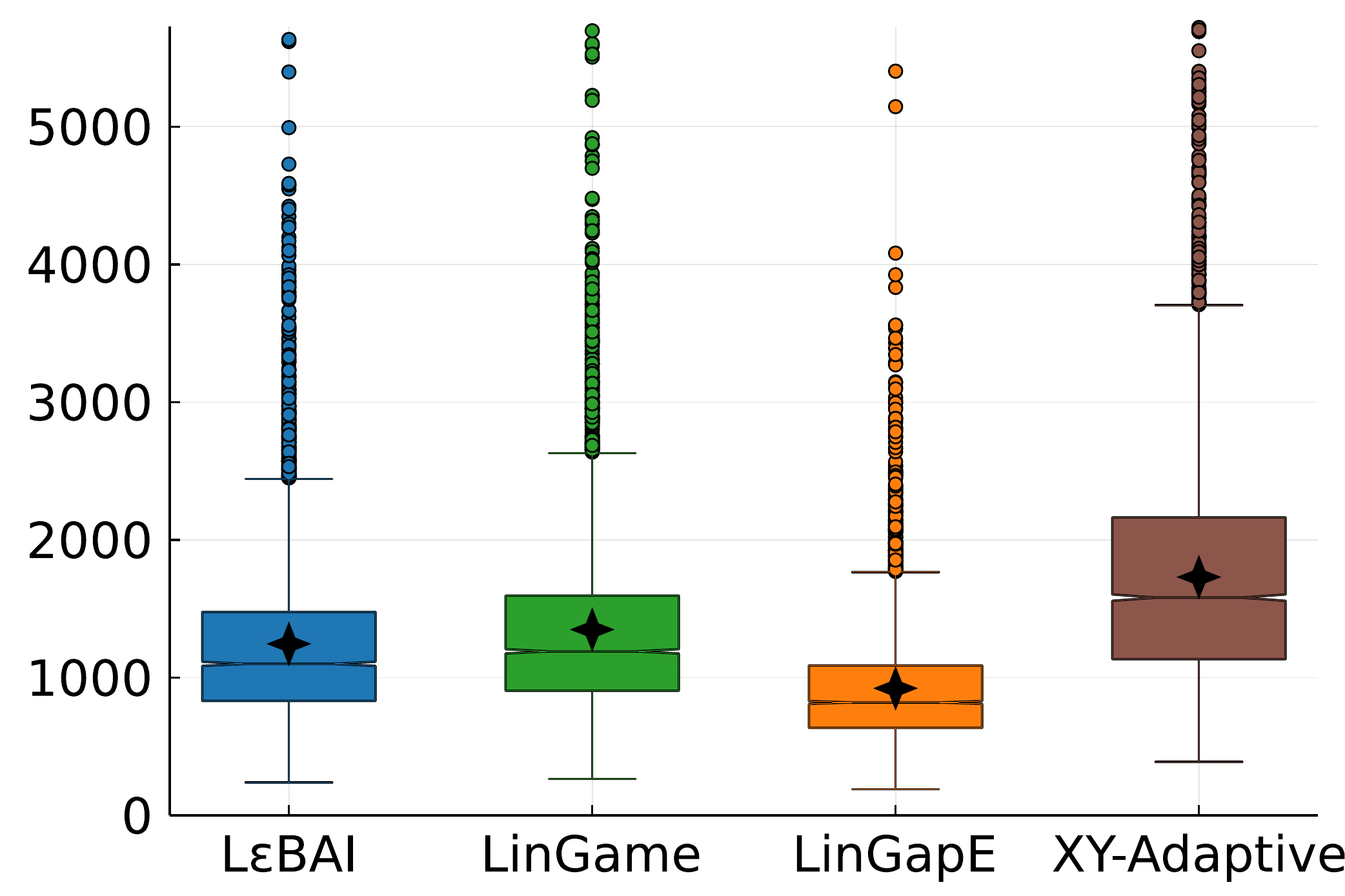}
	\includegraphics[width=0.24\linewidth]{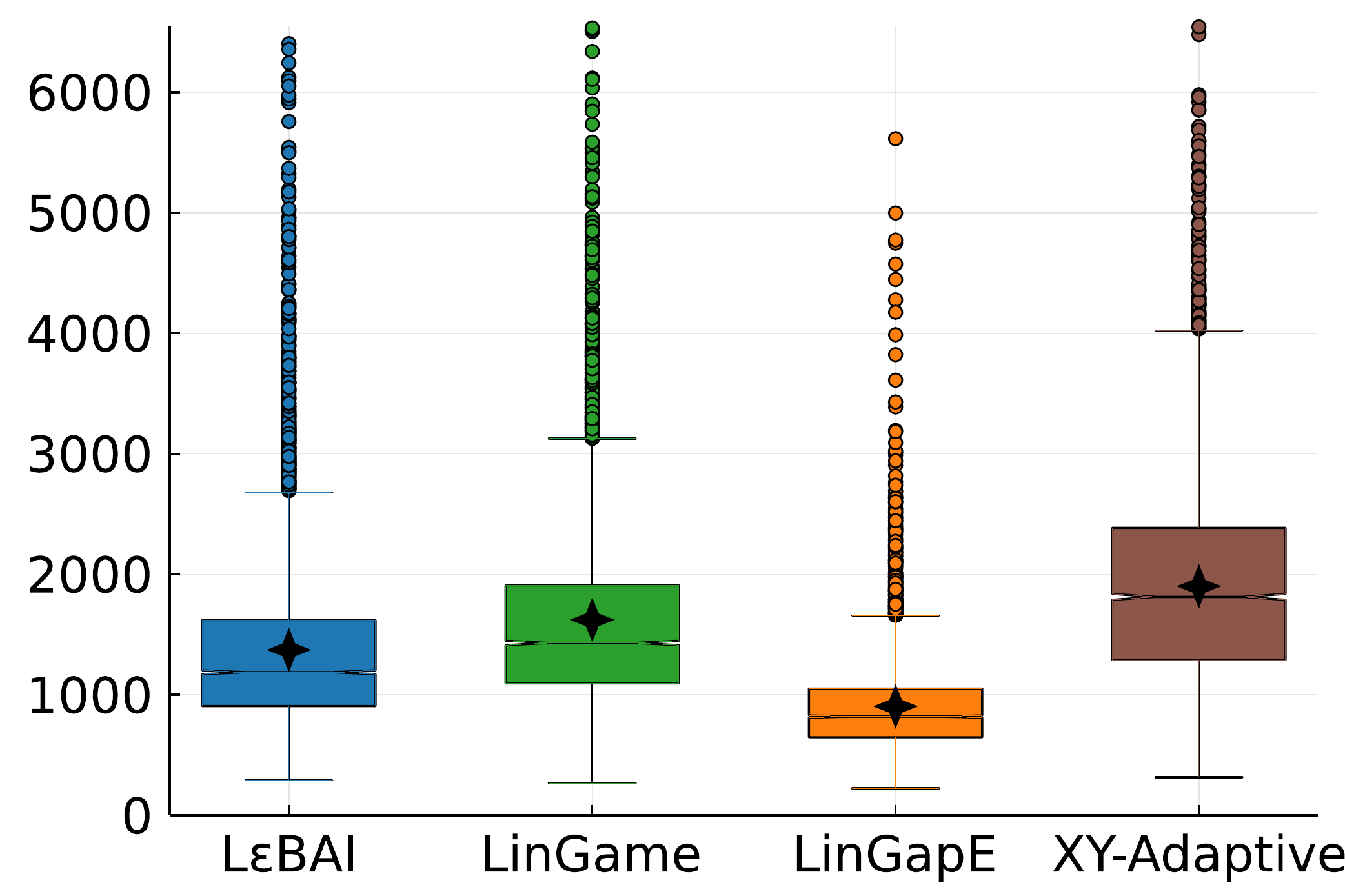}
	\caption{Empirical stopping time on random instances ($\cK=\cZ$) for $d \in \{6,8,10,12\}$ (from left to right). The modified BAI algorithms use (\ref{eq:definition_stopping_criterion}) with $z_t \in z_{F}(\mu_{t-1}, N_{t-1})$.}
	\label{fig:randinst_empirical_stop_dimensions_mul}
\end{figure*}

\paragraph{Random Instances} To assess the impact of higher dimensions, random instances are considered (one per run). For the answer set, $19$ vectors $(a_{k})_{k \in [19]}$ are uniformly drawn from $\bbS^{d-1} \eqdef \left\{a \in \Real^d: \|a\|_2 = 1\right\}$ and set $\mu = a_1$. To enforce multiple correct answers, a modification of the greedy answer is added such that $a_{20,i} = a_{1,i}$ for $i\neq i_{0}$ and $a_{20,i_0} = \frac{1-\|\mu\|^2_2 + \mu_{i_0}^2 - r_{\varepsilon}\varepsilon}{\mu_{i_0}}$ where $i_{0} = \argmin_{i \in [d]} \mu_i$ and $r_{\varepsilon} = 0.1$. Those instances are motivated by a practical BAI example where a modified/corrupted version of the unique correct answer exists. Seeing the problem as an $\varepsilon$-BAI one allows to return an $\varepsilon$-optimal answer, while avoiding wasteful queries required by BAI algorithms (Table~\ref{tab:comparison_stopping_rule_BAI_algos_mul}).

In Figure~\ref{fig:randinst_empirical_stop_dimensions_mul}, \hyperlink{algoLeBAI}{L$\varepsilon$BAI} shows similar empirical performance with modified BAI algorithms. Even though it is outperformed by the modified LinGapE, \hyperlink{algoLeBAI}{L$\varepsilon$BAI} is almost twice as fast as the modified $\cX\cY$-Adaptive and appears to be slightly more robust than the modified LinGame to increasing dimension.

\section{Conclusion} \label{sec:section_conclusion}

In $(\varepsilon, \delta)$-PAC best-answer identification for transductive linear bandits, we have shown that the choice of the candidate $\varepsilon$-optimal answer is important for the sample complexity. Using an instantaneous furthest answer as candidate answer, we proposed a simple procedure to adapt existing BAI algorithms for $\varepsilon$-BAI problems. Leveraging it in the sampling rule as well, we introduced \hyperlink{algoLeBAI}{L$\varepsilon$BAI} which is asymptotically optimal and has competitive empirical performance.

Computing the furthest answer requires solving the closest alternative sub-problem $|\cZ_{\varepsilon}(\mu_{t-1})|$ times. While that number is small (in particular much less that $Z$) in the examples we considered, that computation can become an issue if many different answers are close to each other.
If we extend the setting to continuous answers, the computation of the furthest answer by iterating becomes unfeasible. The question of finding an $\varepsilon$-optimal point of a reward function in a non-finite set is the general question of optimization, which is central to many areas of machine learning. Extending the problem-dependent approach of the bandit framework to that setting is an interesting research direction.

Studying the performance of $\varepsilon$-BAI algorithms on BAI tasks is another avenue for future work.
$\varepsilon$-BAI algorithms don't commit to the greedy answer unlike BAI methods, and we might have $z_{F}(\mu_{t-1}, N_{t-1}) = z^{\star}(\mu)$ before $z^{\star}(\mu_{t-1})= z^{\star}(\mu)$.
For this reason we conjecture that using an $\varepsilon$-BAI method with a well-tuned decreasing sequence $(\varepsilon_t)_{t \in \Natural^{\star}}$ might outperform BAI algorithms on best arm tasks.

Finally, since the existence of a tight finite-time lower bound for multiple-correct answer setting is still an open problem, it remains unclear how to assess the theoretical performance of algorithms in this regime.
We believe that, once derived, this lower bound would reveal the existence of strong moderate confidence terms (independent of $\delta$) affecting the sample complexity, which could then be used to design $\varepsilon$-BAI algorithms with theoretical guarantees in both regime.

\section*{Acknowledgements}

Experiments presented in this paper were carried out using the Grid'5000 testbed, supported by a scientific interest group hosted by Inria and including CNRS, RENATER and several Universities as well as other organizations (see \url{https://www.grid5000.fr}).


\bibliography{AlgoLeBAI}
\bibliographystyle{icml2022}

\newpage
\appendix
\onecolumn

\section{Outline} \label{app:section_outline}

The appendices are organized as follows:
\begin{itemize}
	\item Notations are summarized in Appendix~\ref{app:section_notation}.
	\item In Appendix~\ref{app:section_proofs_related_characteristic_time}, we prove all the results presented in Section~\ref{sec:section_comparing_optimal_answers} on the (greedy) characteristic time.
	\item The proof of Lemma~\ref{lem:delta_PAC_recommendation_stopping_pair} on the stopping-recommendation pairs is given in Appendix~\ref{app:section_proofs_stopping_recommendation_pair}.
	\item The full proof of Theorem~\ref{thm:asymptotic_optimality_algorithm} is given in Appendix~\ref{app:section_proof_thm_asymptotic_optimality_algorithm}.
	\item The implementation details and additional experiments are presented in Appendix~\ref{app:section_implementations_and_experiments}.
\end{itemize}

\begin{table}[H]
	\caption{Notation for the setting}
    \label{tab:notation_problem}
	\begin{center}
     \begin{tabular}{c c l}
	\toprule
     Notation & Type & Description \\
	\midrule
     $\cM \subseteq \Real^d$ &   & Set of possible parameters subset of $\Real^d$, $\max_{\lambda \in \cM} \|\lambda\|_2 \leq M$ \\
     $\mu$ & $\cM$  & Bandit mean parameter, $(\mu_i)_{i \in [d]}$  \\
     $\cK \subseteq \Real^d$ &   & Set of arms of cardinality $K$, an arm is denoted by $a$ and $L_{\cK} \eqdef \max_{a \in \cK} \|a\|_2$ \\
     $\mu^a$ &  $\Real$ & Mean reward of arm $a$, $\mu^a \eqdef \langle \mu , a \rangle$  \\
     $\cZ \subseteq \Real^d$ &   & Set of answers of cardinality $Z$, an arm is denoted by $z$ \\
     $\varepsilon$ &  $\Real^{\star}_{+}$ & Approximation error \\
     $\cZ_{\varepsilon}(\mu) \subseteq \cZ$ &   & Set of $\varepsilon$-optimal answers, with $\cdot^{\{\add,\mul\}}$\\
     $\neg_{\varepsilon} z \subseteq \cM$ &   & Set of alternative parameters to answer $z$, with $\cdot^{\{\add,\mul\}}$\\
     $T_{\varepsilon}(\mu)$ & $\Real^{\star}_{+}$  & Characteristic time for $\mu$, with $\cdot^{\{\add,\mul\}}$\\
     $z^{\star}(\mu) \subseteq \cZ$ &   & Greedy answers for $\mu$ \\
     $z_F(\mu)$ & $\cZ$  & Furthest answer for $\mu$, with $\cdot^{\{\add,\mul\}}$ \\
     $z_F(\mu, N) \subseteq \cZ$ &   & Instantaneous furthest answers for $\mu$ and allocation $N$, with $\cdot^{\{\add,\mul\}}$ \\
	\bottomrule
    \end{tabular}
\end{center}
\end{table}

\begin{table}[H]
	\caption{Notation for algorithms}
    \label{tab:notation_algorithms}
	\begin{center}
     \begin{tabular}{c c l}
	\toprule
     Notation & Type & Description \\
	\midrule
     $\delta$ &  $(0,1)$ & Confidence parameter \\
     $a_t$ &  $\cK$ & Arm sampled at time $t$  \\
     $X_{t}^{a_t}$ &  $\Real$ & Observation at time $t$, $X_{t}^{a_t} \sim \gaussdistr(\mu^{a_t}, \sigma^2)$ where $\sigma^2=1$ \\
     $\cF_t$ &   & History up to time $t$, $\sigma(a_1, X_{1}^{a_1}, \cdots, a_{t}, X_{t}^{a_t})$ \\
     $\mu_t$ & $\Real^d$  & Maximum likelihood estimator, $\mu_{t} = V_{N_{t}}^{-1} \sum_{s=1}^{t} X_{s}^{a_s} a_s$ \\
     $z_t$ & $\cZ$  & Candidate answer at time $t$  \\
     $\tau_{\delta}$ &  $\Natural$ & Stopping time for confidence $\delta$\\
     $N_t$ & $(\Real_{+})^{K}$  & Empirical count of sampled arms at time $t$\\
     $w_t$, $W_t$ &  $(\Real_{+})^{K}$ & Pulling distribution over arms and its cumulative sum at time $t$ \\
     $\beta(t, \delta)$ &  $\Natural \times (0,1) \to \Real^{\star}_{+}$ & Stopping threshold at time $t$ for confidence $\delta$ \\
     $f(t)$ &  $\Natural \to \Real^{\star}_{+}$  & Exploration bonus at time $t$ \\
     $\lambda_t$ &  $\cM$ & Most confusing alternative parameter player by nature at time $t$ \\
     $U_t$ & $(\Real_{+})^{K}$  & Optimistic gain at time $t$ \\
	\bottomrule
    \end{tabular}
\end{center}
\end{table}

\section{Notation} \label{app:section_notation}

We recall some commonly used notations: the set of integers $[n] \eqdef \{1, \cdots, n\}$, the interval of integers $\left\llbracket a, b \right\rrbracket$, the euclidean inner-product $\langle x, y \rangle \eqdef \sum_{i \in [d]} x_i y_i$, the design matrix $V_{w} \eqdef \sum_{a \in \cK} w^a a a\transpose$ for an allocation over arms $w$, the norm  $\|x\|_{V} \eqdef \sqrt{x\transpose V x}$ for a positive definite matrix $V$, the closure $\overline{X}$ of a set $X$, the cartesian product $\prod_{i \in [n]} X_i$ between sets $(X_i)_{i \in [n]}$, the set of probability distributions $\mathcal{P}(X)$ over $X$ and the $n$-dimensional probability simplex $\triangle_{n} \eqdef \left\{x \in \Real^{n}\mid x \geq 0 , \langle 1_{n}, x \rangle = 1 \right\}$. The Landau's notations $o$, $\cO$ and $\Theta$ are used.
In Table~\ref{tab:notation_problem}, we summarize problem-specific notation. Table~\ref{tab:notation_algorithms} gathers notation for the algorithms.

\section{Proofs Related to the Characteristic Time} \label{app:section_proofs_related_characteristic_time}

In Appendix~\ref{app:subsection_likelihood_ratio}, we recall the formula of the generalized log-likelihood ratio for Gaussian bandits. The asymptotic lower bound on the expected sample complexity (Theorem~\ref{thm:sample_complexity_lower_bound_epsBAI}) is stated in Appendix~\ref{proof:sample_complexity_lower_bound_epsBAI}. In Appendix~\ref{app:subsection_explicit_and_hard}, when $\overline{\cM} = \Real^d$, we derive explicit formulas on $T_{\varepsilon}(\mu)^{-1}$ (Lemma~\ref{lem:explicit_sample_complexity_addtive_multiplicative_optimality}) and exhibit hard BAI instances that can be solved as an $\varepsilon$-BAI problem (Lemma~\ref{lem:impossible_BAI_problems}). In Appendix~\ref{app:subsection_greedy_characteristic_time}, we prove the results related to the greedy characteristic time (Lemma~\ref{lem:greedy_sample_complexity_lower_bound_epsBAI}-\ref{lem:algorithms_being_greedy_asymptotically}).

\subsection{Likelihood Ratio for Gaussian Bandits} \label{app:subsection_likelihood_ratio}

For Gaussian bandits such that $\sigma^2 =1$, the Kullback-Leibler (KL) divergence between two bandits, with mean parameters $\mu$ and $\lambda$, on the arm $a \in \cK$ is:
\begin{equation*}
	d_{\text{KL}}(\mu^a, \lambda^a) = \frac{1}{2}(\mu^a - \lambda^a)^2 = \frac{1}{2}\left(a \transpose (\mu - \lambda)\right)^2
\end{equation*}

The generalized log-likelihood ratio (GLR) between the whole model space $\cM$ and a subset $\Lambda \subseteq \cM$ is
\begin{equation*}
	\text{GLR}^{\cM}_{t}(\Lambda) \eqdef \ln \frac{\sup_{\tilde{\mu} \in \cM} \cL_{\tilde{\mu}}(X_{1}^{a_1}, \cdots, X_{t}^{a_t})}{\sup_{\lambda \in \Lambda} \cL_{\lambda}(X_{1}^{a_1}, \cdots, X_{t}^{a_t})}
\end{equation*}
where $\cL_{\lambda}(X_{1}^{a_1}, \cdots, X_{t}^{a_t})$ denotes the likelihood of the observations $X_{1}^{a_1}, \cdots, X_{t}^{a_t}$ for a bandit with parameter $\lambda$.

Given two mean parameters $(\theta, \lambda) \in \cM^2$, we have
\begin{align*}
	\ln \frac{\cL_{\theta}(X_{1}^{a_1}, \cdots, X_{t}^{a_t})}{\cL_{\lambda}(X_{1}^{a_1}, \cdots, X_{t}^{a_t})} = \sum_{a \in \cK} N_{t}^a \left(d_{\text{KL}}(\mu_t^a, \lambda^a) - d_{\text{KL}}(\mu_t^a, \theta^a) \right)
\end{align*}
where $\mu_t \eqdef V_{N_{t-1}}^{-1} \sum_{s=1}^{t-1} X_{s}^{a_s} a_s$ is a sufficient statistic of our observations. Note that, when $\mu_t \in \cM$, $\mu_t$ coincide with the MLE $\tilde{\mu}_t$ defined as:
\begin{align*}
	\tilde{\mu}_t = \argmin_{\lambda \in \cM} \sum_{a \in \cK} N_{t}^a d_{\text{KL}}(\mu_t^a, \lambda^a) = \argmin_{\lambda \in \cM} \frac{1}{2} \| \mu_t - \lambda \|_{V_{N_t}}^{2}
\end{align*}

The GLR for the set $\Lambda$ is
\begin{align*}
	\text{GLR}^{\cM}_{t}(\Lambda) &= \min_{\lambda \in \Lambda} \sum_{a \in \cK} N_{t}^a d_{\text{KL}}(\mu_t^a, \lambda^a) - \sum_{a \in \cK} N_{t}^a d_{\text{KL}}(\mu_t^a, \tilde{\mu}_t^a) \\
	&= \min_{\lambda \in \Lambda} \frac{1}{2} \| \mu_t - \lambda \|_{V_{N_t}}^{2}  - \frac{1}{2} \| \mu_t - \tilde{\mu}_t \|_{V_{N_t}}^{2}
\end{align*}

\subsection{Proof of Theorem~\ref{thm:sample_complexity_lower_bound_epsBAI}} \label{proof:sample_complexity_lower_bound_epsBAI}

\begin{theorem*}[Theorem~\ref{thm:sample_complexity_lower_bound_epsBAI}, Corollary of Theorem 1 in \citep{degenne_2019_PureExplorationMultiple}]
	For all $(\varepsilon, \delta)$-PAC strategy, for all $\mu \in \cM$,
	\begin{equation*}
		\liminf_{\delta \rightarrow 0} \frac{\expectedvalue_{\mu}[\taud]}{\ln(1/\delta)} \geq T_{\varepsilon}(\mu)
	\end{equation*}
	where the inverse of the characteristic time is
	\begin{equation*}
		T_{\varepsilon}(\mu)^{-1} \eqdef \max_{z \in \cZ_{\varepsilon}(\mu)}  \max_{w \in \simplex} \inf_{\lambda \in \neg_{\varepsilon} z} \frac{1}{2} \| \mu - \lambda \|_{V_{w}}^2
	\end{equation*}
\end{theorem*}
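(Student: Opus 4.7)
The statement is explicitly a corollary of Theorem~1 in \citet{degenne_2019_PureExplorationMultiple}, which provides an asymptotic lower bound for pure-exploration problems with a general multi-valued correct-answer function. My plan is therefore to verify that $(\varepsilon,\delta)$-PAC best-answer identification fits in that framework, and then specialize the generic KL divergence to Gaussian observations. Take the correct-answer map $i^{\star}(\mu) = \cZ_{\varepsilon}(\mu)$, for either notion of $\varepsilon$-optimality. The Degenne--Koolen alternative at a candidate answer $z$ is $\overline{\{\lambda \in \cM : z \notin i^{\star}(\lambda)\}}$, which is exactly $\neg_{\varepsilon}z$ as defined in the paper. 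The PAC hypothesis $\probability_{\mu}[\taud < \infty, \hat{z} \notin \cZ_{\varepsilon}(\mu)] \leq \delta$ coincides with the $\delta$-correctness assumption in their paper, so the setting is a special case of theirs.

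\textbf{Specialization to Gaussian observations.} Under unit-variance Gaussian noise one has $d_{\mathrm{KL}}(\langle a,\mu\rangle,\langle a,\lambda\rangle) = \tfrac{1}{2}\langle a,\mu-\lambda\rangle^2$, so for any allocation $w \in \simplex$,
\begin{equation*}
    \sum_{a \in \cK} w^a\, d_{\mathrm{KL}}(\mu^a,\lambda^a) \;=\; \tfrac{1}{2}\|\mu-\lambda\|_{V_w}^2 .
\end{equation*}
Plugging this identity into the generic Degenne--Koolen inverse characteristic time $\max_{z \in i^{\star}(\mu)} \sup_w \inf_{\lambda \in \neg_\varepsilon z} \sum_a w^a d_{\mathrm{KL}}(\mu^a,\lambda^a)$ recovers the formula in \eqref{eq:definition_inverse_sample_complexity}. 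The regularity assumptions of their theorem (finite arms, identifiability of the Gaussian linear model) hold automatically here: $\cK$ is finite and spans $\Real^d$, so $V_w$ is positive definite whenever $w$ has full support, and $\lambda \mapsto \|\mu-\lambda\|_{V_w}^2$ is continuous, which makes the $\inf$ over $\neg_\varepsilon z$ coincide with the $\inf$ over the open set $\{\lambda : z \notin \cZ_\varepsilon(\lambda)\}$. Hence no additional verification beyond notational matching is required.

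\textbf{What would be hard in a direct proof.} If one preferred to prove the result directly via the Kaufmann--Capp\'e--Garivier transportation inequality, the subtle step would be the handling of multiple correct answers. For any fixed $z \in \cZ_\varepsilon(\mu)$ and $\lambda \in \neg_\varepsilon z$, the event $\{\hat{z} = z\}$ has $\probability_\lambda$-probability at most $\delta$ by the PAC property, and transportation yields $\sum_a \expectedvalue_\mu[N^a_{\taud}] d_{\mathrm{KL}}(\mu^a,\lambda^a) \geq \mathrm{kl}(\probability_\mu[\hat{z}=z],\delta)$. A naive pigeonhole would only guarantee $\probability_\mu[\hat{z}=z] \geq (1-\delta)/|\cZ_\varepsilon(\mu)|$ for some $z$, producing an asymptotic constant $T_\varepsilon(\mu)/|\cZ_\varepsilon(\mu)|$ rather than $T_\varepsilon(\mu)$. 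The refined argument in \citet{degenne_2019_PureExplorationMultiple} removes the spurious factor $|\cZ_\varepsilon(\mu)|$ asymptotically by exploiting the $\max_z$ structure of the characteristic time, which is why the clean bound follows more transparently from their general theorem than from a one-shot change-of-measure specific to $\varepsilon$-BAI.
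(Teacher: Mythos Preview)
Your proposal is correct and follows essentially the same route as the paper: both verify that $\varepsilon$-BAI is a special case of the multiple-correct-answers framework of \citet{degenne_2019_PureExplorationMultiple} by identifying $i^\star(\mu)=\cZ_\varepsilon(\mu)$ and $\neg i = \neg_\varepsilon z$, and then specialize the generic KL term to the Gaussian case via $\sum_a w^a d_{\mathrm{KL}}(\mu^a,\lambda^a)=\tfrac{1}{2}\|\mu-\lambda\|_{V_w}^2$. Your additional remarks on regularity and on the pitfalls of a direct change-of-measure argument are extra commentary, not a different proof strategy.
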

\begin{proof}
Since $\varepsilon$-BAI for Gaussian distributions is a special case of identification in bandits with multiple correct answers and sub-Gaussian distributions, Theorem 1 in \citet{degenne_2019_PureExplorationMultiple} applies to our setting. Therefore, we obtain that: for all $(\varepsilon, \delta)$-PAC strategy, for all $\mu \in \cM$,
\begin{equation*}
	\liminf_{\delta \rightarrow 0} \frac{\expectedvalue_{\mu}[\taud]}{\ln(1/\delta)} \geq T^{\star}(\mu)
\end{equation*}
where
\begin{equation*}
	T^{\star}(\mu)^{-1} \eqdef  \max_{i \in i^{\star}(\mu)} \max_{w \in \simplex} \inf_{\lambda \in \neg i} \sum_{a \in \cK} w^a d_{\text{KL}}(\mu^a, \lambda^a)
\end{equation*}
with $i^{\star}(\mu)$ is the set of correct answers for $\mu$ and $\neg i \eqdef \{\mu \in \overline{\cM} : i \notin i^{\star}(\mu)\}$.

To conclude, we need to specify to our setting and some simple manipulations. First, under the Gaussian assumptions with $\sigma^2=1$, notice that $\mu^{a} = \langle a, \mu \rangle$ and $d_{\text{KL}}(\mu^a, \lambda^a) = \frac{1}{2} \langle a, \mu - \lambda \rangle^2 = \frac{1}{2}\|\mu - \lambda \|^2_{a a \transpose}$ yield that $\sum_{a \in \cK} w^a d_{\text{KL}}(\mu^a, \lambda^a) = \frac{1}{2} \| \mu - \lambda \|_{V_{w}}^2$.

When considering additive $\varepsilon$-optimality, the set of correct answer $i^{\star}(\mu)$ is $\cZ_{\varepsilon}^{\add}(\mu) = \left\{ z \in \cZ : \langle \mu,  z \rangle \geq \max_{z \in \cZ} \langle \mu, z\rangle - \varepsilon \right\}$, and $\neg i$ corresponds to $\neg_{\varepsilon}^{\add} z$.
For the multiplicative $\varepsilon$-optimality, the set of correct answer $i^{\star}(\mu)$ is $\cZ_{\varepsilon}^{\mul}(\mu) = \left\{ z \in \cZ:  \langle \mu,  z \rangle \geq (1-\varepsilon)\max_{z \in \cZ} \langle \mu, z\rangle \right\}$, and $\neg i$ corresponds to $\neg_{\varepsilon}^{\mul} z$.
Note that for the multiplicative setting, we assume $\max_{z \in \cZ} \langle \mu, z\rangle > 0$ and $\varepsilon \geq 0$ (or $\max_{z \in \cZ} \langle \mu, z\rangle < 0$ and $\varepsilon \leq 0$), while for the additive setting, we can consider $\varepsilon \geq 0$ and $\max_{z \in \cZ} \langle \mu, z\rangle \in \Real$. Using the correspondences between sets for each notion of $\varepsilon$-optimality, Theorem 1 in \citet{degenne_2019_PureExplorationMultiple} rewrites as: for all $(\varepsilon,\delta)$-PAC strategy, for all $\mu \in \cM$,
\begin{equation*}
	\liminf_{\delta \rightarrow 0} \frac{\expectedvalue_{\mu}[\taud]}{\ln(1/\delta)} \geq T_{\varepsilon}(\mu)
\end{equation*}
where
\begin{align*}
	T_{\varepsilon}^{\add}(\mu)^{-1} &= \max_{z \in \cZ_{\varepsilon}^{\add}(\mu)}  \max_{w \in \simplex} \inf_{\lambda \in \neg_{\varepsilon}^{\add} z} \frac{1}{2} \| \mu - \lambda \|_{V_{w}}^2 \\
	T_{\varepsilon}^{\mul}(\mu)^{-1} &= \max_{z \in \cZ_{\varepsilon}^{\mul}(\mu)}  \max_{w \in \simplex} \inf_{\lambda \in \neg_{\varepsilon}^{\mul} z} \frac{1}{2} \| \mu - \lambda \|_{V_{w}}^2
\end{align*}
\end{proof}

\subsection{Explicit Formulas and Hard BAI Instances} \label{app:subsection_explicit_and_hard}

In Appendix~\ref{proof:explicit_sample_complexity_addtive_multiplicative_optimality}, we derive explicit formulas on $T_{\varepsilon}(\mu)^{-1}$ (Lemma~\ref{lem:explicit_sample_complexity_addtive_multiplicative_optimality}), while Appendix~\ref{proof:impossible_BAI_problems} proved that there exists hard BAI instances that can be solved as an $\varepsilon$-BAI problem (Lemma~\ref{lem:impossible_BAI_problems}).

\subsubsection{Explicit Formulas} \label{proof:explicit_sample_complexity_addtive_multiplicative_optimality}

Lemma~\ref{lem:explicit_sample_complexity_addtive_multiplicative_optimality} shows more explicit formulas for $T_{\varepsilon}(\mu)$ when $\overline{\cM} = \Real^d$.

\begin{lemma} \label{lem:explicit_sample_complexity_addtive_multiplicative_optimality}
Assume $\overline{\cM} = \Real^d$,
\begin{align*}
	\frac{2}{T_{\varepsilon}^{\add}(\mu)} &=  \max_{z \in \cZ_{\varepsilon}^{\add}(\mu)} \max_{w \in \simplex}  \min_{x \in \cZ \setminus \{z\}}  \frac{\left(\varepsilon + \langle \mu, z-x \rangle\right)^2}{\|z-x\|^2_{V_{w}^{\dagger}}}  \\
	\frac{2}{T_{\varepsilon}^{\mul}(\mu)} &=  \max_{z \in \cZ_{\varepsilon}^{\mul}(\mu)} \max_{w \in \simplex}  \min_{x \in \cZ \setminus \{z\}} \frac{ \langle \mu, z - (1{-}\varepsilon)x \rangle^2}{\|z - (1{-}\varepsilon)x\|^2_{V_{w}^{\dagger}}}
\end{align*}
where $V_w^\dagger$ is the Moore-Penrose pseudo-inverse of $V_w$. The associated alternative parameters are
\begin{align*}
 \lambda_{\varepsilon}^{\add}(\mu, z, w, x) &= \mu - \frac{\varepsilon + \langle \mu, z-x \rangle}{\|z - x\|^2_{V_{w}^{\dagger}}} V_{w}^{\dagger} (z - x) \: , \\
 \lambda_{\varepsilon}^{\mul}(\mu, z, w, x) &= \mu - \frac{\langle \mu, z - (1-\varepsilon)x \rangle}{\|z - (1-\varepsilon)x\|^2_{V_{w}^{\dagger}}} V_{w}^{\dagger} (z - (1-\varepsilon)x)  \: .
\end{align*}
\end{lemma}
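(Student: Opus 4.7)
The plan is to evaluate the inner infimum $\inf_{\lambda \in \neg_{\varepsilon} z} \frac{1}{2}\|\mu - \lambda\|^2_{V_w}$ in closed form, by recognising the alternative as a finite union of closed half-spaces and projecting $\mu$ onto each in the semi-norm $\|\cdot\|_{V_w}$.

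First, I would rewrite $\neg_{\varepsilon} z$. Unpacking the negation, $z \notin \cZ_{\varepsilon}^{\add}(\lambda)$ iff some $x \in \cZ$ satisfies $\langle \lambda, z - x\rangle < -\varepsilon$ (the $x=z$ contribution being empty), so taking closures gives
\[ \neg_{\varepsilon}^{\add} z \;=\; \bigcup_{x \in \cZ \setminus \{z\}} \{\lambda \in \Real^d : \langle \lambda, z - x\rangle \leq -\varepsilon\}, \]
and analogously $\neg_{\varepsilon}^{\mul} z = \bigcup_{x \in \cZ \setminus \{z\}} \{\lambda : \langle \lambda, z - (1-\varepsilon)x\rangle \leq 0\}$. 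Since $\lambda \mapsto \|\mu - \lambda\|^2_{V_w}$ is continuous and the union is finite, its infimum equals the minimum over $x$ of the infimum on each half-space $H_x$; it thus remains to project $\mu$ onto a single half-space $\{\langle \lambda, c\rangle \leq b\}$.

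Second, this projection is a classical one-constraint quadratic program. The hypothesis $z \in \cZ_{\varepsilon}(\mu)$ translates, in every case above, into $\langle \mu, c\rangle \geq b$, so $\mu$ is on the infeasible side and the infimum is attained on the boundary. Stationarity for the Lagrangian $\tfrac{1}{2}\|\mu - \lambda\|^2_{V_w} + \nu(\langle \lambda, c\rangle - b)$ gives $V_w(\mu - \lambda) = \nu c$, which admits $\mu - \lambda = \nu V_w^{\dagger} c$ as a solution whenever $c \in \mathrm{Range}(V_w)$; plugging into $\langle \lambda, c\rangle = b$ forces $\nu = (\langle \mu, c\rangle - b)/\|c\|^2_{V_w^\dagger}$. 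Using $V_w^\dagger V_w V_w^\dagger = V_w^\dagger$, the objective evaluates to $(\langle \mu, c\rangle - b)^2/\|c\|^2_{V_w^\dagger}$. Specialising $(c,b) = (z - x, -\varepsilon)$ yields the additive formula and the point $\lambda_{\varepsilon}^{\add}(\mu, z, w, x)$; specialising $(c,b) = (z - (1-\varepsilon)x, 0)$ yields the multiplicative formula and $\lambda_{\varepsilon}^{\mul}(\mu, z, w, x)$. Substituting into \eqref{eq:definition_inverse_sample_complexity} and doubling to absorb the $\tfrac{1}{2}$ produces the claim.

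The main obstacle will be the bookkeeping around the pseudo-inverse. When $V_w$ is singular and $c \notin \mathrm{Range}(V_w)$, the chosen direction $V_w^{\dagger} c$ need not realise the constraint and the true infimum may even drop to $0$; the formula must then be read with the convention that it equals $+\infty$ when $\|c\|^2_{V_w^\dagger} = 0$ and the numerator is nonzero. This is harmless for the outer $\max_{w \in \simplex}$: since $\cK$ spans $\Real^d$, the maximum is attained at an allocation for which $V_w$ is positive definite and all denominators are strictly positive, so the degenerate allocations do not affect the optimal value. A secondary subtlety concerns the multiplicative case, where the $x=z$ contribution would correspond to the half-space $\{\langle \lambda, z\rangle \leq 0\}$; any of its points with some $\langle \lambda, x'\rangle \geq 0$ for $x' \neq z$ already lies in $H_{x'}$, so $\bigcup_{x \neq z} H_x$ loses nothing and restricting the $\min$ to $x \in \cZ \setminus \{z\}$ is justified under the mild non-degeneracy of $\cZ$ implicit in the setting.
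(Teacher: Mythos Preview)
Your proposal is correct and follows essentially the same route as the paper: decompose $\neg_{\varepsilon} z$ as a finite union of half-spaces and compute the $V_w$-projection onto each via the Lagrangian, which is exactly the content of the paper's auxiliary Lemma~\ref{lem:lemma_5_degenne_2020_GamificationPureExploration} that you re-derive inline. You are in fact slightly more careful than the paper about the boundary cases where $V_w$ is singular and about why $x=z$ can be dropped in the multiplicative union; the paper applies its lemma directly without dwelling on these points.
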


For the multiplicative $\varepsilon$-optimality, we recognize a term similar to the characteristic time $T_{0}(\mu, \cZ)$ of BAI for transductive bandits with answer set $\cZ$. Defining $\cZ_{\varepsilon}^{z} \eqdef \{z\} \cup \{(1- \varepsilon)x : x \in \cZ \setminus \{z\}\}$, we see that $T_{\varepsilon}^{\mul}(\mu) =  \min_{z \in \cZ_{\varepsilon}^{\mul}(\mu)} T_{0}\left(\mu, \cZ_{\varepsilon}^{z}\right)$. Therefore, the complexity of the multiplicative setting is equals to the easiest BAI for modified transductive bandits, in which $\cK$ is unchanged and $\cZ_{\varepsilon}^{z}$ is the set of answers. Since $T_{\varepsilon}^{\mul}(\mu) = T_{0}\left(\mu, \cZ_{\varepsilon}^{z^{\mul}_{F}(\mu)}\right)$, the unique correct answer that has to be identified in this modified BAI is the furthest answer $z^{\mul}_{F}(\mu)$.

The proof of Lemma~\ref{lem:explicit_sample_complexity_addtive_multiplicative_optimality} uses Lemma~\ref{lem:lemma_5_degenne_2020_GamificationPureExploration}, a known result in the literature which we prove for completeness, and the fact that $\neg_{\varepsilon} z$ is a union over a finite number of half-spaces.

\begin{lemma} \label{lem:lemma_5_degenne_2020_GamificationPureExploration}
For all $\mu, \lambda \in \Real^d$, $y \in \Real^d$, $x \in \Real$, we have
	\begin{align*}
		\inf_{\lambda \in \Real^d:  x - \langle \lambda, y \rangle \leq 0} \|\mu - \lambda\|^2_{V_{w}}	&= \begin{cases}
				0 &\text{if } x - \langle \mu, y \rangle \leq 0 \text{ or } y \notin \text{Im}(V_w)\\
				\left( \frac{x - \langle \mu, y \rangle}{\|y\|_{V_{w}^\dagger}} \right)^2 & \text{else}
				   \end{cases}
	\end{align*}
	where $V_w^\dagger$ is the Moore-Penrose pseudo-inverse of $V_w$. It is achieved at
	\begin{align*}
		\lambda &= \begin{cases}
						\mu &\text{if } x - \langle \mu, y \rangle \leq 0 \\
						\mu + \alpha u &\text{if } y \notin \text{Im}(V_w)\text{, s.t. } (u, \alpha) \in \text{Ker}(V_w)\times \Real, x - \langle\mu + \alpha u, y\rangle \le 0\\
						\mu + \frac{x - \langle \mu, y \rangle}{\|y\|^2_{V_{w}^{\dagger}}} V_{w}^{\dagger} y &\text{else}
				   \end{cases}
	\end{align*}
\end{lemma}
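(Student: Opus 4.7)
The plan is to split into three cases according to whether the unconstrained minimizer $\mu$ already lies in the half-space $H \eqdef \{\lambda \in \Real^d : \langle\lambda, y\rangle \geq x\}$, and whether $y$ lies in $\mathrm{Im}(V_w)$. Recall that $V_w$ is symmetric PSD, so $\mathrm{Ker}(V_w) = \mathrm{Im}(V_w)^{\perp}$, and that the Moore-Penrose pseudo-inverse $V_w^{\dagger}$ satisfies $V_w V_w^{\dagger} V_w = V_w$, $V_w^{\dagger} V_w V_w^{\dagger} = V_w^{\dagger}$ and $V_w V_w^{\dagger}$ is the orthogonal projector onto $\mathrm{Im}(V_w)$. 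In particular, whenever $y \in \mathrm{Im}(V_w)$ we have $V_w V_w^{\dagger} y = y$.

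\textbf{Case 1 ($x - \langle\mu, y\rangle \leq 0$).} Then $\mu \in H$, the objective is non-negative, and the choice $\lambda = \mu$ yields $\|\mu - \lambda\|_{V_w}^2 = 0$, which is the minimum.

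\textbf{Case 2 ($y \notin \mathrm{Im}(V_w)$).} Decompose $y = y_{\parallel} + y_{\perp}$ with $y_{\parallel} \in \mathrm{Im}(V_w)$ and $0 \neq y_{\perp} \in \mathrm{Ker}(V_w)$. For any $\alpha \in \Real$ and $u = y_{\perp}$, the point $\lambda = \mu + \alpha u$ satisfies $\|\mu - \lambda\|_{V_w}^2 = \alpha^2 u\transpose V_w u = 0$, and $\langle\lambda, y\rangle = \langle\mu, y\rangle + \alpha \|y_{\perp}\|_2^2$. Choosing $\alpha$ large enough makes $\langle\lambda, y\rangle \geq x$, hence the infimum is $0$ (attained, and any such $(u,\alpha)$ works).

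\textbf{Case 3 ($y \in \mathrm{Im}(V_w)$ and $x - \langle\mu, y\rangle > 0$).} This is the main case. The clean route is a Cauchy--Schwarz argument for the PSD bilinear form $(a,b) \mapsto a\transpose V_w b$. Setting $z \eqdef V_w^{\dagger} y$, the identity $V_w V_w^{\dagger} y = y$ gives $\langle\lambda - \mu, y\rangle = (\lambda - \mu)\transpose V_w z$. For any feasible $\lambda \in H$,
\begin{equation*}
(x - \langle\mu, y\rangle)^2 \leq \langle\lambda - \mu, y\rangle^2 = \big((\lambda - \mu)\transpose V_w z\big)^2 \leq \|\lambda - \mu\|_{V_w}^2 \, z\transpose V_w z,
\end{equation*}
and $z\transpose V_w z = y\transpose V_w^{\dagger} V_w V_w^{\dagger} y = y\transpose V_w^{\dagger} y = \|y\|_{V_w^{\dagger}}^2$, which is strictly positive (otherwise $V_w z = 0$, contradicting $y = V_w z \neq 0$ since $\langle\mu, y\rangle < x$). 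This proves the lower bound $\|\mu - \lambda\|_{V_w}^2 \geq (x - \langle\mu,y\rangle)^2 / \|y\|_{V_w^{\dagger}}^2$. To exhibit equality, plug in the candidate $\lambda^{\star} \eqdef \mu + \frac{x - \langle\mu, y\rangle}{\|y\|_{V_w^{\dagger}}^2} V_w^{\dagger} y$: using $V_w V_w^{\dagger} y = y$ again, $\langle\lambda^{\star}, y\rangle = \langle\mu, y\rangle + (x - \langle\mu, y\rangle) = x$, so $\lambda^{\star} \in H$, and a direct computation using $V_w^{\dagger} V_w V_w^{\dagger} = V_w^{\dagger}$ gives $\|\mu - \lambda^{\star}\|_{V_w}^2 = (x - \langle\mu, y\rangle)^2 / \|y\|_{V_w^{\dagger}}^2$.

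The only point needing care is the manipulation of $V_w^{\dagger}$ in Case 3: one must keep track of where $V_w V_w^{\dagger}$ acts as the identity (namely, on $\mathrm{Im}(V_w)$, which contains $y$ by hypothesis) to reduce the computations of $\langle\lambda - \mu, y\rangle$ and $\|\mu - \lambda^{\star}\|_{V_w}^2$ to the single scalar quantity $\|y\|_{V_w^{\dagger}}^2$. Note that the minimizer in Case 3 is only unique up to adding an arbitrary element of $\mathrm{Ker}(V_w)$ (which leaves both the seminorm and, since $y \in \mathrm{Im}(V_w) = \mathrm{Ker}(V_w)^{\perp}$, the linear constraint unchanged), matching the statement.
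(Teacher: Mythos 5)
Your proof is correct, but in the main case it takes a genuinely different route from the paper. The paper restricts the infimum to $\lambda - \mu \in \text{Im}(V_w)$ and then computes via Lagrangian duality, writing $\inf_{\lambda} \sup_{\alpha \geq 0}(\cdots) = \sup_{\alpha \geq 0} \inf_{\lambda}(\cdots)$ and maximizing the resulting concave quadratic $\alpha(x - \langle\mu, y\rangle) - \alpha^2 \|y\|^2_{V_w^\dagger}/4$ in the multiplier $\alpha$; the value $\alpha = 0$ there absorbs your Case 1, and the optimal $\lambda = \mu + \tfrac{1}{2}\alpha V_w^\dagger y$ recovers the stated minimizer. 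You instead prove matching bounds directly: a lower bound from Cauchy--Schwarz for the PSD bilinear form $(a,b) \mapsto a\transpose V_w b$ applied to $a = \lambda - \mu$ and $b = V_w^\dagger y$, together with the identity $\langle \lambda - \mu, y\rangle = (\lambda-\mu)\transpose V_w V_w^\dagger y$ valid for $y \in \text{Im}(V_w)$, and an upper bound by plugging in the explicit feasible point $\lambda^\star$. Your approach is more elementary and self-contained in that it never needs the exchange of $\sup$ and $\inf$ (the paper asserts strong duality without comment, though it does hold here for this convex QP with one affine constraint); the duality route, on the other hand, produces the minimizer mechanically rather than requiring it to be guessed, and scales more naturally when the alternative set is a union of several half-spaces as in Lemma~\ref{lem:explicit_sample_complexity_addtive_multiplicative_optimality}. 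One small quibble: your justification that $\|y\|^2_{V_w^\dagger} > 0$ ("contradicting $y \neq 0$ since $\langle\mu, y\rangle < x$") does not quite work, since $y = 0$ is compatible with $\langle\mu, y\rangle = 0 < x$; what actually happens when $y = 0$ and $x > 0$ is that the constraint set is empty, a degenerate case the lemma statement itself does not cover, so this does not affect the substance of your argument.
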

\begin{proof}
Since $V_w$ is self-adjoint, $\text{Ker}(V_w)^\bot = \text{Im}(V_w)$.
If $y \notin \text{Im}(V_w)$, then there exists $u \in \text{Ker}(V_w)$ with $\langle u, y\rangle \ne 0$. There exists also $\alpha \in \Real$ such that $x - \langle\mu + \alpha u, y\rangle \le 0$ and $\Vert \mu - (\mu + \alpha u) \Vert_{V_w} = 0$, hence the value of the objective is 0.

Otherwise, $y \in \text{Im}(V_w)$. We can restrict the problem to an infimum over $\lambda -\mu \in \text{Im}(V_w)$ since for any $\lambda$ satisfying the constraint, its projection on that space also satisfies the constraint and has lower objective value.

Let $V_w^\dagger$ be the Moore-Penrose pseudo-inverse of $V_w$. Restricted to $\text{Im}(V_w)$, this is a true inverse.

By using the Lagrangian of the problem, we obtain the following:
\begin{align*}
	\inf _{\lambda -\mu \in \text{Im}(V_w): x - \langle \lambda, y \rangle \leq 0} \|\mu-\lambda\|_{V_{w}}^{2}
		&=\sup _{\alpha \geq 0} \inf _{\lambda - \mu\in \text{Im}(V_w)} \|\mu-\lambda\|_{V_{w}}^{2}+\alpha(x-\langle\lambda, y\rangle) \\
		&=\sup _{\alpha \geq 0} \alpha(x-\langle\mu, y\rangle)-\alpha^{2} \frac{\|y\|_{V_{w}^\dagger}^{2}}{4} \\
		&= \begin{cases}0 & \text { if } x - \langle \mu, y \rangle \leq 0 \\
\left( \frac{x - \langle \mu, y \rangle}{\|y\|_{V_{w}^\dagger}} \right)^2 & \text { else }
\end{cases}
\end{align*}
where the infimum in the first equality is reached at $\lambda = \mu + \frac{1}{2}\alpha V_w^\dagger y$ and the supremum in the last equality is reached at $\alpha = 2\frac{x - \langle \mu, y \rangle}{\|y\|^2_{V_{w}^\dagger}} $ if $x - \langle \mu, y \rangle \geq 0$ and at $\alpha = 0$ else.
\end{proof}

We are now ready to prove Lemma~\ref{lem:explicit_sample_complexity_addtive_multiplicative_optimality}.

\begin{proof}
Let $w \in \simplex$. We will conduct the proof of both results in parallel. When $\overline{\cM} = \Real^d$, by definitions of $\neg_{\varepsilon}^{\add} z$ and $\neg_{\varepsilon}^{\mul} z$ which involve the closure of a set that can be written as a union over a finite number of half-spaces, we can rewrite:
\begin{align*}
	&\neg_{\varepsilon}^{\add} z = \bigcup_{x \neq z} \left\{ \lambda \in \Real^d: \langle \lambda,  z - x \rangle + \varepsilon \leq 0 \right\} \quad \quad \text{and} \quad \quad \inf_{\lambda \in \neg_{\varepsilon}^{\add} z} \frac{1}{2} \| \mu - \lambda \|_{V_{w}}^2 = \min_{x \neq z} \inf_{\lambda \in \Real^d: \langle \lambda,  z - x \rangle + \varepsilon \leq 0}  \| \mu - \lambda \|_{V_{w}}^2\\
	& \neg_{\varepsilon}^{\mul} z = \bigcup_{x \neq z} \left\{ \lambda \in \Real^d: \langle \lambda,  z - (1-\varepsilon)x \rangle  \leq 0 \right\} \quad \quad \text{and} \quad \quad \inf_{\lambda \in \neg_{\varepsilon}^{\mul} z} \frac{1}{2} \| \mu - \lambda \|_{V_{w}}^2 = \min_{x \neq z} \inf_{\lambda \in \Real^d: \langle \lambda,  z - (1-\varepsilon)x \rangle  \leq 0}  \| \mu - \lambda \|_{V_{w}}^2
\end{align*}

Since $z^{\star}(\mu) = \argmax_{z \in \cZ}\langle \mu,  z \rangle$, for all $z \in \cZ^{\add}_{\varepsilon}(\mu)$ (resp. $z \in \cZ^{\mul}_{\varepsilon}(\mu)$) and all $x \neq z$, we have by definition $\langle \mu,  z - x \rangle + \varepsilon \geq \langle \mu,  z - z^{\star}(\mu) \rangle + \varepsilon \geq 0$ (resp. $\langle \mu,  z - (1-\varepsilon)x \rangle \geq \langle \mu,  z - (1-\varepsilon)x \rangle  \geq 0 $). Using Lemma~\ref{lem:lemma_5_degenne_2020_GamificationPureExploration} with $\tilde{y} = x-z$ and $\tilde{x}=\varepsilon$ (resp. $\tilde{y} = (1-\varepsilon)x-z$ and $\tilde{x} = 0$), we obtain directly:
\begin{align*}
	2T_{\varepsilon}^{\add}(\mu)^{-1} &=  \max_{z \in \cZ_{\varepsilon}^{\add}(\mu)} \max_{w \in \simplex}  \min_{x \neq z} \left( \frac{\varepsilon + \langle \mu, z-x \rangle}{\|z-x\|_{V_{w}^{\dagger}}} \right)^2 \\
	2T_{\varepsilon}^{\mul}(\mu)^{-1} &=  \max_{z \in \cZ_{\varepsilon}^{\mul}(\mu)} \max_{w \in \simplex}  \min_{x \neq z} \left( \frac{ \langle \mu, z - (1-\varepsilon)x \rangle}{\|z - (1-\varepsilon)x\|_{V_{w}^{\dagger}}} \right)^2
\end{align*}
The associated alternative parameters are:
\begin{align*}
 \lambda_{\varepsilon}^{\add}(\mu, z, w, x) &= \mu - \frac{\varepsilon + \langle \mu, z-x \rangle}{\|z - x\|^2_{V_{w}^{\dagger}}} V_{w}^{\dagger} (z - x) \\
 \lambda_{\varepsilon}^{\mul}(\mu, z, w, x) &= \mu - \frac{\langle \mu, z - (1-\varepsilon)x \rangle}{\|z - (1-\varepsilon)x\|^2_{V_{w}^{\dagger}}} V_{w}^{\dagger} (z - (1-\varepsilon)x)
\end{align*}
\end{proof}

\paragraph{Link with BAI} First, note that $\neg_{\varepsilon} z \subseteq \neg_{0} z$ for all $z \in \cZ$, yields $T_{\varepsilon}(\mu) \leq T_{0}(\mu)$ for all $\mu \in \cM$. When considering the BAI for transductive linear bandits with Gaussian distribution, the characteristic time that lower bounds the expected sample complexity has the following form when $\overline{\cM} = \Real^d$ \citep{degenne_2020_GamificationPureExploration},
\begin{equation} \label{eq:definition_characteristic_time_BAI}
	2T_{0}(\mu)^{-1} =  \max_{w \in \simplex}  \min_{x \neq z^{\star}(\mu)} \left( \frac{\langle \mu, z^{\star}(\mu)-x \rangle}{\|z^{\star}(\mu)-x\|_{V_{w}^{\dagger}}} \right)^2
\end{equation}

This formula has striking similarities with the one obtained for multiplicative $\varepsilon$-optimality. Defining $\cZ_{\varepsilon}^{z} \eqdef \{z\} \cup \{(1 - \varepsilon)x\}_{x \neq z}$, we obtain that
\begin{align*}
	T_{\varepsilon}^{\mul}(\mu) =  \min_{z \in \cZ_{\varepsilon}^{\mul}(\mu)} T_{0}\left(\mu, \cZ_{\varepsilon}^{z}\right)
\end{align*}
where $T_{0}\left(\mu, \cZ_{\varepsilon}^{z}\right)$ is the characteristic time for a BAI problem where the set of answers is $\cZ_{\varepsilon}^{z}$, as defined in (\ref{eq:definition_characteristic_time_BAI}). Therefore, the complexity of the multiplicative setting is equal to the complexity of a BAI instance in which $\cK$ is unchanged, but the set of answers is changed into $\cZ_{\varepsilon}^{z}$.

\subsubsection{Hard BAI Instances} \label{proof:impossible_BAI_problems}

Lemma~\ref{lem:impossible_BAI_problems} shows that there exists arbitrarily hard BAI instances that can be solved if seen as an $\varepsilon$-BAI problem.

\begin{lemma} \label{lem:impossible_BAI_problems}
	Let $\varepsilon>0$, $\overline{\cM} = \Real^d$ and $T_{\varepsilon}(\mu, \cZ) = T_{\varepsilon}(\mu)$. For all $\mu \in \cM$, there exists a set of arms $\cK$ and a sequence of answers sets $(\cZ_{t})_{t\in \Natural}$, such that
	\begin{align*}
		\lim_{t \rightarrow + \infty} T_{0}(\mu, \cZ_t) = + \infty \quad \text{and} \quad \lim_{t \rightarrow + \infty} T_{\varepsilon}(\mu, \cZ_t) < + \infty
	\end{align*}
\end{lemma}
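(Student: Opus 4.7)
The idea is to build a family of two-answer instances in which the two answers become asymptotically indistinguishable in value under $\mu$ (so that $T_{0} \to +\infty$) while remaining geometrically separated in a direction orthogonal to $\mu$ (so that the $\varepsilon$-slack is enough to certify $\varepsilon$-optimality of either one, keeping $T_{\varepsilon}$ bounded). For $\mu = 0$ the statement is trivial since $T_{0}$ is already infinite, and the multiplicative setting is not well defined; so assume $\mu \neq 0$. Let $e_{1} = \mu/\|\mu\|_{2}$, pick $e_{2}$ any unit vector orthogonal to $e_{1}$, and complete to an orthonormal basis of $\Real^{d}$. Take $\cK = \{e_{1}, \dots, e_{d}\}$ (which spans $\Real^{d}$) and, for a vanishing sequence $\theta_{t} \downarrow 0$, define
\begin{equation*}
	\cZ_{t} = \{z_{1}, z_{2,t}\}, \qquad z_{1} = e_{1}, \qquad z_{2,t} = \cos(\theta_{t})\, e_{1} + \sin(\theta_{t})\, e_{2} \: .
\end{equation*}

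For the BAI complexity I would apply the explicit formula of Lemma~\ref{lem:explicit_sample_complexity_addtive_multiplicative_optimality} at $\varepsilon = 0$. Since $V_{w} = \sum_{i} w^{i} e_{i} e_{i}\transpose$ is diagonal, one gets $\langle \mu, z_{1} - z_{2,t}\rangle^{2} = \|\mu\|_{2}^{2}(1-\cos\theta_{t})^{2} = \Theta(\theta_{t}^{4})$ and $\|z_{1}-z_{2,t}\|_{V_{w}^{\dagger}}^{2} = (1-\cos\theta_{t})^{2}/w^{1} + \sin^{2}\theta_{t}/w^{2}$. Using the classical identity $\max_{w \in \simplex} A/(A/w^{1} + B/w^{2}) = A/(\sqrt{A} + \sqrt{B})^{2}$, the inverse characteristic time becomes $\|\mu\|_{2}^{2}(1-\cos\theta_{t})^{2}/\bigl((1-\cos\theta_{t}) + |\sin\theta_{t}|\bigr)^{2} = \Theta(\theta_{t}^{2}) \to 0$, which gives $T_{0}(\mu, \cZ_{t}) \to +\infty$.

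For the $\varepsilon$-BAI complexity I would take $z_{1} = z^{\star}(\mu) \in \cZ_{\varepsilon}(\mu)$ as candidate answer in Lemma~\ref{lem:explicit_sample_complexity_addtive_multiplicative_optimality}, so that the minimum over $x$ is achieved at the single competitor $z_{2,t}$. The numerator of the resulting ratio is bounded below by a positive constant independent of $t$: it equals $(\varepsilon + \|\mu\|_{2}(1-\cos\theta_{t}))^{2} \geq \varepsilon^{2}$ in the additive case, and $\|\mu\|_{2}^{2}(1 - (1-\varepsilon)\cos\theta_{t})^{2} \to \|\mu\|_{2}^{2}\varepsilon^{2}$ in the multiplicative case. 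Applying the same optimisation over $w$ to the denominator, the additive ratio diverges (because the denominator is still $\Theta(\theta_{t}^{2})$), giving $T_{\varepsilon}^{\add}(\mu,\cZ_{t}) \to 0$, while the multiplicative ratio converges to $\|\mu\|_{2}^{2}$, giving $T_{\varepsilon}^{\mul}(\mu,\cZ_{t}) \to 2/\|\mu\|_{2}^{2}$. In both cases $\limsup_{t} T_{\varepsilon}(\mu,\cZ_{t}) < +\infty$, as required.

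The only real design choice, and the key point, is to separate $z_{2,t}$ from $z_{1}$ along a direction orthogonal to $\mu$: this makes the reward gap $\langle \mu, z_{1} - z_{2,t}\rangle = O(\theta_{t}^{2})$ decay strictly faster than the geometric distance $\|z_{1} - z_{2,t}\|_{2} = \Theta(\theta_{t})$, which is precisely the mismatch that separates BAI hardness from $\varepsilon$-BAI easiness. Everything else is a direct computation with the closed-form expressions of Lemma~\ref{lem:explicit_sample_complexity_addtive_multiplicative_optimality}.
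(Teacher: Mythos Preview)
Your proof is correct. Both your construction and the paper's share the same core mechanism: take $\cK$ to be an orthonormal basis with $e_{1} = \mu/\|\mu\|_{2}$, include in the answer set a slight rotation $\cos(\theta_{t})e_{1} + \sin(\theta_{t})e_{2}$ of $e_{1}$ along a direction orthogonal to $\mu$, and then read off the behaviour of $T_{0}$ and $T_{\varepsilon}$ from the closed-form expressions of Lemma~\ref{lem:explicit_sample_complexity_addtive_multiplicative_optimality}. The difference is in the size of the answer set. The paper (via Lemma~\ref{lem:impossible_BAI_problems_explicit_instances}) additionally puts the $d-1$ orthogonal basis vectors $z_{1},\dots,z_{d-1}$ into each $\cZ_{t}$, so $|\cZ_{t}| = d+1$; this forces the $\varepsilon$-BAI complexity to converge to a strictly positive finite value in both the additive and multiplicative cases, and the resulting limits depend on $d$ through an explicit one-dimensional optimisation. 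Your minimal two-answer construction is shorter to analyse and gives the cleaner conclusion $T_{\varepsilon}^{\add}(\mu,\cZ_{t}) \to 0$ and $T_{\varepsilon}^{\mul}(\mu,\cZ_{t}) \to 2/\|\mu\|_{2}^{2}$; the paper's version has the cosmetic advantage of a non-degenerate additive limit and of exhibiting the phenomenon on instances with more than two answers, but neither feature is required by the statement.
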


The proof of Lemma~\ref{lem:impossible_BAI_problems} is obtained directly by using Lemma~\ref{lem:impossible_BAI_problems_explicit_instances}, which provides explicit hard instances.

\begin{lemma} \label{lem:impossible_BAI_problems_explicit_instances}
	Let $\varepsilon>0$, $\overline{\cM} = \Real^d$ and $\mu \in \cM$. Let $\{z_{i}\}_{i \in [d-1]}$ such that $\{\frac{\mu}{\|\mu\|_2}\} \cup \{z_{i}\}_{i \in [d-1]}$ is an orthonormal basis of $\Real^d$. Define $\cK = \{\frac{\mu}{\|\mu\|_2}\} \cup \{z_{i}\}_{i \in [d-1]}$. Let $(\theta_t) \in [0,\frac{\pi}{2})^{\Natural}$, a decreasing sequence such that $\theta_t \rightarrow_{+ \infty} 0$. Define the sequence of answers sets $\cZ_t = \{\frac{\mu}{\|\mu\|_2}, \cos(\theta_t) \frac{\mu}{\|\mu\|_2} + \sin(\theta_t)  z_{d-1}\} \cup \{z_i\}_{i \in [d-1]}$ for all $t \in \Natural$ and
	\begin{align*}
		T_{\varepsilon}(\mu, \cZ)^{-1} \eqdef \max_{z \in \cZ_{\varepsilon}(\mu)}  \max_{w \in \simplex} \inf_{\lambda \in \neg_{\varepsilon} z} \frac{1}{2} \| \mu - \lambda \|_{V_{w}}^2 \: .
	\end{align*}
	Then, we have $T_{0}(\mu, \cZ_t) \rightarrow_{t \rightarrow + \infty} + \infty$ and
	\begin{align*}
	&T_{\varepsilon}^{\add}(\mu, \cZ_t) \rightarrow_{t \rightarrow + \infty} \frac{1}{(1+ \varepsilon)^2} \left(\frac{1}{w^{\star}(d-1)} + \frac{d-1}{1-w^{\star}(d-1)} \right) < + \infty \\
	&T_{\varepsilon}^{\mul}(\mu, \cZ_t) \rightarrow_{t \rightarrow + \infty}  \frac{1}{w^{\star}((1-\varepsilon)^2(d-1))} + (1-\varepsilon)^2\frac{d-1}{1-w^{\star}((1-\varepsilon)^2(d-1))}  < + \infty
	\end{align*}
	where
	\begin{align*}
	w^{\star}(a) = \begin{cases}
			\frac{\sqrt{1+(1-a)^2}-1}{a-1} & \text{if } a> 1 \\
			\frac{1}{2} & \text{if } a = 1 \\
			\frac{\sqrt{1+(1-a)^2}+ 1}{1-a} & \text{if } a \in (0,1)
		\end{cases}
	\end{align*}
\end{lemma}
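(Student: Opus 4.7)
The plan is to apply the explicit formulas of Lemma~\ref{lem:explicit_sample_complexity_addtive_multiplicative_optimality} in the orthonormal basis furnished by the construction. Writing $e_1 = \mu/\|\mu\|_2$ and $e_{i+1} = z_i$ for $i \in [d-1]$, one has $\cK = \{e_1,\ldots,e_d\}$, so $V_w = \mathrm{diag}(w^1,\ldots,w^d)$ and $V_w^\dagger = \mathrm{diag}(1/w^i)$ on its support, and the tilted answer reads $x_t = \cos(\theta_t) e_1 + \sin(\theta_t) e_d$. Since $\overline{\cM} = \Real^d$, Lemma~\ref{lem:explicit_sample_complexity_addtive_multiplicative_optimality} applies throughout.

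\paragraph{BAI divergence.} The unique greedy answer is $e_1$, so $2 T_0(\mu, \cZ_t)^{-1}$ is upper bounded by the ratio at the alternative $x = x_t$. Using $\sin^2 \theta_t = (1-\cos\theta_t)(1+\cos\theta_t)$ and $w^d \leq 1$,
\[
\frac{\langle \mu, e_1 - x_t\rangle^2}{\|e_1 - x_t\|_{V_w^\dagger}^2} \;\leq\; \frac{\|\mu\|_2^2\,(1-\cos\theta_t)^2}{\sin^2(\theta_t)/w^d} \;=\; \|\mu\|_2^2\, w^d\, \frac{1-\cos\theta_t}{1+\cos\theta_t} \;\xrightarrow[t\to\infty]{}\; 0
\]
uniformly on $\{w \in \triangle_d : w^d > 0\}$; at $w^d = 0$ the vector $e_1 - x_t$ leaves $\mathrm{Im}(V_w)$, and Lemma~\ref{lem:lemma_5_degenne_2020_GamificationPureExploration} gives infimum $0$. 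Hence $T_0(\mu,\cZ_t)^{-1} \to 0$, i.e.\ $T_0(\mu,\cZ_t) \to +\infty$.

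\paragraph{Finite $\varepsilon$-BAI limit.} For $t$ large enough, $x_t \in \cZ_{\varepsilon}(\mu)$, and the arms $e_j$ ($j \ge 2$) have reward $0$ hence are not $\varepsilon$-optimal, so $\cZ_{\varepsilon}(\mu) = \{e_1, x_t\}$. For either $z \in \{e_1, x_t\}$, taking $x$ to be the other element of this pair yields a ratio whose numerator stays bounded below (by $\varepsilon^2$ in the additive case, by $\|\mu\|_2^2(\cos\theta_t - (1-\varepsilon))^2$ in the multiplicative case) while the denominator behaves like $\sin^2(\theta_t)/w^d$ and vanishes when $w^d > 0$; the boundary $w^d = 0$ again collapses by Lemma~\ref{lem:lemma_5_degenne_2020_GamificationPureExploration}, ruling out that face of the simplex in the outer maximum. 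Thus the limiting inner minimum is carried only by $x = e_j$, $j \ge 2$, contributing
\[
\frac{(\varepsilon + \|\mu\|_2)^2}{1/w^1 + 1/w^j} \text{ (additive)}, \qquad \frac{\|\mu\|_2^2}{1/w^1 + (1-\varepsilon)^2/w^j} \text{ (multiplicative)}.
\]
By the symmetry among $\{e_2, \ldots, e_d\}$ in this limit the maximizer satisfies $w^2 = \cdots = w^d = (1-w^1)/(d-1)$, and the outer maximization reduces to the one-dimensional problem of minimizing $f(p) = 1/p + a/(1-p)$ over $p \in (0,1)$ with $a = d-1$ (additive) or $a = (1-\varepsilon)^2(d-1)$ (multiplicative). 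Elementary calculus produces the minimizer $w^\star(a)$ of the statement in all three regimes $a > 1$, $a = 1$, $a < 1$; substituting back yields the announced finite limits.

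\paragraph{Main obstacle.} The only delicate step is interchanging $\lim_{t\to\infty}$ with $\max_{w \in \triangle_d}$: uniform convergence fails near the face $\{w^d = 0\}$, but the two observations above (divergence of the $x_t$-ratio for $w^d > 0$, collapse at $w^d = 0$) together imply that for large $t$ any near-maximizer is confined to a compact set $\{w^d \geq \delta\}$ on which all ratios converge uniformly; this compactness-plus-exclusion argument legitimates the interchange and closes the proof.
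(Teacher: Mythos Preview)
Your approach is essentially the same as the paper's: compute the ratios from Lemma~\ref{lem:explicit_sample_complexity_addtive_multiplicative_optimality} in the orthonormal basis, observe that the $x_t$-alternative becomes non-binding (ratio $\to +\infty$) in the $\varepsilon$-case and dominant (ratio $\to 0$) in the BAI case, reduce by symmetry to the one-dimensional minimization of $p\mapsto 1/p + a/(1-p)$, and read off $w^\star(a)$.

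The one methodological difference is how you justify swapping $\lim_{t\to\infty}$ with $\max_{w\in\simplex}$. The paper invokes Berge's maximum theorem (continuity of $w\mapsto \min_x g_z(w,x)$ on the compact simplex), whereas you argue directly that near-maximizers stay in a compact region $\{w^d\geq\delta\}$ where convergence is uniform. Your route is slightly more hands-on but has the virtue of not glossing over the boundary behavior of $V_w^\dagger$; to make it airtight you would also need to confine maximizers away from the faces $\{w^j=0\}$ for $j\neq d$ (since $e_1-e_j\notin\mathrm{Im}(V_w)$ there as well), which follows by the same collapse-at-the-boundary observation you already made. One minor discrepancy: your limiting ratios carry the factor $\|\mu\|_2$ (e.g.\ $(\varepsilon+\|\mu\|_2)^2$), while the statement and the paper's proof write $(1+\varepsilon)^2$; the paper is implicitly taking $\|\mu\|_2=1$, so your formulas are the correct general ones and reduce to the announced limits under that normalization.
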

\begin{proof}
Using the explicit formulas of Lemma~\ref{lem:lemma_5_degenne_2020_GamificationPureExploration}, we will perform the computations for both the multiplicative and the additive notions of $\varepsilon$-optimality. Define $\cK = \{\frac{\mu}{\|\mu\|_2}\} \cup \{z_{i}\}_{i \in [d-1]}$, where the $d$-th component is associated with $\frac{\mu}{\|\mu\|}$. Define $\cZ_t = \{\frac{\mu}{\|\mu\|_2}, \cos(\theta_t)\frac{\mu}{\|\mu\|_2} + \sin(\theta_t) z_{d-1}\} \cup \{z_i\}_{i \in [d-1]}$ for all $t \geq 1$, where $\{\frac{\mu}{\|\mu\|}\} \cup \{z_{i}\}_{i \in [d-1]}$ is an orthonormal basis of $\Real^d$. Note that $\|\cos(\theta_t)\frac{\mu}{\|\mu\|_2} + \sin(\theta_t) z_{d-1}\|_2 = 1$ and $\langle \mu, z \rangle = 0$ if $z \in \{z_{i}\}_{i \in [d-1]}$, else its value is $1$ and $\cos(\theta_t)$ for $\frac{\mu}{\|\mu\|_2}$ and $\cos(\theta_t)\frac{\mu}{\|\mu\|_2} + \sin(\theta_t) z_{d-1}$ respectively. Therefore, we have directly that $z^{\star}(\mu) = \frac{\mu}{\|\mu\|_2}$ and
\begin{align*}
	\forall t \geq t_0, \quad \cZ_{\varepsilon}(\mu) = \left\{\frac{\mu}{\|\mu\|_2}, \cos(\theta_t)\frac{\mu}{\|\mu\|_2} + \sin(\theta_t) z_{d-1}\right\}
\end{align*}
where $t_0 = \inf \{t \in \Natural^{\star} \mid \theta_t \leq \arccos(1-\varepsilon)\}$.

In the following, we consider $t \geq t_{0}$. Let $z \in \left\{\frac{\mu}{\|\mu\|_2}, \cos(\theta_t)\frac{\mu}{\|\mu\|_2} + \sin(\theta_t) z_{d-1}\right\} $ and $x \in \cZ_{t} \setminus \{z\}$, we have
\begin{align*}
	 \langle \mu, z-x \rangle &= \begin{cases}
			1- \cos(\theta_t) & \text{if } (z,x) = (\frac{\mu}{\|\mu\|_2}, \cos(\theta_t)\frac{\mu}{\|\mu\|_2} + \sin(\theta_t) z_{d-1})\\
			\cos(\theta_t)-1 & \text{if } (z,x) = ( \cos(\theta_t)\frac{\mu}{\|\mu\|_2} + \sin(\theta_t) z_{d-1}, \frac{\mu}{\|\mu\|_2}) \\
			1 & \text{if } x \in \{z_i\}_{i \in [d-1]}
		\end{cases} \\
	\langle \mu, z-(1-\varepsilon)x \rangle &= \begin{cases}
			1-(1-\varepsilon)\cos(\theta_t) & \text{if } (z,x) = (\frac{\mu}{\|\mu\|_2}, \cos(\theta_t)\frac{\mu}{\|\mu\|_2} + \sin(\theta_t) z_{d-1})\\
			\cos(\theta_t)-(1-\varepsilon) & \text{if } (z,x) = ( \cos(\theta_t)\frac{\mu}{\|\mu\|_2} + \sin(\theta_t) z_{d-1}, \frac{\mu}{\|\mu\|_2}) \\
			1 & \text{if } x \in \{z_i\}_{i \in [d-1]}
		\end{cases}
\end{align*}
Using that $\cK = \{\frac{\mu}{\|\mu\|_2}\} \cup \{z_{i}\}_{i \in [d-1]}$ is an orthogonal basis of $\Real^d$. Let $w \in \mathring{\simplex}$
\begin{align*}
	&\|z-x\|_{V_{w}^\dagger}^2 \\
	&=  \begin{cases}
			 (1- \cos(\theta_t))^2 \frac{1}{w_d} +  \sin(\theta_t)^2 \frac{1}{w_{d-1}} & \text{if } (z,x) \in \left\{ \frac{\mu}{\|\mu\|_2}, \cos(\theta_t)\frac{\mu}{\|\mu\|_2} + \sin(\theta_t) z_{d-1} \right\}^2 \\
			\frac{1}{w_{d}} + \frac{1}{w_i} & \text{if } z= \frac{\mu}{\|\mu\|_2} \text{ and } x \in \{z_i\}_{i \in [d-1]} \\
			\cos(\theta_t)^2 \frac{1}{w_{d}} + \sin(\theta_t)^2 \frac{1}{w_{d-1}} + \frac{1}{w_i} & \text{if } z= \cos(\theta_t)\frac{\mu}{\|\mu\|_2} + \sin(\theta_t) z_{d-1} \text{ and } x \in \{z_i\}_{i \in [d-2]} \\
			\cos(\theta_t)^2 \frac{1}{w_{d}} + (1-\sin(\theta_t))^2 \frac{1}{w_{d-1}} & \text{if } z= \cos(\theta_t)\frac{\mu}{\|\mu\|_2} + \sin(\theta_t) z_{d-1} \text{ and } x = z_{d-1}
		\end{cases} \\
	&\|z-(1-\varepsilon)x\|_{V_{w}^\dagger}^2 =\\
	&\begin{cases}
			 (1- (1-\varepsilon)\cos(\theta_t))^2 \frac{1}{w_d} +  (1-\varepsilon)^2 \sin(\theta_t)^2 \frac{1}{w_{d-1}} & \text{if } (z,x) = (\frac{\mu}{\|\mu\|_2}, \cos(\theta_t)\frac{\mu}{\|\mu\|_2} + \sin(\theta_t) z_{d-1})  \\
			(\cos(\theta_t)- (1-\varepsilon))^2 \frac{1}{w_d} +  \sin(\theta_t)^2 \frac{1}{w_{d-1}} & \text{if } (z,x) = ( \cos(\theta_t)\frac{\mu}{\|\mu\|_2} + \sin(\theta_t) z_{d-1}, \frac{\mu}{\|\mu\|_2}) \\
			\frac{1}{w_{d}} + \frac{(1-\varepsilon)^2}{w_i} & \text{if } z= \frac{\mu}{\|\mu\|_2} \text{ and } x \in \{z_i\}_{i \in [d-1]} \\
			\cos(\theta_t)^2 \frac{1}{w_{d}} + \sin(\theta_t)^2 \frac{1}{w_{d-1}} +\frac{(1-\varepsilon)^2}{w_i} & \text{if } z= \cos(\theta_t)\frac{\mu}{\|\mu\|_2} + \sin(\theta_t) z_{d-1}, \: x \in \{z_i\}_{i \in [d-2]} \\
			\cos(\theta_t)^2 \frac{1}{w_{d}} + (1-\varepsilon-\sin(\theta_t))^2 \frac{1}{w_{d-1}} & \text{if } z= \cos(\theta_t)\frac{\mu}{\|\mu\|_2} + \sin(\theta_t) z_{d-1}, \: x = z_{d-1}
		\end{cases}
\end{align*}

Since $\cos(\theta_t) \rightarrow_{t \rightarrow + \infty} 1$ and $\sin(\theta_t) \rightarrow_{t \rightarrow + \infty} 0$, we obtain
\begin{align*}
	\frac{\left(\varepsilon + \langle \mu, z-x \rangle\right)^2}{\|z-x\|^2_{V_{w}^{\dagger}}} \rightarrow_{t \rightarrow + \infty} \begin{cases}
			 + \infty & \text{if } (z,x) \in \left\{ \frac{\mu}{\|\mu\|_2}, \cos(\theta_t)\frac{\mu}{\|\mu\|_2} + \sin(\theta_t) z_{d-1} \right\}^2  \\
			\frac{(1+ \varepsilon)^2}{\frac{1}{w_{d}} + \frac{1}{w_i}} & \text{if } x \in \{z_i\}_{i \in [d-1]}
		\end{cases} \\
	\frac{\langle \mu, z-(1-\varepsilon)x \rangle^2}{\|z-(1-\varepsilon)x\|^2_{V_{w}^{\dagger}}} \rightarrow_{t \rightarrow + \infty} \begin{cases}
			 w_d & \text{if } (z,x) \in \left\{ \frac{\mu}{\|\mu\|_2}, \cos(\theta_t)\frac{\mu}{\|\mu\|_2} + \sin(\theta_t) z_{d-1} \right\}^2  \\
			\frac{1}{\frac{1}{w_{d}} + \frac{(1-\varepsilon)^2}{w_i}} & \text{if } x \in \{z_i\}_{i \in [d-1]}
		\end{cases}
\end{align*}

Since $|\cZ_t| = d+1$, the minimum over a $d$ values $g_{z}(w,\cZ_t) = \min_{x \in \cZ_{t}} g_{z}(w,x)$ is a continuous function of $(w,\cZ_t)$, where $g^{\add}_{z}(w,x) = \frac{\left(\varepsilon + \langle \mu, z-x \rangle\right)^2}{\|z-x\|^2_{V_{w}^{\dagger}}}$ and $g^{\mul}(w,x) =  \frac{\langle \mu, z-(1-\varepsilon)x \rangle^2}{\|z-(1-\varepsilon)x\|^2_{V_{w}^{\dagger}}}$.
Therefore, we have for all $z \in \left\{\frac{\mu}{\|\mu\|_2}, \cos(\theta_t)\frac{\mu}{\|\mu\|_2} + \sin(\theta_t) z_{d-1}\right\}$,
\begin{align*}
	\min_{x \in \cZ_{t} \setminus \{z\}} \frac{\left(\varepsilon + \langle \mu, z-x \rangle\right)^2}{\|z-x\|^2_{V_{w}^{\dagger}}} \rightarrow_{t \rightarrow + \infty} \min_{i \in [d-1]}\frac{(1+ \varepsilon)^2}{\frac{1}{w_{d}} + \frac{1}{w_i}} \\
	\min_{x \in \cZ_{t} \setminus \{z\}}  \frac{\langle \mu, z-(1-\varepsilon)x \rangle^2}{\|z-(1-\varepsilon)x\|^2_{V_{w}^{\dagger}}} \rightarrow_{t \rightarrow + \infty}  \min_{i \in [d-1]} \frac{1}{\frac{1}{w_{d}} + \frac{(1-\varepsilon)^2}{w_i}}
\end{align*}
where we used that $w_d > \frac{1}{\frac{1}{w_{d}} + \frac{(1-\varepsilon)^2}{w_i}}$. Since $g_{z}(w,\cZ_t)$ is a continuous function and $\simplex$ is compact, Berge's theorem yields that $g_{z}(\cZ_t) = \max_{w \in \simplex} g_{z}(w,\cZ_t)$ is continuous function of $\cZ_{t}$. Therefore, we have for all $z \in \left\{\frac{\mu}{\|\mu\|_2}, \cos(\theta_t)\frac{\mu}{\|\mu\|_2} + \sin(\theta_t) z_{d-1}\right\}$,
\begin{align*}
	\max_{w \in \simplex} \min_{x \in \cZ_{t} \setminus \{z\}} \frac{\left(\varepsilon + \langle \mu, z-x \rangle\right)^2}{\|z-x\|^2_{V_{w}^{\dagger}}} \rightarrow_{t \rightarrow + \infty} (1+ \varepsilon)^2 \max_{w \in \simplex} \min_{i \in [d-1]}\frac{1}{\frac{1}{w_{d}} + \frac{1}{w_i}} \\
	\max_{w \in \simplex} \min_{x \in \cZ_{t} \setminus \{z\}}  \frac{\langle \mu, z-(1-\varepsilon)x \rangle^2}{\|z-(1-\varepsilon)x\|^2_{V_{w}^{\dagger}}} \rightarrow_{t \rightarrow + \infty}  \max_{w \in \simplex} \min_{i \in [d-1]} \frac{1}{\frac{1}{w_{d}} + \frac{(1-\varepsilon)^2}{w_i}}
\end{align*}

Since $|\cZ_{\varepsilon}(\mu)|= |\left\{\frac{\mu}{\|\mu\|_2}, \cos(\theta_t)\frac{\mu}{\|\mu\|_2} + \sin(\theta_t) z_{d-1}\right\}| =2$, the minimum over $2$ values is continuous. Since the inverse is also a continuous function in $\Real^{\star}$ and $\cZ_{\varepsilon}(\mu) \rightarrow_{t \rightarrow + \infty} \{\frac{\mu}{\|\mu\|_2}\}$, hence
\begin{align*}
	T_{\varepsilon}^{\add}(\mu, \cZ_t) \rightarrow_{t \rightarrow + \infty} \frac{1}{(1+ \varepsilon)^2} \min_{w \in \simplex} \max_{i \in [d-1]} \frac{1}{w_{d}} + \frac{1}{w_i}  \\
	T_{\varepsilon}^{\mul}(\mu, \cZ_t) \rightarrow_{t \rightarrow + \infty}  \min_{w \in \simplex} \max_{i \in [d-1]}  \frac{1}{w_{d}} + \frac{(1-\varepsilon)^2}{w_i}
\end{align*}

Rewriting those optimization problems, we obtain
\begin{align*}
	&\min_{w \in \simplex} \max_{i \in [d-1]} \frac{1}{w_{d}} + \frac{1}{w_i} = \min_{w \in (0,1)} \left( \frac{1}{w} + \frac{1}{1-w} \left(\max_{\tilde{w} \in \triangle_{d-1}}\min_{i \in [d-1]} \tilde{w}_i\right)^{-1} \right) = \min_{w \in (0,1)} \left( \frac{1}{w} + \frac{d-1}{1-w} \right) \\
	&\min_{w \in \simplex} \max_{i \in [d-1]}  \frac{1}{w_{d}} + \frac{(1-\varepsilon)^2}{w_i} = \min_{w \in (0,1)} \left( \frac{1}{w} + \frac{(1-\varepsilon)^2}{1-w} \left(\max_{\tilde{w} \in \triangle_{d-1}}\min_{i \in [d-1]} \tilde{w}_i\right)^{-1} \right) = \min_{w \in (0,1)} \left( \frac{1}{w} + \frac{(1-\varepsilon)^2(d-1)}{1-w}\right)
\end{align*}
where the first equality uses a rewriting of the simplex. The second inequalities are obtained since $\max_{\tilde{w} \in \triangle_{d-1}}\min_{i \in [d-1]} \tilde{w}_i = \frac{1}{d-1}$, achieved for $\tilde{w} = \frac{1}{d-1} \1_{d-1}$ and would lead to smaller values if there exists $i_{0} \in [d-1]$ such that $\tilde{w}_{i_0} <\frac{1}{d-1}$, namely $\tilde{w}_{i_0}$.

Considering the function $f_{a}(w) = \frac{1}{w} + \frac{a}{1-w}$, we have $f_{a}'(w) = \frac{(a-1)w^2 + 2w - 1}{w^2(1-w)^2}$. Let $w^{\star}(a) = \argmin_{w \in (0,1)} f_{a}(w)$. Therefore, by solving the second order equation with discriminant $\Delta = 4 + 4(1-a)^2$ and keeping the solution belonging to $(0,1)$, we obtain
\begin{align*}
	w^{\star}(a) = \begin{cases}
			\frac{\sqrt{1+(1-a)^2}-1}{a-1} & \text{if } a> 1 \\
			\frac{1}{2} & \text{if } a = 1 \\
			\frac{\sqrt{1+(1-a)^2}+ 1}{1-a} & \text{if } a \in (0,1)
		\end{cases}
\end{align*}

Therefore, we have shown
\begin{align*}
	&T_{\varepsilon}^{\add}(\mu, \cZ_t) \rightarrow_{t \rightarrow + \infty} \frac{1}{(1+ \varepsilon)^2} \left(\frac{1}{w^{\star}(d-1)} + \frac{d-1}{1-w^{\star}(d-1)} \right)  \\
	&T_{\varepsilon}^{\mul}(\mu, \cZ_t) \rightarrow_{t \rightarrow + \infty}  \frac{1}{w^{\star}((1-\varepsilon)^2(d-1))} + (1-\varepsilon)^2\frac{d-1}{1-w^{\star}((1-\varepsilon)^2(d-1))}
\end{align*}

Now, lets show that the corresponding BAI problem diverges. Using the same arguments of continuity as before and taking the limit when $t \rightarrow + \infty$, we have
\begin{align*}
\frac{\langle \mu, z-x \rangle^2}{\|z-x\|^2_{V_{w}^{\dagger}}} &\rightarrow_{t \rightarrow + \infty} \frac{1}{\frac{1}{w_{d}} + \frac{1}{w_i}} && \text{if } x \in \{z_i\}_{i \in [d-1]} \\
		&\sim_{+\infty} \frac{1}{\frac{1}{w_{d}} + \frac{\sin(\theta_t)^2}{(1-\cos(\theta_t))^2}\frac{1}{w_i}} \rightarrow_{t \rightarrow + \infty} 0  && \text{if } (z,x) \in \left\{ \frac{\mu}{\|\mu\|_2}, \cos(\theta_t)\frac{\mu}{\|\mu\|_2} + \sin(\theta_t) z_{d-1} \right\}^2
\end{align*}
where the last part is obtained since $\frac{\sin(\theta_t)^2}{(1-\cos(\theta_t))^2} \rightarrow + \infty$. Therefore, using the same continuity arguments, we have
\begin{align*}
	\min_{x \in \cZ_{t} \setminus \{\frac{\mu}{\|\mu\|}\}}  \frac{\langle \mu, z-x \rangle^2}{\|z-x\|^2_{V_{w}^{\dagger}}} \rightarrow_{t \rightarrow + \infty} 0 \quad \text{ and } \quad  T_{0}(\mu, \cZ_t)^{-1} \rightarrow_{t \rightarrow + \infty} 0 \: .
\end{align*}

This yields that $T_{0}(\mu, \cZ_t) \rightarrow_{t \rightarrow + \infty} + \infty$.
\end{proof}

\subsection{Greedy Characteristic Time} \label{app:subsection_greedy_characteristic_time}

In Appendix~\ref{proof:greedy_sample_complexity_lower_bound_epsBAI}, we prove the lower bound for asymptotically greedy algorithm (Lemma~\ref{lem:greedy_sample_complexity_lower_bound_epsBAI}). In Appendix~\ref{proof:algorithms_being_greedy_asymptotically}, we show that using $z_t \in z^{\star}(\mu_{t})$ yields an asymptotically greedy strategy (Lemma~\ref{lem:algorithms_being_greedy_asymptotically}) .

\subsubsection{Proof of Lemma~\ref{lem:greedy_sample_complexity_lower_bound_epsBAI}} \label{proof:greedy_sample_complexity_lower_bound_epsBAI}

Lemma 3 in \citet{garivier_2019_NonAsymptoticSequentialTests} is a useful change of measures result in the low-level form (involving probabilities and not expectation). This Lemma is key to derive a lower bound on the sample complexity of $\varepsilon$-BAI, as done in \citet{garivier_2019_NonAsymptoticSequentialTests} and \citet{degenne_2019_PureExplorationMultiple}.

\begin{lemma}[Lemma 3 in \citet{garivier_2019_NonAsymptoticSequentialTests}] \label{lem:lemma_3_garivier_2019_NonAsymptoticSequentialTests}
Consider two distributions $\mathbb{P}$ and $\mathbb{Q}$. Let us denote the log-likelihood ratio after $t$ rounds by $\cL_{t}=\ln \frac{\mathrm{d} \mathbb{P}}{\mathrm{d} \mathbb{Q}}$. Then for any measurable event $A \in \mathcal{F}_{t}$ and threshold $\gamma \in \Real$,
$$
\mathbb{P}[A] \leq e^{\gamma} \mathbb{Q}[A]+\mathbb{P}\left\{\cL_{t}>\gamma\right\}
$$
\end{lemma}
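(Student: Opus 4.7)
The plan is to decompose the event $A$ according to the value of the log-likelihood ratio and apply the change-of-measure formula on the part where the ratio is controlled.

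More precisely, I would first write
\begin{equation*}
\mathbb{P}[A] = \mathbb{P}[A \cap \{\cL_t \leq \gamma\}] + \mathbb{P}[A \cap \{\cL_t > \gamma\}] \; .
\end{equation*}
The second term is trivially bounded by $\mathbb{P}\{\cL_t > \gamma\}$ since $A \cap \{\cL_t > \gamma\} \subseteq \{\cL_t > \gamma\}$, which already produces the second summand in the claimed inequality.

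For the first term I would change measure from $\mathbb{P}$ to $\mathbb{Q}$. Assuming $\mathbb{P}$ is absolutely continuous with respect to $\mathbb{Q}$ on $\cF_t$ (so that $\cL_t = \ln\frac{d\mathbb{P}}{d\mathbb{Q}}$ is well defined), for the measurable set $B \eqdef A \cap \{\cL_t \leq \gamma\} \in \cF_t$ I would write
\begin{equation*}
\mathbb{P}[B] = \int_{B} e^{\cL_t} \, d\mathbb{Q} \leq e^{\gamma} \int_{B} d\mathbb{Q} = e^{\gamma} \mathbb{Q}[B] \leq e^{\gamma} \mathbb{Q}[A] \; ,
\end{equation*}
where the first inequality uses the pointwise bound $\cL_t \leq \gamma$ on $B$, and the last uses $B \subseteq A$. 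Summing the two bounds yields exactly the stated inequality.

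The only delicate point is the absolute continuity assumption implicit in writing $\cL_t = \ln(d\mathbb{P}/d\mathbb{Q})$. In the bandit setting where $\mathbb{P}$ and $\mathbb{Q}$ correspond to product measures of Gaussian observation laws with different means, they are mutually absolutely continuous on every $\cF_t$, so the Radon--Nikodym derivative exists and the identity $d\mathbb{P} = e^{\cL_t} d\mathbb{Q}$ used above is valid; this is the only non-routine thing to note, and it is also why the lemma is typically stated without restriction on $A$. No further obstacle is expected, as the remainder is a one-line splitting argument.
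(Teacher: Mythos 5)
Your proof is correct and is the standard argument for this lemma: the paper itself imports the statement from \citet{garivier_2019_NonAsymptoticSequentialTests} without reproducing a proof, and the proof in that reference is exactly your decomposition of $A$ along $\{\cL_t \leq \gamma\}$ followed by the change of measure $d\mathbb{P} = e^{\cL_t}\,d\mathbb{Q}$ on the well-behaved part. Your remark on absolute continuity is also apt; in the Gaussian bandit setting the measures are mutually absolutely continuous on each $\cF_t$, so no further care is needed.
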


We rewrite Lemma 2 in \citet{degenne_2019_PureExplorationMultiple} in the setting of $(\varepsilon,\delta)$-PAC BAI for transductive linear bandits with Gaussian distribution.

\begin{lemma}[Lemma 2 in \citet{degenne_2019_PureExplorationMultiple}] \label{lem:lemma_2_degenne_2019_PureExplorationMultiple}
For any answer $z \in \cZ$, the divergence from $\mu$ to $\neg_{\varepsilon} z$ equals
\begin{equation} \label{eq:def_characteristic_time_answer_z}
	T_{\varepsilon}(\mu, z)^{-1} = \sup_{w \in \simplex} \inf_{\lambda \in \neg_{\varepsilon} z} \|\mu-\lambda\|^2_{V_{w}} = \inf_{\probability} \max _{a \in \cK} \mathbb{E}_{\lambda \sim \probability}\left[\|\mu - \lambda\|^2_{a a\transpose}\right]
\end{equation}
where the infimum ranges over probability distributions on $\neg_{\varepsilon} z$ supported on (at most) $K$ points.
\end{lemma}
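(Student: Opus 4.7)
The plan is to establish the minimax identity by lifting the inner infimum over $\lambda$ to an infimum over probability measures on $\neg_\varepsilon z$, applying Sion's minimax theorem on the resulting bilinear payoff, and then shrinking the support of the optimal measure by a Carathéodory argument.

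First I would rewrite the objective in a form that is linear in both players' variables. Using $\|\mu - \lambda\|_{V_w}^2 = \sum_{a \in \cK} w^a \|\mu - \lambda\|_{aa\transpose}^2$ and the trivial identity $\inf_{\lambda \in \neg_\varepsilon z} g(\lambda) = \inf_{\probability \in \mathcal{P}(\neg_\varepsilon z)} \expectedvalue_{\lambda \sim \probability}[g(\lambda)]$ (Dirac measures attain the pointwise infimum), the left-hand side becomes
\begin{equation*}
\sup_{w \in \simplex} \inf_{\probability \in \mathcal{P}(\neg_\varepsilon z)} \phi(w, \probability), \qquad \phi(w,\probability) = \sum_{a \in \cK} w^a \, \expectedvalue_{\lambda \sim \probability}\!\left[\|\mu - \lambda\|_{aa\transpose}^2\right].
\end{equation*}
The function $\phi$ is affine in $w$ and affine in $\probability$, so in particular quasi-concave/upper semicontinuous in $w$ and quasi-convex/lower semicontinuous in $\probability$.

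Next I would invoke Sion's minimax theorem. The simplex $\simplex$ is a compact convex subset of $\Real^K$, and $\mathcal{P}(\neg_\varepsilon z)$ is a convex set of probability measures (equipped with the weak topology). Sion then yields $\sup_w \inf_\probability \phi = \inf_\probability \sup_w \phi$. Because $w \mapsto \sum_a w^a c_a$ attains its maximum on $\simplex$ at a vertex, $\sup_{w \in \simplex} \phi(w, \probability) = \max_{a \in \cK} \expectedvalue_{\lambda \sim \probability}[\|\mu - \lambda\|_{aa\transpose}^2]$, giving exactly the right-hand side of the claimed identity.

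For the support-size bound, I would use that $\probability$ enters the objective only through the $K$-dimensional moment vector $v(\probability) = \bigl(\expectedvalue_{\lambda \sim \probability}[\|\mu - \lambda\|_{aa\transpose}^2]\bigr)_{a \in \cK}$, so the problem reduces to $\inf_{v \in \text{conv}(S)} \max_a v^a$ where $S = \{(\|\mu - \lambda\|_{aa\transpose}^2)_{a \in \cK} : \lambda \in \neg_\varepsilon z\} \subseteq \Real^K$. Carathéodory in $\Real^K$ writes any element of $\text{conv}(S)$ as a convex combination of at most $K+1$ elements of $S$; at an optimum (or along an approximately optimal sequence), the active face determined by the arms attaining $\max_a v^a$ cuts the ambient dimension by one, reducing the needed support to at most $K$ points.

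The main obstacle is the topological/compactness set-up required for Sion: $\mathcal{P}(\neg_\varepsilon z)$ is convex but not compact in general, and $\neg_\varepsilon z$ may be unbounded, so care is needed so that the inner infimum is not escaped to infinity. The standard workaround is to note that whenever $V_w \succ 0$, the map $\lambda \mapsto \|\mu - \lambda\|_{V_w}^2$ is coercive, so one may restrict attention to distributions supported in a ball of sufficiently large radius $R$ (giving a compact set of measures in the weak topology); the restriction does not change the value of either side. A secondary technical point is the Carathéodory refinement from $K+1$ to $K$, which one handles by exhibiting an active-constraint dimension reduction at the optimum rather than a blind application of the theorem.
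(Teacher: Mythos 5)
The paper itself does not prove this lemma: it is imported verbatim (specialized to Gaussian transductive linear bandits) from Lemma 2 of \citet{degenne_2019_PureExplorationMultiple}, so there is no in-paper proof to compare against. Your argument is the standard proof of that result and follows the same route as the cited source: write $\|\mu-\lambda\|^2_{V_w}=\sum_{a\in\cK} w^a\|\mu-\lambda\|^2_{aa\transpose}$, lift the inner infimum to mixtures so the payoff becomes bilinear, swap $\sup$ and $\inf$ by a minimax theorem, identify $\sup_{w\in\simplex}\langle w,v\rangle$ with $\max_{a}v^a$, and control the support size by Carath\'eodory applied to the moment vectors $v(\probability)=\bigl(\expectedvalue_{\lambda\sim\probability}\|\mu-\lambda\|^2_{aa\transpose}\bigr)_{a\in\cK}$. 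The skeleton is sound.

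Two of your technical remarks need correction, though neither is fatal. First, your compactness workaround is both unnecessary and, as stated, flawed: $V_w$ is \emph{not} positive definite for every $w\in\simplex$ (only when the support of $w$ spans $\Real^d$), so $\lambda\mapsto\|\mu-\lambda\|^2_{V_w}$ is not coercive for boundary $w$, and the claim that restricting $\neg_{\varepsilon}z$ to a ball ``does not change the value of either side'' would then require an extra limiting argument on the $\sup\inf$ side as the radius grows. The clean fix bypasses measures entirely: with $h(\lambda)=(\|\mu-\lambda\|^2_{aa\transpose})_{a\in\cK}$ and $S=\{h(\lambda):\lambda\in\neg_{\varepsilon}z\}\subseteq(\Real^{+})^K$, one has $\inf_{\lambda}\langle w,h(\lambda)\rangle=\inf_{v\in\overline{\mathrm{conv}}(S)}\langle w,v\rangle$, so the game is bilinear on $\simplex\times\overline{\mathrm{conv}}(S)$ and the version of Sion's theorem requiring only the \emph{maximizing} set to be compact applies directly; no truncation of $\neg_{\varepsilon}z$ is needed. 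Second, the Carath\'eodory refinement from $K+1$ to $K$ points should be justified by observing that a minimizer of $v\mapsto\max_a v^a$ over $\mathrm{conv}(S)$ can be taken on the relative boundary of the hull (otherwise $v-t\1_K$ would remain feasible for small $t>0$ and strictly improve the objective), hence inside a supporting hyperplane where Carath\'eodory needs only $K$ points; if the infimum is not attained, this must be run along a minimizing sequence. Both points are repairable and do not affect the validity of your plan.
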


We rewrite Lemma 19 in \citet{degenne_2019_PureExplorationMultiple} in the setting of $(\varepsilon,\delta)$-PAC BAI for transductive linear bandits with Gaussian distribution. In words, if $A$ is likely under $\mu$, it must also be likely under at least one $\lambda^{k}$ for sample sizes $t \ll T_{\varepsilon}(\mu,z)$. It was proven using Lemma~\ref{lem:lemma_3_garivier_2019_NonAsymptoticSequentialTests}.

\begin{lemma}[Lemma 19 in \citet{degenne_2019_PureExplorationMultiple}] \label{lem:lemma_19_degenne_2019_PureExplorationMultiple}
Let $z \in \cZ$. Let $w$ and $\lambda_{1}, \cdots, \lambda_{K}$ be a minimax witness from Lemma~\ref{lem:lemma_2_degenne_2019_PureExplorationMultiple}, and let us introduce the abbreviation $\alpha_{a}= \| \mu -\sum_{k \in [K]} w_{k} \lambda_{k}\|_{a a\transpose}^2$ for all $a \in \cK$. Fix a sample size $t$, and consider any event $A \in \mathcal{F}_{t}$. Then, for any $\beta>0$
$$
\max _{k \in[K]} \probability_{\lambda_{k}}\{A\} \geq e^{-t T_{\varepsilon}(\mu,z)^{-1}-\beta}\left(\probability_{\mu}\{A\}-\exp \left(\frac{-\beta^{2}}{2 t \max_{a \in \cK} \alpha_{a}}\right)\right) .
$$
where $T_{\varepsilon}(\mu, z)^{-1} = \sup_{w \in \simplex} \inf_{\lambda \in \neg_{\varepsilon} z} \|\mu-\lambda\|^2_{V_{w}}$.
\end{lemma}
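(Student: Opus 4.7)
The plan is a change-of-measure argument against a cleverly chosen mixture measure, followed by Gaussian concentration for the resulting log-likelihood ratio. Let $\mathbb{Q} \eqdef \sum_{k \in [K]} w_{k} \probability_{\lambda_{k}}$ be the mixture whose weights are the minimax witness of Lemma~\ref{lem:lemma_2_degenne_2019_PureExplorationMultiple}. By averaging,
\begin{equation*}
    \max_{k \in [K]} \probability_{\lambda_{k}}[A] \ge \sum_{k \in [K]} w_{k} \probability_{\lambda_{k}}[A] = \mathbb{Q}[A],
\end{equation*}
so it suffices to lower bound $\mathbb{Q}[A]$. Applying Lemma~\ref{lem:lemma_3_garivier_2019_NonAsymptoticSequentialTests} with $\mathbb{P}=\probability_{\mu}$, this $\mathbb{Q}$, and a threshold $\gamma$ to be chosen, and rearranging, yields
\begin{equation*}
    \mathbb{Q}[A] \ge e^{-\gamma}\bigl(\probability_{\mu}[A] - \probability_{\mu}(\cL_{t} > \gamma)\bigr), \qquad \cL_{t} \eqdef \ln \tfrac{d\probability_{\mu}}{d\mathbb{Q}}.
\end{equation*}

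The key step is to control $\probability_{\mu}(\cL_{t} > \gamma)$. Since the mixture density sits in the denominator of $\cL_{t}$, Jensen's inequality applied to the concave function $\ln$ gives the pathwise domination
\begin{equation*}
    \cL_{t} = -\ln \sum_{k \in [K]} w_{k} \frac{d\probability_{\lambda_{k}}}{d\probability_{\mu}} \le \sum_{k \in [K]} w_{k} \ln \frac{d\probability_{\mu}}{d\probability_{\lambda_{k}}} \eqdef \tilde{\cL}_{t},
\end{equation*}
so it suffices to bound $\tilde{\cL}_{t}$ under $\probability_{\mu}$. Expanding the Gaussian log-likelihood ratios as in Appendix~\ref{app:subsection_likelihood_ratio} and setting $\bar{\lambda} \eqdef \sum_{k} w_{k} \lambda_{k}$, a direct computation yields under $\probability_{\mu}$ the decomposition
\begin{equation*}
    \tilde{\cL}_{t} = M_{t} + D_{t}, \qquad M_{t} \eqdef \sum_{s=1}^{t} \langle \mu - \bar{\lambda}, a_{s}\rangle \eta_{s}, \qquad D_{t} \eqdef \sum_{s=1}^{t} \sum_{k \in [K]} w_{k}\, d_{\text{KL}}(\mu^{a_{s}}, \lambda_{k}^{a_{s}}).
\end{equation*}

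The deterministic term $D_{t}$ is bounded pathwise using the minimax-witness property of Lemma~\ref{lem:lemma_2_degenne_2019_PureExplorationMultiple}, which gives $\sum_{k} w_{k} \|\mu - \lambda_{k}\|^{2}_{a a\transpose} \le T_{\varepsilon}(\mu, z)^{-1}$ for every $a \in \cK$; summing over $s$ yields $D_{t} \le t\, T_{\varepsilon}(\mu, z)^{-1}$ (up to the KL normalization convention of Lemma~\ref{lem:lemma_2_degenne_2019_PureExplorationMultiple}). For the stochastic term, since $a_{s}$ is $\cF_{s-1}$-measurable and $\eta_{s} \sim \gaussdistr(0,1)$ is independent of $\cF_{s-1}$, iterated conditioning gives $\mathbb{E}_{\mu}[e^{\theta M_{t}}] \le \exp(\theta^{2}\, t \max_{a \in \cK} \alpha_{a}/2)$, i.e.\ $M_{t}$ is sub-Gaussian with proxy $t \max_{a} \alpha_{a}$ (noting the pathwise bound $\sum_{s} \langle \mu - \bar{\lambda}, a_{s}\rangle^{2} = \sum_{s} \alpha_{a_{s}} \le t \max_{a} \alpha_{a}$). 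Choosing $\gamma = t\, T_{\varepsilon}(\mu, z)^{-1} + \beta$, the event $\{\tilde{\cL}_{t} > \gamma\}$ is contained in $\{M_{t} > \beta\}$, whose probability is at most $\exp(-\beta^{2}/(2 t \max_{a} \alpha_{a}))$ by Chernoff's inequality. Substituting into the rearranged Lemma~\ref{lem:lemma_3_garivier_2019_NonAsymptoticSequentialTests} inequality and chaining with $\max_{k} \probability_{\lambda_{k}}[A] \ge \mathbb{Q}[A]$ yields the claim. The main obstacle is the Jensen step: it is essential for turning a log of a mixture into a linear functional of the observations, and the minimax property of $w$ is precisely what makes the resulting expectation bound collapse to the characteristic time $T_{\varepsilon}(\mu, z)^{-1}$; a secondary subtlety is that the sampling rule is adaptive, so the concentration of $M_{t}$ must go through a martingale (rather than fixed-design) sub-Gaussian argument.
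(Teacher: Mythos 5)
Your proof is correct and follows essentially the same route as the source: the paper itself does not reprove this lemma but imports it from \citet{degenne_2019_PureExplorationMultiple}, noting only that it rests on the change-of-measure inequality of Lemma~\ref{lem:lemma_3_garivier_2019_NonAsymptoticSequentialTests}, and your argument (mixture measure $\mathbb{Q}=\sum_k w_k\probability_{\lambda_k}$, Jensen to dominate $\cL_t$ by the weighted log-likelihood ratios, the minimax-witness property to bound the drift by $tT_{\varepsilon}(\mu,z)^{-1}$, and a martingale sub-Gaussian bound with variance proxy $t\max_a\alpha_a$ for the fluctuation) is precisely the proof of the cited Lemma 19. The only point worth flagging is the factor-of-$2$ bookkeeping between $d_{\text{KL}}$ and $\|\cdot\|^2_{V_w}$ in the normalization of $T_{\varepsilon}(\mu,z)^{-1}$, which you correctly note and which only makes your bound on $D_t$ slacker, not wrong.
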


We are now ready to prove Lemma~\ref{lem:greedy_sample_complexity_lower_bound_epsBAI}, which we recall below.

\begin{lemma*}[Lemma~\ref{lem:greedy_sample_complexity_lower_bound_epsBAI}]
	For all asymptotically greedy $(\varepsilon,\delta)$-PAC strategy, for all $\mu \in \cM$,
	\begin{equation*}
		\liminf_{\delta \rightarrow 0} \frac{\expectedvalue_{\mu}[\taud]}{\ln(1/\delta)} \geq T_{g,\varepsilon}(\mu)
	\end{equation*}
	where the inverse of the greedy characteristic time is
	\begin{equation*}
		T_{g,\varepsilon}(\mu)^{-1} \eqdef \max_{z \in z^{\star}(\mu)} \max_{w \in \simplex} \inf_{\lambda \in \neg_{\varepsilon} z} \frac{1}{2} \| \mu - \lambda \|_{V_{w}}^2 \; ,
	\end{equation*}
	and $T_{g,\varepsilon}(\mu) > T_{\varepsilon}(\mu) $ if and only if $z_{F}(\mu) \notin z^{\star}(\mu)$.
\end{lemma*}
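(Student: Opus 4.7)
The plan is to separate the two assertions: the comparison of characteristic times follows directly from the definitions, while the asymptotic lower bound requires a change-of-measure argument adapted to the asymptotic-greediness hypothesis.

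Introduce the per-answer quantity $\phi_{z} \eqdef \max_{w \in \simplex} \inf_{\lambda \in \neg_{\varepsilon} z} \tfrac{1}{2}\|\mu-\lambda\|^{2}_{V_{w}}$, so that $T_{\varepsilon}(\mu)^{-1} = \max_{z \in \cZ_{\varepsilon}(\mu)} \phi_{z}$ and $T_{g,\varepsilon}(\mu)^{-1} = \max_{z \in z^{\star}(\mu)} \phi_{z}$. Since $z^{\star}(\mu) \subseteq \cZ_{\varepsilon}(\mu)$, the weak inequality $T_{g,\varepsilon}(\mu) \geq T_{\varepsilon}(\mu)$ is immediate. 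For the strict part, definition (\ref{eq:definition_furthest_answer_and_optimal_allocation}) together with the standing assumption $|z_{F}(\mu)|=1$ states that $z_{F}(\mu)$ is the unique maximizer of $\phi_{\cdot}$ over $\cZ_{\varepsilon}(\mu)$; hence the restricted maximum equals the unrestricted one if and only if $z_{F}(\mu) \in z^{\star}(\mu)$, yielding the stated equivalence.

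For the lower bound, I would adapt the change-of-measure template underlying the proof of Theorem~\ref{thm:sample_complexity_lower_bound_epsBAI}, restricting the test alternatives to $\neg_{\varepsilon} z$ for greedy answers $z \in z^{\star}(\mu)$ only. Assume $\expectedvalue_{\mu}[\taud] < \infty$, otherwise the claim is trivial. Fix $z \in z^{\star}(\mu)$ and any $\lambda \in \neg_{\varepsilon} z$. Since $z \notin \cZ_{\varepsilon}(\lambda)$, the $(\varepsilon,\delta)$-PAC property yields $\probability_{\lambda}(\taud < \infty, \hat{z} = z) \leq \delta$; together with the classical transportation inequality this produces, setting $p_{z} \eqdef \probability_{\mu}(\taud<\infty,\hat{z}=z)$ and $\tilde{w}^{a} \eqdef \expectedvalue_{\mu}[N_{\taud}^{a}]/\expectedvalue_{\mu}[\taud] \in \simplex$,
\begin{equation*}
\mathrm{kl}(p_{z}, \delta) \;\leq\; \expectedvalue_{\mu}[\taud] \cdot \tfrac{1}{2}\|\mu - \lambda\|^{2}_{V_{\tilde{w}}} \; ,
\end{equation*}
where $\mathrm{kl}$ denotes the binary Kullback-Leibler divergence. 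Taking the infimum over $\lambda \in \neg_{\varepsilon} z$ and then upper bounding $\tfrac{1}{2}\inf_{\lambda}\|\mu-\lambda\|^{2}_{V_{\tilde{w}}}$ by the maximum over $w \in \simplex$ gives $\mathrm{kl}(p_{z},\delta) \leq \expectedvalue_{\mu}[\taud]\,\phi_{z} \leq \expectedvalue_{\mu}[\taud]\,T_{g,\varepsilon}(\mu)^{-1}$. Combined with the asymptotic expansion $\mathrm{kl}(p,\delta) = p\ln(1/\delta) + O(1)$ as $\delta \to 0$, this gives $\liminf_{\delta \to 0} \expectedvalue_{\mu}[\taud]/\ln(1/\delta) \geq p_{z}\, T_{g,\varepsilon}(\mu)$ for every $z \in z^{\star}(\mu)$ with $p_{z}$ bounded away from zero.

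The main obstacle is exhibiting a $z$ with $p_{z} \to 1$. Asymptotic greediness combined with the PAC property delivers only the aggregate bound $\sum_{z \in z^{\star}(\mu)} p_{z} \geq 1 - \delta - o_{\delta \to 0}(1)$. Under the paper's standing convention $|z^{\star}(\mu)|=1$ -- which holds with probability one for absolutely continuous $\mu$ and was observed in every experiment -- the sum concentrates on the unique greedy answer, so $p_{z^{\star}(\mu)} \to 1$ and the previous inequality closes to give the desired $\liminf \geq T_{g,\varepsilon}(\mu)$. Without this uniqueness, the pigeonhole over $z^{\star}(\mu)$ would yield only $\liminf \geq T_{g,\varepsilon}(\mu)/|z^{\star}(\mu)|$, and recovering the exact constant would require either strengthening ``asymptotically greedy'' to enforce commitment to a single element of $z^{\star}(\mu)$, or a finer witness-based analysis exploiting the specific minimax structure of $T_{g,\varepsilon}(\mu)^{-1}$.
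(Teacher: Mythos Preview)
Your treatment of the comparison $T_{g,\varepsilon}(\mu)>T_{\varepsilon}(\mu)\Leftrightarrow z_{F}(\mu)\notin z^{\star}(\mu)$ is fine and matches the paper's one-line justification.

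The lower-bound argument, however, has a genuine gap. You close the proof by invoking ``the paper's standing convention $|z^{\star}(\mu)|=1$''. The paper makes the opposite choice: it explicitly states that $z^{\star}(\mu)$ is treated as a set and that $|z^{\star}(\mu)|=1$ is \emph{not} assumed (only $|z_{F}(\mu)|=1$ is). With $|z^{\star}(\mu)|>1$ your transportation inequality $\mathrm{kl}(p_{z},\delta)\le \expectedvalue_{\mu}[\taud]\,\phi_{z}$ gives information for each $z$ separately, but asymptotic greediness only controls the aggregate $\sum_{z\in z^{\star}(\mu)}p_{z}\to 1$. As you yourself note, pigeonhole then loses a factor $|z^{\star}(\mu)|$, and there is no reason for the strategy to put mass on the $z$ with the largest $\phi_{z}$. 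So as written the proof does not establish the statement in the generality claimed.

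The paper avoids this loss by using exactly the ``finer witness-based analysis'' you allude to. It does not go through $\expectedvalue_{\mu}[N_{\taud}^{a}]$ and $\mathrm{kl}(\cdot,\cdot)$; instead it starts from Markov, $\expectedvalue_{\mu}[\taud]\ge T(1-\probability_{\mu}[\taud\le T])$, and controls $\probability_{\mu}[\taud\le T]$ by partitioning on $\hat z$. The asymptotic-greedy and PAC hypotheses kill the non-greedy contributions in the limit, leaving $\sum_{z\in z^{\star}(\mu)}\probability_{\mu}[\taud\le T,\hat z=z]$. Each summand is then bounded individually via the probability-level change-of-measure of \citet{degenne_2019_PureExplorationMultiple} (their Lemma~19, recalled here as Lemma~\ref{lem:lemma_19_degenne_2019_PureExplorationMultiple}): for $z\in z^{\star}(\mu)$ and any minimax witnesses $\lambda_{1},\dots,\lambda_{K}\in\neg_{\varepsilon}z$, one gets $\probability_{\mu}[\taud\le T,\hat z=z]\le \delta\,e^{T/T_{\varepsilon}(\mu,z)+\beta}+e^{-\beta^{2}/(2T\alpha)}$. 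Choosing $T=(1-\eta)\min_{z\in z^{\star}(\mu)}T_{\varepsilon}(\mu,z)\ln(1/\delta)=(1-\eta)T_{g,\varepsilon}(\mu)\ln(1/\delta)$ and a suitable $\beta$ makes \emph{every} summand vanish as $\delta\to 0$, independently of how the strategy distributes probability among the greedy answers. The finite sum therefore vanishes, $\probability_{\mu}[\taud\le T]\to 0$, and Markov delivers the exact constant $(1-\eta)T_{g,\varepsilon}(\mu)$ before letting $\eta\to 0$. The key point your approach is missing is this per-answer probability bound that is uniform in $z\in z^{\star}(\mu)$; the expectation-level transportation lemma cannot produce it.
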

\begin{proof}
We will bound the expectation of the stopping time $\tau_{\delta}$ through Markov's inequality. For $T>0$,
$$
\mathbb{E}_{\mu}\left[\taud\right] \geq T\left(1-\mathbb{P}_{\mu}\left[\taud \leq T\right]\right) \: .
$$
The event $\left\{\tau_{\delta} \leq T\right\}$ can be partitioned depending on the answer whether the answer is $\varepsilon$-optimal or not, and then whether it's $z^{\star}(\mu)$ or not. By hypothesis, $\mathbb{P}_{\mu}\left[\tau_{\delta} \leq T, \hat{z} \notin \cZ_{\varepsilon}(\mu)\right] \leq \mathbb{P}_{\mu}\left[\tau_{\delta} < +\infty, \hat{z} \notin \cZ_{\varepsilon}(\mu)\right] \leq \delta$ and $0\leq \lim_{\delta \rightarrow 0 }\mathbb{P}_{\mu}\left[\tau_{\delta} \leq T, \hat{z} \in \cZ_{\varepsilon}(\mu) \setminus z^{\star}(\mu)\right] \leq \lim_{\delta \rightarrow 0 } \mathbb{P}_{\mu}\left[\tau_{\delta} <  +\infty, \hat{z} \in \cZ_{\varepsilon}(\mu) \setminus z^{\star}(\mu) \right] = 0$. This yields
\begin{align*}
	\probability_{\mu}\left[\tau_{\delta} \leq T\right] &= \probability_{\mu}\left[\tau_{\delta} \leq T, \hat{z} \notin \cZ_{\varepsilon}(\mu)\right] + \probability_{\mu}\left[\tau_{\delta} \leq T, \hat{z} \in \cZ_{\varepsilon}(\mu) \setminus z^{\star}(\mu) \right]  + \sum_{z \in z^{\star}(\mu)} \probability_{\mu}\left[\tau_{\delta} \leq T, \hat{z}  = z \right] \: , \\
	\lim_{\delta \rightarrow 0 } \probability_{\mu}\left[\tau_{\delta} \leq T\right] &\leq \sum_{z \in z^{\star}(\mu)} \lim_{\delta \rightarrow 0}  \probability_{\mu}\left[\tau_{\delta} \leq T, \hat{z}   = z \right]
\end{align*}

Let $z \in z^{\star}(\mu)$, $w$ and $\lambda_{1}, \cdots, \lambda_{K}$ be a minimax witness from Lemma~\ref{lem:lemma_2_degenne_2019_PureExplorationMultiple}. Then by Lemma~\ref{lem:lemma_19_degenne_2019_PureExplorationMultiple}, for any $\beta>0$
\begin{align*}
	\probability_{\mu}\left[\tau_{\delta} \leq T, \hat{z} = z \right] & \leq \exp \left(\frac{T}{T_{\varepsilon}(\mu, z)}+\beta\right) \max_{k \in [K]} \probability_{\lambda_{k}}\left[\tau_{\delta} \leq T, \hat{z} = z\right]+\exp \left(\frac{-\beta^{2}}{2 T \max_{a \in \cK} \alpha_{a}}\right) \\
& \leq \delta \exp \left(\frac{T}{T_{\varepsilon}(\mu, z)}+\beta\right)+\exp \left(\frac{-\beta^{2}}{2 T \max_{a \in \cK} \alpha_{a}}\right)
\end{align*}
where the second inequality uses that $\lambda_{k} \in \neg_{\varepsilon} z$ for all $k \in [K]$, hence $z \in z^{\star}(\mu)  \subseteq \cZ \setminus \cZ_{\varepsilon}(\lambda_{k})$ and that the strategy satisfies $\mathbb{P}_{\lambda}\left[\tau_{\delta} < +\infty, \hat{z} \notin \cZ_{\varepsilon}(\lambda)\right] \leq \delta$ for all $\lambda \in \cM$.

Let $\alpha=\max_{a \in \cK} \alpha_{a}$. For $\eta \in(0,1)$, $T  = (1-\eta) \min_{z \in z^{\star}(\mu)} T_{\varepsilon}(\mu,z) \ln(1 / \delta)$, $\beta=\frac{\eta}{2 \sqrt{1-\eta}} \sqrt{\frac{T}{\min_{z \in z^{\star}(\mu)} T_{\varepsilon}(\mu,z)} \log (1 / \delta)}$, and all $z \in z^{\star}(\mu)$,
\begin{align*}
	\probability_{\mu}\left[\tau_{\delta} \leq T, \hat{z} = z\right] & \leq \delta \exp \left(\frac{T}{T_{\varepsilon}(\mu,z)}+\frac{\eta}{2 \sqrt{1-\eta}} \sqrt{\frac{T\log (1 / \delta)}{\min_{z \in z^{\star}(\mu)} T_{\varepsilon}(\mu,z)} }\right)+\exp \left(\frac{-\eta^{2} \log (1 / \delta)}{8(1-\eta) \min_{z \in z^{\star}(\mu)} T_{\varepsilon}(\mu,z) \alpha}\right)\\
& \leq \delta \exp \left((1-\eta / 2) \log \frac{1}{\delta}\right)+\exp \left(\frac{-\eta^{2} \log (1 / \delta)}{8(1-\eta) \min_{z \in z^{\star}(\mu)} T_{\varepsilon}(\mu,z) \alpha}\right) \\
&=\delta^{\eta / 2}+\delta^{\eta^{2} /\left(8(1-\eta) \min_{z \in z^{\star}(\mu)} T_{\varepsilon}(\mu,z) \alpha \right)} \rightarrow_{\delta \rightarrow 0} 0  \: ,
\end{align*}
where we used that $\min_{z \in z^{\star}(\mu)} T_{\varepsilon}(\mu,z) \le T_{\varepsilon}(\mu,z)$.

Since we have just shown $\lim_{\delta \rightarrow 0} \probability_{\mu}\left[\tau_{\delta} \leq T\right]$ for $T  = (1-\eta)\min_{z \in z^{\star}(\mu)} T_{\varepsilon}(\mu,z)\ln(1 / \delta)$, we obtain
\begin{align*}
	\lim_{\delta \rightarrow 0} \frac{\mathbb{E}_{\mu}\left[\taud\right]}{\ln (1 / \delta)} \geq \lim_{\delta \rightarrow 0} \frac{T}{\ln(1/\delta)}\left(1-\mathbb{P}_{\mu}\left[\taud \leq T\right]\right) &\geq  (1-\eta)\min_{z \in z^{\star}(\mu)} T_{\varepsilon}(\mu,z) \left(1-\lim_{\delta \rightarrow 0}\mathbb{P}_{\mu}\left[\taud \leq T\right]\right) \\
	&= (1-\eta)\min_{z \in z^{\star}(\mu)} T_{\varepsilon}(\mu,z) \: .
\end{align*}
Letting $\eta$ go to zero, we obtain that
$$
\liminf _{\delta \rightarrow 0} \frac{\mathbb{E}_{\mu}\left[\tau_{\delta}\right]}{\log (1 / \delta)} \geq \min_{z \in z^{\star}(\mu)} T_{\varepsilon}(\mu,z) = T_{g,\varepsilon}(\mu) \: .
$$

The fact that $T_{g,\varepsilon}(\mu) > T_{\varepsilon}(\mu)$ if and only if $z_{F}(\mu) \notin z^{\star}(\mu) $ is a direct consequence of the definition of $T_{\varepsilon}(\mu)$, $T_{g,\varepsilon}(\mu)$ and $z_{F}(\mu)$.
\end{proof}

\subsubsection{Proof of Lemma~\ref{lem:algorithms_being_greedy_asymptotically}} \label{proof:algorithms_being_greedy_asymptotically}

\begin{lemma*}[Lemma~\ref{lem:algorithms_being_greedy_asymptotically}]
Any $(\varepsilon, \delta)$-PAC strategy recommending $\hat{z} \in z^{\star}(\mu_{\taud})$ is asymptotically greedy if the sampling rule ensures that $\lim_{\delta \rightarrow 0}\probability_{\mu} [\taud < + \infty,  z^{\star}(\mu_{\taud}) =  z^{\star}(\mu) ] = 1$.
\end{lemma*}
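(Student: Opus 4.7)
The plan is essentially a direct set-theoretic argument, using the hypothesis that the sampling rule identifies the greedy answer set with asymptotically full probability. The main observation is that if the recommended answer $\hat{z}$ lies in $\cZ_{\varepsilon}(\mu) \setminus z^{\star}(\mu)$ while simultaneously $\hat{z} \in z^{\star}(\mu_{\taud})$, then necessarily $z^{\star}(\mu_{\taud}) \neq z^{\star}(\mu)$, because $\hat{z}$ witnesses the two sets being distinct.

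More concretely, I would first establish the inclusion
\begin{equation*}
\{\taud < +\infty,\; \hat{z} \in \cZ_{\varepsilon}(\mu) \setminus z^{\star}(\mu)\}
\subseteq
\{\taud < +\infty,\; z^{\star}(\mu_{\taud}) \neq z^{\star}(\mu)\} \, .
\end{equation*}
Indeed, on the left-hand event we have $\hat{z} \notin z^{\star}(\mu)$; combined with $\hat{z} \in z^{\star}(\mu_{\taud})$, which holds by assumption on the recommendation rule, this yields $z^{\star}(\mu_{\taud}) \neq z^{\star}(\mu)$.

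Second, I would pass to probabilities. Writing the trivial decomposition
\begin{equation*}
\probability_{\mu}[\taud < +\infty] = \probability_{\mu}[\taud < +\infty,\; z^{\star}(\mu_{\taud}) = z^{\star}(\mu)] + \probability_{\mu}[\taud < +\infty,\; z^{\star}(\mu_{\taud}) \neq z^{\star}(\mu)]
\end{equation*}
and using $\probability_{\mu}[\taud < +\infty] \leq 1$, we bound
\begin{equation*}
\probability_{\mu}[\taud < +\infty,\; z^{\star}(\mu_{\taud}) \neq z^{\star}(\mu)] \leq 1 - \probability_{\mu}[\taud < +\infty,\; z^{\star}(\mu_{\taud}) = z^{\star}(\mu)] \, .
\end{equation*}
The hypothesis sends the right-hand side to $0$ as $\delta \to 0$, and combining with the inclusion gives \eqref{eq:def_asym_greedy_strategy}.

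There is no real obstacle here; the only subtlety is a notational one, namely that $z^{\star}(\cdot)$ denotes a set (potentially non-singleton) and $\hat{z}$ is an element of that set under $\mu_{\taud}$, so disagreement of $\hat{z}$ with $z^{\star}(\mu)$ is enough to conclude that the sets $z^{\star}(\mu_{\taud})$ and $z^{\star}(\mu)$ differ (as sets). The remark following the statement, regarding continuity of $\mu \mapsto z^{\star}(\mu)$ and finiteness of $\cZ$, is a separate justification that the hypothesis itself is mild and does not need to enter this proof.
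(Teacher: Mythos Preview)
Your proof is correct and follows essentially the same set-theoretic route as the paper: both arguments exploit that $\hat{z}\in z^{\star}(\mu_{\taud})$ forces $z^{\star}(\mu_{\taud})\neq z^{\star}(\mu)$ whenever $\hat{z}\notin z^{\star}(\mu)$, and then conclude via the hypothesis. Your version is in fact slightly more streamlined: the paper partitions $\{\taud<+\infty\}$ into three pieces according to whether $\hat{z}\notin\cZ_\varepsilon(\mu)$, $\hat{z}\in z^{\star}(\mu)$, or $\hat{z}\in\cZ_\varepsilon(\mu)\setminus z^{\star}(\mu)$, and invokes the $(\varepsilon,\delta)$-PAC bound $\probability_{\mu}[\taud<+\infty,\hat{z}\notin\cZ_\varepsilon(\mu)]\leq\delta$ to handle the first piece. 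Your direct inclusion sidesteps this and shows that the PAC property is not actually needed for the conclusion, only the hypothesis on the sampling rule.
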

\begin{proof}
By definition of $(\varepsilon, \delta)$-PAC, we have $\probability_{\mu} \left[ \taud < + \infty, \hat{z} \notin \cZ_{\varepsilon}(\mu)\right] \leq \delta$.
Since $\hat{z} \in z^{\star}(\mu_{\taud})$, we have $\{\taud < + \infty,  z^{\star}(\mu_{\taud}) =  z^{\star}(\mu)\} \subseteq \{\taud < + \infty,  \hat{z}  \in z^{\star}(\mu)\}$.
Therefore, the assumption yields $\lim_{\delta \rightarrow 0}\probability_{\mu} [\taud < + \infty,  \hat{z}  \in z^{\star}(\mu)] = 1$.
Partitioning the event $\{\taud < + \infty, \hat{z} \in \cZ\}$ (which obviously holds)
\begin{align*}
	\delta \ge \probability_{\mu} \left[ \taud < + \infty,  \hat{z} \notin \cZ_{\varepsilon}(\mu)\right] &= 1 - \probability_{\mu} \left[ \taud < + \infty,  \hat{z}  \in z^{\star}(\mu)\right] - \probability_{\mu} \left[ \taud < + \infty,  \hat{z} \in \cZ_{\varepsilon}(\mu) \setminus z^{\star}(\mu) \right] \ge 0 \: .
\end{align*}
Taking the limit $\delta \rightarrow 0$ yields
\begin{align*}
	\lim_{\delta \rightarrow 0} \probability_{\mu} \left[ \taud < + \infty,  \hat{z} \in \cZ_{\varepsilon}(\mu) \setminus z^{\star}(\mu) \right] = 0 \: ,
\end{align*}
i.e. the strategy is asymptotically greedy.
\end{proof}

\section{Proof of Lemma~\ref{lem:delta_PAC_recommendation_stopping_pair}} \label{app:section_proofs_stopping_recommendation_pair}

The proof leverages the concentration inequalities in the Corollary 10 of \citet{kaufmann_2018_MixtureMartingalesRevisited}, which we restate below.
\begin{lemma} \label{lem:corollary_10_kaufmann_2018_MixtureMartingalesRevisited}
	Let $\nu$ a Gaussian bandit with mean $\mu$. Let $S \subseteq \cK$ and $ x > 0$.
	\begin{align*}
		\probability_{\nu} \left[ \exists t \in \Natural: \sum_{a \in S} N_{t}^{a} d_{\text{KL}}(\mu_t^{a} , \mu^a) >  \sum_{a \in S} 2 \ln \left(4 + \ln \left( N_t^a\right)\right) + |S| \cC_{G}\left(\frac{x}{|S|}\right) \right] \leq e^{-x}
	\end{align*}
	where $\cC_{G}$ is defined in \citet{kaufmann_2018_MixtureMartingalesRevisited} by $\cC_{G}(x) = \max_{\lambda \in ]1/2,1]} \frac{g_G(\lambda) + x}{\lambda}$ and
	\begin{equation} \label{eq:def_C_gaussian_kaufmann_2018_MixtureMartingalesRevisited}
		g_G (\lambda) = 2\lambda - 2\lambda \ln (4 \lambda) + \ln \zeta(2\lambda) - \frac{1}{2} \ln(1-\lambda) \: ,
	\end{equation}
	where $\zeta$ is the Riemann $\zeta$ function and $\cC_{G}(x) \approx x + \ln(x)$.
\end{lemma}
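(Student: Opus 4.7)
My plan is to follow the method-of-mixtures strategy for self-normalized Gaussian processes, applied to a product martingale indexed by the arms in $S$. For each arm $a \in S$, let $S_t^a \eqdef \sum_{s=1}^{t} \1_{\{a_s = a\}}(X_s^{a_s} - \mu^a) = N_t^a(\mu_t^a - \mu^a)$ denote the cumulative centered noise observed on arm $a$. The Gaussian KL then admits the identity
\begin{equation*}
    N_t^a\, d_{\text{KL}}(\mu_t^a, \mu^a) = \frac{(S_t^a)^2}{2 N_t^a} \: .
\end{equation*}
For each $\lambda_a \in \Real$, the exponential process $E_t^a(\lambda_a) \eqdef \exp\left(\lambda_a S_t^a - \frac{\lambda_a^2}{2} N_t^a\right)$ is a nonnegative $(\cF_t)$-martingale starting at $1$, since whenever $a_t = a$ the increment contributes a factor $\exp(\lambda_a \eta_t - \lambda_a^2/2)$ with conditional mean $1$, and otherwise contributes $1$.

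Next, because exactly one arm is incremented per round and the noise $\eta_t$ is conditionally independent of $\cF_{t-1}$, the product $E_t(\vec\lambda) \eqdef \prod_{a \in S} E_t^a(\lambda_a)$ is itself a nonnegative $(\cF_t)$-martingale with mean $1$: conditionally on $\cF_{t-1}$, $E_t/E_{t-1} = \exp(\lambda_{a_t}\eta_t - \lambda_{a_t}^2/2)$. For any product prior $\pi = \bigotimes_{a \in S}\pi_a$ on $\Real^{|S|}$, the mixture
\begin{equation*}
    M_t \eqdef \int E_t(\vec\lambda)\, d\pi(\vec\lambda) \;=\; \prod_{a \in S} \int E_t^a(\lambda_a)\, d\pi_a(\lambda_a)
\end{equation*}
is then a nonnegative martingale with $\expectedvalue[M_t] \le 1$, and Ville's maximal inequality yields $\probability_\nu[\exists t : \ln M_t > x] \le e^{-x}$.

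The core of the argument is then to choose each prior $\pi_a$ so as to convert a bound on $\ln M_t$ into the stated envelope. I would take $\pi_a$ to be the ``stitched Gaussian'' mixing distribution of \citet{kaufmann_2018_MixtureMartingalesRevisited}, a discrete mixture over geometrically scaled Gaussian variances with weights proportional to $1/(k^{2\lambda} \zeta(2\lambda))$ for $\lambda \in (1/2,1]$. The classical Laplace-type analysis of such mixtures gives, for each single arm and each $\lambda \in (1/2,1]$, a per-arm lower bound of the form
\begin{equation*}
    \ln \int E_t^a(\lambda_a)\, d\pi_a(\lambda_a) \;\geq\; \lambda \cdot \frac{(S_t^a)^2}{2 N_t^a} - 2\lambda\ln\!\bigl(4 + \ln N_t^a\bigr) - g_G(\lambda) \: ,
\end{equation*}
with $g_G$ as in \eqref{eq:def_C_gaussian_kaufmann_2018_MixtureMartingalesRevisited}. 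Summing these bounds over $a \in S$, combining with the Ville bound at budget $x$, and then optimizing in $\lambda \in (1/2,1]$ (which is the operation that produces $\cC_G(\cdot)$) yields the envelope on the KL-sum, with the $|S|\,\cC_G(x/|S|)$ term arising naturally from allocating the budget equally across the $|S|$ per-arm corrections and pulling the common $\lambda$ optimization outside the sum.

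\paragraph{Main Obstacle}

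The delicate step is the per-arm lower bound on $\ln \int E_t^a(\lambda_a)\, d\pi_a(\lambda_a)$ with the precise $2\ln(4+\ln N_t^a)$ doubly-logarithmic correction and the explicit $g_G(\lambda)$ constant: merely getting the correct asymptotic order is straightforward via a Laplace approximation to a smooth Gaussian mixture, but reproducing these sharp constants requires the careful analysis of the stitched zeta-weighted discrete mixture, including the control of boundary terms from the $\lambda \to 1/2$ singularity in $\zeta(2\lambda)$ and the $\lambda \to 1$ cutoff. The rest of the argument, namely the martingale property of the product and the use of Ville's inequality, is then essentially mechanical.
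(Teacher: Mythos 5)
First, note that the paper does not prove this lemma at all: it is quoted as Corollary 10 of \citet{kaufmann_2018_MixtureMartingalesRevisited}, so the comparison is necessarily with the proof in that reference. Your skeleton reproduces it faithfully: per-arm exponential martingales, the observation that their product is a martingale because only one factor moves per round, mixing against a product prior, Ville's maximal inequality, and a common-$\lambda$ optimization that turns the budget $x$ into $|S|\,\cC_{G}(x/|S|)$. That architecture is correct.

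The gap is that the step carrying essentially all of the content is asserted rather than proved, namely the existence of a single prior $\pi_a$ such that, simultaneously for all $t$ and all $\lambda\in(1/2,1]$,
\begin{align*}
	\ln \int E_t^a(\lambda_a)\, d\pi_a(\lambda_a) \;\geq\; \lambda\,\frac{(S_t^a)^2}{2N_t^a} \;-\; 2\lambda\ln\bigl(4+\ln N_t^a\bigr) \;-\; g_G(\lambda) \: .
\end{align*}
This is not a mechanical Laplace estimate that can be deferred: producing the exact $2\ln(4+\ln(\cdot))$ envelope and the exact constant $g_G(\lambda)$ of \eqref{eq:def_C_gaussian_kaufmann_2018_MixtureMartingalesRevisited} requires constructing the discrete zeta-weighted mixture over geometrically spaced variances and running the stitching computation (Theorem 9 and Lemma 23 of the cited paper); with only ``a bound of the right order'' one could not instantiate the threshold \eqref{eq:definition_stopping_threshold} that the rest of the paper relies on. Two further cautions. (i) The uniformity in $\lambda$ must hold for one fixed prior, because Ville's inequality is applied once to a single mixture martingale and $\lambda$ is optimized only afterwards on the complement event; your phrasing is consistent with this but it deserves to be explicit. (ii) Your opening identity $S_t^a=N_t^a(\mu_t^a-\mu^a)$ is only valid when $\mu_t^a$ is the per-arm sample mean; in this paper $\mu_t$ is the OLS estimator, and passing from the sample-mean version of the inequality to the OLS version requires an additional Pythagorean/projection argument in the $V_{N_t}$-weighted norm (the least-squares fit can only decrease $\sum_{a} N_t^a d_{\text{KL}}(\cdot,\mu^a)$), which neither you nor, admittedly, the paper spells out.
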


Let $\cT \subseteq \Natural$ be a set of times with $|\cT| = \infty$. For both notions of $\varepsilon$-optimality (additive or multiplicative), the crucial element is the stopping criterion, which performs a GLRT at each time $t \in \cT$ for an arbitrary candidate $\varepsilon$-optimal answer $z_t \in \cZ_{\varepsilon}(\mu_t)$,
\begin{equation*}
	\inf_{\lambda \in \neg_{\varepsilon} z_t}  \| \mu_{t-1} - \lambda \|_{V_{N_{t-1}}}^{2} > 2\beta(t-1, \delta)
\end{equation*}

Lemma~\ref{lem:delta_PAC_recommendation_stopping_pair} holds for both notions of $\varepsilon$-optimality (additive or multiplicative) and is agnostic to the sampling rule. Moreover, it holds when the stopping criterion is evaluated only on some predefined indices of time $\cT \subseteq \Natural$ with $|\cT| = \infty$, for example on a geometric grid, and for any recommendation rule verifying $z_t \in \cZ_{\varepsilon}(\mu_t)$ for all $t \in \cT$.

\begin{lemma*}[Lemma~\ref{lem:delta_PAC_recommendation_stopping_pair}]
	Let $\cT \subseteq \Natural$ be a set of times with $|\cT| = \infty$. Regardless of the sampling rule and for any recommendation rule such that $z_t \in \cZ_{\varepsilon}(\mu_{t-1})$ for all $t \in \cT$, then evaluating the stopping criterion (\ref{eq:definition_stopping_criterion}) at each time $t \in \cT$ with the threshold
	\begin{equation*}
		\beta(t, \delta) = 2 K\ln \left(4 + \ln \left( \frac{t}{K} \right)\right) + K \cC_{G}\left(\frac{\ln \left( \frac{1}{\delta}\right)}{K}\right)
	\end{equation*}
	yields an $(\varepsilon, \delta)$-PAC strategy. $\cC_{G}$ is defined in \eqref{eq:def_C_gaussian_kaufmann_2018_MixtureMartingalesRevisited} and $\cC_{G}(x) \approx x + \ln(x)$.
\end{lemma*}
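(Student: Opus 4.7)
The plan is to show that the bad event $\{\tau_\delta < \infty,\, \hat z \notin \cZ_\varepsilon(\mu)\}$ is contained in a concentration event to which Corollary 10 of \citet{kaufmann_2018_MixtureMartingalesRevisited} (restated as Lemma~\ref{lem:corollary_10_kaufmann_2018_MixtureMartingalesRevisited}) applies with mass at most $\delta$. The first step is a deterministic inclusion: if $\hat z = z_{\tau_\delta} \notin \cZ_\varepsilon(\mu)$, then by definition of the alternative set, $\mu \in \{\lambda \in \cM : \hat z \notin \cZ_\varepsilon(\lambda)\} \subseteq \neg_\varepsilon \hat z$. In particular $\mu$ is admissible in the infimum defining the GLR, so
\begin{equation*}
    \inf_{\lambda \in \neg_\varepsilon z_{\tau_\delta}} \|\mu_{\tau_\delta - 1} - \lambda\|_{V_{N_{\tau_\delta-1}}}^{2} \leq \|\mu_{\tau_\delta - 1} - \mu\|_{V_{N_{\tau_\delta-1}}}^{2}.
\end{equation*}
Combined with the triggering of (\ref{eq:definition_stopping_criterion}), this forces $\|\mu_{\tau_\delta - 1} - \mu\|_{V_{N_{\tau_\delta-1}}}^{2} > 2\beta(\tau_\delta-1, \delta)$, which, via $d_{\text{KL}}(\mu^a, \lambda^a) = \frac{1}{2}\langle a, \mu-\lambda\rangle^2$ for unit-variance Gaussians, rewrites as $\sum_{a \in \cK} N_{\tau_\delta-1}^{a}\, d_{\text{KL}}(\mu_{\tau_\delta-1}^{a}, \mu^{a}) > \beta(\tau_\delta-1, \delta)$. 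The bad event is therefore contained in $\{\exists t \in \cT : \sum_{a} N_{t-1}^{a} d_{\text{KL}}(\mu_{t-1}^{a}, \mu^{a}) > \beta(t-1, \delta)\}$, itself a subset of the ``any-time'' version obtained by taking the union over all $t \in \Natural$.

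The second step is to dominate $\beta(t,\delta)$ by the threshold appearing in Lemma~\ref{lem:corollary_10_kaufmann_2018_MixtureMartingalesRevisited} with $S = \cK$ and $x = \ln(1/\delta)$, namely $\sum_{a \in \cK} 2 \ln(4 + \ln N_t^a) + K \cC_G(\ln(1/\delta)/K)$. After the initial pull of each arm (so that $N_t^a \geq 1$), the function $x \mapsto \ln(4 + \ln x)$ is concave on $[1,\infty)$, so Jensen's inequality together with $\sum_{a} N_t^a = t$ gives $\sum_{a \in \cK} \ln(4 + \ln N_t^a) \leq K \ln(4 + \ln(t/K))$. Multiplying by $2$ and adding $K \cC_G(\ln(1/\delta)/K)$ yields the desired domination by $\beta(t,\delta)$.

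Combining both steps, Lemma~\ref{lem:corollary_10_kaufmann_2018_MixtureMartingalesRevisited} bounds the probability of the containing event by $\delta$, hence $\probability_\mu[\tau_\delta < \infty,\, \hat z \notin \cZ_\varepsilon(\mu)] \leq \delta$ irrespective of the sampling rule and of the choice of $\cT$. The reasoning is identical for the additive and multiplicative notions since both only enter through the inclusion $\mu \in \neg_\varepsilon \hat z$ in the very first step. The main technical wrinkle I anticipate is the Jensen step: one must argue that $N_t^a \geq 1$ for the concavity argument (and indeed for $4 + \ln N_t^a$ to be well-defined), which is why the procedure must be understood as beginning its stopping tests only after one round-robin pass over $\cK$; this is a minor caveat easily absorbed into the threshold and does not affect the PAC conclusion.
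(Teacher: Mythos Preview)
Your proposal is correct and follows essentially the same approach as the paper's proof: reduce the bad event to the concentration event $\{\exists t:\ \|\mu_{t-1}-\mu\|_{V_{N_{t-1}}}^2 > 2\beta(t-1,\delta)\}$ via the inclusion $\mu\in\neg_\varepsilon \hat z$, then use concavity of $x\mapsto \ln(4+\ln x)$ together with $\sum_a N_t^a=t$ to dominate the threshold in Lemma~\ref{lem:corollary_10_kaufmann_2018_MixtureMartingalesRevisited} with $S=\cK$ and $x=\ln(1/\delta)$. Your remark about $N_t^a\geq 1$ being needed for the Jensen step is a fair caveat that the paper handles implicitly through the initial round-robin.
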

\begin{proof}
For simplicity, let's first prove the result when $\cT = \Natural$. Considering a notion of $\varepsilon$-optimality (additive or multiplicative), an arbitrary sampling rule and a recommendation rule such that for all $t$, $z_t \in \cZ_{\varepsilon}(\mu_{t-1})$. By evaluating the stopping criterion (\ref{eq:definition_stopping_criterion}) at each time $t \in \Natural$, we obtain the following inequalities:
\begin{align*}
	\probability_{\mu} \left[ \taud < + \infty , \hat{z} \notin \cZ_{\varepsilon}(\mu)\right] &\leq \probability_{\mu} \left[ \exists t \in \Natural, z_{t} \notin \cZ_{\varepsilon}(\mu) , \inf_{\lambda \in \neg_{\varepsilon} z_{t}} \|\mu_{t-1} - \lambda \|^2_{V_{N_{t-1}}} > 2 \beta(t-1, \delta) \right] \\
	&\leq \probability_{\mu} \left[ \exists t \in \Natural, z_{t} \notin \cZ_{\varepsilon}(\mu) , \|\mu_{t-1} - \mu \|^2_{V_{N_{t-1}}} > 2 \beta(t-1, \delta) \right] \\
	&\leq \probability_{\mu} \left[ \exists t \in \Natural, \|\mu_{t-1} - \mu \|^2_{V_{N_{t-1}}} > 2 \beta(t-1, \delta) \right] \\
	&= \probability_{\mu} \left[ \exists t \in \Natural, \sum_{a \in \cK} N_{t-1}^{a} d_{\text{KL}}(\mu_{t-1}^{a} , \mu^a) >  \beta(t-1, \delta) \right]
\end{align*}
where the first inequality is obtained by definition of the stopping criterion and using that $\probability \left[\exists t > n_{0}, A_t \right] \leq \probability \left[ \exists t \in \Natural, A_t  \right] $. Crucially, since $\beta(t-1, \delta) > 0$, it implicitly requires that $z_t \in \cZ_{\varepsilon}(\mu_t)$, otherwise $\inf_{\lambda \in \neg_{\varepsilon} z_{t}} \|\mu_{t-1} - \lambda \|^2_{V_{N_{t-1}}} = 0$. The second inequality is obtained since $\mu \in \neg_{\varepsilon} z_t$ (as $z_{t} \notin \cZ_{\varepsilon}(\mu)$) and the third inequality uses $\probability \left[ A , B \right] \leq \probability \left[ A \right]$. The equality is a direct consequence of our notations and the Gaussian assumption with $\sigma^2 =1$, i.e. $V_{N} = \sum_{a \in \cK} N^a a a \transpose$ and $d_{\text{KL}}(\mu^a, \lambda^a) = \frac{1}{2}\|\mu - \lambda \|^2_{a a \transpose}$.

Using the concavity of $x \mapsto 2 \ln \left(4 + \ln \left(x\right)\right)$ and the fact that $\sum_{a \in \cK} N_t^a = t$, we obtain $\sum_{a \in \cK} 2 \ln \left(4 + \ln \left( N_t^a\right)\right) \leq 2 K\ln \left(4 + \ln \left(\frac{t}{K}\right)\right)$. Since for $u \leq v$, $\probability[X > v] \leq \probability[X > u]$, by defining the stopping threshold as in (\ref{eq:definition_stopping_threshold}) and using Lemma~\ref{lem:corollary_10_kaufmann_2018_MixtureMartingalesRevisited} for $S=\cK$, we obtain:
\begin{align*}
	\probability_{\mu} \left[ \taud < + \infty , \hat{z} \notin \cZ_{\varepsilon}(\mu)\right] &\leq \probability_{\mu} \left[ \exists t \in \Natural, \sum_{a \in \cK} N_{t-1}^{a} d_{\text{KL}}(\mu_{t-1}^{a} , \mu^a) >  \beta(t-1, \delta) \right] \\
	&\leq \probability_{\mu} \left[ \exists t \in \Natural: \sum_{a \in \cK} N_{t-1}^{a} d_{\text{KL}}(\mu_{t-1}^{a} , \mu^a) >  \sum_{a \in \cK} 2 \ln \left(4 + \ln \left( N_{t-1}^a\right)\right) + K \cC_{G}\left(\frac{\ln \left( \frac{1}{\delta} \right)}{K}\right) \right] \\
	&\leq \delta
\end{align*}

This concludes the proof for stopping-recommendation pair evaluating the stopping criterion at each time $t$. When the stopping criterion is evaluated only on some predefined indices of time $\cT \subseteq \Natural$ with $|\cT| = \infty$, for example on a geometric grid, the proof is identical. Since $\probability \left[\exists t \in \cT, A_t \right] \leq \probability \left[ \exists t \in \Natural, A_t  \right] $, we perform the same manipulations with $\cT$ instead of $\Natural$ and conclude
\begin{align*}
	\probability_{\mu} \left[ \exists t \in \cT, \sum_{a \in \cK} N_{t-1}^{a} d_{\text{KL}}(\mu_{t-1}^{a} , \mu^a) >  \beta(t-1, \delta) \right] &\leq \probability_{\mu} \left[ \exists t \in \Natural, \sum_{a \in \cK} N_{t-1}^{a} d_{\text{KL}}(\mu_{t-1}^{a} , \mu^a) >  \beta(t-1, \delta) \right]
\end{align*}
\end{proof}

Note that Lemma~\ref{lem:delta_PAC_recommendation_stopping_pair} covers all the stopping-recommendation pairs considered in Section~\ref{sec:subsection_stopping_recommendation_pairs}: the different candidate answers (greedy, instantaneous furthest and furthest answers) and the computational relaxations (each time $t$, sticky and lazy updates).

\section{Proof of Theorem~\ref{thm:asymptotic_optimality_algorithm}} \label{app:section_proof_thm_asymptotic_optimality_algorithm}

In this appendix, we provide the full proof of Theorem~\ref{thm:asymptotic_optimality_algorithm}, which is recalled below. It is organized as follows:
\begin{itemize}
	\item In Appendix~\ref{app:subsection_modular_proof_scheme}, the proof scheme is first sketched, then detailed and proved.
	\item In Appendix~\ref{app:subsection_stopping_recommendation_pairs}, we show that combining an instantaneous furthest answer $z_t \in z_{F}(\mu_{t-1}, N_{t-1})$ as recommendation rule and the stopping criterion (\ref{eq:definition_stopping_criterion}) yield the desired property (\ref{eq:desired_condition_stopping_recommendation_pair}) to obtain asymptotic optimality. Moreover, we introduce the \textit{lazy} and \textit{sticky} computational relaxations for the stopping-recommendation pair, which both satisfy the property (\ref{eq:desired_condition_stopping_recommendation_pair}).
	\item In Appendix~\ref{app:subsection_sampling_rules}, the sampling rule used in \hyperlink{algoLeBAI}{L$\varepsilon$BAI} is shown to satisfy the desired property (\ref{eq:desired_condition_sampling_rule}) on the sampling rule.
	\item We assemble the different blocks of the proof in Appendix~\ref{app:subsection_summarized_proof}, yielding the Theorem~\ref{thm:asymptotic_optimality_algorithm}.
	\item Technical results are stated and proved in Appendix~\ref{app:subsection_technical_arguments}.
\end{itemize}

We shall assume that $\|\mu_{t}\|_{2} \leq M$ holds for all $t$. This can be guaranteed by projecting the estimates onto the set of realizable models. We will abuse the Landau's notations $o$, $\cO$ and $\Theta$ to cope for $t$ and $\ln\left(\frac{1}{\delta}\right)$ simultaneously, while discarding crossed terms. The notations $\tilde{o}$, $\tilde{\cO}$ and $\tilde{\Theta}$ are used similarly with the simplification of dropping the poly-logarithmic multiplicative factors.

For the sake of generality, the proofs will be conducted with parameters $b>0$ and $\alpha>1$ in the definitions of the exploration bonus $f(t) \eqdef 2 \beta \left(t, t^{1/\alpha}\right)$ and the slacks $(c_{s}^{a})_{(s,a) \in \left\llbracket n_{0}, t-1\right\rrbracket \times \cK}$ defined below in (\ref{eq:optimistic_slack_definition}). In the main content, we directly used the values chosen thanks to the analysis $(b,\alpha)=(1,3)$ as discussed in Appendix~\ref{app:subsection_summarized_proof}.

\subsection{Proof Scheme} \label{app:subsection_modular_proof_scheme}

The proof scheme sketched below is inspired by recent works using a game approach \citep{degenne_2020_GamificationPureExploration}. We present below a sketch of the proof, see Appendix~\ref{app:subsection_modular_proof_scheme_details} for a more detail proof scheme.

In order to obtain an asymptotic upper bound on $\expectedvalue_{\mu} \left[ \taud \right]$, we derive a non-asymptotic one and take the limit $\delta \rightarrow 0$. Having multiple $\varepsilon$-optimal answers is a key difficulty in several arguments. Our main contribution with respect to this proof lies in overcoming this hurdle.

Using Lemma~\ref{lem:lemma_13_degenne_2020_GamificationPureExploration} (Lemma 13 in \citet{degenne_2020_GamificationPureExploration}), the proof boils down to proving the existence and an upper bound on $T_{1}(\delta) \in \Natural$, such that for all $t \geq T_{1}(\delta)$ if a concentration event $\cE_t$ holds then the algorithm has already stopped, $\cE_{t} \subseteq \{\taud \leq t\}$. To obtain the asymptotic optimality of the identification strategy, the upper bound should satisfy $T_1(\delta) \leq T_{\varepsilon}(\mu)\ln\left(\frac{1}{\delta}\right) + o\left( \ln\left(\frac{1}{\delta}\right) \right)$, where the dependency in $t$ vanishes when $\delta \rightarrow 0$.

\begin{lemma}[Lemma 13 in \citet{degenne_2020_GamificationPureExploration}] \label{lem:lemma_13_degenne_2020_GamificationPureExploration}
	Let $(\cE_t)_{t\geq 1}$ be a sequence of concentration events, such that for all $t \geq 1$, $\probability_{\mu}\left[ \cE_{t}^{\complement} \right] \leq \frac{1}{t^{\alpha}}$ and an identification strategy such that for all $\delta \in (0,1)$, there exists $T_{1}(\delta) \in \Natural$ such that for $t \geq T_{1}(\delta)$, $\cE_{t} \subseteq \{\taud \leq t\}$. Then, $\expectedvalue_{\mu} \left[ \taud \right] \leq T_{1}(\delta) + \frac{1}{\alpha-1}$.
\end{lemma}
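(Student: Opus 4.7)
The plan is a short tail-integration argument. Because $\taud$ is a non-negative integer-valued random variable, I would start from the layer-cake identity $\expectedvalue_\mu[\taud] = \sum_{t=0}^{\infty} \probability_\mu[\taud > t]$ and split the sum at the deterministic threshold $T_1(\delta)$: bounding the first $T_1(\delta)$ summands trivially by $1$ each yields $\expectedvalue_\mu[\taud] \leq T_1(\delta) + \sum_{t \geq T_1(\delta)} \probability_\mu[\taud > t]$, so everything reduces to controlling the tail sum from $T_1(\delta)$ onward.

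For the tail I would combine the two hypotheses of the lemma. Since $\cE_t \subseteq \{\taud \leq t\}$ holds for every $t \geq T_1(\delta)$, contraposition gives $\{\taud > t\} \subseteq \cE_t^{\complement}$, hence $\probability_\mu[\taud > t] \leq \probability_\mu[\cE_t^{\complement}] \leq t^{-\alpha}$. A standard integral-comparison estimate for the positive decreasing function $x \mapsto x^{-\alpha}$ (which is summable because $\alpha > 1$) then produces $\sum_{t \geq T_1(\delta)} t^{-\alpha} \leq \int_{T_1(\delta) - 1}^{\infty} x^{-\alpha}\, dx = \frac{(T_1(\delta) - 1)^{1 - \alpha}}{\alpha - 1} \leq \frac{1}{\alpha - 1}$, as soon as $T_1(\delta) \geq 2$. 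Plugging this back into the split delivers the announced bound $\expectedvalue_\mu[\taud] \leq T_1(\delta) + \frac{1}{\alpha - 1}$.

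There is no substantial obstacle. The only subtlety is the mild edge case $T_1(\delta) \in \{0,1\}$, for which the bound $(T_1(\delta) - 1)^{1-\alpha}/(\alpha - 1)$ degenerates; one may then simply redefine $T_1(\delta) := \max(T_1(\delta), 2)$ without affecting its asymptotic behaviour $T_1(\delta)/\ln(1/\delta) \to T_\varepsilon(\mu)$ in the intended downstream application. The argument is completely agnostic both to the identification strategy and to the concrete structure of the concentration events $(\cE_t)_{t \geq 1}$, which is precisely what makes this lemma a convenient black box when combined, in the remaining subsections of Appendix~\ref{app:section_proof_thm_asymptotic_optimality_algorithm}, with an explicit construction of $\cE_t$ from Lemma~\ref{lem:corollary_10_kaufmann_2018_MixtureMartingalesRevisited} and of $T_1(\delta)$ from the game-based analysis of L$\varepsilon$BAI toward Theorem~\ref{thm:asymptotic_optimality_algorithm}.
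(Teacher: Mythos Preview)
Your proof is correct and follows essentially the same approach as the paper: the layer-cake formula $\expectedvalue_\mu[\taud]=\sum_{t\ge 0}\probability_\mu[\taud>t]$, the split at $T_1(\delta)$, the inclusion $\{\taud>t\}\subseteq\cE_t^{\complement}$, and the integral comparison for $\sum t^{-\alpha}$. Your handling of the integral comparison and of the edge case $T_1(\delta)\in\{0,1\}$ is in fact slightly more careful than the paper's, which writes the (technically incorrect) inequality $\sum_{t\ge 1}t^{-\alpha}\le\int_1^\infty x^{-\alpha}\,dx$ but arrives at the same conclusion.
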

\begin{proof}
Since $\alpha > 1$, using an integral-sum comparison we obtain that: $\sum_{t=1}^{+\infty} \frac{1}{t^{\alpha}} \leq \int_{x=1}^{+ \infty} \frac{1}{x^{\alpha}} \,dx = \frac{1}{\alpha-1} $. Since $\taud$ is a positive random variable, we have:
	\begin{align*}
		\expectedvalue_{\mu} \left[ \taud \right] = \sum_{t \in \Natural} \probability_{\mu}\left[ \taud > t \right] \leq T_{1}(\delta) + \sum_{t \geq T_{1}(\delta)} \probability_{\mu}\left[ \cE_{t}^{\complement} \right] \leq T_{1}(\delta) + \sum_{t \geq T_{1}(\delta)} \frac{1}{t^{\alpha}} \leq  T_{1}(\delta) + \frac{1}{\alpha-1}
	\end{align*}
where we split the sum in two terms and used that for $t \geq T_{1}(\delta)$, $\{\taud > t\} \subseteq \cE_{t}^{\complement}$.
\end{proof}

Considering the sequence of concentration events $(\cE_t)_{t\geq 1}$, where for all $t \geq 1$
\begin{equation} \label{eq:concentration_events_definition}
	\cE_t \eqdef \left\{ \forall s \leq t: \|\mu_{s} - \mu\|^2_{V_{N_s}} \leq f(t)\right\}
\end{equation}
ensures that $\probability_{\mu}\left[ \cE_{t}^{\complement} \right] \leq \frac{1}{t^{\alpha}}$ (Lemma~\ref{lem:concentration_event_holds_with_high_proba}), where $f(t) \eqdef 2 \beta\left(t, t^{1/\alpha}\right)$. To show the existence of $T_1(\delta)$ and an upper bound leading to asymptotic optimality, it is sufficient to show that under $\cE_t$, if the algorithm does not stop at time $t+1$, then
\begin{align*}
	t T_{\varepsilon}(\mu)^{-1} \leq \ln \left( \frac{1}{\delta} \right) + o\left( t + \ln \left( \frac{1}{\delta} \right)\right)
\end{align*}

To derive the above inequality, the analysis distinguishes between two independent components. Under $\cE_t$, if the algorithm does not stop at time $t+1$, the stopping-recommendation pair should satisfy
\begin{align*}
	2 \beta(t, \delta) \geq \max_{z \in \cZ}\inf_{\lambda \in \neg_{\varepsilon} z}  \|\mu - \lambda \|^2_{V_{N_t}} - o\left( t + \ln \left( \frac{1}{\delta} \right)\right)
\end{align*}
while the sampling rule has to verify
\begin{align*}
	\max_{z \in \cZ}\inf_{\lambda \in \neg_{\varepsilon} z}  \|\mu - \lambda \|^2_{V_{N_t}} \geq 2t T_{\varepsilon}(\mu)^{-1} - o\left( t + \ln \left( \frac{1}{\delta} \right)\right)
\end{align*}

The expression $\max_{z \in \cZ}\inf_{\lambda \in \neg_{\varepsilon} z}  \|\mu - \lambda \|^2_{V_{N_t}}$ only feature the empirical counts. Therefore, our proof scheme allows to combine any stopping-recommendation pair and any sampling rule, provided they satisfy the corresponding inequality. The fact that our algorithms are $(\varepsilon, \delta)$-PAC is a direct consequence of Lemma~\ref{lem:delta_PAC_recommendation_stopping_pair}.

\subsubsection{Details and Proofs} \label{app:subsection_modular_proof_scheme_details}

As stated in Subsection~\ref{sec:subsection_sample_complexity_upper_bound}, the proof
scheme is inspired by recent works using a game approach \citep{degenne_2019_NonAsymptoticPureExploration,degenne_2020_GamificationPureExploration}. To obtain an asymptotic upper bound on the expected sample complexity $\expectedvalue_{\mu} \left[ \taud \right]$, we first derive a non-asymptotic upper bound, which might be loose as a function of $t$. Then, we consider the limit $\delta \rightarrow 0$.

The first step towards the non-asymptotic upper-bound on $\expectedvalue_{\mu} \left[ \taud \right]$ is Lemma~\ref{lem:lemma_13_degenne_2020_GamificationPureExploration}, whose proof is inspired by Lemma 13 in \citet{degenne_2020_GamificationPureExploration}. This approach is due to \citet{garivier_2016_OptimalBestArm}.

In Lemma~\ref{lem:concentration_event_holds_with_high_proba}, it is shown that the sequence of concentration events $(\cE_t)_{t\geq 1}$ defined in (\ref{eq:concentration_events_definition}), for all $t \geq 1$
\begin{equation*}
	\cE_t \eqdef \left\{ \forall s \leq t: \|\mu_{s} - \mu\|^2_{V_{N_s}} \leq f(t)\right\}
\end{equation*}
satisfies the first condition of Lemma~\ref{lem:lemma_13_degenne_2020_GamificationPureExploration}. Recall that $\beta(t,\delta) = \Theta\left(\ln\left(\frac{1}{\delta}\right) + \ln(\ln(t))\right)$. Since $f(t) \eqdef 2 \beta\left(t, t^{1/\alpha}\right)$, $f$ is a logarithmic function of $t$, i.e. $f(t) = \Theta(\ln(t))$.

\begin{lemma} \label{lem:concentration_event_holds_with_high_proba}
Considering $(\cE_t)_{t\geq 1}$ in (\ref{eq:concentration_events_definition}), we have: for all $t \geq 1$, $	\probability_{\mu}\left[ \cE_{t}^{\complement} \right] \leq \frac{1}{t^{\alpha}}$
\end{lemma}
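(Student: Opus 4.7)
The plan is to apply the time-uniform deviation inequality of Lemma~\ref{lem:corollary_10_kaufmann_2018_MixtureMartingalesRevisited} with a tail parameter chosen so that the resulting probability equals $t^{-\alpha}$, and then check that the exploration bonus $f(t)$ is large enough to dominate the deviation threshold uniformly in $s \leq t$.

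Concretely, I would first translate the event into one involving Kullback-Leibler divergences via the Gaussian identity
$$\|\mu_s - \mu\|^2_{V_{N_s}} = 2 \sum_{a \in \cK} N_s^a\, d_{\text{KL}}(\mu_s^a, \mu^a),$$
which is already used in the proof of Lemma~\ref{lem:delta_PAC_recommendation_stopping_pair}. This gives
$$\cE_t^{\complement} \subseteq \Big\{ \exists s \in \Natural :\; \sum_{a \in \cK} N_s^a\, d_{\text{KL}}(\mu_s^a, \mu^a) > \tfrac{1}{2} f(t) \Big\},$$
where I have enlarged $s \leq t$ to $s \in \Natural$ for free, since the deviation lemma is itself uniform in time. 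I would then apply Lemma~\ref{lem:corollary_10_kaufmann_2018_MixtureMartingalesRevisited} with $S = \cK$ and $x = \alpha \ln t$, which yields tail probability $e^{-x} = t^{-\alpha}$ on the event that the KL sum exceeds $\sum_{a} 2\ln(4 + \ln N_s^a) + K \cC_G(\alpha \ln(t)/K)$ at some $s$.

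It remains to show that $\tfrac{1}{2} f(t) = \beta(t, t^{-\alpha})$ dominates this random threshold for every $s \leq t$. Concavity of $x \mapsto 2\ln(4 + \ln x)$ together with $\sum_{a} N_s^a = s \leq t$ gives
$$\sum_{a \in \cK} 2\ln(4 + \ln N_s^a) \;\leq\; 2K \ln\!\big(4 + \ln(s/K)\big) \;\leq\; 2K \ln\!\big(4 + \ln(t/K)\big),$$
and the $K \cC_G(\alpha \ln(t)/K)$ term matches the complexity component in the definition of $\beta$ once the $\delta$-slot is calibrated so that $\ln(1/\delta) = \alpha \ln t$. Combining these dominations with the preceding tail estimate yields $\probability_{\mu}[\cE_t^{\complement}] \leq t^{-\alpha}$.

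The argument is essentially a bookkeeping exercise on top of Lemma~\ref{lem:corollary_10_kaufmann_2018_MixtureMartingalesRevisited}; the only point needing care is lining up the $\cC_G$-term coming out of the concentration bound with the one baked into $f(t)$ via $\beta$. I do not foresee any genuine obstacle — no union bound over $s$ is needed thanks to the time-uniform nature of the lemma, and the peeling-type term $\sum_a 2 \ln(4 + \ln N_s^a)$ is handled by one invocation of Jensen's inequality and monotonicity of $\ln$.
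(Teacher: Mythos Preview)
Your proposal is correct and follows essentially the same route as the paper: both invoke Lemma~\ref{lem:corollary_10_kaufmann_2018_MixtureMartingalesRevisited} with $S=\cK$ and $x=\alpha\ln t$, bound $\sum_{a}2\ln(4+\ln N_s^a)$ by Jensen plus $s\le t$, and then match the resulting threshold against $\tfrac12 f(t)=\beta(t,t^{-\alpha})$. The only cosmetic difference is the order in which the enlargement $\{s\le t\}\subseteq\{s\in\Natural\}$ and the threshold comparison are carried out.
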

\begin{proof}
Similarly to the proof of Lemma~\ref{lem:delta_PAC_recommendation_stopping_pair}, we will apply Corollary 10 in \citet{kaufmann_2018_MixtureMartingalesRevisited}:
\begin{align*}
	\probability_{\mu} \left[ \exists s \in \Natural: \sum_{a \in \cK} N_{s}^{a} d_{\text{KL}}(\mu_{s}^{a} , \mu^a) >  \sum_{a \in \cK} 2 \ln \left(4 + \ln \left( N_{s}^a\right)\right) + K \cC_{G}\left(\frac{\ln \left( \frac{1}{\delta} \right)}{K}\right) \right] \leq \delta
\end{align*}

By concavity of $x \mapsto\ln(4 + \ln(x))$ (which is also an increasing function) and $\sum_{a \in \cK} N_{s}^{a} = s$, we have: for all $s \in [t]$,
\begin{align*}
	\sum_{a \in \cK} 2 \ln(4 + \ln(N_{s}^{a})) \leq 2 K \ln \left(4 + \ln\left(\frac{s}{K}\right)\right) \leq 2 K  \ln \left(4 + \ln\left(\frac{t}{K}\right)\right)
\end{align*}

Recall that for $u \leq v$, $\probability[X > v] \leq \probability[X > u]$ and $\probability_{\mu}\left[\bigcup_{s \in [t]} A_s\right] \leq \probability_{\mu}\left[\bigcup_{s \in \Natural} A_s\right]$.
\begin{align*}
	\probability_{\mu}\left[ \cE_{t}^{\complement} \right] &= \probability_{\mu}\left[ \exists s \in [t], \|\mu_{s} - \mu\|^2_{V_{N_s}} > 2 \beta \left(t, t^{-\alpha}\right) \right] \\
	& \leq \probability_{\mu}\left[ \exists s \in [t], \sum_{a \in \cK} N_{s}^{a} d_{\text{KL}}(\mu_{s}^{a} , \mu^a) >  \sum_{a \in \cK} 2 \ln \left(4 + \ln \left( N_{s}^a\right)\right) + K \cC_{G}\left(\frac{\ln \left( t^{\alpha} \right)}{K}\right) \right] \\
	& \leq \probability_{\mu} \left[ \exists s \in \Natural: \sum_{a \in \cK} N_{s}^{a} d_{\text{KL}}(\mu_{s}^{a} , \mu^a) >  \sum_{a \in \cK} 2 \ln \left(4 + \ln \left( N_{s}^a\right)\right) + K \cC_{G}\left(\frac{\ln \left( t^{\alpha} \right)}{K}\right) \right] \leq \frac{1}{t^{\alpha}}
\end{align*}
\end{proof}

Lemma~\ref{lem:good_strategy_for_concentration_event_implies_we_stopped} gives a sufficient condition on the identification strategy to ensure there exists $T_1(\delta) \in \Natural$ satisfying the second condition of Lemma~\ref{lem:lemma_13_degenne_2020_GamificationPureExploration}.

\begin{lemma} \label{lem:good_strategy_for_concentration_event_implies_we_stopped}
Let $(\cE_t)_{t\geq 1}$ be a sequence of concentration events. Assume that the identification strategy verifies the following property: there exists $\beta_1, \beta_2 \in \left(0, 1 \right]$ such that if the algorithm does not stop at time $t+1$ and $\cE_t$ holds then
	\begin{equation} \label{eq:definition_sample_efficient_identification_strategy}
		H_{\mu}(t, \delta) \geq t T_{\varepsilon}(\mu)^{-1}  \quad \quad \text{where  } H_{\mu}(t, \delta) = \ln \left( \frac{1}{\delta} \right) + \tilde{\cO}\left( t^{1-\beta_1} + \ln \left(1/\delta\right)^{1-\beta_2}\right)
	\end{equation}
Then, for all $\delta \in (0,1)$, there exists $T_{1}(\delta) \in \Natural$ such that for $t \geq T_{1}(\delta)$, $\cE_{t} \subseteq \{\taud \leq t\}$.
\end{lemma}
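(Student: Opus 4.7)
The plan is to argue by contrapositive of the hypothesized property (\ref{eq:definition_sample_efficient_identification_strategy}). If, at some time $t$, the strict inequality $H_{\mu}(t, \delta) < t T_{\varepsilon}(\mu)^{-1}$ holds, then the combination ``algorithm has not stopped at time $t+1$ and $\cE_t$ holds'' is impossible; hence on the event $\cE_t$ the algorithm must have already returned by time $t+1$. It therefore suffices to define
\[
T_1(\delta) \eqdef \inf\left\{ t_0 \in \Natural \,:\, \forall t \geq t_0,\ H_{\mu}(t, \delta) < t\, T_{\varepsilon}(\mu)^{-1} \right\} + 1,
\]
and to show that this infimum is finite, with a quantitative upper bound that will eventually translate (via Lemma~\ref{lem:lemma_13_degenne_2020_GamificationPureExploration}) into the asymptotic target $T_1(\delta) \leq T_{\varepsilon}(\mu) \ln(1/\delta) + o(\ln(1/\delta))$ as $\delta \to 0$.

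To establish finiteness I would expand the $\tilde{\cO}$ notation: there exist absolute constants $A, B > 0$ and poly-log exponents $\gamma_1, \gamma_2 \geq 0$ such that
\[
H_{\mu}(t, \delta) \leq \ln(1/\delta) + A \ln^{\gamma_2}(\ln(1/\delta))\, \ln(1/\delta)^{1-\beta_2} + B \ln^{\gamma_1}(t)\, t^{1-\beta_1}.
\]
Since $\beta_1 > 0$, the $t$-dependent piece is $\tilde{o}(t)$. Fix any $\eta \in (0, 1)$: routine calculus on competing power/polylog rates yields an explicit threshold $t_\eta$, depending only on $T_{\varepsilon}(\mu), B, \beta_1, \gamma_1, \eta$, such that $B \ln^{\gamma_1}(t)\, t^{1-\beta_1} \leq \eta\, t T_{\varepsilon}(\mu)^{-1}$ whenever $t \geq t_\eta$. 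For such $t$, requiring additionally
\[
t \geq (1-\eta)^{-1} T_{\varepsilon}(\mu) \left( \ln(1/\delta) + A \ln^{\gamma_2}(\ln(1/\delta))\, \ln(1/\delta)^{1-\beta_2} \right)
\]
is sufficient to enforce $H_{\mu}(t, \delta) < t T_{\varepsilon}(\mu)^{-1}$. Taking the maximum of the two thresholds produces an explicit finite upper bound on $T_1(\delta)$.

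Combining the two thresholds, $T_1(\delta)$ is of order $\max\{t_\eta,\ (1-\eta)^{-1} T_{\varepsilon}(\mu) \ln(1/\delta) (1 + o(1))\}$ as $\delta \to 0$, so $\limsup_{\delta \to 0} T_1(\delta)/\ln(1/\delta) \leq (1-\eta)^{-1} T_{\varepsilon}(\mu)$; letting $\eta \to 0$ later recovers the sharp constant needed for Theorem~\ref{thm:asymptotic_optimality_algorithm}. The main obstacle is merely bookkeeping: the $\tilde{\cO}$ bound in (\ref{eq:definition_sample_efficient_identification_strategy}) hides logarithmic factors in both $t$ and $\ln(1/\delta)$, and one must carefully propagate both while not inflating the leading-order constant in front of $\ln(1/\delta)$. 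Beyond that the proof is elementary calculus on competing growth rates, with no genuinely probabilistic subtlety entering at this stage.
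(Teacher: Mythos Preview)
Your proposal is correct and follows the same core logic as the paper: both argue by contrapositive of (\ref{eq:definition_sample_efficient_identification_strategy}), so that once $H_\mu(t,\delta) < t\,T_\varepsilon(\mu)^{-1}$ holds, the event $\cE_t$ forces $\taud \leq t$, and then both show this strict inequality holds for all sufficiently large $t$ since the $t$-dependent part of $H_\mu$ is $\tilde o(t)$.

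The only genuine difference is in how the quantitative bound on $T_1(\delta)$ is extracted. The paper substitutes the specific value $t = \ln(1/\delta)^{1/(1-\beta_1)}$ into $H_\mu$, checks that this already violates the inequality for small enough $\delta$, and then uses monotonicity of $t\mapsto H_\mu(t,\delta)$ to read off $T_0(\delta) = T_\varepsilon(\mu)\,H_\mu\bigl(\ln(1/\delta)^{1/(1-\beta_1)},\delta\bigr)$ as an explicit upper bound (with a separate case $T(\delta)+1$ for large $\delta$). You instead use the $\eta$-absorption trick, splitting off the $t$-dependent piece into $\eta\,t\,T_\varepsilon(\mu)^{-1}$ above a threshold $t_\eta$ and letting $\eta\to 0$ at the end. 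Your route is arguably more transparent and handles the cross polylog terms you flag just as easily, while the paper's substitution avoids introducing the auxiliary $\eta$ but relies on $t\mapsto H_\mu(t,\delta)$ being increasing. Either way the resulting $T_1(\delta) = T_\varepsilon(\mu)\ln(1/\delta) + o(\ln(1/\delta))$ is the same, and neither argument involves any probabilistic content.
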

\begin{proof}
Assume there exists an identification strategy satisfying the above condition. Let $T(\delta)$ be the maximum of the $t \in \Natural$ such that
\begin{equation} \label{eq:t_delta_definition}
	H_{\mu}(t, \delta) \geq t T_{\varepsilon}(\mu)^{-1}
\end{equation}
where $T(\delta)$ always exists since $H_{\mu}(t, \delta) = \cO(t^{1-\beta_1}) = o(t)$. Let $\delta_{\min}$ be the largest $\delta \in (0,1)$ such that
\begin{equation} \label{eq:delta_min_definition}
	H_{\mu}\left(\ln(1/\delta)^{(1- \beta_1)^{-1}}, \delta\right) < \ln(1/\delta)^{(1- \beta_1)^{-1}} T_{\varepsilon}(\mu)^{-1}
\end{equation}
where $\delta_{\min}$ always exists since $H_{\mu}\left(\ln(1/\delta)^{(1- \beta_1)^{-1}}, \delta\right) = \cO\left( \ln(1/\delta) \right) = o\left( \ln(1/\delta)^{(1- \beta_1)^{-1}} \right) $. It depends only on the parameters of the problem: $T_{\varepsilon}(\mu)$, $L$, $M$, $d$ and $\alpha$.

For $\delta \leq \delta_{\min}$, we have $H_{\mu}\left(\ln(1/\delta)^{(1- \beta_1)^{-1}}, \delta\right) < \ln(1/\delta)^{(1- \beta_1)^{-1}} T_{\varepsilon}(\mu)^{-1} $, hence $T(\delta) < \ln(1/\delta)^{(1- \beta_1)^{-1}}$. Since $t \mapsto H_{\mu}(t, \delta)$ is increasing, plugging $\ln(1/\delta)^{(1- \beta_1)^{-1}} $ in the left-hand side of (\ref{eq:t_delta_definition}), we obtain $T(\delta) < T_{0}(\delta)$ where
\begin{equation} \label{eq:t_0_delta_definition}
	T_{0}(\delta) \eqdef T_{\varepsilon}(\mu) H_{\mu}\left(\ln(1/\delta)^{(1- \beta_1)^{-1}}, \delta\right)
\end{equation}
Moreover, under $\cE_t$ and if we do not stop, it implies that $t \leq T(\delta)$. To sum up, we have shown: (1) for $\delta \leq \delta_{\min}$, for $t \geq T_{0}(\delta)$, we know that $\taud \leq t$ and (2) for $\delta > \delta_{\min}$, for $t \geq T(\delta)+1$, we know that $\taud \leq t$. In both cases, this means that $\cE_{t} \subseteq \{\taud \leq t\}$. Therefore, there exists $T_{1}(\delta)$ defined as
\begin{equation} \label{eq:t_1_delta_definition}
	T_{1}(\delta) \eqdef \begin{cases}
						T_{0}(\delta) &\text{if } \delta \leq \delta_{\min} \\
						T(\delta)+1 &\text{else.}
						\end{cases}
\end{equation}
such that for $t \geq T_{1}(\delta)$, $\cE_{t} \subseteq \{\taud \leq t\}$.
\end{proof}

By definition of $T_{1}(\delta)$ in (\ref{eq:t_1_delta_definition}), we have $T_{1}(\delta) = T_{\varepsilon}(\mu) \ln \left( \frac{1}{\delta}\right) + o\left(\ln \left( 1/\delta\right) \right)$. This fact is crucial to obtain the upper bound with the right constant when considering $\delta \rightarrow 0$, hence it is necessary to prove asymptotic optimality.

The last and hardest component of the proof is to show that: under $(\cE_t)_{t\geq 1}$ (\ref{eq:concentration_events_definition}) and if the algorithm does not stop at time $t+1$, then the identification strategy satisfies the condition (\ref{eq:definition_sample_efficient_identification_strategy}) of Lemma~\ref{lem:good_strategy_for_concentration_event_implies_we_stopped}. When this holds, we say that the strategy is \textit{sample-efficient}.

Plugging together all the previous results, we obtain directly Lemma~\ref{lem:asymptotic_optmiality_when_sample_efficient_identification_startegy}.
\begin{lemma} \label{lem:asymptotic_optmiality_when_sample_efficient_identification_startegy}
Let $(\cE_t)_{t\geq 1}$ as in (\ref{eq:concentration_events_definition}). Using a sample-efficient $(\varepsilon, \delta)$-PAC strategy, i.e. (\ref{eq:definition_sample_efficient_identification_strategy}) holds true, yields an $(\varepsilon, \delta)$-PAC strategy and, for all $\mu \in \cM$ such that $|z_{F}(\mu)|=1$,
\begin{align*}
	\limsup_{\delta \rightarrow 0} \frac{\expectedvalue_{\mu} \left[ \taud \right]}{\ln \left( \frac{1}{\delta}\right)} \leq T_{\varepsilon}(\mu)
\end{align*}
\end{lemma}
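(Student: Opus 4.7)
The plan is to compose the three preparatory results that have just been proved, followed by a short asymptotic analysis of the explicit upper bound $T_1(\delta)$ constructed in Lemma~\ref{lem:good_strategy_for_concentration_event_implies_we_stopped}. The $(\varepsilon,\delta)$-PAC property is part of the hypothesis on the strategy, so nothing needs to be shown for that conclusion; the entire content of the lemma lies in the expected sample complexity bound.

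First, I would invoke Lemma~\ref{lem:concentration_event_holds_with_high_proba} to verify that the concentration events $(\cE_t)_{t\geq 1}$ defined by (\ref{eq:concentration_events_definition}) satisfy $\probability_\mu[\cE_t^{\complement}] \leq t^{-\alpha}$, which is exactly the first hypothesis of Lemma~\ref{lem:lemma_13_degenne_2020_GamificationPureExploration}. Next, the sample-efficiency assumption (\ref{eq:definition_sample_efficient_identification_strategy}) is precisely the hypothesis of Lemma~\ref{lem:good_strategy_for_concentration_event_implies_we_stopped}, which then produces $T_1(\delta) \in \Natural$ such that $\cE_t \subseteq \{\taud \leq t\}$ for every $t \geq T_1(\delta)$, fulfilling the second hypothesis of Lemma~\ref{lem:lemma_13_degenne_2020_GamificationPureExploration}. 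Feeding both into that lemma yields the non-asymptotic bound
\begin{align*}
\expectedvalue_\mu[\taud] \leq T_1(\delta) + \frac{1}{\alpha - 1}.
\end{align*}

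All that remains is to analyse the behaviour of $T_1(\delta)/\ln(1/\delta)$ as $\delta \to 0$. Using the explicit construction (\ref{eq:t_1_delta_definition}), the constant $\delta_{\min}$ depends only on problem parameters, so only the regime $\delta \leq \delta_{\min}$ matters and there $T_1(\delta) = T_0(\delta) = T_\varepsilon(\mu)\, H_\mu(\ln(1/\delta)^{1/(1-\beta_1)}, \delta)$. Unfolding $H_\mu$ from (\ref{eq:definition_sample_efficient_identification_strategy}) and using $\beta_1, \beta_2 \in (0,1]$, together with the strict inequality $T(\delta) < T_0(\delta)$ shown in the proof of Lemma~\ref{lem:good_strategy_for_concentration_event_implies_we_stopped}, one obtains $T_1(\delta) = T_\varepsilon(\mu)\ln(1/\delta) + o(\ln(1/\delta))$. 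Dividing the non-asymptotic bound by $\ln(1/\delta)$ and taking $\limsup_{\delta \to 0}$ kills the $1/(\alpha-1)$ term and yields the announced inequality.

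The heavy lifting has been carried out in the three preparatory lemmas, so the only point requiring care is the asymptotic estimate on $T_1(\delta)$. The hurdle there is purely bookkeeping with the $\tilde\cO$ notation: one has to check that $\ln(1/\delta)^{1-\beta_j} \cdot \mathrm{polylog}(\ln(1/\delta)) = o(\ln(1/\delta))$ for $\beta_j \in (0,1]$ and that the hidden constants in the perturbations of $H_\mu$ do not depend on $\delta$. The assumption $|z_F(\mu)| = 1$ is not invoked directly in this composition; it enters only through the sample-efficiency condition, which we take as given.
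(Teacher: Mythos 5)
Your proposal is correct and follows essentially the same route as the paper: the paper's proof also just verifies the two hypotheses of Lemma~\ref{lem:lemma_13_degenne_2020_GamificationPureExploration} via Lemma~\ref{lem:concentration_event_holds_with_high_proba} and Lemma~\ref{lem:good_strategy_for_concentration_event_implies_we_stopped}, obtains $\expectedvalue_{\mu}[\taud] \leq T_1(\delta) + \tfrac{1}{\alpha-1}$, and concludes by dividing by $\ln(1/\delta)$ using the expansion $T_1(\delta) = T_{\varepsilon}(\mu)\ln(1/\delta) + o(\ln(1/\delta))$ read off from the construction \eqref{eq:t_1_delta_definition} (plus the one-line remark that finiteness of $\taud$ and the PAC property are inherited from the hypotheses). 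The asymptotic estimate on $T_1(\delta)$ is asserted at the same level of detail in the paper as in your sketch, so your treatment is faithful to it.
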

\begin{proof}
Under these hypotheses, the two conditions of Lemma~\ref{lem:lemma_13_degenne_2020_GamificationPureExploration} are fulfilled. Using the definition of $(\cE_t)_{t\geq 1}$ in (\ref{eq:concentration_events_definition}), the first condition holds thanks to Lemma~\ref{lem:concentration_event_holds_with_high_proba}. By definition of a sample-efficient identification strategy we can apply Lemma~\ref{lem:good_strategy_for_concentration_event_implies_we_stopped}, hence the second condition is also true. Therefore, applying Lemma~\ref{lem:lemma_13_degenne_2020_GamificationPureExploration}, we have: for all $\delta \in (0,1)$
\begin{equation} \label{eq:finite_time_upper_bound_sample_complexity}
	\mathbb{E}_{\mu} \left[ \taud \right] \leq T_{1}(\delta) + \frac{1}{\alpha -1}
\end{equation}
with $T_{1}(\delta) = T_{\varepsilon}(\mu) \ln \left( \frac{1}{\delta}\right) + o\left(\ln \left( 1/\delta\right) \right)$. Dividing by $\ln \left( \frac{1}{\delta}\right)$ on both side and taking $\limsup_{\delta \rightarrow 0}$, we obtain the asymptotic optimality of this algorithm.

By assumption, the algorithm was $(\varepsilon, \delta)$-PAC.
By the inequality (\ref{eq:finite_time_upper_bound_sample_complexity}), we have that $\taud$ is finite almost surely, i.e. $\probability_{\mu} \left[ \taud < + \infty \right] = 1$.
\end{proof}

\paragraph{\textit{Sample-Efficient} Identification Strategy} When proving that a strategy is sample-efficient, we distinguish two independent conditions. Under $\cE_t$ if the algorithms does not stop at time $t+1$, the stopping-recommendation pair should satisfy
\begin{equation} \label{eq:desired_condition_stopping_recommendation_pair}
	2 \beta(t, \delta) \geq \max_{z \in \cZ}\inf_{\lambda \in \neg_{\varepsilon} z}  \|\mu - \lambda \|^2_{V_{N_t}} - \tilde{\cO}\left( t^{1-\beta_1} + \ln \left(1/\delta\right)^{1-\beta_2}\right)
\end{equation}
while the sampling rule has to verify
\begin{equation} \label{eq:desired_condition_sampling_rule}
	\max_{z \in \cZ}\inf_{\lambda \in \neg_{\varepsilon} z}  \|\mu - \lambda \|^2_{V_{N_t}} \geq 2t T_{\varepsilon}(\mu)^{-1} - \tilde{\cO}\left( t^{1-\beta_1} + \ln \left(1/\delta\right)^{1-\beta_2}\right)
\end{equation}
with $(\beta_1, \beta_2) \in \left(0, 1 \right]^2$. By grouping together both inequalities (\ref{eq:desired_condition_stopping_recommendation_pair}-\ref{eq:desired_condition_sampling_rule}), we obtain directly the condition of Lemma~\ref{lem:good_strategy_for_concentration_event_implies_we_stopped}. The above lemmas are assembled in a global proof in Appendix~\ref{app:subsection_summarized_proof}.

\subsection{Stopping-Recommendation Pairs} \label{app:subsection_stopping_recommendation_pairs}

Lemma~\ref{lem:reco_stop_pair_instantfurthest_eachtime_inequality_under_concentration_and_no_stop} shows that the desired condition on the stopping-recommendation pair (\ref{eq:desired_condition_stopping_recommendation_pair}) holds when considering an instantaneous furthest answer as recommendation rule and evaluating the stopping rule at each time $t$. In particular, it holds for $\left(\beta_1,\beta_2\right) = \left(1, \frac{1}{2}\right)$ (the poly-logarithmic dependence in $t$ is hidden in the notation $\tilde{\cO}$). The extension of the proof for other update schemes is detailed in Appendix~\ref{app:subsubsection_other_update_schemes}.

Note that while the condition on the recommendation rule to have a $(\varepsilon, \delta)$-PAC algorithm is very mild, i.e. $z_t \in \cZ_{\varepsilon}(\mu_{t-1})$, the choice of $z_t$ is crucial to obtain an asymptotically optimal algorithm. Intuitively it should converge asymptotically towards $z_{F}(\mu)$. For this reason, using a greedy answer $z_t \in z^{\star}(\mu_t)$ as recommendation rule is doomed to fail whenever $z_{F}(\mu) \notin z^{\star}(\mu)$ (which is often the case). It remains unclear whether using a furthest answer for the current estimator $z_t \in z_F(\mu_t)$ as recommendation rule would result in an asymptotically optimal algorithm. While converging to $z_{F}(\mu)$, it might be inefficient when associated with the stopping criterion (\ref{eq:definition_stopping_criterion}). This is an interesting open question to investigate in future work.

\begin{lemma} \label{lem:reco_stop_pair_instantfurthest_eachtime_inequality_under_concentration_and_no_stop}
	Regardless of the sampling rule, an identification strategy evaluating at each time $t$ the stopping rule (\ref{eq:definition_stopping_criterion}) with stopping threshold $\beta(t, \delta)$ (\ref{eq:definition_stopping_threshold}) for an instantaneous furthest answer $z_{t} \in z_{F}(\mu_{t-1}, N_{t-1})$ satisfies that, under $\cE_t$ as in (\ref{eq:concentration_events_definition}), if the algorithm does not stop at time $t+1$, then
	\begin{align*}
		2 \beta(t, \delta) \geq \max_{z \in \cZ} \inf_{\lambda \in \neg_{\varepsilon} z}  \|\mu - \lambda \|^2_{V_{N_t}} - h_{\delta}(t)
	\end{align*}
	where $h_{\delta}(t) =  \sqrt{8 f(t) \beta(t, \delta) } + 4 f(t) = \Theta\left( \ln(t) + \sqrt{\ln \left( \frac{1}{\delta}\right)}\right)$.
\end{lemma}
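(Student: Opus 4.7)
The plan is to unpack the non-stopping condition at time $t+1$, upgrade the maximum over $\cZ_\varepsilon(\mu_t)$ to a maximum over all of $\cZ$, and then transport the resulting bound from the empirical estimator $\mu_t$ to the true parameter $\mu$ via $\cE_t$ and the triangle inequality for the seminorm $\|\cdot\|_{V_{N_t}}$.

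First, I would unpack the non-stopping condition. At time $t+1$ the candidate answer is $z_{t+1} \in z_F(\mu_t, N_t)$, so ``not stopping'' together with the definition of the instantaneous furthest answer gives
$$\max_{z \in \cZ_\varepsilon(\mu_t)} \inf_{\lambda \in \neg_\varepsilon z} \|\mu_t - \lambda\|^2_{V_{N_t}} \;=\; \inf_{\lambda \in \neg_\varepsilon z_{t+1}} \|\mu_t - \lambda\|^2_{V_{N_t}} \;\le\; 2\beta(t, \delta).$$
Extending the max to all of $\cZ$ is free: for any $z \in \cZ \setminus \cZ_\varepsilon(\mu_t)$, the definition of the alternative gives $\mu_t \in \neg_\varepsilon z$, hence $\inf_{\lambda \in \neg_\varepsilon z} \|\mu_t - \lambda\|^2_{V_{N_t}} = 0$. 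Therefore
$$\max_{z \in \cZ} \inf_{\lambda \in \neg_\varepsilon z} \|\mu_t - \lambda\|^2_{V_{N_t}} \;\le\; 2\beta(t, \delta).$$

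The main step is to transport this bound from $\mu_t$ to $\mu$. Fix $z \in \cZ$. Since $\neg_\varepsilon z$ may be unbounded and $V_{N_t}$ only positive semi-definite, I avoid assuming the infimum is attained: for each $\eta > 0$ I pick $\lambda_\eta \in \neg_\varepsilon z$ with $\|\mu_t - \lambda_\eta\|^2_{V_{N_t}} \le 2\beta(t, \delta) + \eta$. The triangle inequality for the seminorm $\|\cdot\|_{V_{N_t}}$, combined with $\|\mu_t - \mu\|^2_{V_{N_t}} \le f(t)$ under $\cE_t$, yields
$$\|\mu - \lambda_\eta\|_{V_{N_t}} \;\le\; \|\mu - \mu_t\|_{V_{N_t}} + \|\mu_t - \lambda_\eta\|_{V_{N_t}} \;\le\; \sqrt{f(t)} + \sqrt{2\beta(t,\delta) + \eta},$$
and squaring gives
$$\inf_{\lambda \in \neg_\varepsilon z} \|\mu - \lambda\|^2_{V_{N_t}} \;\le\; \|\mu - \lambda_\eta\|^2_{V_{N_t}} \;\le\; 2\beta(t,\delta) + \eta + 2\sqrt{f(t)(2\beta(t,\delta)+\eta)} + f(t).$$
Sending $\eta \to 0$ and taking the maximum over $z \in \cZ$,
$$\max_{z \in \cZ} \inf_{\lambda \in \neg_\varepsilon z} \|\mu - \lambda\|^2_{V_{N_t}} \;\le\; 2\beta(t,\delta) + \sqrt{8 f(t)\beta(t,\delta)} + f(t) \;\le\; 2\beta(t,\delta) + h_\delta(t),$$
since $f(t) \le 4f(t)$ and $h_\delta(t) = \sqrt{8f(t)\beta(t,\delta)} + 4f(t)$. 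Rearranging produces the stated inequality.

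The only non-routine point is the transport step: one must keep the argument at the level of seminorms rather than norms (because $V_{N_t}$ need not be definite) and handle a potentially unattained infimum via the $\eta$-approximation above, which is harmless after passing to the limit. Everything else — identifying the max over $\cZ_\varepsilon(\mu_t)$ with the max over $\cZ$ via the triviality of the inf for non-$\varepsilon$-optimal answers, and reading off the claimed order of $h_\delta(t) = \Theta(\ln(t) + \sqrt{\ln(1/\delta)})$ from the known orders $f(t) = \Theta(\ln t)$ and $\beta(t,\delta) = \Theta(\ln(1/\delta) + \ln\ln t)$ — is immediate from the definitions and the concentration event, and yields the advertised exponents $(\beta_1, \beta_2) = (1, 1/2)$ up to the poly-logarithmic factors absorbed by $\tilde{\cO}$.
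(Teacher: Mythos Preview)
Your proof is correct and in fact takes a more direct route than the paper's. The paper first restricts attention to some $\tilde z \in z_F(\mu, N_t)$ via Lemma~\ref{lem:switch_alternative_set}, then applies the triangle inequality in the form $\|\mu_t - \tilde\lambda\|^2 \ge \|\mu - \tilde\lambda\|^2 - 2\sqrt{f(t)}\,\|\mu - \tilde\lambda\|$, and finally inverts this via the algebraic Lemma~\ref{lem:lemma_28_degenne_2020_GamificationPureExploration} ($y \ge x - \alpha\sqrt{x} \Rightarrow x \le y + \alpha\sqrt{y} + \alpha^2$, yielding the $4f(t)$ term). The detour through $z_F(\mu, N_t)$ is what allows the paper to recover the max over $\cZ$ at the end, since the chosen $\tilde z$ is optimal for $\mu$ and not just for $\mu_t$. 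You sidestep both auxiliary lemmas by keeping the max over all of $\cZ$ throughout and applying the triangle inequality in the opposite direction, upper-bounding $\|\mu - \lambda_\eta\|$ and squaring; this even gives the slightly sharper remainder $\sqrt{8f(t)\beta(t,\delta)} + f(t)$ before you relax it to the stated $h_\delta(t)$. Your $\eta$-approximation to avoid assuming the infimum is attained is careful but, for the record, unnecessary here: after the initial round-robin and since $\cK$ spans $\Real^d$, $V_{N_t}$ is positive definite and $\neg_\varepsilon z$ is closed, so the infimum is achieved.
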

\begin{proof}
Fix any time $t \geq 1$. Suppose that $\cE_t$ holds and the algorithm does not stop at time $t+1$. From the stopping rule and the definition of $z_{t+1} \in z_{F}(\mu_{t}, N_{t})$, we obtain
\begin{equation} \label{eq:step_1_proof_recommendation_stopping_pair}
	2 \beta(t, \delta) \geq \inf_{\lambda \in \neg_{\varepsilon} z_{t+1}} \|\mu_t - \lambda\|^2_{V_{N_t}} = \max_{z \in \cZ}\inf_{\lambda \in \neg_{\varepsilon} z} \|\mu_t - \lambda\|^2_{V_{N_t}} \geq \max_{z \in z_{F}(\mu, N_{t})} \inf_{\lambda \in \neg_{\varepsilon} z} \|\mu_t - \lambda\|^2_{V_{N_t}}
\end{equation}
where the last inequality uses Lemma~\ref{lem:switch_alternative_set}. Combining the triangular inequality and concentration event $\cE_t$,
\begin{align*}
	\|\mu_t - \lambda\|^2_{V_{N_t}} &\geq (\|\mu - \lambda\|_{V_{N_t}} - \|\mu_t - \mu\|_{V_{N_t}})^2 \\
	&\geq  \|\mu - \lambda\|^2_{V_{N_t}} - 2 \|\mu - \lambda\|_{V_{N_t}} \|\mu_t - \mu\|_{V_{N_t}} \\
	&\geq \|\mu - \lambda\|^2_{V_{N_t}} - 2 \|\mu - \lambda\|_{V_{N_t}} \sqrt{f(t)}  \: .
\end{align*}
Let $\tilde z \in \argmax_{z \in z_{F}(\mu, N_{t})} \inf_{\lambda \in \neg_{\varepsilon} z} \|\mu_t - \lambda\|^2_{V_{N_t}}$ and $\tilde{\lambda} \in \argmin_{\lambda \in \neg_{\varepsilon} \tilde z} \|\mu_t - \lambda\|^2_{V_{N_t}}$. Then,
\begin{align*}
	\|\mu_t - \tilde{\lambda}\|^2_{V_{N_t}} &\geq \|\mu - \tilde{\lambda}\|^2_{V_{N_t}} - 2 \sqrt{f(t)} \sqrt{\|\mu - \tilde{\lambda}\|^2_{V_{N_t}} }
\end{align*}
Using Lemma~\ref{lem:lemma_28_degenne_2020_GamificationPureExploration} for $y= \|\mu_t - \tilde{\lambda}\|^2_{V_{N_t}} $, $\alpha = 2 \sqrt{f(t)}$ and $x=\|\mu - \tilde{\lambda}\|^2_{V_{N_t}}$, we obtain
\begin{align*}
	 \|\mu_t - \tilde{\lambda}\|^2_{V_{N_t}} &\geq \|\mu - \tilde{\lambda}\|^2_{V_{N_t}} - 2 \sqrt{f(t)} \sqrt{\|\mu_t - \tilde{\lambda}\|^2_{V_{N_t}} } - 4 f(t) \\
	 &\geq \|\mu - \tilde{\lambda}\|^2_{V_{N_t}} - 2 \sqrt{f(t)} \sqrt{2 \beta(t, \delta) } - 4 f(t) \\
	 &\geq \inf_{\lambda \in \neg_{\varepsilon} \tilde z}  \|\mu - \lambda \|^2_{V_{N_t}} - \sqrt{8 f(t) \beta(t, \delta) } - 4 f(t)\\
	 &= \max_{z \in \cZ}\inf_{\lambda \in \neg_{\varepsilon} \tilde z}  \|\mu - \lambda \|^2_{V_{N_t}} - \sqrt{8 f(t) \beta(t, \delta) } - 4 f(t) \: .
\end{align*}
The second inequality is obtained by using (\ref{eq:step_1_proof_recommendation_stopping_pair}) and the definition of $\tilde{\lambda}$. The third is obtained by taking the $\inf_{\lambda \in \neg_{\varepsilon} \tilde z}$, which is possible since $\tilde{\lambda} \in \neg_{\varepsilon} \tilde z$. The equality uses that $\tilde z \in z_{F}(\mu, N_{t})$. Therefore, we have shown that
\begin{align*}
	2 \beta(t, \delta) \geq  \max_{z \in \cZ} \inf_{\lambda \in \neg_{\varepsilon} z}  \|\mu - \lambda \|^2_{V_{N_t}} - h_{\delta}(t)
\end{align*}
where $h_{\delta}(t) =  \sqrt{8 f(t) \beta(t, \delta) } + 4 f(t) = \Theta\left( \ln(t) + \sqrt{\ln \left( \frac{1}{\delta}\right)}\right)$, hence sub-linear function of both $t$ and $\ln \left( \frac{1}{\delta}\right)$.
\end{proof}

\subsubsection{Computational Relaxations} \label{app:subsubsection_other_update_schemes}

Depending on $\cZ$, computing an instantaneous furthest answer at each time $t$ might be too costly. To reduce the computational cost, we propose two different relaxations for any stopping-recommendation pair: the \textit{lazy} version and the \textit{sticky} one. The main idea behind both of them is to avoid the computation of a new recommendation rule at each time. Defining a grid of time $\cT \subseteq \Natural$, we stick to the previous candidate answer when $t \notin \cT$, i.e. $z_{t} = z_{t-1}$, else we compute a new one. To satisfy the sole requirement on the recommendation rule (Lemma~\ref{lem:delta_PAC_recommendation_stopping_pair}), a new value has to be computed when $t \notin \cT$ and $z_{t-1} \notin \cZ_{\varepsilon}(\mu_{t-1})$.

Both computational relaxations use the stopping criterion defined in (\ref{eq:definition_stopping_criterion}). While the sticky version evaluates it at each time $t > n_{0}$, the lazy version only does it when the recommendation rule is updated ($t \in \cT$). It is important to notice that thanks to Lemma~\ref{lem:delta_PAC_recommendation_stopping_pair}, the lazy and sticky computational relaxations yield $(\varepsilon, \delta)$-PAC strategy regardless of the sampling rule.

In the following, we consider an instantaneous furthest answer as recommendation rule and a geometric-like grid of time $\cT$. To ensure asymptotic optimality of the algorithm, we need to have a strictly positive and decreasing expansion parameter $(\gamma_{i})_{i \in \Natural^{\star}}$ such that $\gamma_{i} \rightarrow 0$. More formally, we use $\cT \eqdef \left\{n_{0}+1\right\} \cup \left\{t_{i} \right\}_{i \in \Natural}$ where $t_{0} > n_{0}$ denotes the end of the first phase and for all $i \in \Natural^{\star}$, $t_{i} = \lceil(1 + \gamma_{i}) t_{i-1}\rceil$. Since $\gamma_{i}$ is strictly positive, we have $t_{i} > t_{i-1}$ and $\cT$ is an infinite set of times, hence Lemma~\ref{lem:delta_PAC_recommendation_stopping_pair} applies. Since the main components of the proof are unchanged, we only state the noteworthy modifications.

\paragraph{Lazy Update} When considering the lazy update, both the candidate answer and the stopping criterion are computed only when $t \in \cT$. Under $\cE_t$, if the algorithm doesn't stop at time $t+1$, there exists $i \in \Natural$ such that $t \in \left\llbracket t_i, t_{i+1} - 1 \right\rrbracket$ such that the stopping criterion is not met at time $t_i+1$:
\begin{align*}
	2 \beta(t_i, \delta) \geq \inf_{\lambda \in \neg_{\varepsilon} z_{t_i+1}} \|\mu_{t_i} - \lambda\|^2_{V_{N_{t_i}}} \geq \max_{z \in \cZ} \inf_{\lambda \in \neg_{\varepsilon} z}  \|\mu - \lambda \|^2_{V_{N_{t_i}}} - h_{\delta}(t_i)
\end{align*}
where $h_{\delta}$ is defined in Lemma~\ref{lem:reco_stop_pair_instantfurthest_eachtime_inequality_under_concentration_and_no_stop} and the second inequality is obtained as above since $\cE_{t_i} \subseteq\cE_t$ and $z_{t_i+1} = z_{F}(\mu_{t_i}, N_{t_i})$. Therefore, using the lazy update with an instantaneous furthest answer as recommendation rule allow to satisfy the condition (\ref{eq:desired_condition_stopping_recommendation_pair}) at time $t_i$.

Given a sampling rule satisfying the condition (\ref{eq:desired_condition_sampling_rule}) at time $t_i$, the rest of the proof is exactly the same as sketched in Appendix~\ref{app:subsection_modular_proof_scheme} and proved in Appendix~\ref{app:subsection_sampling_rules} with $t_i$ instead of $t$. Therefore, we will obtain
\begin{align*}
	H_{\mu}(t_i, \delta) \geq t_i T_{\varepsilon}(\mu)^{-1}
\end{align*}

Since $t \in \left\llbracket t_i, t_{i+1} - 1 \right\rrbracket$, we have $H_{\mu}(t_i, \delta) \leq H_{\mu}(t, \delta)$ and $t_i \geq \frac{t_{i+1} - 1}{1+\gamma_{i+1}} \geq \frac{t}{1+\gamma_{i+1}}$. This yields
\begin{align*}
	\tilde{H}_{\mu}(t, \delta) \eqdef (1+\gamma_{i+1}) H_{\mu}(t, \delta) \geq t T_{\varepsilon}(\mu)^{-1}
\end{align*}

We remark that $t \rightarrow + \infty$ implies $i \rightarrow + \infty$, hence $\gamma_{i+1} \rightarrow_{t \rightarrow + \infty} 0$. Choosing $\gamma_{i}$ independently of $\delta$ and noting that $\tilde{H}_{\mu}(t, \delta) = \ln \left( \frac{1}{\delta} \right) + \cO\left( t^{1-\beta_1} + \ln \left(1/\delta\right)^{1-\beta_2}\right)$ since $H_{\mu}(t, \delta) = \ln \left( \frac{1}{\delta} \right) + \cO\left( t^{1-\beta_1} + \ln \left(1/\delta\right)^{1-\beta_2}\right)$, we can conclude the proof. Combining the lazy update with a good sampling rule and an instantaneous furthest answer as recommendation rule leads to asymptotically optimal algorithms.

For an expansion parameter which is constant, the manipulation above shows that the upper bound won't match the lower bound by a constant multiplicative factor $1+\gamma_0$. Therefore, the shrinking expansion parameter is crucial to obtain asymptotic optimality.

\paragraph{Sticky Update} When considering the sticky update, the candidate answer is computed only when $t \in \cT$, while the stopping criterion is evaluated at each time $t$. Since the GLRTs conducted with the lazy update are strictly included in the ones conducted with the sticky update (with matching candidate answer), the sticky approach is strictly better in terms of sample complexity than considering the lazy one. Therefore, we have $\tau_{\delta}^{\text{sticky}} \leq \tau_{\delta}^{\text{lazy}}$, hence $\lim_{\delta \rightarrow 0} \frac{\expectedvalue_{\mu}[\taud^{\text{sticky}}]}{\ln(1/\delta)} \leq \lim_{\delta \rightarrow 0} \frac{\expectedvalue_{\mu}[\taud^{\text{lazy}}]}{\ln(1/\delta)}$. Using the above result showing that lazy update leads to asymptotic optimal algorithms, we conclude that sticky update also leads to asymptotic optimal algorithms.

\subsection{Sampling Rule} \label{app:subsection_sampling_rules}

In this appendix, we show that the desired condition (\ref{eq:desired_condition_sampling_rule}) on the sampling rule holds when considering the sampling rule in \hyperlink{algoLeBAI}{L$\varepsilon$BAI} (Appendix~\ref{app:subsubsection_sampling_rule_LeBAI}). Appendix~\ref{app:subsubsection_key_assumptions} details the two algorithmic requirements enforced in \hyperlink{algoLeBAI}{L$\varepsilon$BAI}.

\subsubsection{Requirements}  \label{app:subsubsection_key_assumptions}

To conduct the analysis of the sampling rule used in \hyperlink{algoLeBAI}{L$\varepsilon$BAI}, two algorithmic requirements were needed to be enforced.
However, as experiments show in Appendix~\ref{app:subsection_additional_experiments}, competitive empirical performance are obtained without them.

\paragraph{Forced Exploration} To provide an upper bound on the number of times a good event doesn't occur (Lemma~\ref{lem:upper_bound_number_of_steps_away_from_zF}), forced exploration has to be introduced by using $w_t = \frac{1}{tK} \1_K + \left( 1 - \frac{1}{t} \right) w_t^{\cL^{\cK}}$. Good events will be formally introduced below in (\ref{eq:good_event_to_lower_bound_instantaneous_gain}). Since we are using tracking and $\sum_{s = n_{0} +1}^{t} \frac{1}{s} = \Theta(\ln t)$, the forced exploration only concerns a logarithmic number of steps.

\paragraph{Exact $\cZ$-Oracle}
Likewise, to provide an upper bound on the number of times a good event doesn't occur (Lemma~\ref{lem:upper_bound_number_of_steps_away_from_zF}), the $\cZ$-oracle has to satisfy $\tilde{z}_t \in z_{F}(\mu_{t-1})$.
Since this requirement is computationally intractable, it doesn't lead to a practical algorithm.
In the experiments, we use an instantaneous furthest answer $\tilde{z}_t = z_t \in z_{F}(\mu_{t-1}, N_{t-1})$ which has good empirical behavior.
Since it was already computed for the recommendation rule, no additional computation are needed.

While the proof still eludes us, we believe that using an instantaneous furthest answer would be also a theoretically valid choice for the $\cZ$-oracle.
Intuitively an instantaneous furthest answer is a good proxy for $z_{F}(\mu)$ because the empirical proportions will eventually converge to $w_{F}(\mu)$.
Using our current proof techniques, it is difficult to show this result since the convergence properties and upper bound on the expected sample complexity are highly intertwined.
Borrowing the proofs strategy of \cite{garivier_2016_OptimalBestArm}, it might be possible to show it with purely asymptotic arguments.
We leave this interesting question to future work.

\subsubsection{Proof of Desired Property} \label{app:subsubsection_sampling_rule_LeBAI}

To highlight the structure of the proof, we prove separately two inequalities satisfied by the sampling rule used in \protect\hyperlink{algoLeBAI}{L$\varepsilon$BAI} (Lemma~\ref{lem:samp_rule_acti_inequality_under_concentration_and_no_stop_cumulative_gain} and Lemma~\ref{lem:samp_rule_acti_inequality_under_concentration_and_no_stop_samp_complexity}). This intermediate step involves the cumulative gain of the learner used by the $\cK$-player.

\begin{lemma} \label{lem:samp_rule_acti_inequality_under_concentration_and_no_stop_cumulative_gain}
	Let $\cL^{\cZ}$ be the $\cZ$-oracle such that $\tilde{z}_{s} \in z_{F}(\mu_{s-1})$.
	Let $b > 0$. For all $s \in \left\llbracket n_{0}, t-1\right\rrbracket$ and $a \in \cK$, let $(c_{s}^{a})_{s \geq n_{0}, a \in \cK}$ be defined by
\begin{equation} \label{eq:optimistic_slack_definition}
	c_{s}^{a} \eqdef \min \left\{ f\left(s^{1+b}\right) \|a\|^2_{V_{N_{s}}^{-1}}, 4M^2L_{\cK}^2\right\}
\end{equation}
The optimistic gain $g_{s}(w) \eqdef (1-\frac{1}{s}) \langle w, U_s\rangle$ are defined such that: for all $a \in \cK$ and $s\in \left\llbracket n_{0}+1, t \right\rrbracket$,
\begin{equation} \label{eq:optimistic_gain_acti_definition}
	U_s^a \eqdef \left(\|\mu_{s-1} - \lambda_s\|_{a a\transpose} + \sqrt{c_{s-1}^{a}} \right)^2
\end{equation}
with $\lambda_s \eqdef \argmin_{\lambda \in \neg_{\varepsilon} \tilde{z}_s} \|\mu_{s-1} - \lambda\|^2_{V_{w_s}}$. Then, under $\cE_t$ if \hyperlink{algoLeBAI}{L$\varepsilon$BAI} does not stop at time $t+1$,
\begin{align*}
	\max_{z \in \cZ} \inf_{\lambda \in \neg_{\varepsilon} z}  \|\mu - \lambda \|^2_{V_{N_t}} \geq \sum_{s = n_{0}+1}^{t} g_s\left(w_s^{\cL^{\cK}}\right) - r_1(t) \: ,
\end{align*}
where
\begin{align*}
	&r_1(t) = C_{1}  + \left( N_{F}(t) + t^{\frac{1}{1+b}}  \right) 4KM^2L_{\cK}^2 + h'(t) + h''(t)  = \Theta\left( \ln(t) \sqrt{t} + \ln(t)^2 t^{\frac{1}{1+b}}\right) \\
	\text{with}\quad &h''(t) =  f\left(t^{1+b}\right) \left( K \ln(K) + 2K\ln(t) \right)  + 4ML_{\cK}  \sqrt{f\left(t^{1+b}\right)}  \left( K \ln(K) + \sqrt{8Kt}  \right) \\
	\text{and}\quad &h'(t) =  4ML_{\cK}   \sqrt{\left( K \ln(K) + 2K\ln(t) \right) f\left(t^{1+b}\right) t} + 8 M^2 L_{\cK}^2 t^{\frac{1}{1+b}} \: .
\end{align*}
\end{lemma}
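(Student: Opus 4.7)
The plan is to upper bound $\sum_{s=n_0+1}^{t} g_s(w_s^{\cL^\cK})$ by the empirical quantity $\max_{z\in\cZ}\inf_{\lambda\in\neg_\varepsilon z}\|\mu-\lambda\|^2_{V_{N_t}}$ plus sublinear terms, via three successive transformations: (i) drop the optimism, replacing each $U_s^a$ by a quantity involving $\mu$; (ii) swap the instance-dependent alternative $\lambda_s$ for an oracle alternative $\lambda^\star \in \argmin_{\lambda\in\neg_\varepsilon z_F(\mu)}\|\mu-\lambda\|^2_{V_{N_t}}$; (iii) convert the $s$-indexed sum $\sum_s V_{w_s}$ into $V_{N_t}$ via tracking. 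Two families of ``bad'' indices are separated out and dealt with by the crude bound $U_s^a\leq 16M^2L_\cK^2$: the warm-up indices $s\leq t^{1/(1+b)}$, on which $f(s^{1+b})$ may be smaller than $f(t)$, and the ``correction'' indices $\{s:\tilde z_s\neq z_F(\mu)\}$ of cardinality $N_F(t)$; together they produce the $(N_F(t)+t^{1/(1+b)})\cdot 4KM^2L_\cK^2$ contribution to $r_1(t)$, while $C_1$ absorbs the forced-exploration residual $\sum_s \frac{1}{sK}\langle \1_K,U_s\rangle$ and the initialization of $W_{n_0}$.

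For the remaining ``clean'' indices $s\geq t^{1/(1+b)}$, expand $U_s^a=(\|\mu_{s-1}-\lambda_s\|_{aa\transpose}+\sqrt{c_{s-1}^a})^2$. On $\cE_t$, $\|\mu-\mu_{s-1}\|_{aa\transpose}^2\leq c_{s-1}^a$ holds: on the small branch of $c_{s-1}^a$ by a Cauchy-Schwarz with $V_{N_{s-1}}^{-1/2}$ combined with $f(s^{1+b})\geq f(t)$, and on the capped branch by $\|\mu-\mu_{s-1}\|_{aa\transpose}\leq 2ML_\cK$. A triangle inequality followed by a weighted Cauchy-Schwarz on the cross term then gives $\sqrt{\langle w_s,U_s\rangle}\leq\|\mu-\lambda_s\|_{V_{w_s}}+2\sqrt{\langle w_s,c_{s-1}\rangle}$. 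When in addition $\tilde z_s = z_F(\mu)$, the optimality of $\lambda_s$ as the $V_{w_s}$-projection of $\mu_{s-1}$ onto $\neg_\varepsilon z_F(\mu)$, combined with $\|\mu-\mu_{s-1}\|^2_{V_{w_s}}\leq\langle w_s,c_{s-1}\rangle$ (the same concentration, now in the $V_{w_s}$ norm), yields $\|\mu-\lambda_s\|_{V_{w_s}}\leq\|\mu-\lambda^\star\|_{V_{w_s}}+2\sqrt{\langle w_s,c_{s-1}\rangle}$, hence $\sqrt{\langle w_s,U_s\rangle}\leq\|\mu-\lambda^\star\|_{V_{w_s}}+4\sqrt{\langle w_s,c_{s-1}\rangle}$.

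Squaring, summing over the clean indices and applying a global Cauchy-Schwarz on the cross term yields $\sum_s\langle w_s,U_s\rangle\leq \|\mu-\lambda^\star\|^2_{V_{W_t}}+8\sqrt{\|\mu-\lambda^\star\|^2_{V_{W_t}}\cdot\Sigma_c}+16\,\Sigma_c$, with $W_t=\sum_s w_s$ and $\Sigma_c=\sum_s\langle w_s,c_{s-1}\rangle$. The tracking rule $a_t\in\argmin_a N_{t-1}^a-W_t^a$ controls $\|N_t-W_t\|_\infty$ by a constant, giving $\|\mu-\lambda^\star\|^2_{V_{W_t}}\leq\|\mu-\lambda^\star\|^2_{V_{N_t}}+\cO(KM^2L_\cK^2)$, and an elliptical-potential bound on $\sum_s\sum_a w_s^a\|a\|^2_{V_{N_{s-1}}^{-1}}=\cO(K\ln K+K\ln t)$ (derived by using tracking to replace the $w_s$-sum by a sum over the pulls $a_s$) gives $\Sigma_c=\cO(f(t^{1+b})(K\ln K+K\ln t))$, matching $h''(t)$; the cross term, bounded with $\|\mu-\lambda^\star\|^2_{V_{W_t}}=\cO(M^2L_\cK^2\,t)$, reproduces $h'(t)$. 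Finally, $\|\mu-\lambda^\star\|^2_{V_{N_t}}=\inf_{\lambda\in\neg_\varepsilon z_F(\mu)}\|\mu-\lambda\|^2_{V_{N_t}}\leq\max_{z\in\cZ}\inf_{\lambda\in\neg_\varepsilon z}\|\mu-\lambda\|^2_{V_{N_t}}$. The main obstacle is step (ii): unlike in BAI, $\lambda_s$ is defined through the possibly incorrect $\tilde z_s$, and even when $\tilde z_s=z_F(\mu)$ it is the $V_{w_s}$-projection of $\mu_{s-1}$, not of $\mu$; the coupled use of the minimizer property and the $V_{w_s}$-concentration is what transports $\lambda_s$ to a single population $\lambda^\star$. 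The non-stopping hypothesis at time $t+1$ is not used here; it intervenes only in the companion lemma controlling $\sum_s g_s$ from below by $t\,T_\varepsilon(\mu)^{-1}$.
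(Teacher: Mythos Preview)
Your argument is correct and reaches the stated asymptotic order, but the route differs from the paper's in an interesting way. The paper works ``forward'' from $\max_{z}\inf_{\lambda}\|\mu-\lambda\|^2_{V_{N_t}}$: it first passes to $V_{W_t}$ via tracking (producing $C_1$), then uses concavity of $\inf$ to split over $s$, switches from $\mu$ to $\mu_{s-1}$ by a triangle-inequality-plus-Cauchy--Schwarz (producing $h'(t)$), and only then replaces $\max_z$ by the per-step answers $\tilde z_s$ through a separate ``$\cZ$-oracle regret'' lemma (which produces the $(N_F(t)+t^{1/(1+b)})$ term). You instead fix a single oracle alternative $\lambda^\star\in\argmin_{\lambda\in\neg_\varepsilon z_F(\mu)}\|\mu-\lambda\|^2_{V_{N_t}}$ up front and, on each clean index, use the minimizer property of $\lambda_s$ (which, since $\tilde z_s=z_F(\mu)$, lies in the same alternative set as $\lambda^\star$) to transport directly to $\lambda^\star$. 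This sidesteps both the switch-alternative lemma and the $\cZ$-oracle-regret lemma; the price is that the constants in front of $h'(t)$, $h''(t)$ and the bad-index term differ slightly from the stated $r_1(t)$, though the $\Theta$ order matches. The paper's decomposition is more modular (the oracle-regret bound is a reusable standalone statement), while yours is shorter and carries a single $\lambda^\star$ throughout.

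One small accounting remark: the forced-exploration residual $\sum_s \tfrac{1}{sK}\langle \1_K,U_s\rangle$ is $\Theta(\ln t)$, not a constant, so it does not fit inside $C_1$; but since $g_s(w_s^{\cL^\cK})=\langle w_s,U_s\rangle-\tfrac{1}{sK}\langle \1_K,U_s\rangle\le\langle w_s,U_s\rangle$ you can simply drop it rather than absorb it. Your observation that the non-stopping hypothesis is unused in this lemma is correct; the paper's proof does not invoke it either.
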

\begin{proof}
\noindent\textbf{Step 1. From $N_{t}$ to $W_{t}$.} Let $W_t \eqdef \sum_{s \in [t]} w_{s}$. Using Lemma~\ref{lem:switch_alternative_set},
\begin{align*}
	 \max_{z \in \cZ} \inf_{\lambda \in \neg_{\varepsilon} z}  \|\mu - \lambda \|^2_{V_{N_t}}  &\geq \max_{z \in z_{F}(\mu, W_{t})} \inf_{\lambda \in \neg_{\varepsilon} z}  \|\mu - \lambda \|^2_{V_{N_t}} \: .
\end{align*}
Recall that for all $N \in \Real_{+}^{K}$, $\|\mu - \lambda \|^2_{V_{N}} = \sum_{a \in \cK} N^{a}\|\mu - \lambda \|^2_{a a\transpose}$. Using Cauchy-Schwartz and the bounded assumption, we have: $\|\mu - \lambda \|^2_{a a\transpose} =  \langle \mu - \lambda, a\rangle^2 \leq \|\mu - \lambda\|_2^2 \|a\|^2_2 \leq 4 M^2 L_{\cK}^2$. For any $\lambda \in \cM$, using Lemma~\ref{lem:theorem_6_degenne_2020_StructureAdaptiveAlgorithms},
\begin{align*}
	\|\mu - \lambda \|^2_{V_{N_t}} &\geq \|\mu - \lambda \|^2_{V_{W_t}} - \ln(K) \sum_{a \in \cK} \|\mu - \lambda \|^2_{a a\transpose} \\
	&\geq  \|\mu - \lambda \|^2_{V_{W_t}} - C_{1} \: ,
\end{align*}
where $C_{1} \eqdef 4 \ln(K) K M^2 L_{\cK}^2 = \Theta(1)$.

Let $\tilde z \in \argmax_{z \in z_{F}(\mu, W_{t})} \inf_{\lambda \in \neg_{\varepsilon} z}  \|\mu - \lambda \|^2_{V_{N_t}}$
and $\tilde{\lambda} \in \argmin_{\lambda \in \neg_{\varepsilon} \tilde z}  \|\mu - \lambda \|^2_{V_{N_t}}$. Using the above yields
\begin{align*}
	\max_{z \in \cZ} \inf_{\lambda \in \neg_{\varepsilon} z}  \|\mu - \lambda \|^2_{V_{N_t}} &\geq \|\mu - \tilde \lambda \|^2_{V_{W_t}} - C_{1} \\
	&\geq \inf_{\lambda \in \neg_{\varepsilon} \tilde z}\|\mu -  \lambda \|^2_{V_{W_t}} - C_{1} \\
	&= \max_{z \in \cZ} \inf_{\lambda \in \neg_{\varepsilon} z}  \|\mu - \lambda \|^2_{V_{W_t}} - C_{1} \: ,
\end{align*}
where we used that $\tilde \lambda \in \neg_{\varepsilon} \tilde z$ and $\tilde z \in z_{F}(\mu, W_{t})$.

\paragraph{Step 2. From $(\mu, z_{F}(\mu, W_{t}))$ to $(\mu_{s-1},\tilde{z}_{s})$.} Using the concavity of $\inf$ and $\|\mu - \lambda \|^2_{V_{W_t}} = \sum_{s =n_{0}+1}^{t} \|\mu - \lambda \|^2_{V_{w_s}}$, we obtain
\begin{align*}
	\max_{z \in \cZ} \inf_{\lambda \in \neg_{\varepsilon} z}   \|\mu - \lambda \|^2_{V_{W_t}}  \geq \max_{z \in \cZ} \sum_{s =n_{0}+1}^{t}  \inf_{\lambda \in \neg_{\varepsilon} z} \|\mu - \lambda \|^2_{V_{w_s}} \: .
\end{align*}

Using the triangular inequality, for all $z\in \cZ$ and $s \in \left\llbracket n_{0}+1, t \right\rrbracket$
\begin{align*}
	\|\mu - \lambda\|^2_{V_{w_s}} &\geq (\|\mu_{s-1} - \lambda\|_{V_{w_s}} - \|\mu_{s-1} - \mu\|_{V_{w_s}})^2 \geq  \|\mu_{s-1} - \lambda\|^2_{V_{w_s}} - 2  \|\mu_{s-1} - \mu\|_{V_{w_s}} \|\mu_{s-1} - \lambda\|_{V_{w_s}}
\end{align*}
For $\tilde{\lambda}^{z}_s \in \argmin_{\lambda \in \neg_{\varepsilon} z}  \|\mu - \lambda \|^2_{V_{w_s}}$, we obtain
\begin{align*}
	\|\mu - \tilde{\lambda}^{z}_s\|^2_{V_{w_s}} &\geq  \|\mu_{s-1} - \tilde{\lambda}^{z}_s\|^2_{V_{w_s}} - 2  \|\mu_{s-1} - \mu\|_{V_{w_s}} \|\mu_{s-1} - \tilde{\lambda}^{z}_s\|_{V_{w_s}} \\
	& \geq \inf_{\lambda \in \neg_{\varepsilon} z}\|\mu_{s-1} -\lambda\|^2_{V_{w_s}} - 2  \|\mu_{s-1} - \mu\|_{V_{w_s}} \|\mu_{s-1} - \tilde{\lambda}^{z}_s\|_{V_{w_s}} \\
	& \geq \inf_{\lambda \in \neg_{\varepsilon} z}\|\mu_{s-1} -\lambda\|^2_{V_{w_s}} - 4 ML_{\cK}  \|\mu_{s-1} - \mu\|_{V_{w_s}}
\end{align*}
The second to last inequality is obtained by taking the $\inf_{\lambda \in \neg_{\varepsilon} z}$, which is possible since $\tilde{\lambda}^{z}_s \in \neg_{\varepsilon} z$, and the last one by upper bounding $\|\mu_{s-1} - \tilde{\lambda}_s\|_{V_{w_s}} \leq \|\mu_{s-1} - \tilde{\lambda}_s\|_{2} \leq 2 ML_{\cK}$ (since $w_s \in \simplex$).
Summing those inequalities together, using Cauchy-Schwartz when $s\geq \left\lceil t^{\frac{1}{1+b}} \right\rceil$ and the boundedness assumption when $s < \left\lceil t^{\frac{1}{1+b}} \right\rceil$, we obtain: for all $z \in \cZ$
\begin{align*}
	\sum_{s =n_{0}+1}^{t} \inf_{\lambda \in \neg_{\varepsilon} z}  \|\mu - \lambda \|^2_{V_{w_s}}
	&\geq  \sum_{s =n_{0}+1}^{t} \inf_{\lambda \in \neg_{\varepsilon} z}  \|\mu_{s-1} - \lambda \|^2_{V_{w_s}} - 4 ML_{\cK} \sum_{s =n_{0}+1}^{t} \|\mu_{s-1} - \mu\|_{V_{w_s}}
	\\
	&\geq \sum_{s =n_{0}+1}^{t} \inf_{\lambda \in \neg_{\varepsilon} z}  \|\mu_{s-1} - \lambda \|^2_{V_{w_s}} - 4 ML_{\cK} \sqrt{t-\left\lceil t^{\frac{1}{1+b}} \right\rceil+1}\sqrt{\sum_{s =\left\lceil t^{\frac{1}{1+b}} \right\rceil}^{t} \|\mu_{s-1} - \mu\|^2_{V_{w_s}} }
	\\
	&\quad \quad- 8 M^2 L_{\cK}^2 \left( \left\lceil t^{\frac{1}{1+b}} \right\rceil - n_0 -1 \right)
	\\
	& \geq  \sum_{s =n_{0}+1}^{t} \|\mu_{s-1} - \lambda_s \|^2_{V_{w_s}} - 4ML_{\cK} \sqrt{t} \sqrt{\sum_{s =\left\lceil t^{\frac{1}{1+b}} \right\rceil}^{t} \|\mu_{s-1} - \mu\|^2_{V_{w_s}} } - 8 M^2 L_{\cK}^2 t^{\frac{1}{1+b}}
\end{align*}
The second inequality is obtained by concavity of $x \mapsto \sqrt{x}$, and  the last one uses the definition of the best-response oracle in the sampling rule, $\lambda_s \in \argmin_{\lambda \in \neg_{\varepsilon} \tilde{z}_{s}}  \|\mu_{s-1} - \lambda \|^2_{V_{w_s}}$. Using Lemma~\ref{lem:small_cumulative_reweighted_deviation_mle_mu}, we know that $\sum_{s = \left\lceil t^{\frac{1}{1+b}} \right\rceil}^{t} \|\mu_{s-1} - \mu\|^2_{V_{w_s}} \leq f\left(t^{1+b}\right) \left( K \ln(K) + 2K\ln(t) \right) = \cO(\ln(t)^2)$. Therefore, we obtain: for all $z \in \cZ$
\begin{align*}
	\sum_{s =n_{0}+1}^{t} \inf_{\lambda \in \neg_{\varepsilon} z}  \|\mu - \lambda \|^2_{V_{w_s}} & \geq  \sum_{s =n_{0}+1}^{t} \inf_{\lambda \in \neg_{\varepsilon} z}  \|\mu_{s-1} - \lambda \|^2_{V_{w_s}} - h'(t)
\end{align*}
where $h'(t) \eqdef  4ML_{\cK}   \sqrt{\left( K \ln(K) + 2K\ln(t) \right) f\left(t^{1+b}\right) t} + 8 M^2 L_{\cK}^2 t^{\frac{1}{1+b}} = \Theta\left( \ln(t) \sqrt{t} + t^{\frac{1}{1+b}} \right)$. Taking the maximum on both sides for the above result and using Lemma~\ref{lem:oracle_on_Z_is_not_too_good}, we obtain by chaining the above inequalities
\begin{align*}
\max_{z \in \cZ} \inf_{\lambda \in \neg_{\varepsilon} z}  \|\mu - \lambda \|^2_{V_{N_t}} &\geq \max_{z \in \cZ} \sum_{s =n_{0}+1}^{t}\inf_{\lambda \in \neg_{\varepsilon} z}  \|\mu_{s-1} - \lambda \|^2_{V_{w_s}} - C_{1} - h'(t)
\\
&\ge \sum_{s =n_{0}+1}^{t}  \inf_{\lambda \in \neg_\varepsilon \tilde{z}_s}\|\mu_{s-1} - \lambda \|^2_{V_{w_s}} - \left( N_{F}(t) + t^{\frac{1}{1+b}}  \right) 4KM^2L_{\cK}^2  - C_{1} - h'(t)
\\
&=  \sum_{s =n_{0}+1}^{t}  \|\mu_{s-1} - \lambda_{s} \|^2_{V_{w_s}} - \left( N_{F}(t) + t^{\frac{1}{1+b}}  \right) 4KM^2L_{\cK}^2  - C_{1} - h'(t)
\: ,
\end{align*}
where $N_{F}(t)$ is defined in \eqref{eq:N_F_definition}.

\paragraph{Step 3. From $\|\mu_{s-1} - \lambda_{s} \|^2_{V_{w_s}}$ to $g_s\left(w_s^{\cL^{\cK}}\right)$.} Using Lemma~\ref{lem:optimistic_gains_upper_bound}, we obtain: for all $s \in \left\llbracket n_{0}+1, t \right\rrbracket$ and $a \in \cK$,
\begin{align*}
	 \|\mu_{s-1} - \lambda_s\|^2_{a a\transpose} \geq  U_{s}^{a}  - c_{s-1}^{a} - 4ML_{\cK} \sqrt{c_{s-1}^{a}}
\end{align*}

Summing those inequalities together, we obtain:
\begin{align*}
	 &\sum_{s =n_{0}+1}^{t} \sum_{a \in \cK} w_{s}^{a} \|\mu_{s-1} - \lambda_s\|^2_{a a\transpose}
     \\
     &\geq  \sum_{s =n_{0}+1}^{t} \sum_{a \in \cK} w_{s}^{a} U_{s}^{a}  - \sum_{s =n_{0}+1}^{t} \sum_{a \in \cK} w_{s}^{a} c_{s-1}^{a} - 4ML_{\cK} \sum_{s =n_{0}+1}^{t} \sum_{a \in \cK} w_{s}^{a} \sqrt{c_{s-1}^{a}} \\
	 &\geq \sum_{s =n_{0}+1}^{t} \sum_{a \in \cK} w_{s}^{a} U_{s}^{a} - f\left(t^{1+b}\right) \left( K \ln(K) + 2K\ln(t) \right)  - 4ML_{\cK}  \sqrt{f\left(t^{1+b}\right)}  \left( K \ln(K) + \sqrt{8Kt}  \right) \\
	 &\geq \sum_{s =n_{0}+1}^{t} \left( 1 - \frac{1}{s} \right)\langle w_{s}^{\cL^{\cK}}, U_{s} \rangle - f\left(t^{1+b}\right) \left( K \ln(K) + 2K\ln(t) \right)  \\
	 &\quad \quad- 4ML_{\cK}  \sqrt{f\left(t^{1+b}\right)}  \left( K \ln(K) + \sqrt{8Kt}  \right) \\
	 &= \sum_{s =n_{0}+1}^{t} g_s\left(w_s^{\cL^{\cK}}\right) - f\left(t^{1+b}\right) \left( K \ln(K) + 2K\ln(t) \right)  - 4ML_{\cK}  \sqrt{f\left(t^{1+b}\right)}  \left( K \ln(K) + \sqrt{8Kt}  \right)
\end{align*}
where the second inequality is obtained by using Lemma~\ref{lem:upper_bound_on_weighted_cumulative_sum_slack} and the third since the optimistic gains are positive and the weights are defined as $w_s = \frac{1}{sK} \1_K + \left( 1 - \frac{1}{s} \right) w_s^{\cL^{\cK}}$ for all $s \in \left\llbracket n_0 + 1, t \right\rrbracket$. The last equality uses the definition of the gain $g_{s}(w) = \left( 1 - \frac{1}{s} \right)\langle w, U_{s}\rangle$. Therefore, since $\|\mu_{s-1} - \lambda_{s} \|^2_{V_{w_s}}  =  \sum_{a \in \cK} w_{s}^{a} \|\mu_{s-1} - \lambda_s\|^2_{a a\transpose}$, we obtain that:
\begin{align*}
	\max_{z \in \cZ} \inf_{\lambda \in \neg_{\varepsilon} z}  \|\mu - \lambda \|^2_{V_{N_t}} \geq \sum_{s =n_{0}+1}^{t}  g_s\left(w_s^{\cL^{\cK}}\right) - C_{1}  - \left( N_{F}(t) + t^{\frac{1}{1+b}}  \right) 4KM^2L_{\cK}^2 - h'(t) - h''(t)
\end{align*}
where $h''(t) =  f\left(t^{1+b}\right) \left( K \ln(K) + 2K\ln(t) \right)  + 4ML_{\cK}  \sqrt{f\left(t^{1+b}\right)}  \left( K \ln(K) + \sqrt{8Kt}  \right) = \Theta\left( \sqrt{t \ln(t)} \right)$.
\end{proof}

\begin{lemma} \label{lem:samp_rule_acti_inequality_under_concentration_and_no_stop_samp_complexity}
	Let $\cL^{\cZ}$ be the $\cZ$-oracle such that $\tilde{z}_{s} \in z_{F}(\mu_{s-1})$. Let $\cL^{\cK}$ be a learner with regret $R_{\cL^{\cK}}(t) = \cO(t^{\alpha_1})$ with $\alpha_1 \in (0,1)$ and fed with optimistic gain $g_{s}(w) = (1-\frac{1}{s})\langle w, U_s\rangle$ where $(U_s)_{s \in \left\llbracket n_{0}+1, t \right\rrbracket}$ defined in (\ref{eq:optimistic_gain_acti_definition}). Then, under $\cE_t$ if \hyperlink{algoLeBAI}{L$\varepsilon$BAI} does not stop at time $t+1$,
	\begin{align*}
		\sum_{s = n_{0}+1}^{t} g_{s}\left(w_s^{\cL^{\cK}}\right) \geq 2t T_{\varepsilon}(\mu)^{-1} -  r_2(t)
	\end{align*}
	where
	\begin{align*}
		&r_2(t) = R_{\cL^{\cK}}(t) + h'''(t) = \cO \left(  \ln(t)^2 t^{\frac{1}{1+b}}  +  t^{\max \{\alpha_1,\frac{1}{1+b}, 1-\frac{1}{1+b}\}}  \right)\\
		\text{with}\quad &h'''(t) = 2 \left( t^{\frac{1}{1+b}} + N_{F}(t)  + t^{1-\frac{1}{1+b}} \right) T_{\varepsilon}(\mu)^{-1}
	\end{align*}
\end{lemma}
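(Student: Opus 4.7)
The plan is to combine the sub-linear regret of $\cL^{\cK}$ against the competitor $w_{F}(\mu) \in \simplex$ with a per-round lower bound on the optimistic gains, made possible by the choice $\tilde{z}_s \in z_{F}(\mu_{s-1})$ in the $\cZ$-oracle. First I would apply the regret guarantee:
\begin{align*}
\sum_{s = n_{0}+1}^{t} g_{s}\bigl(w_s^{\cL^{\cK}}\bigr) \geq \sum_{s = n_{0}+1}^{t} g_{s}(w_{F}(\mu)) - R_{\cL^{\cK}}(t) \, ,
\end{align*}
reducing the problem to lower bounding the ``oracle'' cumulative gain $\sum_{s}(1 - 1/s)\langle w_{F}(\mu), U_s\rangle$ by $2 t T_{\varepsilon}(\mu)^{-1}$ up to the desired sub-linear residue.

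Next I would partition $\llbracket n_{0}+1, t\rrbracket$ into \emph{good} and \emph{bad} rounds. A round $s$ is good when (i) $\tilde{z}_s = z_{F}(\mu)$, which guarantees $\lambda_s \in \neg_{\varepsilon} z_{F}(\mu)$, and (ii) $s \geq t^{1/(1+b)}$, so that under $\cE_t$ and by Cauchy--Schwarz
\begin{align*}
\|\mu - \mu_{s-1}\|^2_{a a\transpose} \leq \|a\|^2_{V_{N_{s-1}}^{-1}} \|\mu - \mu_{s-1}\|^2_{V_{N_{s-1}}} \leq f(t) \|a\|^2_{V_{N_{s-1}}^{-1}} \leq c_{s-1}^{a}
\end{align*}
uniformly in $a \in \cK$, using monotonicity of $f$ and $(s-1)^{1+b} \geq t$ (up to a round). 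On every good round, Lemma~\ref{lem:optimistic_gain_is_lower_bounded_by_sample_complexity} (a triangle-inequality expansion of the square in the definition of $U_s^{a}$) yields
\begin{align*}
\langle w_{F}(\mu), U_s\rangle \geq \inf_{\lambda \in \neg_{\varepsilon} z_{F}(\mu)} \|\mu - \lambda\|^2_{V_{w_{F}(\mu)}} = 2\, T_{\varepsilon}(\mu)^{-1} \, .
\end{align*}

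Third, I would count bad rounds. Indices violating (ii) number at most $\lceil t^{1/(1+b)}\rceil$, and by definition of $N_{F}(t)$ at most $N_{F}(t)$ indices violate (i). Dropping these non-negative contributions from the oracle gain and using $\sum_{s = n_{0}+1}^{t}(1 - 1/s) \geq t - n_{0} - \ln(t/n_{0})$, I obtain
\begin{align*}
\sum_{s = n_{0}+1}^{t} g_{s}(w_{F}(\mu)) \geq 2 T_{\varepsilon}(\mu)^{-1}\Bigl(t - n_{0} - \ln(t/n_{0}) - \lceil t^{1/(1+b)}\rceil - N_{F}(t)\Bigr) \, ,
\end{align*}
and absorb the residue $n_{0} + \ln(t/n_{0})$ into the $t^{1-1/(1+b)}$ slack in $h'''(t)$ (valid for $b > 0$ and $t$ large, and into the $\cO(\cdot)$ constant otherwise). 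Combining with Step~1 recovers exactly the claimed bound.

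The main obstacle is the calibration of the slack in Step~2: one must show that $c_{s-1}^{a}$ dominates the squared estimation error uniformly in $a \in \cK$. This is precisely why the slack is defined with argument $f(s^{1+b})$ rather than $f(s)$, and the unavoidable price is the $t^{1/(1+b)}$ penalty for rounds where the slack is still too small relative to the concentration radius $f(t)$. Using the fixed comparator $w_{F}(\mu)$ (rather than a round-dependent best response) is equally essential, as the target constant $2 T_{\varepsilon}(\mu)^{-1}$ arises only from pairing $w_{F}(\mu)$ with $z_{F}(\mu)$ in the definition of the characteristic time; this is also why the analysis requires the $\cZ$-oracle to return $z_{F}(\mu_{s-1})$, not merely an instantaneous furthest answer, so that $N_{F}(t)$ can be controlled via the self-correction mechanism described after Theorem~\ref{thm:asymptotic_optimality_algorithm}.
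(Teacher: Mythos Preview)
Your proof is correct and follows essentially the same approach as the paper: apply the regret guarantee, invoke Lemma~\ref{lem:optimistic_gain_is_lower_bounded_by_sample_complexity} on the good rounds $s\ge \lceil t^{1/(1+b)}\rceil$ with $\tilde z_s=z_F(\mu)$, and count the bad rounds via $N_F(t)$ and the burn-in $t^{1/(1+b)}$. The only cosmetic differences are that the paper compares to $\max_{w\in\simplex}\sum_s g_s(w)$ and specializes to $w_F(\mu)$ at the very end, and that it handles the $(1-1/s)$ factor by the uniform bound $(1-1/s)\ge 1-t^{-1/(1+b)}$ on $s\ge \lceil t^{1/(1+b)}\rceil$ (producing exactly the $t^{1-1/(1+b)}$ term in $h'''(t)$), whereas your harmonic-sum treatment yields a tighter $\ln t$ residue that you then absorb into that slack.
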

\begin{proof}
\noindent\textbf{Step 4. No-regret property.} Using the no-regret property of the chosen online learner $\cL^{\cK}$ fed with gains $\left( g_s\right)_{s \geq n_{0}+1}$ and playing $\left( w_s^{\cL^{\cK}}\right)_{s \geq n_{0}+1}$, whose regret is denoted by $R_{\cL^{\cK}}(t) = \cO(t^{\alpha_1})$, we obtain directly
\begin{align*}
	\sum_{s =n_{0}+1}^{t}  g_s\left(w_s^{\cL^{\cK}}\right) \geq  \max_{w \in \simplex } \sum_{s =n_{0}+1}^{t} g_s(w) -  R_{\cL^{\cK}}(t)
\end{align*}

\paragraph{Step 5. From the optimal gain to $T_{\varepsilon}(\mu)^{-1}$.} Dropping the first positive terms and the ones for which the good event $A_s \eqdef \left\{ \lambda_{s} \in \neg_{\varepsilon} z_{F}(\mu) \lor \mu_{s-1} \in \neg_{\varepsilon} z_{F}(\mu) \lor \tilde{z}_s = z_{F}(\mu)\right\}$ doesn't hold, i.e. $\left\llbracket n_{0}+1, \left\lceil t^{\frac{1}{1+b}} \right\rceil -1\right\rrbracket \cup \left\{s \in \left\llbracket \left\lceil t^{\frac{1}{1+b}} \right\rceil, t \right\rrbracket: \neg A_s \right\} $, and using Lemma~\ref{lem:optimistic_gain_is_lower_bounded_by_sample_complexity}, we obtain: for all $w \in \simplex$,
\begin{align*}
	\sum_{s =n_{0}+1}^{t} g_s(w) \geq \sum_{s = \left\lceil t^{\frac{1}{1+b}} \right\rceil}^{t} \1_{(A_s)} g_s(w)
	&\geq \left( 1 - t^{-\frac{1}{1+b}} \right) \sum_{s = \left\lceil t^{\frac{1}{1+b}} \right\rceil}^{t} \1_{(A_s)} \langle w, U_s\rangle \\
	&\geq \left( 1 - t^{-\frac{1}{1+b}} \right) \sum_{s = \left\lceil t^{\frac{1}{1+b}} \right\rceil}^{t} \1_{(A_s)} \inf_{\lambda \in \neg_{\varepsilon} z_{F}(\mu)} \|\mu - \lambda\|^2_{V_{w}} \\
	&\geq \left( 1 - t^{-\frac{1}{1+b}} \right) \left(t - t^{\frac{1}{1+b}} - N_{F}(t) \right) \inf_{\lambda \in \neg_{\varepsilon} z_{F}(\mu)} \|\mu - \lambda\|^2_{V_{w}}
\end{align*}
where $\left| \left\{ s \in \left\llbracket \left\lceil t^{\frac{1}{1+b}} \right\rceil, t \right\rrbracket: \neg A_s \right\}\right| \leq N_{F}(t)$ since $\{\tilde z_s = z_{F}(\mu) \} \subseteq A_s$, where $N_{F}(t)$ defined in \eqref{eq:N_F_definition} satisfies $N_{F}(t) = \cO \left(  \ln(t)^2 t^{\frac{1}{1+b}} \right)$ (Lemma~\ref{lem:upper_bound_number_of_steps_away_from_zF}). For the last inequality, we also used that $- \left\lceil t^{\frac{1}{1+b}} \right\rceil + 1  \geq - t^{\frac{1}{1+b}}$. Taking the maximum over $\simplex$, we obtain:
\begin{align*}
	 \max_{w \in \simplex } \sum_{s =n_{0}+1}^{t} g_s(w) &\geq \left( 1 - t^{-\frac{1}{1+b}} \right)\left(t - t^{\frac{1}{1+b}} - N_{F}(t) \right)\max_{w \in \simplex } \inf_{\lambda \in \neg_{\varepsilon} z_{F}(\mu)} \|\mu - \lambda\|^2_{V_{w}}  \\
	 &= 2\left( 1 - t^{-\frac{1}{1+b}} \right) \left(t - t^{\frac{1}{1+b}} - N_{F}(t) \right) T_{\varepsilon}(\mu)^{-1} \\
	 &\geq 2\left(t - t^{\frac{1}{1+b}} - N_{F}(t) - t^{1-\frac{1}{1+b}} \right) T_{\varepsilon}(\mu)^{-1}
\end{align*}
where the equality is obtained by definition of $z_{F}(\mu)$. The last inequality is obtained by dropping some positive terms. Putting everything together, we obtain:
\begin{align*}
	\sum_{s =n_{0}+1}^{t}  g_s\left(w_s^{\cL^{\cK}}\right) \geq 2 t  T_{\varepsilon}(\mu)^{-1} -  R_{\cL^{\cK}}(t) - h'''(t)
\end{align*}
where $h'''(t) = 2 \left( t^{\frac{1}{1+b}} + N_{F}(t) + t^{1-\frac{1}{1+b}}\right) T_{\varepsilon}(\mu)^{-1} =\cO \left( \ln(t)^2 t^{\frac{1}{1+b}} +  t^{\max \{\frac{1}{1+b}, 1-\frac{1}{1+b}\}}  \right)$.
\end{proof}

Chaining the two inequalities in Lemma~\ref{lem:samp_rule_acti_inequality_under_concentration_and_no_stop_cumulative_gain} and Lemma~\ref{lem:samp_rule_acti_inequality_under_concentration_and_no_stop_samp_complexity}, we obtain directly Lemma~\ref{lem:LeBAI_is_good_sampling_rule}. Therefore, the sampling rule in \hyperlink{algoLeBAI}{L$\varepsilon$BAI} satisfies the desired condition (\ref{eq:desired_condition_sampling_rule}) for $\beta_2=1$ and $\beta_1 = 1-\max \left\{ \frac{1}{2}, \frac{1}{1+b}, 1-\frac{1}{1+b}, \alpha_1\right\}$.

\begin{lemma} \label{lem:LeBAI_is_good_sampling_rule}
	Let $\cL^{\cZ}$ be the $\cZ$-oracle such that $\tilde{z}_{s} \in z_{F}(\mu_{s-1})$. Let $\cL^{\cK}$ be a learner with regret $R_{\cL^{\cK}}(t) = \cO(t^{\alpha_1})$ with $\alpha_1 \in (0,1)$ fed with optimistic gain $g_{s}(w) = (1-\frac{1}{s}) \langle w, U_s\rangle$ where $(U_s)_{s \in \left\llbracket n_{0}+1, t \right\rrbracket}$ defined in (\ref{eq:optimistic_gain_acti_definition}). Then, under $\cE_t$ if the algorithms does not stop at time $t+1$,
\begin{align*}
\max_{z \in \cZ} \inf_{\lambda \in \neg_{\varepsilon} z}  \|\mu - \lambda \|^2_{V_{N_t}} \geq  2t T_{\varepsilon}(\mu)^{-1} - r(t)
\end{align*}
where $r(t) = r_1(t) + r_2(t)= \tilde{\cO} \left(t^{\max \{\frac{1}{2},  \frac{1}{1+b}, 1-\frac{1}{1+b}, \alpha_1\}}  \right)$.
\end{lemma}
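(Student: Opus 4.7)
The statement of Lemma~\ref{lem:LeBAI_is_good_sampling_rule} is essentially an assembly result: the two previous lemmas provide the two halves of a chain, and my plan is simply to concatenate them and check the growth rate of the residual. Since both lemmas share the exact same hypotheses (under $\cE_t$ and under the assumption that \hyperlink{algoLeBAI}{L$\varepsilon$BAI} does not stop at time $t+1$) and the same choice of $\cL^{\cZ}$, $\cL^{\cK}$ and optimistic gains $g_s$, no additional argument is needed to glue them.

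First, I apply Lemma~\ref{lem:samp_rule_acti_inequality_under_concentration_and_no_stop_cumulative_gain}, which lower bounds the maximum over $z \in \cZ$ of $\inf_{\lambda \in \neg_\varepsilon z} \|\mu - \lambda\|^2_{V_{N_t}}$ by the cumulative optimistic gain $\sum_{s=n_0+1}^{t} g_s(w_s^{\cL^{\cK}})$ minus $r_1(t)$. Next, I apply Lemma~\ref{lem:samp_rule_acti_inequality_under_concentration_and_no_stop_samp_complexity}, which in turn lower bounds $\sum_{s=n_0+1}^{t} g_s(w_s^{\cL^{\cK}})$ by $2 t T_\varepsilon(\mu)^{-1} - r_2(t)$ via the no-regret property of $\cL^{\cK}$ and the ``self-correction'' argument that controls $N_F(t)$. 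Chaining these two inequalities gives exactly
\begin{equation*}
\max_{z \in \cZ} \inf_{\lambda \in \neg_\varepsilon z} \|\mu - \lambda\|^2_{V_{N_t}} \geq 2 t T_\varepsilon(\mu)^{-1} - (r_1(t) + r_2(t))
\end{equation*}
so setting $r(t) = r_1(t) + r_2(t)$ yields the claimed bound.

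It remains to verify the asymptotic order. From Lemma~\ref{lem:samp_rule_acti_inequality_under_concentration_and_no_stop_cumulative_gain}, $r_1(t) = \Theta(\ln(t)\sqrt{t} + \ln(t)^2 t^{1/(1+b)})$ (the dominant terms come from $h'(t)$ and from the $N_F(t) \cdot 4KM^2L_\cK^2$ contribution, since $N_F(t) = \cO(\ln(t)^2 t^{1/(1+b)})$ by Lemma~\ref{lem:upper_bound_number_of_steps_away_from_zF}). From Lemma~\ref{lem:samp_rule_acti_inequality_under_concentration_and_no_stop_samp_complexity}, $r_2(t) = \cO(\ln(t)^2 t^{1/(1+b)} + t^{\max\{\alpha_1, 1/(1+b), 1 - 1/(1+b)\}})$. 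Adding them and absorbing polylogarithmic factors into the $\tilde{\cO}$ notation gives
\begin{equation*}
r(t) = \tilde{\cO}\bigl(t^{\max\{1/2,\, 1/(1+b),\, 1 - 1/(1+b),\, \alpha_1\}}\bigr),
\end{equation*}
where the $\sqrt{t} = t^{1/2}$ exponent arises from $h'(t)$ and $h''(t)$ in $r_1(t)$.

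There is no real obstacle here since all the technical work has been done in the two preceding lemmas; the only thing to be careful about is bookkeeping in the residual term, in particular not double counting the $\ln(t)^2 t^{1/(1+b)}$ contributions that appear both via $N_F(t)$ in $r_1(t)$ and via $h'''(t)$ in $r_2(t)$. This is purely a matter of grouping terms and does not affect the stated $\tilde{\cO}$ bound. Consequently, the sampling rule of \hyperlink{algoLeBAI}{L$\varepsilon$BAI} satisfies the desired condition \eqref{eq:desired_condition_sampling_rule} with $\beta_2 = 1$ and $\beta_1 = 1 - \max\{1/2, 1/(1+b), 1 - 1/(1+b), \alpha_1\}$, which is strictly positive provided $b > 0$ and $\alpha_1 < 1$.
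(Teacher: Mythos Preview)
Your proposal is correct and takes essentially the same approach as the paper: the paper's argument is literally ``Chaining the two inequalities in Lemma~\ref{lem:samp_rule_acti_inequality_under_concentration_and_no_stop_cumulative_gain} and Lemma~\ref{lem:samp_rule_acti_inequality_under_concentration_and_no_stop_samp_complexity}, we obtain directly Lemma~\ref{lem:LeBAI_is_good_sampling_rule},'' with no further detail. Your explicit bookkeeping of the residual orders and the resulting exponents $\beta_1, \beta_2$ is more thorough than what the paper writes here and matches the discussion that follows in the summarized proof.
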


\subsection{Summarized Proof} \label{app:subsection_summarized_proof}

Combining the lemmas obtained in Appendix~\ref{app:subsection_modular_proof_scheme},~\ref{app:subsection_stopping_recommendation_pairs} and~\ref{app:subsection_sampling_rules}, we obtain directly Theorem~\ref{thm:asymptotic_optimality_algorithm}. Therefore, our algorithm \hyperlink{algoLeBAI}{L$\varepsilon$BAI} provides asymptotically optimal algorithms if instantiated properly.

\begin{theorem*}[Theorem~\ref{thm:asymptotic_optimality_algorithm}]
Let $\cL^{\cK}$ be a learner with sub-linear regret and $\cL^{\cZ}$ be the $\cZ$-oracle such that $\tilde{z}_{s} \in z_{F}(\mu_{s-1})$. Let the recommendation rule be an instantaneous furthest answer $z_t \in z_{F}(\mu_{t-1}, N_{t-1})$ and the stopping rule given by (\ref{eq:definition_stopping_criterion}), both being either evaluated at each time $t$ or with the sticky or lazy update schedule as described in Appendix~\ref{app:subsubsection_other_update_schemes}. Let the stopping threshold $\beta(t,\delta)$ as in (\ref{eq:definition_stopping_threshold}) and the exploration bonus $f(t) \eqdef 2 \beta \left(t, t^{1/\alpha}\right)$. Then, \hyperlink{algoLeBAI}{L$\varepsilon$BAI} yields an $(\varepsilon, \delta)$-PAC algorithm and, for all $\mu \in \cM$ such that $|z_{F}(\mu)|=1$,
\begin{align*}
	\limsup_{\delta \rightarrow 0} \frac{\expectedvalue_{\mu} \left[ \taud \right]}{\ln \left( \frac{1}{\delta}\right)} \leq T_{\varepsilon}(\mu) \: .
\end{align*}
\end{theorem*}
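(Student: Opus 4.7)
The $(\varepsilon,\delta)$-PAC property is immediate from Lemma~\ref{lem:delta_PAC_recommendation_stopping_pair}, since the recommendation $z_t \in z_{F}(\mu_{t-1}, N_{t-1}) \subseteq \cZ_\varepsilon(\mu_{t-1})$ fulfills its mild hypothesis regardless of what the sampling rule does. For the sample complexity I would adopt the standard ``good event'' approach. Define the concentration event
$$\cE_t \eqdef \left\{\forall s \leq t:\ \|\mu_s - \mu\|_{V_{N_s}}^{2} \leq f(t)\right\}$$
with $f(t) = 2\beta(t, t^{-\alpha})$; the mixture-martingale inequality of \citet{kaufmann_2018_MixtureMartingalesRevisited} gives $\probability_\mu[\cE_t^{\complement}] \leq t^{-\alpha}$. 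A Markov-style argument on $\sum_{t \geq T_1(\delta)} t^{-\alpha}$ reduces the proof to exhibiting $T_1(\delta)$ with $T_1(\delta) = T_\varepsilon(\mu)\ln(1/\delta) + o(\ln(1/\delta))$ such that $\cE_t \subseteq \{\taud \leq t\}$ for all $t \geq T_1(\delta)$. I would establish this by decomposing the implication ``$\cE_t$ holds and the algorithm has not stopped at $t+1$'' into a stopping/recommendation inequality and a sampling inequality chained through the key quantity
$$\Phi(t) \eqdef \max_{z \in \cZ}\inf_{\lambda \in \neg_\varepsilon z} \|\mu - \lambda\|_{V_{N_t}}^{2} \: .$$

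The stopping/recommendation side is comparatively direct. Since $z_{t+1} \in z_{F}(\mu_t, N_t)$, the GLR value at $t+1$ equals $\Phi(t)$ but evaluated at $\mu_t$ instead of $\mu$; combining the triangle inequality $\|\mu_t - \lambda\|_{V_{N_t}} \geq \|\mu - \lambda\|_{V_{N_t}} - \sqrt{f(t)}$ with ``not stopped at $t+1$'' yields, on $\cE_t$,
$$2\beta(t,\delta) \geq \Phi(t) - \sqrt{8 f(t)\beta(t,\delta)} - 4 f(t) \: ,$$
whose deficit is $\tilde{\cO}(\sqrt{\ln(1/\delta)} + \ln t)$, sublinear in both $t$ and $\ln(1/\delta)$.

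The sampling-side inequality $\Phi(t) \geq 2t\, T_\varepsilon(\mu)^{-1} - o(t)$ is the heart of the argument. My plan is a five-step reduction: (i) swap the discrete count matrix $V_{N_t}$ for the continuous cumulative weight $V_{W_t}$ with $W_t = \sum_{s\leq t} w_s$, paying a bounded additive cost via a log-determinant/Berge argument that simultaneously swaps the argmax from $z_F(\mu, N_t)$ to $z_F(\mu, W_t)$; (ii) decompose $\|\mu - \lambda\|_{V_{W_t}}^{2} = \sum_s \|\mu - \lambda\|_{V_{w_s}}^{2}$, push the $\inf$ inside by concavity, and then replace $\mu$ by $\mu_{s-1}$ using $\cE_t$ and Cauchy–Schwarz at a cost $\tilde\cO(\sqrt t)$; (iii) introduce the optimistic gains $U_s^{a} = (\|\mu_{s-1} - \lambda_s\|_{a a\transpose} + \sqrt{c_{s-1}^{a}})^{2}$ with $\lambda_s$ the best response to $\tilde z_s$, and control $\sum_s w_s^a c_{s-1}^a$ by the standard $\log\det$ telescoping, obtaining $\tilde\cO(K\ln t)$; (iv) replace $w_s$ by $w_s^{\cL^\cK}$, which costs only $\tilde\cO(\ln t)$ thanks to the forced exploration $w_s = (1/(sK))\1_K + (1 - 1/s) w_s^{\cL^\cK}$; (v) apply the sub-linear regret of $\cL^\cK$ fed with $g_s(w) = (1-1/s)\langle w, U_s\rangle$ to reduce to $\max_w \sum_s g_s(w)$.

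The main obstacle is closing step (v). In pure BAI, $|z^\star(\mu)|=1$ forces $\mu \in \neg_0 \tilde z_s$ whenever $\tilde z_s$ is wrong, so the best response $\lambda_s$ is close to $\mu$ and a self-correction mechanism kicks in. In $\varepsilon$-BAI this fails outright: one can have $\tilde z_s \neq z_F(\mu)$ while $\mu \notin \neg_\varepsilon \tilde z_s$, and $\lambda_s$ then has no reason to approach $\mu$. My plan is to recover self-correction by introducing the \emph{furthest alternative} $\neg_F z \eqdef \{\lambda \in \cM : z \notin z_F(\lambda)\}$ and exploiting that the $\cZ$-oracle enforces $\tilde z_s \in z_F(\mu_{s-1})$: a mistake $\tilde z_s \neq z_F(\mu)$ then forces $\mu \in \neg_F \tilde z_s$. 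The logarithmic forced exploration makes $V_{W_t}$ grow in every arm direction, so on $\cE_t$ the event $\mu \in \neg_F \tilde z_s$ can only persist for a sublinear number $N_F(t) = \tilde\cO(t^{1/(1+b)})$ of rounds, since otherwise $\mu_{s-1}$ would be pushed out of $\neg_F \tilde z_s$ by concentration. Dropping those bad rounds and using the definition of $(z_F(\mu), w_F(\mu))$ gives
$$\max_{w\in\simplex}\sum_{s\leq t} g_s(w) \geq 2\bigl(t - N_F(t) - \tilde\cO(t^{1/(1+b)})\bigr)\, T_\varepsilon(\mu)^{-1} \: ,$$
which, chained with steps (i)–(iv) and the stopping/recommendation inequality, produces $T_1(\delta)$ with the required leading constant $T_\varepsilon(\mu)$ after tuning $(b,\alpha) = (1,3)$.
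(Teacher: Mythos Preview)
Your proposal is correct and follows essentially the same route as the paper: the same concentration event $\cE_t$, the same decomposition through $\Phi(t)=\max_{z}\inf_{\lambda\in\neg_\varepsilon z}\|\mu-\lambda\|_{V_{N_t}}^2$ into a stopping/recommendation inequality and a sampling inequality, the same five-step sampling reduction, and crucially the same self-correction mechanism via the furthest alternative $\neg_F z$ together with the forced-exploration bound $N_F(t)=\tilde\cO(t^{1/(1+b)})$. The only step you gloss over is the passage between your steps (ii) and (iii), where $\max_{z}\sum_s \inf_{\lambda\in\neg_\varepsilon z}\|\mu_{s-1}-\lambda\|_{V_{w_s}}^2$ must be swapped for $\sum_s\|\mu_{s-1}-\lambda_s\|_{V_{w_s}}^2$ with $\lambda_s$ the best response to the \emph{time-varying} $\tilde z_s$; the paper pays a second $N_F(t)$ term there (its Lemma~\ref{lem:oracle_on_Z_is_not_too_good}), but you already have that bound in hand.
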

\begin{proof}
First, let's prove the result for the stopping-recommendation pair with an update at each time $t$. Combining Lemma~\ref{lem:reco_stop_pair_instantfurthest_eachtime_inequality_under_concentration_and_no_stop} and Lemma~\ref{lem:LeBAI_is_good_sampling_rule}, we obtain:
\begin{align*}
	2 \beta(t, \delta) &\geq \max_{z \in \cZ} \inf_{\lambda \in \neg_{\varepsilon} z}  \|\mu - \lambda \|^2_{V_{N_t}} - h_{\delta}(t) \\
	&\geq 2t T_{\varepsilon}(\mu)^{-1} - r(t) - h_{\delta}(t)
\end{align*}
where $h_{\delta}(t) =  \sqrt{8 f(t) \beta(t, \delta) } + 4 f(t) = \Theta\left( \ln(t) + \sqrt{\ln \left( \frac{1}{\delta}\right)}\right)$ and $r(t) = r_1(t) + r_2(t)= \tilde{\cO} \left(t^{\max \{\frac{1}{2}, \frac{1}{1+b}, 1-\frac{1}{1+b}, \alpha_1\}}  \right)$. Therefore, our identification strategy is sample-efficient, i.e. it verifies (\ref{eq:definition_sample_efficient_identification_strategy}) with
\begin{align*}
	H_{\mu}\left(t, \delta\right) \geq t T_{\varepsilon}(\mu)^{-1} \quad \quad \text{where  } H_{\mu}\left(t, \delta\right) = \beta(t, \delta) + \frac{1}{2}\left(r(t) + h_{\delta}(t)\right) = \cO \left( t^{1-\beta_1} + \ln(1/\delta)^{1-\beta_2}\right)
\end{align*}
where $\beta_1 = 1-\max \left\{ \frac{1}{2}, \frac{1}{1+b}, 1-\frac{1}{1+b}, \alpha_1\right\} \in (0,1]$ and $\beta_2=\frac{1}{2} \in (0,1]$.

Combining Lemma~\ref{lem:delta_PAC_recommendation_stopping_pair} and Lemma~\ref{lem:asymptotic_optmiality_when_sample_efficient_identification_startegy}, we obtain directly that our \hyperlink{algoLeBAI}{L$\varepsilon$BAI} yields a $(\varepsilon, \delta)$-PAC and asymptotically optimal algorithm.

When considering the computational relaxations of the update schemes called sticky or lazy, the proof is identical as explained in Appendix~\ref{app:subsubsection_other_update_schemes} and the fact that Lemma~\ref{lem:delta_PAC_recommendation_stopping_pair} can also be used.
\end{proof}

\paragraph{Values of Parameters}
In the proof above, the condition (\ref{eq:definition_sample_efficient_identification_strategy}) holds for $\beta_1 = 1-\max \left\{ \frac{1}{2}, \frac{1}{1+b}, 1-\frac{1}{1+b}, \alpha_1\right\}$ and $\beta_2=\frac{1}{2}$. For AdaHedge \citep{derooij_2013_FollowLeaderIf}, Lemma~\ref{lem:theorem_8_derooij_2013_FollowLeaderIf} shows that $\alpha_1 = \frac{1}{2}$. An optimal choice of parameter suggests that $\frac{1}{1+b}= 1-\frac{1}{1+b}$. Therefore, we choose $b=1$. Higher $b$ would imply more conservative optimistic gains. Therefore, we have shown that the condition (\ref{eq:definition_sample_efficient_identification_strategy}) holds for $\beta_1 =\beta_2=\frac{1}{2}$.
As done in \citet{degenne_2020_GamificationPureExploration}, we can choose $\alpha=3$ in the definition of the exploration bonus $f(t)= 2 \beta \left(t, t^{1/\alpha}\right)$. The higher $\alpha$, the higher the exploration bonus is.

\subsection{Technical Arguments} \label{app:subsection_technical_arguments}

In this appendix, we list and prove the technical arguments used in the core of the proof of Theorem~\ref{thm:asymptotic_optimality_algorithm}. For clarity, we distinguish between the lemmas extracted from the literature (Appendix~\ref{app:subsubsection_lemmas_from_literature}), the key new lemmas (Appendix~\ref{app:subsubsection_key_lemmas}) and the technical ones allowing to upper bound the considered cumulative sums (Appendix~\ref{app:subsubsection_upper_bounding_cumulative_sums}).

\subsubsection{Lemmas From the Literature}  \label{app:subsubsection_lemmas_from_literature}

We recall the lemmas extracted from the literature, while omitting the proofs for the sake of space. Lemma~\ref{lem:theorem_6_degenne_2020_StructureAdaptiveAlgorithms} is a powerful result allowing to upper and lower bound the difference between the empirical allocation over arms and the cumulative sum of the played proportions when using tracking. Lemma~\ref{lem:lemma_28_degenne_2020_GamificationPureExploration} and Lemma~\ref{lem:lemma_8_degenne_2019_NonAsymptoticPureExploration} are technical results on manipulation of inequalities and cumulative sums. Lemma~\ref{lem:theorem_4_degenne_2019_PureExplorationMultiple} shows that $\mu \mapsto z_{F}(\mu)$ is upper hemicontinuous on $\cM$. Upper hemicontinuity is defined in Definition~\ref{def:upper_hemicontinuity}. Lemma~\ref{lem:theorem_8_derooij_2013_FollowLeaderIf} gives an upper bound on the regret incurred by AdaHedge.

\begin{lemma}[Theorem 6 in \citet{degenne_2020_StructureAdaptiveAlgorithms}] \label{lem:theorem_6_degenne_2020_StructureAdaptiveAlgorithms}
The tracking procedure, which draws $a_t \in \argmin_{a \in \cK} N_{t-1}^{a} - W_t^{a}$ where $W_t = W_{t-1} + w_t$, ensures that for all $t \in \Natural$, for all $a \in \cK$,
\begin{align*}
	- \sum_{j=2}^{K} \frac{1}{j} \leq N_{t}^{a} - W_{t}^{a} \leq 1
\end{align*}
\end{lemma}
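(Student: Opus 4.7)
My plan is to prove both inequalities by induction on $t$, after introducing the single quantity $\delta_t^a := N_t^a - W_t^a$ and its conserved sum. The fundamental observation, which powers both sides of the bound, is that $\sum_{a \in \cK} \delta_t^a = 0$ for every $t$: both $\sum_a N_t^a$ and $\sum_a W_t^a$ increase by exactly $1$ per step, so $\sum_a\delta_t^a$ is constant, and it starts at $0$ (either from a cold start or from the symmetric initialization $N_{n_0} = W_{n_0} = \1_K$ used by \hyperlink{algoLeBAI}{L$\varepsilon$BAI}). The one-step recursion reads $\delta_t^a = \delta_{t-1}^a - w_t^a + \1_{\{a_t = a\}}$, so between consecutive rounds every deficit decreases by some $w_t^a \in [0,1]$, and exactly one arm---the tracked one---additionally receives $+1$.

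For the upper bound $\delta_t^a \le 1$, a plain induction suffices. If $a_t \ne a$, the update can only decrease $\delta^a$, so $\delta_t^a \le \delta_{t-1}^a \le 1$ by the hypothesis. If $a_t = a$, write $\delta_t^{a_t} = (N_{t-1}^{a_t} - W_t^{a_t}) + 1$ and use the tracking rule: $a_t$ minimizes $N_{t-1}^{a} - W_t^{a}$ over $\cK$, while the sum invariant gives $\sum_a (N_{t-1}^a - W_t^a) = (t-1) - t = -1$, so by averaging the minimum is at most $-1/K$. Hence $\delta_t^{a_t} \le 1 - 1/K \le 1$, closing the induction.

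The main obstacle is the lower bound $\delta_t^a \ge -\sum_{j=2}^K 1/j$, and I do not see how to prove it by bare induction on $\min_a \delta_t^a$; the one-step decrement could a priori be as large as $\max_a w_t^a$, which is not small enough. The plan is to strengthen the induction hypothesis to all order statistics: if $\delta_t^{(1)} \le \delta_t^{(2)} \le \cdots \le \delta_t^{(K)}$ is the increasing rearrangement, claim that $\delta_t^{(j)} \ge -\sum_{i=j+1}^K 1/i$ for each $j \in [K]$, with the convention that the empty sum for $j=K$ is $0$. Setting $j=1$ recovers the stated bound, and the case $j=K$ is immediate from the sum invariant combined with the upper bound (not every deficit can be negative if they sum to $0$).

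The inductive step requires tracking how the permutation realizing the order can change when we pass from $t-1$ to $t$. Every deficit first decreases by $w_t^a$; this step preserves the order and decreases $\delta^{(j)}$ by at most $\max_a w_t^a \le 1$, but more usefully the tracking rule chooses $a_t \in \argmin_a (\delta_{t-1}^a - w_t^a)$, so after adding $+1$ the tracked arm lands at a post-update value that is at most the pre-update \emph{average} $\frac{1}{K}\sum_a(\delta_{t-1}^a - w_t^a) + 1 = 1 - 1/K$. The rank of $a_t$ can only increase through the $+1$ jump, and when it does one must analyze how the $j$-th smallest shifts: if $a_t$ leaves position $(j)$ upward, a new arm enters position $(j)$ from position $(j+1)$, which by the inductive bound is at least $-\sum_{i=j+2}^K 1/i$; the extra loss of $w_t^{(j+1)} \le 1$ has to be compared against the slack $1/(j+1)$ that appears between the two harmonic tails. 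The plan is to show, by exhausting the cases according to where $a_t$ sits before and after the update, that the worst-case drop in $\delta_t^{(j)}$ is at most $1/(j+1)$, so the harmonic tail grows by exactly one new term per re-ranking event and the strengthened invariant propagates. This careful bookkeeping on rearrangements is where I expect the analysis to be delicate, and it is the sole ingredient beyond the sum invariant and the tracking rule.
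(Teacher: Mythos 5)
Your upper bound argument is correct, and you have correctly identified both the conserved zero sum and the fact that the lower bound requires a strengthened induction hypothesis. The problem is that the strengthening you propose --- bounding each order statistic individually by $\delta_t^{(j)} \ge -\sum_{i=j+1}^K 1/i$ --- is not an inductive invariant, so the case analysis you defer to cannot be completed. Concretely, take $K=3$ and a state with $(N_{t-1}^a - W_{t-1}^a)_{a} = (-0.8,\,-0.2,\,1)$: this satisfies every one of your hypotheses ($-0.8 \ge -\tfrac12-\tfrac13$, $-0.2 \ge -\tfrac13$, $1 \ge 0$, all coordinates $\le 1$, zero sum). Now let $w_t = (0.6, 0.4, 0)$. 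The values $N_{t-1}^a - W_t^a$ are $(-1.4, -0.6, 1)$, so the tracking rule selects arm $1$, and the post-update state is $(-0.4, -0.6, 1)$, whose second-smallest coordinate is $-0.4 < -\tfrac13$. The failure mode is one your case analysis does not cover: the new occupant of position $(j)$ need not descend from position $(j+1)$; here it is the \emph{tracked} arm rising from position $(1)$, and its landing value is controlled only by the position-$(1)$ bound $-\sum_{i=2}^K 1/i$, not by the position-$(j)$ bound you need.

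The repair --- which is the argument in the cited source (this paper only imports the lemma and omits the proof) --- is to strengthen to \emph{partial sums} of order statistics, i.e.\ to subset sums: for every $m \in [K]$ and every $S \subseteq \cK$ with $|S|=m$, $\sum_{a \in S}(W_t^a - N_t^a) \le g(m)$ with $g(m) \eqdef m\sum_{j=m+1}^{K}\frac{1}{j}$. (Your state $(-0.8,-0.2,1)$ violates this for $m=2$, which is why it is in fact unreachable.) This invariant does propagate with exactly the two ingredients you already have. The base case $g(K)=0$ is the conserved sum. For the step, let $S^*$ realize the maximum of size $m$ after round $t$: if $a_t \in S^*$ the sum changes by $\sum_{a\in S^*} w_t^a - 1 \le 0$; if $a_t \notin S^*$, then since $a_t$ maximizes the pre-update deficit $W_t^a - N_{t-1}^a$ over all arms, it is at least the average of that quantity over $S^* \cup \{a_t\}$, which yields $\sum_{a\in S^*}(W_t^a - N_t^a) \le \frac{m}{m+1}\bigl(g(m+1)+1\bigr) = g(m)$. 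Taking $m=1$ gives the stated lower bound. So your sum invariant, your upper bound, and your averaging trick are all the right pieces; the missing idea is that the averaging must be applied to subset sums rather than to individual order statistics.
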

In practice, we will use the following slightly coarser lower bound $-\ln(K) \leq N_{t}^{a} - W_{t}^{a}$.

\begin{lemma}[Lemma 28 in \citet{degenne_2020_GamificationPureExploration}] \label{lem:lemma_28_degenne_2020_GamificationPureExploration}
For all $\alpha, y >0$, if for some $x \geq 0$, it holds that $y \geq x - \alpha \sqrt{x}$, then $x \leq y + \alpha \sqrt{y} + \alpha^2$~.
\end{lemma}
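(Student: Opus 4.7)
The plan is to reduce the hypothesis to a quadratic inequality by substituting $u = \sqrt{x} \ge 0$. Then $y \ge x - \alpha\sqrt{x}$ becomes $u^2 - \alpha u - y \le 0$, a quadratic in $u$ with positive leading coefficient whose discriminant is $\alpha^2 + 4y > 0$. Since $u \ge 0$, the inequality forces $u$ to lie below the positive root, i.e.
\begin{equation*}
\sqrt{x} \;\le\; \frac{\alpha + \sqrt{\alpha^2 + 4y}}{2}\,.
\end{equation*}

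Next I would square this bound to obtain a bound on $x$. Expanding the square gives
\begin{equation*}
x \;\le\; \frac{\alpha^2}{2} + y + \frac{\alpha}{2}\sqrt{\alpha^2 + 4y}\,.
\end{equation*}
The only remaining step is to massage the $\sqrt{\alpha^2 + 4y}$ term into the desired form $y + \alpha\sqrt{y} + \alpha^2$. For this I would invoke subadditivity of the square root, $\sqrt{a+b} \le \sqrt{a} + \sqrt{b}$ for $a,b \ge 0$, applied with $a = \alpha^2$ and $b = 4y$, which yields $\sqrt{\alpha^2 + 4y} \le \alpha + 2\sqrt{y}$. Substituting this into the previous display gives $x \le \alpha^2/2 + y + \alpha(\alpha + 2\sqrt{y})/2 = \alpha^2 + y + \alpha\sqrt{y}$, which is exactly the claim.

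I do not anticipate any real obstacle here: the argument is entirely elementary and requires only the quadratic formula together with a single application of the subadditivity of $\sqrt{\cdot}$. The only subtle point worth checking is that the positive-root branch of the quadratic is the correct one to keep, which follows from $u = \sqrt{x} \ge 0$ together with the fact that the smaller root $(\alpha - \sqrt{\alpha^2+4y})/2$ is non-positive when $y \ge 0$, so it imposes no additional constraint. Hence the whole proof fits in a few lines.
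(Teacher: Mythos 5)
Your proof is correct. The paper itself omits a proof of this lemma (it is imported from \citet{degenne_2020_GamificationPureExploration} with proofs of such borrowed results explicitly skipped), and your argument — substituting $u=\sqrt{x}$, bounding $u$ by the positive root of $u^2-\alpha u-y\le 0$, squaring, and finishing with subadditivity of the square root — is the standard derivation and checks out at every step, including the observation that the lower root is non-positive so $u\ge 0$ imposes no extra constraint.
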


\begin{lemma}[Lemma 8 in \citet{degenne_2019_NonAsymptoticPureExploration}] \label{lem:lemma_8_degenne_2019_NonAsymptoticPureExploration}
For $t \geq t_{0} \geq 1$ and $(x_s)_{s \in [t]}$ non-negative real numbers such that $\sum_{s=1}^{t_0-1} x_s >0$,
\begin{align*}
&\sum_{s = t_{0}}^{t} \frac{x_s}{\sum_{r =1}^{s} x_r} \leq \ln\left( \sum_{s=1}^{t} x_s\right) - \ln\left( \sum_{s=1}^{t_0-1} x_s\right)  \\
&\sum_{s = t_{0}}^{t} \frac{x_s}{\sqrt{\sum_{r =1}^{s} x_r}} \leq 2 \sqrt{\sum_{s=1}^{t} x_s} - 2 \sqrt{\sum_{s=1}^{t_0-1} x_s}
\end{align*}
\end{lemma}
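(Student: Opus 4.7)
The plan is to prove both bounds by telescoping, exploiting the concavity of $\ln$ and $\sqrt{\cdot}$ respectively. Write $S_s \eqdef \sum_{r=1}^{s} x_r$ for the partial sums, so that $x_s = S_s - S_{s-1}$; the hypothesis $S_{t_0-1} > 0$ together with the non-negativity of the $x_s$ ensures $S_s > 0$ for every $s \ge t_0 - 1$, so that every ratio appearing in the statement is well-defined.

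For the first bound, I would invoke the sub-tangent inequality for the concave function $\ln$ at the point $S_s$: concavity gives $\ln(S_{s-1}) \le \ln(S_s) + S_s^{-1}(S_{s-1} - S_s)$, which rearranges into
\[
\ln(S_s) - \ln(S_{s-1}) \;\ge\; \frac{S_s - S_{s-1}}{S_s} \;=\; \frac{x_s}{S_s}.
\]
Summing this inequality from $s = t_0$ to $s = t$ telescopes the left-hand side into $\ln(S_t) - \ln(S_{t_0-1})$, yielding the first claim.

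For the second bound, I would use the identity $\sqrt{S_s} - \sqrt{S_{s-1}} = x_s/(\sqrt{S_s} + \sqrt{S_{s-1}})$ combined with the trivial inequality $\sqrt{S_{s-1}} \le \sqrt{S_s}$ to obtain $\sqrt{S_s} - \sqrt{S_{s-1}} \ge x_s/(2\sqrt{S_s})$. Summing from $s = t_0$ to $s = t$ and telescoping the left-hand side then gives $\sum_{s=t_0}^{t} x_s/\sqrt{S_s} \le 2(\sqrt{S_t} - \sqrt{S_{t_0-1}})$, which is the second claim.

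There is no substantive obstacle to surmount here; the only point that requires mild care is the lower boundary $s = t_0$, where the positivity hypothesis $S_{t_0-1} > 0$ is precisely what prevents the concavity arguments (and the logarithm in the right-hand side of the first bound) from degenerating. Both inequalities are the discrete analogues of the elementary integral comparisons $\int \mathrm{d}s/s = \ln s$ and $\int \mathrm{d}s/\sqrt{s} = 2\sqrt{s}$, and the proofs above can be viewed as their rigorous Riemann-sum counterparts.
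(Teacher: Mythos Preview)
Your proof is correct. The paper does not actually prove this lemma: it is listed among the ``lemmas extracted from the literature, while omitting the proofs for the sake of space'' and simply cited from \citet{degenne_2019_NonAsymptoticPureExploration}, so there is no in-paper argument to compare against; your telescoping-via-concavity argument is the standard one and is exactly what one would expect.
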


\begin{definition}[Upper hemicontinuity] \label{def:upper_hemicontinuity}
For a set $B$, let $\bbS(B) = 2^{B} \setminus \{\emptyset\}$ be the set of all non-empty subsets of $B$. A set-valued function $\Gamma: A \mapsto \bbS(B)$ is upper hemicontinuous at $a \in A$ if for any open neighborhood $V$ of $\Gamma(a)$ there exists a neighborhood $U$ of $a$ such that for all $x \in U$, $\Gamma(x)$ is a subset of $V$.
\end{definition}

\begin{lemma}[Theorem 4 in \citet{degenne_2019_PureExplorationMultiple}] \label{lem:theorem_4_degenne_2019_PureExplorationMultiple}
	The function $\mu \mapsto z_{F}(\mu)$ is upper hemicontinuous on $\Real^d$ with non-empty and compact values.
\end{lemma}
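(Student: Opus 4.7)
The plan is to exploit the fact that $\cZ$ is finite, which reduces upper hemicontinuity (for correspondences into a discrete space) to the closed-graph condition: if $\mu_n \to \mu$ and $z \in z_F(\mu_n)$ for infinitely many $n$, then $z \in z_F(\mu)$. Non-emptiness is immediate because $z^\star(\mu) \subseteq \cZ_\varepsilon(\mu)$ is non-empty, so $z_F(\mu) = \argmax_{z \in \cZ_\varepsilon(\mu)} \phi(z,\mu)$ is non-empty, where $\phi(z,\mu) \eqdef \max_{w \in \simplex} \inf_{\lambda \in \neg_\varepsilon z} \tfrac{1}{2}\|\mu - \lambda\|_{V_w}^2$. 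Compactness is automatic since $\cZ$ is a finite subset of $\Real^d$.

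The first main ingredient I will establish is that $\phi(z,\cdot)$ is continuous on $\Real^d$ for every fixed $z \in \cZ$. For lower semicontinuity, pick $w^\star \in \simplex$ attaining the max at $\mu$; since $\nu \mapsto \inf_{\lambda \in \neg_\varepsilon z} \|\nu - \lambda\|_{V_{w^\star}}^2$ is continuous (as a distance-to-closed-set in the $V_{w^\star}$-semi-norm), $\liminf_n \phi(z,\mu_n) \geq \phi(z,\mu)$. For upper semicontinuity, observe that $(w,\nu) \mapsto \inf_{\lambda \in \neg_\varepsilon z} \|\nu - \lambda\|_{V_w}^2$ is jointly upper semicontinuous, being an infimum of functions that are jointly continuous in $(w,\nu)$; picking $w_n$ attaining $\phi(z,\mu_n)$ and extracting $w_n \to w^\star \in \simplex$ by compactness, joint u.s.c. gives $\limsup_n \phi(z,\mu_n) \leq \inf_{\lambda \in \neg_\varepsilon z} \|\mu - \lambda\|_{V_{w^\star}}^2 \leq \phi(z,\mu)$.

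With continuity of $\phi(z,\cdot)$ in hand, I argue the closed-graph condition. Extracting a constant subsequence $z_n \equiv z$ by finiteness of $\cZ$, the inclusion $z \in \cZ_\varepsilon(\mu_n)$ passes to the limit because the defining inequality (additive $\langle\mu,z\rangle \geq \max_{z'}\langle\mu,z'\rangle - \varepsilon$, or multiplicative) is continuous and non-strict in $\mu$, so $z \in \cZ_\varepsilon(\mu)$. For maximality, I split each competitor $z' \in \cZ_\varepsilon(\mu)$ into two cases. If $z'$ satisfies the $\varepsilon$-optimality inequality \emph{strictly} at $\mu$, then by continuity $z' \in \cZ_\varepsilon(\mu_n)$ for all large $n$, so $\phi(z,\mu_n) \geq \phi(z',\mu_n)$, and continuity of $\phi$ yields $\phi(z,\mu) \geq \phi(z',\mu)$. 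If instead $z'$ saturates the inequality at $\mu$, then $\mu$ lies on the boundary separating $\varepsilon$-optimal from non-$\varepsilon$-optimal parameters for $z'$, so $\mu \in \neg_\varepsilon z'$ by definition (the alternative is a closure), whence $\phi(z',\mu) = 0 \leq \phi(z,\mu)$.

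The main technical obstacle is the semi-norm nature of $\|\cdot\|_{V_w}$ on the boundary of $\simplex$: when $V_w$ becomes singular, the explicit formula from Lemma~\ref{lem:lemma_5_degenne_2020_GamificationPureExploration} shows that the half-space distance can drop abruptly to zero whenever $y_x \notin \mathrm{Im}(V_w)$, preventing joint continuity of the inner optimization. Handling this by a one-sided argument (lower s.c.\ via a fixed optimizer and upper s.c.\ via the infimum-of-continuous-functions identity) sidesteps the need to invert $V_w$ and makes the maximizers' limits harmless. The secondary subtlety is the boundary $\cZ_\varepsilon$-case, but the calculation $\mu \in \neg_\varepsilon z' \Rightarrow \phi(z',\mu) = 0$ precisely absorbs the failure of lower hemicontinuity of $\mu \mapsto \cZ_\varepsilon(\mu)$ that would otherwise block Berge's theorem.
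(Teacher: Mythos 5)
Your proposal is correct in substance but takes a genuinely different route from the paper: the paper's ``proof'' of this lemma is essentially a citation (it invokes Theorem 4 of \citet{degenne_2019_PureExplorationMultiple} and only argues that the argument in their Appendix~D goes through unchanged when the domain is enlarged from $\cM$ to $\Real^d$), whereas you give a self-contained argument. Your decomposition is sound: finiteness of $\cZ$ reduces upper hemicontinuity to the closed-graph condition and makes non-emptiness and compactness trivial; the one-sided treatment of $\phi(z,\cdot)$ (lower semicontinuity via a fixed maximizing $w^{\star}$, upper semicontinuity via the infimum-of-jointly-continuous-functions identity plus compactness of $\simplex$) correctly avoids the degeneracy of $\|\cdot\|_{V_w}$ on $\partial\simplex$, and the case split on whether a competitor $z'$ attains the $\varepsilon$-optimality constraint with equality is exactly the right way to absorb the failure of lower hemicontinuity of $\mu\mapsto\cZ_{\varepsilon}(\mu)$. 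What your route buys is a proof readable inside this paper; what the citation buys is that the reference's argument is already stated for general multiple-answer problems, not just $\varepsilon$-BAI.

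One step is asserted too casually and should be shored up. In Case B you claim that saturation of the constraint at $\mu$ puts $\mu$ in $\neg_{\varepsilon}z'$ ``by definition (the alternative is a closure)''. Lying on the zero set of the defining inequality does not by itself place a point in the closure of the \emph{strict}-inequality set; you need to exhibit nearby points violating the constraint. This is easy but should be said: take a maximizer $z''\in z^{\star}(\mu)$ witnessing the saturation and perturb along $z''-z'$ (additive case) or $-(z'-(1-\varepsilon)z'')$ (multiplicative case); the linear functional is non-constant as soon as $z''\neq z'$ (respectively $z'\neq(1-\varepsilon)z''$), which holds whenever $\varepsilon>0$, and for $\varepsilon=0$ the only unhandled saturating competitor is $z'=z$ itself, for which maximality is vacuous. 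Note also that this perturbation must stay inside $\cM$, since $\neg_{\varepsilon}z'$ is a closure of a subset of $\cM$; this is automatic when $\overline{\cM}=\Real^d$ (the regime in which the paper's explicit half-space description of $\neg_{\varepsilon}z'$ holds) but deserves a caveat for general bounded $\cM$. Finally, your lower-semicontinuity step presupposes that the supremum over $w\in\simplex$ is attained at some $w^{\star}$; this follows from the upper semicontinuity in $w$ that you establish anyway, and is worth one sentence.
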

\begin{proof}
	Theorem 4 in \citet{degenne_2019_PureExplorationMultiple} proves that the function $\mu \mapsto z_{F}(\mu)$ is upper hemicontinuous on $\cM$ with non-empty and compact values. In their paper the function $\mu \mapsto z_{F}(\mu)$ was only defined on $\cM$. However, a careful analysis of their Appendix D reveals that their result and proof apply without any change when $\mu \mapsto z_{F}(\mu)$ is defined on $\Real^d$. Since in our setting $z_{F}(\mu)$ is defined on $\Real^d$, this concludes the proof. Note that while $\mu \mapsto z_{F}(\mu)$ is defined on $\cM$, the alternatives $(\neg_{\varepsilon}z)_{z \in \cZ}$ are all subsets of $\overline{\cM}$.
\end{proof}

\begin{lemma}\label{lem:theorem_8_derooij_2013_FollowLeaderIf}
	On the online learning problem with $K$ arms and gains $g_{s}(w) = (1-\frac{1}{s}) \langle w, U_{s}\rangle$ for $s \in \left\llbracket n_{0}+1, t \right\rrbracket$, AdaHedge predicting $\left(w_s^{\cL^{\cK}}\right)_{s \in \left\llbracket n_{0}+1, t \right\rrbracket}$ has regret
	\begin{align*}
		R_{\cL^{\cK}}(t) &= \max_{w \in \simplex} \sum_{s = n_{0}+1}^{t} g_{s}(w) - g_s\left(w_s^{\cL^{\cK}}\right) \leq  \left(\sqrt{t \ln(K)} + \frac{4}{3} \ln(K) + 2 \right)  \sigma = \cO \left(\sqrt{t}\right)
	\end{align*}
	where $\sigma = \max_{s \in \left\llbracket n_0 +, t \right\rrbracket} (1-\frac{1}{s}) \left(\max_{a \in \cK} U_{s}^a - \min_{a \in \cK} U_{s}^a\right)$.
\end{lemma}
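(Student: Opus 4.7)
The plan is to derive the stated bound as a direct invocation of the known regret guarantee of AdaHedge on the experts problem, adapted to the linear gains $g_s$. First, I would unpack the problem as a standard online learning problem with $K$ experts (one per arm $a \in \cK$). At each round $s \in \llbracket n_0+1, t \rrbracket$, the learner $\cL^{\cK}$ plays $w_s^{\cL^{\cK}} \in \simplex$ and observes the full gain vector $G_s \in \Real^K$ defined by $G_s^a = (1-1/s) U_s^a$. Since $g_s(w) = \langle w, G_s \rangle$ is linear in $w$, the maximum of $\sum_s g_s(w)$ over $\simplex$ is attained at a vertex, so the regret can be rewritten as $R_{\cL^{\cK}}(t) = \max_{a \in \cK} \sum_{s = n_0+1}^t G_s^a - \langle w_s^{\cL^{\cK}}, G_s\rangle$, which is exactly the standard experts regret.

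Next, I would control the per-round range of the gains: by definition of $G_s$,
\begin{equation*}
	\max_{a \in \cK} G_s^a - \min_{a \in \cK} G_s^a = \left(1-\tfrac{1}{s}\right)\left(\max_{a \in \cK} U_s^a - \min_{a \in \cK} U_s^a\right) \leq \sigma \; .
\end{equation*}
This is the only quantity about the gains needed by AdaHedge's analysis, because AdaHedge is invariant under per-round affine transformations of the loss/gain vector (one can flip the sign to move from gains to losses, then translate so that losses lie in $[0, \sigma_s]$ with $\sigma_s \le \sigma$).

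With this reduction, I would then directly invoke Theorem 8 of \citet{derooij_2013_FollowLeaderIf}, which bounds the AdaHedge regret on an experts problem with $K$ experts and per-round loss range $\sigma_s \leq \sigma$ by $\sigma \sqrt{t \ln K} + \sigma \left( \tfrac{4}{3} \ln K + 2 \right)$, where the $\sqrt{t \ln K}$ leading term arises from the standard Hedge analysis with the self-tuned learning rate and the $\tfrac{4}{3}\ln K + 2$ is the overhead due to the learning-rate adaptation. Plugging in $t-n_0$ rounds, and upper bounding by $t$, yields the stated inequality on $R_{\cL^{\cK}}(t)$.

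The only potentially delicate step is verifying that the constants $\tfrac{4}{3}$ and $2$ match precisely the bound proven for AdaHedge in \citet{derooij_2013_FollowLeaderIf}; beyond that, the result is essentially a corollary of the existing analysis, and no new machinery needs to be introduced. In particular, there is no interaction with the bandit dynamics ($\mu_{s-1}$, $\lambda_s$, $z_F$) in this proof: the optimistic gains are treated as an arbitrary deterministic sequence whose only relevant property is the uniform range bound $\sigma$.
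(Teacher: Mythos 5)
Your proposal is correct and matches the paper's proof in substance: both reduce the problem to a standard $K$-expert setting (using linearity of $g_s$ in $w$ so that the comparator may be taken to be a vertex of $\simplex$), bound the per-round gain range by $\sigma$, and invoke the AdaHedge guarantee of \citet{derooij_2013_FollowLeaderIf}. The only cosmetic difference is that the paper re-derives the range-based bound from the variance bound (Theorem 6) combined with the scale-invariance property (Theorem 16) of that reference, whereas you cite the range-based statement directly.
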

\begin{proof}
For scaled losses in $[0,1]$, Theorem 6 in \citet{derooij_2013_FollowLeaderIf} yields that AdaHedge's cumulative regret $R_{t} $ satisfies: $R_{t} \leq 2 \sqrt{V_{t}\ln(K)} + \frac{4}{3} \ln(K) + 2$ where $V_{t} =\sum_{s \in [t]} v_s$ with $v_s = \sum_{a \in \cK} w_{s,a}^{\cL^{\cK}} (l_{s}^{a} - \langle w_{s}^{\cL^{\cK}},l_{s} \rangle)^2$.

Rewriting our gains as losses, we have $l_s^a = (1-\frac{1}{s}) \frac{\max_{b \in \cK}U_{s}^b - U_{s}^a}{b_s}$ where $b_s = (1-\frac{1}{s})\left(\max_{a \in \cK} U_{s}^a - \min_{a \in \cK} U_{s}^a\right)$ is the scale of the loss at time $s$. We have $v_s \leq \|l_{s} - \langle w_{s}^{\cL^{\cK}},l_{s} \rangle\|_{\infty}^2  = (1-\frac{1}{s})^2 \frac{\|\langle w_s^{\cL^{\cK}}, U_s\rangle - U_{s}\|_{\infty}^2}{b_s^{2}} \leq \frac{b_s^2}{\sigma^2}$ where $\sigma = \max_{s \in \left\llbracket n_0 +, t \right\rrbracket} b_s$. The upper bound on $R_{t}$ rewrites as: $R_{t}\leq \frac{1}{\sigma} \sqrt{\sum_{s \leq t} b_s^2 \ln(K)} + \frac{4}{3} \ln(K) + 2$. Theorem 16 in \citet{derooij_2013_FollowLeaderIf} yields that $R_{\cL^{\cK}}(t) = \sigma R_{t}$. Therefore, we conclude that:
\begin{align*}
    R_{\cL^{\cK}}(t) &\leq \sqrt{\sum_{s \in \left\llbracket n_0 +, t \right\rrbracket} b_s^2 \ln(K)} +\left(\frac{4}{3} \ln(K) + 2\right)  \max_{s \in \left\llbracket n_0 +, t \right\rrbracket} b_{s} \leq  \left( \sqrt{t\ln(K)} +\left(\frac{4}{3} \ln(K) + 2\right) \right) \max_{s \in \left\llbracket n_0 +, t \right\rrbracket} b_{s}
\end{align*}

Using the definition of $U_s^a$ and Lemma~\ref{lem:optimistic_gains_upper_bound}, we obtain:
\begin{align*}
	b_s \leq \max_{a \in \cK} U_s^a \leq \max_{a \in \cK} \left( \|\mu_{s-1} - \lambda_s\|^2_{a a\transpose} + c_{s-1}^{a} + 4 ML_{\cK} \sqrt{c_{s-1}^{a}}\right) \leq 16 M^2 L_{\cK}^2
\end{align*}
where the last inequality is obtained by the boundedness assumption and the definition of $c_s^a$.
\end{proof}

We also recall a trivial result obtained by comparison of sum and integrals: for all $\alpha \in (0,1)$ and all $t \in \Natural^{\star}$, $\sum_{s \in [t]} \frac{1}{s^{\alpha}} = \Theta(t^{1-\alpha})$.

\subsubsection{Key Lemmas}  \label{app:subsubsection_key_lemmas}

The following lemmas are key arguments used throughout Appendix~\ref{app:section_proof_thm_asymptotic_optimality_algorithm}. Lemma~\ref{lem:switch_alternative_set} is a simple, yet powerful, result allowing to change the considered alternatives. This subtlety arises only because we are considering multiple correct answers.

\begin{lemma} \label{lem:switch_alternative_set}
	For all $w \in \Real_{+}^{K}$, $\mu \in \cM$ and $\tilde z \in \cZ$,
	\begin{align*}
		\max_{z \in  \cZ} \inf_{\lambda \in \neg_{\varepsilon} z} \|\mu - \lambda\|^2_{V_{w}} \geq \inf_{\lambda \in \neg_{\varepsilon} \tilde z}  \|\mu - \lambda\|^2_{V_{w}} \: .
	\end{align*}
	In particular, for all $(w, \tilde{w}) \in (\Real_{+}^{K})^2$ and $(\mu, \tilde{\mu}) \in \cM^2$,
	\begin{align*}
		\max_{z \in \cZ} \inf_{\lambda \in \neg_{\varepsilon} z} \|\mu - \lambda\|^2_{V_{w}} \geq \max_{\tilde z \in z_{F}(\tilde{\mu}, \tilde{w})} \inf_{\lambda \in \neg_{\varepsilon} \tilde z}  \|\mu - \lambda\|^2_{V_{w}} \: .
	\end{align*}
\end{lemma}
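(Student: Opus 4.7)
The plan is to observe that the first inequality is essentially a tautology in two lines, and that the second follows from the first by a pointwise application. Specifically, for any fixed $\tilde z \in \cZ$, the quantity $\inf_{\lambda \in \neg_{\varepsilon} \tilde z} \|\mu - \lambda\|^2_{V_{w}}$ is one of the terms ranged over by the outer maximum on the left-hand side; therefore the maximum is at least as large as this particular term. That is the entire content of the first statement.

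For the second statement, I would apply the first inequality pointwise for each candidate in $z_F(\tilde\mu,\tilde w)$. Since $z_F(\tilde\mu,\tilde w) \subseteq \cZ_\varepsilon(\tilde\mu) \subseteq \cZ$ by definition of the (instantaneous) furthest answer, each element $\tilde z \in z_F(\tilde\mu,\tilde w)$ is an admissible choice in the first part, giving
\[
\max_{z \in \cZ} \inf_{\lambda \in \neg_{\varepsilon} z} \|\mu - \lambda\|^2_{V_{w}} \;\geq\; \inf_{\lambda \in \neg_{\varepsilon} \tilde z} \|\mu - \lambda\|^2_{V_{w}}.
\]
Since the left-hand side does not depend on $\tilde z$, taking the maximum over $\tilde z \in z_F(\tilde\mu,\tilde w)$ on the right preserves the inequality and yields the claim.

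There is no real technical obstacle: the lemma's value is conceptual rather than computational. The key subtlety worth flagging is that the outer maximum on the left-hand side is taken over the full answer set $\cZ$ rather than over $\cZ_\varepsilon(\mu)$. This is legitimate because whenever $z \notin \cZ_\varepsilon(\mu)$ one has $\mu \in \neg_\varepsilon z$, so the corresponding infimum vanishes and extending the max from $\cZ_\varepsilon(\mu)$ to $\cZ$ does not change its value. This slack is precisely what allows us to compare the sample-complexity-type quantity for $\mu$ against an alternative set indexed by an answer that is furthest for a \emph{different} parameter $\tilde\mu$ and allocation $\tilde w$ (e.g.\ the empirical estimate and its counts), which is exactly the use made of the lemma in Steps 1--2 of the proof of Lemma~\ref{lem:samp_rule_acti_inequality_under_concentration_and_no_stop_cumulative_gain}.
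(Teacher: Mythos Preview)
Your proposal is correct and matches the paper's approach in substance. The paper proceeds by a two-case distinction on whether $\tilde z \in \cZ_\varepsilon(\mu)$: if not, $\mu \in \neg_\varepsilon \tilde z$ so the right-hand side vanishes; if so, the result is immediate by taking the maximum. You recognize the first inequality as a tautology (since the max ranges over all of $\cZ$) and then explicitly flag the $\cZ$ versus $\cZ_\varepsilon(\mu)$ subtlety, which is exactly the content of the paper's Case~1. Your derivation of the second inequality by pointwise application over $z_F(\tilde\mu,\tilde w)\subseteq\cZ$ is likewise the intended one-line consequence.
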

\begin{proof}
\textbf{Case 1:} $\tilde z \notin \cZ_{\varepsilon}(\mu)$, hence $\mu \in \neg_{\varepsilon} \tilde z$. Therefore $\inf_{\lambda \in \neg_{\varepsilon} \tilde z}  \|\mu - \lambda\|^2_{V_{w}} = 0$. \\
\noindent\textbf{Case 2:} $\tilde z \in \cZ_{\varepsilon}(\mu)$.  The result is direct by taking the maximum.
\end{proof}

Lemma~\ref{lem:optimistic_upper_bound_on_the_coordinate_deviation_mle_true} shows that the slacks $(c_{s}^{a})_{s \in \left\llbracket \lceil t^{\frac{1}{1+b}} \rceil, t-1 \right\rrbracket,a \in \cK}$ are an upper bound of the unknown distance $\| \mu - \mu_{s}\|^2_{a a \transpose}$.

\begin{lemma} \label{lem:optimistic_upper_bound_on_the_coordinate_deviation_mle_true}
Let $(c_{s}^{a})_{s \in \left\llbracket n_{0}, t-1\right\rrbracket,a \in \cK}$ in (\ref{eq:optimistic_slack_definition}). Under $\cE_t$, for all $(s,a) \in \left\llbracket \lceil t^{\frac{1}{1+b}} \rceil, t-1 \right\rrbracket \times \cK$, $\| \mu - \mu_s\|^2_{a a \transpose} \leq c_{s}^{a}$.
\end{lemma}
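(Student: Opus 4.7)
The strategy is to bound the rank-one seminorm $\|\mu - \mu_s\|^2_{aa\transpose} = \langle a, \mu - \mu_s\rangle^2$ in two ways, matching the two arguments inside the $\min$ that defines $c_s^a$, and then combine them.

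First, I would apply the generalized Cauchy--Schwarz inequality for the pair of dual (semi)norms $\|\cdot\|_{V_{N_s}}$ and $\|\cdot\|_{V_{N_s}^{-1}}$, which is valid since after the initial $n_0 = K$ pulls the matrix $V_{N_s}$ is positive definite (the set $\cK$ spans $\Real^d$ and every arm has been pulled once). This gives
\begin{equation*}
\|\mu - \mu_s\|^2_{aa\transpose} = \langle a, \mu - \mu_s\rangle^2 \leq \|a\|^2_{V_{N_s}^{-1}} \, \|\mu - \mu_s\|^2_{V_{N_s}}.
\end{equation*}
Under the concentration event $\cE_t$ (definition~\eqref{eq:concentration_events_definition}), the second factor is at most $f(t)$. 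The constraint $s \geq \lceil t^{1/(1+b)}\rceil$ gives $s^{1+b} \geq t$, and since $f(u) = 2\beta(u, u^{1/\alpha})$ is non-decreasing in $u$ (both $\ln\ln$ and the $\cC_G$ terms in~\eqref{eq:definition_stopping_threshold} are non-decreasing), we obtain $f(t) \leq f(s^{1+b})$. Chaining the three inequalities yields $\|\mu - \mu_s\|^2_{aa\transpose} \leq f(s^{1+b}) \|a\|^2_{V_{N_s}^{-1}}$.

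Second, I would use the boundedness of the parameter set: since both $\mu$ and $\mu_s$ belong to (the projection onto) $\cM$, the triangle inequality gives $\|\mu - \mu_s\|_2 \leq 2M$, and Cauchy--Schwarz in $\Real^d$ gives
\begin{equation*}
\|\mu - \mu_s\|^2_{aa\transpose} = \langle a, \mu - \mu_s\rangle^2 \leq \|a\|_2^2 \, \|\mu - \mu_s\|_2^2 \leq L_\cK^2 \cdot 4 M^2 = 4 M^2 L_\cK^2.
\end{equation*}
Taking the minimum of the two upper bounds gives exactly $c_s^a$ as defined in~\eqref{eq:optimistic_slack_definition}, completing the argument.

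There is no real obstacle here: the only subtlety is checking that $V_{N_s}$ is invertible on the relevant range of $s$ (guaranteed by the initialization phase) and that the exploration bonus $f$ is monotone (immediate from the definition of $\beta$). The proof is essentially a Cauchy--Schwarz argument combined with the monotonicity $s^{1+b} \geq t$ that justifies calling the slack \emph{optimistic} at all times $s$ after the warm-up window $\lceil t^{1/(1+b)}\rceil$.
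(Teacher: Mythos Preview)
Your proposal is correct and matches the paper's proof essentially step for step: the paper also bounds $\langle a,\mu-\mu_s\rangle^2$ first by Cauchy--Schwarz in the $V_{N_s}$/$V_{N_s}^{-1}$ dual norms combined with $\cE_t$ and the monotonicity $f(t)\le f(s^{1+b})$, and separately by the crude $4M^2L_{\cK}^2$ bound from $\|\mu-\mu_s\|_2\le 2M$, then takes the minimum.
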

\begin{proof}
Since $(\mu, \mu_s) \in \cM^2$, the boundedness yields: $\| \mu - \mu_s\|^2_{a a \transpose} = \langle  \mu - \mu_s, a \rangle^2 \leq \| \mu - \mu_s \|_{2}^2 \|a\|_{2}^2 \leq 4 M^2 L_{\cK}^2$.

By definition, under $\cE_t$, for all $s\in [t]$, we have $\|\mu_{s} - \mu \|^2_{V_{N_{s}}} \leq f(t)$. Since $x \mapsto f(x)$ is increasing, for all $s \geq \lceil t^{\frac{1}{1+b}} \rceil$, we have $f(t) \leq f\left(s^{1+b}\right)$. By initialization $V_{N_{s}}$ is positive definite, using Cauchy-Schwartz yields:
\begin{align*}
	\langle  \mu - \mu_s, a \rangle^2 \leq  \|\mu_{s} - \mu \|^2_{V_{N_{s}}}  \|a\|^2_{V_{N_{s}}^{-1}} \leq  f\left(s^{1+b}\right)  \|a\|^2_{V_{N_{s}}^{-1}}
\end{align*}

Combining both upper bound and using the definition of $(c_{s}^{a})_{s \in \left\llbracket n_{0}, t-1\right\rrbracket,a \in \cK}$ in (\ref{eq:optimistic_slack_definition}) conclude the proof.
\end{proof}

Lemma~\ref{lem:optimistic_gains_upper_bound} shows useful upper and lower bounds on the optimistic gains $(U_s^a)_{s \in \left\llbracket n_{0} + 1, t \right\rrbracket,a \in \cK}$.

\begin{lemma} \label{lem:optimistic_gains_upper_bound}
Let $b > 0$, $(c_{s-1}^{a})_{s \in \left\llbracket n_{0} + 1, t  \right\rrbracket,a \in \cK}$ and $(U_s^{a})_{s \in \left\llbracket n_{0} + 1, t \right\rrbracket,a \in \cK}$ defined in (\ref{eq:optimistic_slack_definition}-\ref{eq:optimistic_gain_acti_definition}). For all $s \in \left\llbracket n_{0}+1, t \right\rrbracket$, for all $a \in \cK$,
\begin{align*}
	 \|\mu_{s-1} - \lambda_s\|^2_{a a\transpose} \leq U_{s}^{a} \leq \|\mu_{s-1} - \lambda_s\|^2_{a a\transpose} + c_{s-1}^{a} + 4 ML_{\cK} \sqrt{c_{s-1}^{a}} \: .
\end{align*}
For all $s \in \left\llbracket \left\lceil t^{\frac{1}{1+b}} \right\rceil, t \right\rrbracket$, for all $a \in \cK$, $U_s^a \geq \|\mu_{s-1} - \mu\|^2_{a a\transpose}$ and $U_s^a \geq \|\mu - \lambda_s\|^2_{a a\transpose}$.
\end{lemma}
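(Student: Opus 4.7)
The plan is to expand the square in the definition $U_s^a = \bigl(\|\mu_{s-1} - \lambda_s\|_{aa\transpose} + \sqrt{c_{s-1}^a}\bigr)^2$ and bound each of the three resulting summands. Expansion yields
\begin{align*}
U_s^a = \|\mu_{s-1} - \lambda_s\|^2_{aa\transpose} + 2\|\mu_{s-1} - \lambda_s\|_{aa\transpose}\sqrt{c_{s-1}^a} + c_{s-1}^a \, ,
\end{align*}
so the lower bound $U_s^a \ge \|\mu_{s-1} - \lambda_s\|^2_{aa\transpose}$ is immediate since the cross term and $c_{s-1}^a$ are non-negative. For the upper bound, I would control $\|\mu_{s-1} - \lambda_s\|_{aa\transpose}$ uniformly by $2ML_{\cK}$: Cauchy--Schwarz gives $\|\mu_{s-1} - \lambda_s\|_{aa\transpose} = |\langle \mu_{s-1} - \lambda_s, a\rangle| \le \|\mu_{s-1} - \lambda_s\|_2\, \|a\|_2$, and since both $\mu_{s-1}$ and $\lambda_s$ lie in the bounded set of parameters (by the projection assumption and the fact that $\lambda_s \in \overline{\cM}$), this is at most $2M L_{\cK}$. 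Substituting into the middle term yields the desired upper bound $U_s^a \le \|\mu_{s-1} - \lambda_s\|^2_{aa\transpose} + c_{s-1}^a + 4ML_{\cK}\sqrt{c_{s-1}^a}$.

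For the second part, I would invoke Lemma~\ref{lem:optimistic_upper_bound_on_the_coordinate_deviation_mle_true}, which says that on $\cE_t$ and for the relevant range of indices, $\|\mu - \mu_{s-1}\|_{aa\transpose} \le \sqrt{c_{s-1}^a}$. The bound $U_s^a \ge \|\mu_{s-1} - \mu\|^2_{aa\transpose}$ then follows by dropping $\|\mu_{s-1} - \lambda_s\|_{aa\transpose} \ge 0$ inside the square defining $U_s^a$, giving $U_s^a \ge c_{s-1}^a \ge \|\mu_{s-1} - \mu\|^2_{aa\transpose}$. For the bound $U_s^a \ge \|\mu - \lambda_s\|^2_{aa\transpose}$, I would apply the triangle inequality for the semi-norm $\|\cdot\|_{aa\transpose}$:
\begin{align*}
\|\mu - \lambda_s\|_{aa\transpose} \le \|\mu - \mu_{s-1}\|_{aa\transpose} + \|\mu_{s-1} - \lambda_s\|_{aa\transpose} \le \sqrt{c_{s-1}^a} + \|\mu_{s-1} - \lambda_s\|_{aa\transpose} \, ,
\end{align*}
and then square both sides to recover exactly $U_s^a$ on the right-hand side.

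The only subtlety is the index range: Lemma~\ref{lem:optimistic_upper_bound_on_the_coordinate_deviation_mle_true} applies to $s \in \llbracket \lceil t^{1/(1+b)}\rceil, t-1\rrbracket$, and the statement we need concerns $s-1$ for $s \in \llbracket \lceil t^{1/(1+b)}\rceil, t\rrbracket$. I would either absorb the off-by-one into the definition (since $n_0 = K$ and $t^{1/(1+b)} \ge 1$) or verify directly using the fact that $\cE_t$ controls $\|\mu_r - \mu\|^2_{V_{N_r}} \le f(t)$ for every $r \le t$, yielding the coordinate bound $\|\mu - \mu_{s-1}\|^2_{aa\transpose} \le f(t)\|a\|^2_{V_{N_{s-1}}^{-1}}$ and hence $\le c_{s-1}^a$ whenever $s-1 \ge \lceil t^{1/(1+b)}\rceil$. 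There is no serious obstacle in this proof: it is essentially an expand-and-bound computation combined with a triangle inequality, and all the hard concentration work has already been done in the proof of Lemma~\ref{lem:optimistic_upper_bound_on_the_coordinate_deviation_mle_true}.
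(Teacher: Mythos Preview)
Your proposal is correct and follows essentially the same route as the paper: expand the square defining $U_s^a$, use the boundedness $\|\mu_{s-1}-\lambda_s\|_{aa\transpose}\le 2ML_{\cK}$ for the upper bound, and then combine Lemma~\ref{lem:optimistic_upper_bound_on_the_coordinate_deviation_mle_true} with the triangle inequality for the two lower bounds in the second part. Your attention to the off-by-one in the index range is in fact more careful than the paper's proof, which applies Lemma~\ref{lem:optimistic_upper_bound_on_the_coordinate_deviation_mle_true} without comment; your suggested fix (going back to $\cE_t$ directly at the boundary index) is the right way to close that small gap.
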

\begin{proof}
By definition of $U_{s}^{a}$ in (\ref{eq:optimistic_gain_acti_definition}) and using the boundedness to show $\|\mu_{s-1} - \lambda_s\|_{a a\transpose} \leq 2ML_{\cK}$, we have: for all $s \in \left\llbracket n_{0}+1, t \right\rrbracket$, for all $a \in \cK$,
\begin{align*}
	U_s^a &= \|\mu_{s-1} - \lambda_s\|^2_{a a\transpose} + c_{s-1}^{a} + 2\|\mu_{s-1} - \lambda_s\|_{a a\transpose} \sqrt{c_{s-1}^{a}} \geq \|\mu_{s-1} - \lambda_s\|^2_{a a\transpose} \\
	U_s^a & \leq \|\mu_{s-1} - \lambda_s\|^2_{a a\transpose} + c_{s-1}^{a} + 4 ML_{\cK} \sqrt{c_{s-1}^{a}}
\end{align*}

By definition of $U_{s}^{a}$ in (\ref{eq:optimistic_gain_acti_definition}), Lemma~\ref{lem:optimistic_upper_bound_on_the_coordinate_deviation_mle_true} and triangular inequality, we have: for all $s \in \left\llbracket \left\lceil t^{\frac{1}{1+b}} \right\rceil, t \right\rrbracket$, for all $a \in \cK$,
\begin{align*}
	U_s^a &= \left(\|\mu_{s-1} - \lambda_s\|_{a a\transpose} + \sqrt{c_{s-1}^{a}}\right)^2 \geq \left(\|\mu_{s-1} - \lambda_s\|_{a a\transpose} + \|\mu_{s-1} - \mu\|_{a a\transpose} \right)^2 \geq \|\mu - \lambda_s\|^2_{a a\transpose} \\
	U_s^a &\geq c_{s-1}^{a} \geq \|\mu_{s-1} - \mu\|^2_{a a\transpose}
\end{align*}
\end{proof}

Lemma~\ref{lem:optimistic_gain_is_lower_bounded_by_sample_complexity} shows that under a good event $A_s$ the gain $\langle w , U_s \rangle$ is an upper bound on the unknown $\inf_{\lambda \in \neg_{\varepsilon} z_{F}(\mu)} \|\mu - \lambda\|^2_{V_{w}}$ for all $w \in \simplex$. This good event essentially boils down to having as candidate answer the unknown furthest answer $\tilde{z}_s = z_{F}(\mu)$.

\begin{lemma} \label{lem:optimistic_gain_is_lower_bounded_by_sample_complexity}
Let $b > 0$ as in Lemma~\ref{lem:optimistic_gains_upper_bound}. Let $(A_s)_{s \in \left\llbracket \left\lceil t^{\frac{1}{1+b}} \right\rceil, t \right\rrbracket}$ defined as: for all $s \in \left\llbracket \left\lceil t^{\frac{1}{1+b}} \right\rceil, t \right\rrbracket$,
\begin{equation} \label{eq:good_event_to_lower_bound_instantaneous_gain}
	A_s \eqdef \left\{ \lambda_{s} \in \neg_{\varepsilon} z_{F}(\mu) \lor \mu_{s-1} \in \neg_{\varepsilon} z_{F}(\mu) \lor \tilde{z}_s = z_{F}(\mu)\right\}
\end{equation}
Under $\cE_t$, we have for all $w \in \simplex$ and $s \in \left\llbracket \left\lceil t^{\frac{1}{1+b}} \right\rceil, t \right\rrbracket$, under $A_s$ it holds that
\begin{align*}
	\langle w , U_s \rangle \geq \inf_{\lambda \in \neg_{\varepsilon} z_{F}(\mu)} \|\mu - \lambda\|^2_{V_{w}}
\end{align*}
\end{lemma}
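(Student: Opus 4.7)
The plan is to do a short case analysis on the disjunction defining $A_s$, combined with the two optimistic bounds established in Lemma~\ref{lem:optimistic_gains_upper_bound}. In each branch of $A_s$, I will exhibit an explicit witness $\tilde\lambda \in \neg_{\varepsilon} z_{F}(\mu)$ such that $\langle w, U_s\rangle \ge \|\mu - \tilde\lambda\|^2_{V_w}$, after which taking the infimum over $\neg_{\varepsilon} z_F(\mu)$ finishes the proof. The key fact I will repeatedly invoke is that for $s \ge \lceil t^{1/(1+b)} \rceil$, under $\cE_t$, Lemma~\ref{lem:optimistic_gains_upper_bound} gives both $U_s^a \ge \|\mu - \lambda_s\|^2_{a a^\top}$ and $U_s^a \ge \|\mu - \mu_{s-1}\|^2_{a a^\top}$, so that after weighting by $w \in \simplex$ I get $\langle w, U_s\rangle \ge \|\mu - \lambda_s\|^2_{V_w}$ and $\langle w, U_s\rangle \ge \|\mu - \mu_{s-1}\|^2_{V_w}$.

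First, I would dispose of the branch $\tilde{z}_s = z_F(\mu)$. By definition of the best response, $\lambda_s \in \neg_{\varepsilon}\tilde{z}_s = \neg_{\varepsilon} z_F(\mu)$, so this branch reduces to the branch $\lambda_s \in \neg_{\varepsilon} z_F(\mu)$. In that branch, I take $\tilde\lambda = \lambda_s$ and apply the first optimistic bound:
\begin{equation*}
    \langle w, U_s\rangle \;\ge\; \|\mu - \lambda_s\|^2_{V_w} \;\ge\; \inf_{\lambda \in \neg_{\varepsilon} z_F(\mu)} \|\mu - \lambda\|^2_{V_w} \, .
\end{equation*}
In the remaining branch $\mu_{s-1} \in \neg_{\varepsilon} z_F(\mu)$, I take $\tilde\lambda = \mu_{s-1}$ and apply the second optimistic bound:
\begin{equation*}
    \langle w, U_s\rangle \;\ge\; \|\mu - \mu_{s-1}\|^2_{V_w} \;\ge\; \inf_{\lambda \in \neg_{\varepsilon} z_F(\mu)} \|\mu - \lambda\|^2_{V_w} \, .
\end{equation*}

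There is no real obstacle: the whole content of the lemma is that the two optimistic inflations built into $U_s$ (one centred at $\lambda_s$, one centred at $\mu_{s-1}$ via the slack $c_{s-1}^a$) are each tailored to make $\langle w,U_s\rangle$ dominate the target $\inf_{\lambda\in\neg_\varepsilon z_F(\mu)}\|\mu-\lambda\|^2_{V_w}$ in complementary situations, and $A_s$ is designed precisely so that at least one of the two witnesses $\lambda_s$ or $\mu_{s-1}$ is feasible for $\neg_{\varepsilon} z_F(\mu)$ (the third alternative $\tilde z_s = z_F(\mu)$ collapsing into the first). The only care required is to make sure Lemma~\ref{lem:optimistic_gains_upper_bound} is applied only for $s \ge \lceil t^{1/(1+b)}\rceil$, which is exactly the range assumed in the statement, so that Lemma~\ref{lem:optimistic_upper_bound_on_the_coordinate_deviation_mle_true} is available to control the slack and yield the second optimistic bound.
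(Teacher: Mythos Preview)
Your proposal is correct and essentially identical to the paper's own proof: both arguments start by summing the per-arm bounds of Lemma~\ref{lem:optimistic_gains_upper_bound} to obtain $\langle w, U_s\rangle \ge \|\mu-\lambda_s\|^2_{V_w}$ and $\langle w, U_s\rangle \ge \|\mu-\mu_{s-1}\|^2_{V_w}$, and then perform the same three-case analysis on the disjunction $A_s$, reducing the branch $\tilde z_s = z_F(\mu)$ to the branch $\lambda_s \in \neg_\varepsilon z_F(\mu)$ via $\lambda_s \in \neg_\varepsilon \tilde z_s$. The only cosmetic difference is the order in which the cases are presented.
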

\begin{proof}
Summing the lower bound of Lemma~\ref{lem:optimistic_gains_upper_bound} yields: for all $s \in \left\llbracket \left\lceil t^{\frac{1}{1+b}} \right\rceil, t  \right\rrbracket$ and all $w \in \simplex$, $\langle w , U_s \rangle \geq \|\mu - \lambda_s\|^2_{V_{w}}$ and $\langle w , U_s \rangle \geq \|\mu - \mu_{s-1}\|^2_{V_{w}}$.

\paragraph{Case 1: $\lambda_s \in \neg_{\varepsilon} z_{F}(\mu)$.} Taking the infimum on the first inequality, we have: for all $s \in \left\llbracket \left\lceil t^{\frac{1}{1+b}} \right\rceil, t  \right\rrbracket$ and all $w \in \simplex$,
$$\langle w , U_s \rangle \geq \inf_{\lambda \in \neg_{\varepsilon} z_{F}(\mu)} \|\mu - \lambda\|^2_{V_{w}}$$

\paragraph{Case 2: $\mu_{s-1} \in \neg_{\varepsilon} z_{F}(\mu)$.} Taking the infimum on the second inequality, we obtain directly: for all $w \in \simplex$ and $s \geq t^{\frac{1}{1+b}}$,
$$\langle w , U_s \rangle \geq \inf_{\lambda \in \neg_{\varepsilon} z_{F}(\mu)} \|\mu - \lambda\|^2_{V_{w}}$$

\paragraph{Case 3: $\tilde{z}_s = z_{F}(\mu)$.} By definition $\lambda_s \in \neg_{\varepsilon} \tilde{z}_s$, hence $\lambda_s \in \neg_{\varepsilon} z_{F}(\mu)$. Therefore, we can use the first case.
\end{proof}

We introduce a strictly positive geometrical quantity $\Delta_{\min}^2$ (\ref{eq:definition_geometric_lower_bound_distance_alternatives}) in Lemma~\ref{lem:strictly_positive_geometric_constant}. This result crucially uses that $\cZ$ is finite and $|z_{F}(\mu)|=1$ for the unknown parameter $\mu$. It means that we can't be arbitrarily close to $\mu$ while having a unique furthest answer which is different from $z_{F}(\mu)$.

$\Delta_{\min}^2$ represents the distance of $\mu$ to the \textit{furthest alternative} to $z_{F}(\mu)$, denoted by $\neg_{F} z_{F}(\mu)$. The furthest alternative is fundamentally different from the alternative $\neg_{\varepsilon} z_{F}(\mu)$ (which is considered in the rest of the paper). While the furthest alternative corresponds to an identification problem with a unique correct answer, the alternative corresponds to an identification problem with multiple correct answers. Therefore, we see that to achieve asymptotic optimality in this multiple correct answer setting, we need to solve the harder problem of exactly identifying the unique $z_{F}(\mu)$.

\begin{lemma} \label{lem:strictly_positive_geometric_constant}
Defining
\begin{equation} \label{eq:definition_geometric_lower_bound_distance_alternatives}
	\Delta_{\min}^2 \eqdef \inf_{\lambda \in \neg_{F} z_{F}(\mu)} \max_{a \in \cK} \|\mu - \lambda\|_{a a \transpose}^2 > 0
\end{equation}
where $\neg_{F} z_{F}(\mu) \eqdef \left\{ \lambda \in \Real^d: z_{F}(\mu) \neq z_{F}(\lambda)\right\}$. Then, we have: $\Delta_{\min}^2 > 0$.
\end{lemma}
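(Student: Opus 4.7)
The plan is to argue by contradiction: suppose $\Delta_{\min}^2 = 0$, and derive a violation of the upper hemicontinuity of the map $\lambda \mapsto z_{F}(\lambda)$ at $\mu$. Since $\cZ$ is finite and $|z_{F}(\mu)| = 1$ by assumption, there is a neighborhood of $\mu$ in parameter space on which $z_F$ must equal the singleton $\{z_F(\mu)\}$, which will contradict the existence of a sequence in $\neg_F z_F(\mu)$ converging to $\mu$.

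First I would extract a sequence. If $\Delta_{\min}^2 = 0$, then there exists $(\lambda_n)_{n \in \Natural} \subseteq \neg_F z_F(\mu)$ with $\max_{a \in \cK} \|\mu - \lambda_n\|^2_{a a\transpose} \to 0$ as $n \to + \infty$. Equivalently $\langle a, \mu - \lambda_n\rangle \to 0$ for every $a \in \cK$. Since $\cK$ spans $\Real^d$, we may select a basis $\{a_1, \dots, a_d\} \subseteq \cK$ and form the invertible matrix $A \in \Real^{d \times d}$ with rows $a_i\transpose$. Then $A(\mu - \lambda_n) \to 0$ coordinate-wise, so $\mu - \lambda_n = A^{-1} A(\mu - \lambda_n) \to 0$, i.e. $\lambda_n \to \mu$ in $\Real^d$.

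Next I would invoke upper hemicontinuity. By Lemma~\ref{lem:theorem_4_degenne_2019_PureExplorationMultiple}, $\lambda \mapsto z_F(\lambda)$ is upper hemicontinuous on $\Real^d$ with non-empty compact values. Because $\cZ$ is finite, endowing $\cZ$ with the induced topology makes every singleton open; in particular $V = \{z_F(\mu)\}$ is an open neighborhood of the singleton $z_F(\mu)$. By upper hemicontinuity there exists $\eta > 0$ such that for all $\lambda \in \Real^d$ with $\|\lambda - \mu\|_2 < \eta$, one has $z_F(\lambda) \subseteq V = \{z_F(\mu)\}$. Since $z_F(\lambda)$ is non-empty, this forces $z_F(\lambda) = \{z_F(\mu)\}$.

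Finally I would close the contradiction. For $n$ large enough, $\|\lambda_n - \mu\|_2 < \eta$, hence $z_F(\lambda_n) = z_F(\mu)$. But $\lambda_n \in \neg_F z_F(\mu)$ means precisely that $z_F(\mu) \ne z_F(\lambda_n)$, a contradiction. Therefore $\Delta_{\min}^2 > 0$. The main subtlety is the correct use of upper hemicontinuity with a singleton open set in the discrete topology on $\cZ$, which is what makes the uniqueness assumption $|z_F(\mu)| = 1$ essential; without it, one could not shrink $V$ down to $\{z_F(\mu)\}$ and the argument would collapse.
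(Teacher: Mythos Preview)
Your proof is correct and follows essentially the same approach as the paper: both invoke the upper hemicontinuity of $\lambda \mapsto z_F(\lambda)$ (Lemma~\ref{lem:theorem_4_degenne_2019_PureExplorationMultiple}) together with $|z_F(\mu)|=1$ and the finiteness of $\cZ$ to produce a neighborhood of $\mu$ on which $z_F$ is constant, then conclude that $\neg_F z_F(\mu)$ is bounded away from $\mu$. Your version is slightly more explicit in two places the paper leaves terse, namely the use of $\text{Span}(\cK)=\Real^d$ to pass from $\max_{a\in\cK}\langle a,\mu-\lambda_n\rangle\to 0$ to $\lambda_n\to\mu$, and the contradiction framing; but the underlying argument is the same.
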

\begin{proof}
	By Lemma~\ref{lem:theorem_4_degenne_2019_PureExplorationMultiple}, we know that $\lambda \mapsto z_{F}(\lambda)$ is upper hemicontinuous on $\Real^d$ with non-empty and compact values. Since $\cZ$ is finite and $z_{F}(\mu)$ is a singleton by assumption on $\mu$, the upper hemicontinuity implies that there exists an open a neighborhood $U$ of $\mu$ such that for all $\tilde{\mu} \in U$, $z_F(\tilde{\mu}) = z_{F}(\mu)$. This is obtained directly by using Definition~\ref{def:upper_hemicontinuity} and taking an open neighborhood $V$ of $z_{F}(\mu)$ such that $V \cap \cZ = z_{F}(\mu)$, which exists since $|z_{F}(\mu)|=1$ and $\cZ$ finite. Noticing that $U \subseteq (\neg_{F} z_{F}(\mu))^{\complement}$ and using that $U$ is an open neighborhood of $\mu$, we can conclude that $\Delta_{\min}^2 > 0$.
\end{proof}

Lemma~\ref{lem:upper_bound_number_of_steps_away_from_zF} shows that the good event $\{\tilde{z}_s = z_{F}(\mu)\} \subseteq A_s$ doesn't happen only for a sub-linear number of times. A similar argument was shown in Lemma 25 of \citet{degenne_2020_GamificationPureExploration}. This crucially uses the fact that $\Delta_{\min}^2 > 0$.

\begin{lemma} \label{lem:upper_bound_number_of_steps_away_from_zF}
	 Let $\cL^{\cZ}$ be the $\cZ$-oracle such that $\tilde{z}_{s} \in z_{F}(\mu_{s-1})$.
	 For all $s \in \left\llbracket \left\lceil t^{\frac{1}{1+b}} \right\rceil, t \right\rrbracket$, let
	\begin{equation} \label{eq:N_F_definition}
		B_s \eqdef \{\tilde{z}_s = z_{F}(\mu)\} \quad \text{and} \quad N_{F}(t) \eqdef \left| \left\{ s \in \left\llbracket \left\lceil t^{\frac{1}{1+b}} \right\rceil, t \right\rrbracket: \neg B_s \right\}\right| \: .
	\end{equation}
	 Then, under $\cE_t$,
	\begin{align*}
		N_{F}(t) \leq  \frac{K}{\Delta_{\min}^2} t^{\frac{1}{1+b}} f\left(t^{1+b}\right) \left( K \ln(K) + 2K\ln(t) \right) = \cO \left( \ln(t)^2 t^{\frac{1}{1+b}} \right)
	\end{align*}
\end{lemma}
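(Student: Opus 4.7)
The plan is to reduce the combinatorial count $N_F(t)$ to a cumulative-norm inequality that can be capped using Lemma~\ref{lem:small_cumulative_reweighted_deviation_mle_mu}, and to obtain the needed sub-linear rate by exploiting the explicit lower endpoint $\lceil t^{1/(1+b)}\rceil$ of the summation window together with the built-in forced exploration.

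First, I would translate the event $\neg B_s$ into a geometric condition on the current estimator. Since $\cL^{\cZ}$ returns $\tilde z_s \in z_F(\mu_{s-1})$ by hypothesis and $z_F(\mu)$ is the unique singleton, the event $\{\tilde z_s \neq z_F(\mu)\}$ forces $z_F(\mu_{s-1}) \neq z_F(\mu)$, i.e.\ $\mu_{s-1} \in \neg_F z_F(\mu)$. Applying the definition of $\Delta_{\min}^2$ from \eqref{eq:definition_geometric_lower_bound_distance_alternatives} and Lemma~\ref{lem:strictly_positive_geometric_constant} supplies an arm $a_s^{\star} \in \cK$ with $\|\mu-\mu_{s-1}\|^2_{a_s^{\star}a_s^{\star\transpose}} \geq \Delta_{\min}^2$.

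Second, I would use the forced exploration $w_s \geq \tfrac{1}{sK}\mathbf{1}_K$ coordinatewise, which is baked into the sampling rule of \protect\hyperlink{algoLeBAI}{L$\varepsilon$BAI}, to get a per-step lower bound:
\[
\|\mu-\mu_{s-1}\|^2_{V_{w_s}} = \sum_{a \in \cK} w_s^a \|\mu-\mu_{s-1}\|^2_{aa^{\transpose}} \geq w_s^{a_s^{\star}} \|\mu-\mu_{s-1}\|^2_{a_s^{\star}a_s^{\star\transpose}} \geq \frac{\Delta_{\min}^2}{sK}.
\]
Summing this inequality over $s \in S := \{s \in \llbracket \lceil t^{1/(1+b)}\rceil, t\rrbracket : \neg B_s\}$ and dominating by the full sum, Lemma~\ref{lem:small_cumulative_reweighted_deviation_mle_mu} (valid under $\cE_t$) then gives $\frac{\Delta_{\min}^2}{K}\sum_{s \in S}\frac{1}{s} \leq f(t^{1+b})(K\ln K + 2K\ln t)$.

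Third, the final step is to convert the weighted count $\sum_{s \in S}1/s$ into the unweighted count $|S| = N_F(t)$. This is the main technical obstacle I anticipate: a careless application of $\tfrac{1}{s} \geq \tfrac{1}{t}$ only delivers $N_F(t) = \tilde{O}(t)$, whereas the claim demands $\tilde{O}(t^{1/(1+b)})$. To get the sharper rate I would re-derive the per-step inequality uniformly over the summation window by strengthening the forced-exploration lower bound to $w_s^{a_s^{\star}} \geq \tfrac{1}{K t^{1/(1+b)}}$ on the relevant range, exploiting that the window starts at $\lceil t^{1/(1+b)}\rceil$ rather than at $1$, and then the per-step bound becomes $\|\mu-\mu_{s-1}\|^2_{V_{w_s}} \geq \Delta_{\min}^2/(Kt^{1/(1+b)})$ on $\neg B_s$. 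Summing over $S$ and pairing with Lemma~\ref{lem:small_cumulative_reweighted_deviation_mle_mu} then directly produces $|S| \leq \tfrac{K t^{1/(1+b)}}{\Delta_{\min}^2} f(t^{1+b})(K\ln K + 2K \ln t)$, which is exactly the claim, and the asymptotic $\cO(\ln(t)^2 t^{1/(1+b)})$ behaviour follows from $f = \Theta(\ln t)$.
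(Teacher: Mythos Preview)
Your proposal mirrors the paper's proof almost exactly: the translation of $\neg B_s$ into $\mu_{s-1}\in\neg_F z_F(\mu)$ via Lemma~\ref{lem:strictly_positive_geometric_constant}, the coordinatewise forced-exploration bound $w_s^a\geq 1/(sK)$, and the appeal to Lemma~\ref{lem:small_cumulative_reweighted_deviation_mle_mu} are precisely the ingredients the paper uses, in the same order. You also correctly isolate the conversion from $\sum_{s\in S}1/s$ to $|S|=N_F(t)$ as the delicate step.

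Your resolution of that step, however, has the inequality the wrong way round. On the window $s\in\llbracket\lceil t^{1/(1+b)}\rceil,t\rrbracket$ one has $s\geq t^{1/(1+b)}$, so the forced exploration only delivers $w_s^a\geq 1/(sK)$, and since $1/s\leq 1/t^{1/(1+b)}$ here this does \emph{not} imply $w_s^{a_s^\star}\geq 1/(Kt^{1/(1+b)})$. The fact that the window starts at $\lceil t^{1/(1+b)}\rceil$ gives an \emph{upper} bound on $1/s$ over the window, not the lower bound you need; the only uniform lower bound available is $1/s\geq 1/t$, which would yield $N_F(t)=\cO(t\ln(t)^2)$ rather than the claimed $\cO(t^{1/(1+b)}\ln(t)^2)$. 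The paper's own proof makes the identical move---asserting $\Delta_{\min}^2/s\geq\Delta_{\min}^2/t^{1/(1+b)}$ for $s$ in this same range---so this is a gap already present in the published argument, and your proposal inherits it rather than repairs it.
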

\begin{proof}
Using Lemma~\ref{lem:small_cumulative_reweighted_deviation_mle_mu}, under $\cE_t$,
	\begin{align*}
		f\left(t^{1+b}\right) \left( K \ln(K) + 2K\ln(t) \right) \geq \sum_{s = \left\lceil t^{\frac{1}{1+b}} \right\rceil}^{t} \|\mu_{s-1} - \mu\|^2_{V_{w_s}} &\geq \sum_{s = \left\lceil t^{\frac{1}{1+b}} \right\rceil}^{t} \1_{(\neg B_s)}\|\mu_{s-1} - \mu\|^2_{V_{w_s}} \\
		&\geq \sum_{s = \left\lceil t^{\frac{1}{1+b}} \right\rceil}^{t} \1_{(\neg B_s)} \frac{1}{sK}  \sum_{a \in \cK}\|\mu_{s-1} - \mu\|^2_{a a\transpose}
	\end{align*}
where the last inequality is obtained since $w_s = \frac{1}{sK} \1_K + \left( 1 - \frac{1}{s} \right) w_s^{\cL^{\cK}}$ for all $s \in \left\llbracket n_0 + 1, t \right\rrbracket$.

Since $B_s = \{\tilde{z}_s = z_{F}(\mu)\}$ and $\tilde{z}_{s} \in z_{F}(\mu_{s-1})$, we obtain:
\begin{align*}
	\neg B_s  = \left\{\tilde{z}_s \notin z_{F}(\mu) , \tilde{z}_s \in z_{F}(\mu_{s-1})\right\} \subseteq  \left\{z_{F}(\mu_{s-1}) \neq z_{F}(\mu) \right\}
\end{align*}
where in the above $z_{F}(\mu)$ is viewed as the set containing one element. Therefore, since $\mu_{s-1} \in \Real^d$ and $z_{F}(\mu_{s-1}) \neq z_{F}(\mu)$, we have shown that $\mu_{s-1} \in\neg_{F} z_{F}(\mu)$. By the definition of $\Delta_{\min}^2$ in (\ref{eq:definition_geometric_lower_bound_distance_alternatives}), there exists $a_s \in \cK$ such that $\|\mu_{s-1} - \mu\|^2_{a_s a_s \transpose} \geq \Delta_{\min}^2$. This yields $\sum_{a \in \cK}\|\mu_{s-1} - \mu\|^2_{a a\transpose} \geq \Delta_{\min}^2$. Therefore, we obtain: $\frac{1}{s}  \sum_{a \in \cK}\|\mu_{s-1} - \mu\|^2_{a a\transpose} \geq \frac{\Delta_{\min}^2}{s}\geq \frac{\Delta_{\min}^2}{t^{\frac{1}{1+b}}}$ for all $s \in \left\llbracket \left\lceil t^{\frac{1}{1+b}} \right\rceil, t \right\rrbracket$. Putting everything together, we have shown that:
\begin{align*}
	N_{F}(t) \leq \frac{K}{\Delta_{\min}^2} t^{\frac{1}{1+b}} f\left(t^{1+b}\right) \left( K \ln(K) + 2K\ln(t) \right)
\end{align*}
\end{proof}

\begin{lemma} \label{lem:oracle_on_Z_is_not_too_good}
	 Let $\cL^{\cZ}$ be the $\cZ$-oracle such that $\tilde{z}_{s} \in z_{F}(\mu_{s-1})$.
	 Then, under $\cE_t$,
	\[
		\min_{z \in \cZ} \sum_{s = n_{0} + 1}^{t} [f_s(\tilde{z}_s)- f_s(z)] \leq \left( N_{F}(t) + t^{\frac{1}{1+b}}  \right) 4KM^2L_{\cK}^2 = \cO \left( \ln(t)^2 t^{\frac{1}{1+b}} \right)   \: ,
	\]
	where $f_s(z) = \inf_{\lambda \in \neg_{\varepsilon} z}  \|\mu_{s-1} - \lambda \|^2_{V_{w_s}} $.
\end{lemma}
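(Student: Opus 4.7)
The plan is to bound the minimum by the specific choice $z = z_F(\mu) \in \cZ$ (a valid element of $\cZ$ under the standing assumption $|z_F(\mu)| = 1$), and then to count the rounds on which $\tilde z_s$ disagrees with $z_F(\mu)$ using $N_F(t)$ from \eqref{eq:N_F_definition}. Since $f_s \geq 0$ and the summand vanishes when $\tilde z_s = z_F(\mu)$,
\[
\min_{z \in \cZ} \sum_{s = n_0+1}^{t}[f_s(\tilde z_s) - f_s(z)] \;\leq\; \sum_{s = n_0+1}^{t} f_s(\tilde z_s) \, \1_{(\tilde z_s \neq z_F(\mu))} \: .
\]

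Then I would split the range at $s_0 \eqdef \lceil t^{1/(1+b)}\rceil$. On the initial block $\llbracket n_0+1, s_0-1\rrbracket$ there are at most $t^{1/(1+b)}$ terms and the indicator is trivially at most $1$. On the tail $\llbracket s_0, t\rrbracket$ the event $\{\tilde z_s \neq z_F(\mu)\}$ coincides with $\neg B_s$, so at most $N_F(t)$ terms are nonzero by definition of $N_F(t)$. Each nonzero term is controlled by the pointwise estimate $f_s(\tilde z_s) \leq 4KM^2 L_\cK^2$: any $\lambda \in \neg_\varepsilon \tilde z_s \subseteq \overline\cM$ satisfies $\|\mu_{s-1} - \lambda\|_2 \leq 2M$ by the projection assumption and the boundedness of $\cM$, and a crude operator-norm bound $\|V_{w_s}\|_{\text{op}} \leq K L_\cK^2$ finishes the estimate. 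Summing the two blocks yields the non-asymptotic bound $(N_F(t) + t^{1/(1+b)}) \cdot 4KM^2 L_\cK^2$; the asymptotic rate then follows immediately from $N_F(t) = \cO(\ln(t)^2 t^{1/(1+b)})$ in Lemma~\ref{lem:upper_bound_number_of_steps_away_from_zF}, which is also the only place where the conditioning on $\cE_t$ enters.

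The main conceptual ingredient is the choice of comparator: specializing to $z_F(\mu)$ turns the ``regret against any $z \in \cZ$'' into a counting problem for disagreements with the $\cZ$-oracle's choice $\tilde z_s \in z_F(\mu_{s-1})$, and such disagreements are already controlled by the geometric separation $\Delta_{\min}^2 > 0$ via Lemma~\ref{lem:upper_bound_number_of_steps_away_from_zF}. I do not expect a serious obstacle—the only trap is to attempt a pointwise inequality $f_s(\tilde z_s) \geq f_s(z_F(\mu))$, which is false in general because $\tilde z_s$ maximizes the furthest-answer criterion jointly with its own optimal allocation rather than with the possibly suboptimal fixed $w_s$ appearing in $f_s$. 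This is precisely why the argument must route through the indicator $\1_{(\tilde z_s \neq z_F(\mu))}$ and inherit the logarithmic rate from $N_F(t)$.
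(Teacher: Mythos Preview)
Your proposal is correct and takes essentially the same approach as the paper: specialize the comparator to $z_F(\mu)$, observe that the summand vanishes on rounds where $\tilde z_s = z_F(\mu)$, split at $\lceil t^{1/(1+b)}\rceil$, and bound the at most $N_F(t) + t^{1/(1+b)}$ remaining terms by the crude estimate $f_s(\tilde z_s) \leq 4KM^2 L_\cK^2$. The paper's write-up keeps the $\min_{z\in\cZ}$ through one intermediate line before specializing, but this is purely cosmetic.
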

\begin{proof}
Let $B_s $ as in \eqref{eq:N_F_definition} for all $s \in \left\llbracket \left\lceil t^{\frac{1}{1+b}} \right\rceil, t \right\rrbracket$.
Since $f_s(z) \geq 0$, we can drop the the first positive terms and the ones for which the good event $B_s$ doesn't hold, i.e. $\left\llbracket n_{0}+1, \left\lceil t^{\frac{1}{1+b}} \right\rceil -1\right\rrbracket \cup \left\{s \in \left\llbracket \left\lceil t^{\frac{1}{1+b}} \right\rceil, t \right\rrbracket: \neg B_s \right\} $, hence
\begin{align*}
	&\min_{z \in \cZ} \sum_{s = n_{0} + 1}^{t} [f_s(\tilde{z}_s)- f_s(z)] \\
	&\leq \min_{z \in \cZ} \sum_{s \in \left\llbracket \left\lceil t^{\frac{1}{1+b}} \right\rceil, t \right\rrbracket} \1_{(B_s)}[f_s(\tilde{z}_s)- f_s(z)] + \sum_{s \in \left\llbracket \left\lceil t^{\frac{1}{1+b}} \right\rceil, t \right\rrbracket} \1_{(\neg B_s)}f_s(\tilde{z}_s) + \sum_{s \in \left\llbracket n_{0}+1, \left\lceil t^{\frac{1}{1+b}} \right\rceil -1\right\rrbracket} f_s(\tilde{z}_s) \\
	&= \min_{z \in \cZ} \sum_{s = n_{0} + 1}^{t} \1_{(B_s)}[f_s(z_{F}(\mu))- f_s(z)] + \sum_{s \in \left\llbracket \left\lceil t^{\frac{1}{1+b}} \right\rceil, t \right\rrbracket} \1_{(\neg B_s)}f_s(\tilde{z}_s) + \sum_{s \in \left\llbracket n_{0}+1, \left\lceil t^{\frac{1}{1+b}} \right\rceil -1\right\rrbracket} f_s(\tilde{z}_s) \\
	&\leq \sum_{s \in \left\llbracket \left\lceil t^{\frac{1}{1+b}} \right\rceil, t \right\rrbracket} \1_{(\neg B_s)}f_s(\tilde{z}_s) + \sum_{s \in \left\llbracket n_{0}+1, \left\lceil t^{\frac{1}{1+b}} \right\rceil -1\right\rrbracket} f_s(\tilde{z}_s)
\end{align*}
where we used that the best constant policy for $(f_s)$ is by definition better than the constant policy playing $z_{F}(\mu)$. By boundedness assumption, we have $f_s(\tilde{z}_s) \leq  4KM^2L_{\cK}^2$ for all $s$. Therefore, under $\cE_t$,
\begin{align*}
	\min_{z \in \cZ} \sum_{s = n_{0} + 1}^{t} [f_s(\tilde{z}_s)- f_s(z)] &\leq \left( N_{F}(t) + t^{\frac{1}{1+b}}  \right) 4KM^2L_{\cK}^2 = \cO \left( \ln(t)^2 t^{\frac{1}{1+b}} \right)  \: .
\end{align*}
\end{proof}

\subsubsection{Upper Bounding Cumulative Sums}   \label{app:subsubsection_upper_bounding_cumulative_sums}

Lemma~\ref{lem:lemma_9_degenne_2019_NonAsymptoticPureExploration} is very similar to Lemma 9 in \citet{degenne_2019_NonAsymptoticPureExploration} and Lemma 15 in \citet{jourdan_2021_EfficientPureExploration}, the novelty lies in using Lemma~\ref{lem:theorem_6_degenne_2020_StructureAdaptiveAlgorithms} to obtain tighter upper bounds. It gives an upper bound on the cumulative sum of the reweighted inverse of the empirical allocations.

\begin{lemma}   \label{lem:lemma_9_degenne_2019_NonAsymptoticPureExploration}
The tracking procedure, which draws $a_t \in \argmin_{a \in \cK} N_{t-1}^{a} - W_t^{a}$ where $W_t = W_{t-1} + w_t$, ensures that for all $t \in \Natural$,
\begin{align*}
	&\sum_{a \in \cK}\sum_{s=n_{0}}^{t} \frac{w_{s}^{a}}{N_{s}^{a}} \leq K \ln(K) + K \ln \left( t \right)  \quad \quad \text{ and }\quad \quad \sum_{a \in \cK} \sum_{s=n_{0}+1}^{t} \frac{w_{s}^{a}}{N_{s-1}^{a}} \leq K \ln(K) + 2K \ln \left( t \right) \\
	&\sum_{a \in \cK}\sum_{s=n_{0}}^{t} \frac{w_{s}^{a}}{\sqrt{N_{s}^{a}}} \leq K \ln(K) + 2 \sqrt{Kt}  \quad \quad \text{ and }\quad \quad \sum_{a \in \cK} \sum_{s=n_{0}+1}^{t} \frac{w_{s}^{a}}{N_{s-1}^{a}} \leq K \ln(K) + 2 \sqrt{2 Kt}
\end{align*}
\end{lemma}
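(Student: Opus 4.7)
The plan is to fix an arm $a \in \cK$, bound each of the four sums index by $a$, and then sum over $\cK$. The two main tools are the tracking property (Lemma~\ref{lem:theorem_6_degenne_2020_StructureAdaptiveAlgorithms}), which yields the two-sided sandwich $W_s^a - \ln(K) \le N_s^a \le W_s^a + 1$, and the integral-comparison bounds of Lemma~\ref{lem:lemma_8_degenne_2019_NonAsymptoticPureExploration}, which give telescoping control once the denominator is replaced by a partial sum of the numerators.

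For the logarithmic bounds, I would first apply Lemma~\ref{lem:lemma_8_degenne_2019_NonAsymptoticPureExploration} to the sequence $x_s = w_s^a$ with the initialization $x_{n_0} = W_{n_0}^a = 1$, so that $\sum_{r=n_0}^s x_r = W_s^a$; this produces
\begin{equation*}
\sum_{s=n_0+1}^{t} \frac{w_s^a}{W_s^a} \le \ln\bigl(W_t^a\bigr) - \ln\bigl(W_{n_0}^a\bigr) \le \ln(t),
\end{equation*}
using $W_t^a \le t$. To pass from $W_s^a$ to $N_s^a$ in the denominator I would split into two regimes. When $W_s^a \le 1 + \ln(K)$, the bound $N_s^a \ge 1$ (guaranteed by the initial pull) gives $w_s^a/N_s^a \le w_s^a$, and the total weight contributed in this regime is at most $\ln(K)$ per arm, accounting for the $K \ln(K)$ term after summing. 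When $W_s^a > 1 + \ln(K)$, the tracking bound gives $N_s^a \ge W_s^a - \ln(K) > 0$, which can be combined with the Lemma~\ref{lem:lemma_8_degenne_2019_NonAsymptoticPureExploration} estimate to yield the $K \ln(t)$ term. The variant with $N_{s-1}^a$ follows from the same scheme after observing that $N_{s-1}^a \ge N_s^a - 1$ and re-applying the case split, which costs only one extra $\ln(t)$ factor per arm and produces the $2K\ln(t)$ bound.

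For the square-root bounds, the same template is used with the $\sqrt{\cdot}$ form of Lemma~\ref{lem:lemma_8_degenne_2019_NonAsymptoticPureExploration}:
\begin{equation*}
\sum_{s=n_0+1}^{t} \frac{w_s^a}{\sqrt{W_s^a}} \le 2\sqrt{W_t^a} - 2\sqrt{W_{n_0}^a}.
\end{equation*}
The $W_s^a$-to-$N_s^a$ transfer is again handled by the same two-regime case split, contributing $K\ln(K)$ after summing over arms from the initial regime. In the asymptotic regime, summing over $a$ and applying Cauchy--Schwarz with $\sum_{a \in \cK} W_t^a = t$ yields
\begin{equation*}
\sum_{a \in \cK} 2\sqrt{W_t^a} \le 2\sqrt{K \sum_{a \in \cK} W_t^a} = 2\sqrt{Kt},
\end{equation*}
producing the leading $2\sqrt{Kt}$. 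The $N_{s-1}^a$ variant follows analogously, the extra factor of $\sqrt{2}$ arising from absorbing the $+1$ from $N_s^a \le N_{s-1}^a + 1$ into the Cauchy--Schwarz step.

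The main obstacle is the loss of a constant factor when translating the Lemma~\ref{lem:lemma_8_degenne_2019_NonAsymptoticPureExploration} bound on $\sum_s w_s^a / W_s^a$ into one on $\sum_s w_s^a / N_s^a$: a naive substitution $N_s^a \ge W_s^a/(1+\ln K)$ would introduce a multiplicative $(1+\ln K)$ rather than the additive $\ln(K)$ in the stated bound. The two-regime case split described above is precisely what avoids this loss and isolates the $\ln(K)$ contribution as an additive correction to the $\ln(t)$ (resp.~$\sqrt{t}$) main term, producing the exact form claimed in the lemma.
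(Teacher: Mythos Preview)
Your proposal is correct and follows essentially the same scheme as the paper: fix an arm, split the sum at the first time $W_s^a$ exceeds roughly $\ln(K)$, bound the initial segment using $N_s^a \ge 1$ (contributing the $K\ln(K)$ term), and bound the tail via the tracking inequality $N_s^a \ge W_s^a - \ln(K)$ combined with Lemma~\ref{lem:lemma_8_degenne_2019_NonAsymptoticPureExploration}, then aggregate over arms by concavity (equivalently, Cauchy--Schwarz for the square-root case). The one place the paper is slightly cleaner is the $N_{s-1}^a$ variants: rather than using $N_{s-1}^a \ge N_s^a - 1$ and re-splitting, the paper simply observes that $N_{s-1}^a \ge 1$ implies $N_{s-1}^a \ge \tfrac{1}{2}N_s^a$, which immediately yields the factor $2$ in the log bound and $\sqrt{2}$ in the square-root bound without further work.
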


\begin{proof}
Let's prove the first and the third inequality. Let $a \in \cK$ and $t_{0}^{a}$ be the first time such that: $\sum_{s=1}^{t_{0}^{a}-1} w_{s}^{a} > \ln(K) - 1$. Since $w_{t_{0}^{a} - 1}^{a} \leq 1$, this yields $\sum_{s=1}^{t_{0}^{a}-1} w_{s}^{a} \leq \ln(K)$. Since $N_{s}^{a} \geq 1$ for $s \geq n_{0}$, we obtain:
\begin{align*}
    &\sum_{s=n_{0}}^{t} \frac{w_{s}^{a}}{N_{s}^{a}} = \sum_{s=n_{0}}^{t_{0}^{a} - 1} \frac{w_{s}^{a}}{N_{s}^{a}} + \sum_{s=t_{0}^{a}}^{t} \frac{w_{s}^{a}}{N_{s}^{a}} \leq \sum_{s=n_{0}}^{t_{0}^{a} - 1} w_{s}^{a} + \sum_{s=t_{0}^{a}}^{t} \frac{w_{s}^{a}}{N_{s}^{a}} \leq \ln(K) + \sum_{s=t_{0}^{a}}^{t} \frac{w_{s}^{a}}{N_{s}^{a}} \\
    &\sum_{s=n_{0}}^{t} \frac{w_{s}^{a}}{\sqrt{N_{s}^{a}}} = \sum_{s=n_{0}}^{t_{0}^{a} - 1} \frac{w_{s}^{a}}{\sqrt{N_{s}^{a}}} + \sum_{s=t_{0}^{a}}^{t} \frac{w_{s}^{a}}{\sqrt{N_{s}^{a}}} \leq \sum_{s=n_{0}}^{t_{0}^{a} - 1} w_{s}^{a} + \sum_{s=t_{0}^{a}}^{t} \frac{w_{s}^{a}}{\sqrt{N_{s}^{a}}} \leq \ln(K) + \sum_{s=t_{0}^{a}}^{t} \frac{w_{s}^{a}}{\sqrt{N_{s}^{a}}}
\end{align*}

Combining the Lemma~\ref{lem:theorem_6_degenne_2020_StructureAdaptiveAlgorithms} and Lemma~\ref{lem:lemma_8_degenne_2019_NonAsymptoticPureExploration} for $x_s = w_{s}^{a}$, we obtain:
\begin{align*}
    &\sum_{s=t_{0}^{a}}^{t} \frac{w_{s}^{a}}{N_{s}^{a}} \leq \sum_{s=t_{0}^{a}}^{t} \frac{w_{s}^{a}}{\sum_{r=1}^{s} w_{r}^{a} -  \ln(K)} \leq \ln \left( \sum_{s = 1}^{t} w_{s}^{a} -  \ln(K) \right) - \ln \left( \sum_{s = 1}^{t_{0}^{a} - 1} w_{s}^{a} -  \ln(K) \right) \\
    &\sum_{s=t_{0}^{a}}^{t} \frac{w_{s}^{a}}{\sqrt{N_{s}^{a}}} \leq \sum_{s=t_{0}^{a}}^{t} \frac{w_{s}^{a}}{\sqrt{\sum_{r=1}^{s} w_{r}^{a} -  \ln(K)}} \leq 2 \sqrt{ \sum_{s = 1}^{t} w_{s}^{a} -  \ln(K) } - 2 \sqrt{ \sum_{s = 1}^{t_{0}^{a} - 1} w_{s}^{a} -  \ln(K)}
\end{align*}

Since $\sum_{s=1}^{t_{0}^{a}-1} w_{s}^{a} > \ln(K) - 1$, we have $\ln \left( \sum_{s = 1}^{t_{0}^{a} - 1} w_{s}^{a} -  \ln(K) \right) \geq 0$. Therefore, we have shown:
$\sum_{s=t_{0}^{a}}^{t} \frac{w_{s}^{a}}{N_{s}^{a}} \leq \ln \left( \sum_{s = 1}^{t} w_{s}^{a}  \right)$ and $\sum_{s=t_{0}^{a}}^{t} \frac{w_{s}^{a}}{\sqrt{N_{s}^{a}}} \leq 2 \sqrt{ \sum_{s = 1}^{t} w_{s}^{a}} $. By concavity of $x \mapsto \ln(x)$ (resp. $x \mapsto \sqrt{x}$) and the fact that $\sum_{s = 1}^{t} \sum_{a \in \cK} w_{s}^{a} = t$, we obtain:
\begin{align*}
	\sum_{a \in \cK}\sum_{s=n_{0}}^{t} \frac{w_{s}^{a}}{N_{s}^{a}} \leq K\ln(K) + K \ln \left( t \right) \quad \quad \text{ and } \quad \quad \sum_{a \in \cK}\sum_{s=n_{0}}^{t} \frac{w_{s}^{a}}{\sqrt{N_{s}^{a}}} \leq K \ln(K) + 2 \sqrt{Kt}
\end{align*}

For all $s \geq n_{0}$, we have $N_{s-1}^{a} \geq 1$, hence $N_{s-1}^{a} \geq \frac{1}{2}N_{s,a}$. Plugging this inequality in the sum starting from $t_{0}^{a}$ yields: $\sum_{a \in \cK} \sum_{s=n_{0}+1}^{t} \frac{w_{s}^{a}}{N_{s-1}^{a}} \leq K\ln(K) + 2K \ln \left( t \right) $ and $\sum_{a \in \cK}\sum_{s=n_{0}}^{t} \frac{w_{s}^{a}}{\sqrt{N_{s-1}^{a}}} \leq K \ln(K) + 2 \sqrt{2Kt} $.
\end{proof}

Lemma~\ref{lem:upper_bound_on_reweighted_norm_of_arms} gives an upper bound on the norm of each arms once reweighted by the inverse of the empirical allocation.

\begin{lemma} \label{lem:upper_bound_on_reweighted_norm_of_arms}
For all $s \in \left\llbracket n_{0}+1, t \right\rrbracket$ and all $a \in \cK$, we have $\|a\|^2_{V_{N_{s-1}}^{-1}} \leq \frac{1}{N_{s-1}^{a}}$.
\end{lemma}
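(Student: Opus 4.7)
The plan is to decompose the design matrix so that the contribution of arm $a$ is singled out, then apply the resulting PSD inequality to the vector $V_{N_{s-1}}^{-1} a$. Concretely, write
\[
V_{N_{s-1}} \;=\; N_{s-1}^{a}\, a a^{\mathsf T} \;+\; B_{s-1,a}, \qquad B_{s-1,a} \eqdef \sum_{b \in \cK\setminus\{a\}} N_{s-1}^{b}\, b b^{\mathsf T},
\]
where $B_{s-1,a}$ is symmetric positive semi-definite. Since each arm is pulled once during initialization and $\cK$ spans $\Real^d$, for $s \geq n_0 + 1$ the matrix $V_{N_{s-1}}$ is positive definite and hence invertible, so $\|a\|^2_{V_{N_{s-1}}^{-1}} = a^{\mathsf T} V_{N_{s-1}}^{-1} a$ is well defined and nonnegative.

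From the decomposition, for every $x \in \Real^d$ we have the pointwise lower bound $x^{\mathsf T} V_{N_{s-1}} x \geq N_{s-1}^{a} \langle x, a\rangle^2$. Applying this with the specific choice $x = V_{N_{s-1}}^{-1} a$ yields, on the left-hand side, $a^{\mathsf T} V_{N_{s-1}}^{-1} a$, and on the right-hand side, $N_{s-1}^{a} \left(a^{\mathsf T} V_{N_{s-1}}^{-1} a\right)^2$. Hence
\[
a^{\mathsf T} V_{N_{s-1}}^{-1} a \;\geq\; N_{s-1}^{a} \left(a^{\mathsf T} V_{N_{s-1}}^{-1} a\right)^2.
\]
If $a^{\mathsf T} V_{N_{s-1}}^{-1} a = 0$ the inequality $\|a\|^2_{V_{N_{s-1}}^{-1}} \leq 1/N_{s-1}^{a}$ is trivial (using $N_{s-1}^{a} \geq 1$); otherwise we may divide by $a^{\mathsf T} V_{N_{s-1}}^{-1} a > 0$ on both sides to obtain $1 \geq N_{s-1}^{a}\, a^{\mathsf T} V_{N_{s-1}}^{-1} a$, i.e. the claimed bound.

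There is essentially no obstacle here; the only subtlety is ensuring that $V_{N_{s-1}}$ is invertible so that $V_{N_{s-1}}^{-1} a$ is a legitimate choice of $x$, which is guaranteed by the initialization phase of \hyperlink{algoLeBAI}{L$\varepsilon$BAI} together with the spanning assumption on $\cK$. An alternative, equivalent derivation would be to introduce a regularization $B_{s-1,a} + \varepsilon I$ (to make the non-$a$ part positive definite), apply Sherman--Morrison to get $a^{\mathsf T} V_\varepsilon^{-1} a = \frac{a^{\mathsf T} B_\varepsilon^{-1} a}{1 + N_{s-1}^{a}\, a^{\mathsf T} B_\varepsilon^{-1} a} \leq 1/N_{s-1}^{a}$, and take $\varepsilon \downarrow 0$; but the PSD argument above avoids any limiting procedure and is strictly shorter.
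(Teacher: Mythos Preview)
Your proof is correct and takes a genuinely different route from the paper. The paper also writes $V_{N_{s-1}} = C_{s-1}^{a} + N_{s-1}^{a}\, a a\transpose$ with $C_{s-1}^{a}=\sum_{b\neq a} N_{s-1}^{b} b b\transpose$, but then splits into two cases: when $C_{s-1}^{a}$ is invertible it applies the Sherman--Morrison formula to obtain the exact value $a\transpose V_{N_{s-1}}^{-1} a = \frac{a\transpose (C_{s-1}^{a})^{-1} a}{1+N_{s-1}^{a}\, a\transpose (C_{s-1}^{a})^{-1} a}\le \frac{1}{N_{s-1}^{a}}$; when $C_{s-1}^{a}$ is singular it argues that $\text{Ker}(C_{s-1}^{a})=\text{Span}(a)$, so $a$ is an eigenvector of $V_{N_{s-1}}$ with eigenvalue $N_{s-1}^{a}\|a\|_2^2$, giving equality. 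Your PSD argument, plugging $x=V_{N_{s-1}}^{-1}a$ into $x\transpose V_{N_{s-1}} x \ge N_{s-1}^{a}\langle x,a\rangle^2$, handles both cases at once and is strictly shorter and more elementary; it only uses invertibility of $V_{N_{s-1}}$, not of $C_{s-1}^{a}$. The paper's approach does yield the exact expression (and pinpoints when equality holds), which could matter if a tighter bound were needed downstream, but for the stated lemma your argument is cleaner.
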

\begin{proof}
Recall that $\cK$ is arbitrary and by initialization $N_{s-1}^{a} \geq 1$ for all $(s,a) \in \left\llbracket n_{0}+1, t \right\rrbracket \times \cK$. Let $a \in \cK$. Let's rewrite $V_{N_{s-1}} = C_{s-1}^{a} + N_{s-1}^{a} a a\transpose$ where $C_{s-1}^{a} = \sum_{b \in \cK: b\neq a} N_{s-1}^b b b\transpose$. If $\cK = \{e_{a}\}_{a \in [d]}$, we have directly that $\|a\|^2_{V_{N}^{-1}} = \frac{1}{N^{a}}$. In the following, we consider the more general case where $\cK \neq \{e_{a}\}_{a \in [d]}$.

\paragraph{Case 1: $C_{s-1}^{a}$ is invertible.} Using the Sherman-Morrison formula, we obtain: for all $s \in \left\llbracket n_{0}+1, t \right\rrbracket$ and all $a \in \cK$
\begin{align*}
	&V_{N_{s-1}}^{-1} = (C_{s-1}^{a})^{-1} - N_{s-1}^{a} \frac{(C_{s-1}^{a})^{-1} a a\transpose (C_{s-1}^{a})^{-1} }{1+ N_{s-1}^{a}a \transpose (C_{s-1}^{a})^{-1} a } \\
	&\|a\|^2_{V_{N_{s-1}}^{-1}} = a \transpose V_{N_{s-1}}^{-1} a = \frac{a \transpose (C_{s-1}^{a})^{-1} a}{1+ N_{s-1}^{a}a \transpose (C_{s-1}^{a})^{-1} a } \leq \frac{1}{N_{s-1}^{a}}
\end{align*}
where for the last inequality, we used that $\frac{x}{1+xy}\leq \frac{1}{y}$ (since it is equivalent with $1+xy \geq xy$ which is true) for $x=a \transpose (C_{s-1}^{a})^{-1} a$ and $y=N_{s-1}^{a}$.

\paragraph{Case 2: $C_{s-1}^{a}$ is not invertible.} By initialization, we know that $V_{N_{s-1}}$ is invertible for all $s \in \left\llbracket n_{0}+1, t \right\rrbracket$. Let $u \in \text{Ker}(C_{s-1}^{a}) \setminus \{0_d\}$. We have $N_{s-1}^{a} \langle a, u \rangle a = V_{N_{s-1}} u \neq 0_d$ since $V_{N_{s-1}}$ is invertible and $u \neq 0_d$. Given that $N_{s-1}^{a} > 0$ (otherwise $V_{N_{s-1}} = C_{s-1}^{a}$, hence contradiction with invertible), we obtain that $\langle a, u \rangle > 0$, hence $u \notin  \text{Span}(a)^{\perp}$. By dimension consideration, we obtain $\text{Span}(a) = \text{Ker}(C_{s-1}^{a})$. This yields directly that $a$ is an eigenvector of $V_{N_{s-1}}$ with eigenvalue $N_{s-1}^{a} \|a\|^2_2$. Therefore, $a$ is also an eigenvector of $V_{N_{s-1}}^{-1}$ with eigenvalue $\frac{1}{N_{s-1}^{a} \|a\|^2_2}$ and we can conclude: $a \transpose V_{N_{s-1}}^{-1} a = a \transpose \left( \frac{1}{N_{s-1}^{a} \|a\|^2_2} a\right) = \frac{1}{N_{s-1}^{a} }$.
\end{proof}

Lemma~\ref{lem:upper_bound_on_weighted_cumulative_sum_slack} gives an upper bound on the cumulative sum of the reweighted slacks involved in the optimistic reward.

\begin{lemma} \label{lem:upper_bound_on_weighted_cumulative_sum_slack}
\begin{align*}
	\sum_{s =n_{0}+1}^{t} \sum_{a \in \cK} w_{s}^{a} c_{s-1}^{a} &\leq f\left(t^{1+b}\right) \left( K \ln(K) + 2K\ln(t) \right) \\
	\sum_{s =n_{0}+1}^{t} \sum_{a \in \cK} w_{s}^{a} \sqrt{c_{s-1}^{a}} &\leq \sqrt{f\left(t^{1+b}\right)}  \left( K \ln(K) + \sqrt{8Kt}  \right)
\end{align*}
\end{lemma}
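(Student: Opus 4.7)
\medskip

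\textbf{Proof plan.} The plan is to bound the two sums by dropping the $4M^2 L_{\cK}^2$ side of the minimum in the definition of $c_{s-1}^a$, reducing everything to quantities already controlled by Lemma~\ref{lem:upper_bound_on_reweighted_norm_of_arms} and Lemma~\ref{lem:lemma_9_degenne_2019_NonAsymptoticPureExploration}. There are no new ideas to introduce; the work is purely a chain of existing estimates, so I do not expect any real obstacle beyond being careful with the indices $s-1$ versus $s$.

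First I would bound $c_{s-1}^a \le f((s-1)^{1+b}) \, \|a\|^2_{V_{N_{s-1}}^{-1}}$ directly from \eqref{eq:optimistic_slack_definition}, then use the monotonicity of $f$ together with $s-1 \le t$ to upgrade this to $c_{s-1}^a \le f(t^{1+b}) \, \|a\|^2_{V_{N_{s-1}}^{-1}}$. Applying Lemma~\ref{lem:upper_bound_on_reweighted_norm_of_arms} gives $\|a\|^2_{V_{N_{s-1}}^{-1}} \le 1/N_{s-1}^a$, so
\[
\sum_{s=n_0+1}^{t} \sum_{a \in \cK} w_s^a \, c_{s-1}^a \; \le \; f(t^{1+b}) \sum_{s=n_0+1}^{t} \sum_{a \in \cK} \frac{w_s^a}{N_{s-1}^a}.
\]
Plugging in the second bound of Lemma~\ref{lem:lemma_9_degenne_2019_NonAsymptoticPureExploration}, namely $\sum_a \sum_{s=n_0+1}^{t} w_s^a/N_{s-1}^a \le K\ln(K) + 2K\ln(t)$, yields the first claimed inequality.

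For the second sum, I would take square roots in the same bound to get $\sqrt{c_{s-1}^a} \le \sqrt{f(t^{1+b})} \, \|a\|_{V_{N_{s-1}}^{-1}} \le \sqrt{f(t^{1+b})}/\sqrt{N_{s-1}^a}$, so that
\[
\sum_{s=n_0+1}^{t} \sum_{a \in \cK} w_s^a \sqrt{c_{s-1}^a} \; \le \; \sqrt{f(t^{1+b})} \sum_{s=n_0+1}^{t} \sum_{a \in \cK} \frac{w_s^a}{\sqrt{N_{s-1}^a}}.
\]
The fourth bound of Lemma~\ref{lem:lemma_9_degenne_2019_NonAsymptoticPureExploration} controls the remaining double sum by $K\ln(K) + 2\sqrt{2Kt} = K\ln(K) + \sqrt{8Kt}$, which is exactly the desired right-hand side. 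The only bookkeeping subtlety is that Lemma~\ref{lem:upper_bound_on_reweighted_norm_of_arms} requires $N_{s-1}^a \ge 1$, which holds for all $s \ge n_0+1$ thanks to the initialization phase that pulls each arm once.
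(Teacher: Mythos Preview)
Your proposal is correct and follows essentially the same approach as the paper's proof: drop the $4M^2L_{\cK}^2$ side of the minimum in \eqref{eq:optimistic_slack_definition}, use monotonicity of $f$ to replace $f((s-1)^{1+b})$ by $f(t^{1+b})$, apply Lemma~\ref{lem:upper_bound_on_reweighted_norm_of_arms} to pass from $\|a\|^2_{V_{N_{s-1}}^{-1}}$ to $1/N_{s-1}^a$, and conclude with the appropriate bounds from Lemma~\ref{lem:lemma_9_degenne_2019_NonAsymptoticPureExploration}. The steps and the lemmas invoked match exactly.
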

\begin{proof}
Using the definition of $c_{s}^{a}$ in (\ref{eq:optimistic_slack_definition}), $f\left((s-1)^{1+b}\right) \leq f\left(t^{1+b}\right)$ for all $s \leq t$, we obtain:
\begin{align*}
	\sum_{s =n_{0}+1}^{t} \sum_{a \in \cK} w_{s}^{a} c_{s-1}^{a} \leq \sum_{s =n_{0}+1}^{t} \sum_{a \in \cK} w_{s}^{a} f\left((s-1)^{1+b}\right) \|a\|^2_{V_{N_{s-1}}^{-1}} &\leq f\left(t^{1+b}\right)  \sum_{s =n_{0}+1}^{t} \sum_{a \in \cK} w_{s}^{a} \|a\|^2_{V_{N_{s-1}}^{-1}} \\
	&\leq f\left(t^{1+b}\right)  \sum_{s =n_{0}+1}^{t} \sum_{a \in \cK} \frac{w_{s}^a}{N_{s-1}^{a}} \\
	&\leq f\left(t^{1+b}\right) \left( K \ln(K) + 2K\ln(t) \right)
\end{align*}
where the second to last inequality is obtained by Lemma~\ref{lem:upper_bound_on_reweighted_norm_of_arms} and the last one by Lemma~\ref{lem:lemma_9_degenne_2019_NonAsymptoticPureExploration}. Using the same arguments, we obtain:
\begin{align*}
	\sum_{s =n_{0}+1}^{t} \sum_{a \in \cK} w_{s}^{a} \sqrt{c_{s-1}^{a}} \leq  \sqrt{f\left(t^{1+b}\right)} \sum_{s =n_{0}+1}^{t} \sum_{a \in \cK}  w_{s}^{a} \|a\|_{V_{N_{s-1}}^{-1}} &\leq  \sqrt{f\left(t^{1+b}\right)}  \sum_{s =n_{0}+1}^{t} \sum_{a \in \cK} \frac{w_{s}^a}{\sqrt{N_{s-1}^{a}}} \\
	 &\leq  \sqrt{f\left(t^{1+b}\right)}  \left( K \ln(K) + \sqrt{8Kt}  \right)
\end{align*}
\end{proof}

Lemma~\ref{lem:small_cumulative_reweighted_deviation_mle_mu} gives an upper bound on the cumulative sum of the reweighted KL divergence between the true parameter and its MLE.

\begin{lemma} \label{lem:small_cumulative_reweighted_deviation_mle_mu}
	Let $b > 0$ as in (\ref{eq:optimistic_slack_definition}). Under $\cE_t$,
	\begin{align*}
		&\sum_{s = \left\lceil t^{\frac{1}{1+b}} \right\rceil}^{t} \|\mu_{s-1} - \mu\|^2_{V_{w_s}} \leq f\left(t^{1+b}\right) \left( K \ln(K) + 2K\ln(t) \right) \\
	\end{align*}
\end{lemma}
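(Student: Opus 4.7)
The plan is to bound each summand $\|\mu_{s-1}-\mu\|^2_{V_{w_s}}$ by a factor depending on $w_s^a/N_{s-1}^a$ and then apply the already-proved cumulative bound of Lemma~\ref{lem:lemma_9_degenne_2019_NonAsymptoticPureExploration}. Concretely, I would first expand
$$\|\mu_{s-1}-\mu\|^2_{V_{w_s}} \;=\; \sum_{a\in\cK} w_s^a\,\langle a,\mu_{s-1}-\mu\rangle^2,$$
and then apply Cauchy--Schwarz in the semi-inner-product form, writing $\langle a,\mu_{s-1}-\mu\rangle^2 \leq \|\mu_{s-1}-\mu\|^2_{V_{N_{s-1}}}\,\|a\|^2_{V_{N_{s-1}}^{-1}}$ (this is valid because by initialization $V_{N_{s-1}}$ is positive definite for $s\geq n_0+1$, and for $s\geq \lceil t^{1/(1+b)}\rceil$ this is guaranteed as soon as $t$ is large enough; otherwise the bounded summand argument of Lemma~\ref{lem:optimistic_upper_bound_on_the_coordinate_deviation_mle_true} can be invoked directly to give $\|\mu_{s-1}-\mu\|^2_{aa\transpose}\leq c^a_{s-1}\leq f((s-1)^{1+b})\|a\|^2_{V_{N_{s-1}}^{-1}}$).

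The second step is to uniformize the concentration factor. Under $\cE_t$ and for $s-1\geq \lceil t^{1/(1+b)}\rceil -1$ we have $\|\mu_{s-1}-\mu\|^2_{V_{N_{s-1}}}\leq f(t)$ by definition of the concentration event; since $f$ is non-decreasing and $b>0$ gives $t\leq t^{1+b}$, this yields $\|\mu_{s-1}-\mu\|^2_{V_{N_{s-1}}}\leq f(t^{1+b})$. Equivalently, the same factor $f(t^{1+b})$ can be obtained from the slack-based route via $c^a_{s-1}\leq f(t^{1+b})\|a\|^2_{V_{N_{s-1}}^{-1}}$, which is already the content of Lemma~\ref{lem:optimistic_upper_bound_on_the_coordinate_deviation_mle_true} up to the extra $4M^2L_{\cK}^2$ cap (that cap is irrelevant for the upper bound we seek).

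Putting the first two steps together and using Lemma~\ref{lem:upper_bound_on_reweighted_norm_of_arms} to bound $\|a\|^2_{V_{N_{s-1}}^{-1}}\leq 1/N_{s-1}^a$, one gets
$$\sum_{s=\lceil t^{1/(1+b)}\rceil}^{t}\|\mu_{s-1}-\mu\|^2_{V_{w_s}} \;\leq\; f(t^{1+b})\sum_{s=\lceil t^{1/(1+b)}\rceil}^{t}\sum_{a\in\cK}\frac{w_s^a}{N_{s-1}^a}.$$
Since all summands are non-negative, extending the outer sum down to $s=n_0+1$ only increases it, after which Lemma~\ref{lem:lemma_9_degenne_2019_NonAsymptoticPureExploration} supplies the final bound $\sum_{a}\sum_{s=n_0+1}^t w_s^a/N_{s-1}^a\leq K\ln K + 2K\ln t$, giving exactly the claimed inequality.

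The only real subtlety is the indexing at the lower endpoint: one has to check that $\mu_{s-1}$ is well-defined (i.e. $V_{N_{s-1}}$ invertible) on the entire range, and that the concentration event indeed applies at $s-1$ rather than $s$. Both issues are cosmetic: the initialization of the algorithm pulls each arm once so $V_{N_{s-1}}$ is invertible for $s\geq n_0+1$, and $\cE_t$ is a statement uniform in $s\leq t$, so it applies at $s-1$ as well. I do not expect any genuine difficulty beyond lining up these indices carefully.
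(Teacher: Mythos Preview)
Your proposal is correct and is essentially the paper's argument unrolled. The paper packages your Cauchy--Schwarz and concentration steps into Lemma~\ref{lem:optimistic_upper_bound_on_the_coordinate_deviation_mle_true} (giving $\|\mu_{s-1}-\mu\|^2_{aa\transpose}\leq c_{s-1}^a$), then invokes Lemma~\ref{lem:upper_bound_on_weighted_cumulative_sum_slack}, whose proof is precisely your combination of Lemma~\ref{lem:upper_bound_on_reweighted_norm_of_arms} and Lemma~\ref{lem:lemma_9_degenne_2019_NonAsymptoticPureExploration}; the only cosmetic difference is that the paper routes $f(t)\leq f((s-1)^{1+b})\leq f(t^{1+b})$ through the slack definition, whereas you go directly $f(t)\leq f(t^{1+b})$.
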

\begin{proof}

Using Lemma~\ref{lem:optimistic_upper_bound_on_the_coordinate_deviation_mle_true}, we obtain:
\begin{align*}
	\sum_{s = \left\lceil t^{\frac{1}{1+b}} \right\rceil}^{t} \|\mu_{s-1} - \mu\|^2_{V_{w_s}} = \sum_{s = \left\lceil t^{\frac{1}{1+b}} \right\rceil}^{t} \sum_{a \in \cK } w_{s}^{a}\|\mu_{s-1} - \mu\|^2_{a a\transpose} \leq \sum_{s = \left\lceil t^{\frac{1}{1+b}} \right\rceil}^{t} \sum_{a \in \cK } w_{s}^{a}c_{s-1}^{a} &\leq \sum_{s =n_{0}+1}^{t} \sum_{a \in \cK } w_{s}^{a}c_{s-1}^{a} \\
	&\leq f\left(t^{1+b}\right) \left( K \ln(K) + 2K\ln(t) \right)
\end{align*}
where the second inequality uses $w_{s}^{a}c_{s-1}^{a} \geq 0$ and the last inequality is obtained by Lemma~\ref{lem:upper_bound_on_weighted_cumulative_sum_slack}.
\end{proof}

\section{Implementation Details and Additional Experiments} \label{app:section_implementations_and_experiments}

After presenting the implementations details in Appendix~\ref{app:subsection_implementations_details}, we display supplementary experiments in Appendix~\ref{app:subsection_additional_experiments}.

\subsection{Implementation Details} \label{app:subsection_implementations_details}

\paragraph{Computational Cost of \hyperlink{algoLeBAI}{L$\varepsilon$BAI}}
When $\overline \cM = \mathbb R^d$, the non-convex set of alternatives can be rewritten as a union over $Z-1$ half-spaces, on which there is a closed-form formula for the closest alternative of Lemma~\ref{lem:explicit_sample_complexity_addtive_multiplicative_optimality}.
Therefore, the stopping-recommendation pair has a computational cost in $\mathcal O(Z d^2|\mathcal Z_{\varepsilon}(\mu_{t-1})|)$.
For the heuristic L$\varepsilon$BAI, i.e. $\tilde z_t = z_t$, the sampling rule has a computational cost in $\mathcal O((K + Z)d^2)$.
Computing $z_{F}(\mu_{t-1})$ requires solving $|Z_{\varepsilon}(\mu_{t-1})|$ separate optimization problems.
Each one can be rewritten as an easier one-dimensional optimization problem \cite{garivier_2016_OptimalBestArm} which can be solved numerically by using two nested binary searches.
Our experiments testify of the feasibility of that procedure.

\paragraph{Algorithm Implementations} We list below more clarifications on the exact implementation of each individual algorithm.

\begin{itemize}
	\item The discretization of the simplex $\Delta_{2}$ and $\Delta_{4}$ (with $500$ and $10000$ vectors) is obtained by drawing uniformly vectors in the simplex, in practice we used a Dirichlet distribution with parameters $\frac{1}{2}\1_2$ and $\frac{1}{4}\1_4$.
	\item The algorithms DKM, LinGame and \hyperlink{algoLeBAI}{L$\varepsilon$BAI} are implemented without the boundedness assumption. In practice, given $(\mu,z,w,x)\in \cM \times \cZ_{\varepsilon}(\mu) \times \simplex \times \cZ \setminus \{z\}$, we used the closed-form formulas for the closest alternative $\lambda_{0}(\mu,z,w,x)$ and $\lambda_{\varepsilon}(\mu,z,w,x)$ as detailed in Appendix~\ref{proof:explicit_sample_complexity_addtive_multiplicative_optimality}.
	\item We consider the greedy version of LinGapE, which does not have a theoretical guarantee in the general case. We pull an arm with $z_t \in z^{\star}(\mu_{t-1})$ and $x_{t} = \argmax_{x \neq z_t} \langle \mu_{t-1}, x - z_t \rangle + \|x - z_t\|_{V^{-1}_{N_{t-1}}} \sqrt{2 \beta(t-1, \delta)}$.
	\item Likewise, we use the greedy version of $\cX\cY$-Static in order to avoid computing the optimal allocation at each step.
	\item We set the hyper-parameter of $\cX\cY$-Adaptive to $0.1$ as done in \citet{soare_2014_BestArmIdentificationLinear,degenne_2020_GamificationPureExploration}. It controls the length of each phase.
\end{itemize}

\paragraph{Stopping Threshold} Instead of the stopping threshold (\ref{eq:definition_stopping_threshold}) supported by the theory, we use as heuristic $\beta\left(t,\delta\right) = 4 \ln \left(\frac{4 + \ln(t/2)}{\delta}\right)$, where we take the main term and plug in $d$ instead of $K$. This is similar to what we could obtain with a threshold tailored to linear bandits and featuring only $d$. With this choice, the empirical error (number of runs such that $\hat{z} \notin \cZ_{\varepsilon}(\mu)$) is lower than the confidence parameter $\delta$. Previous algorithms on BAI settings \citep{garivier_2016_OptimalBestArm,degenne_2019_NonAsymptoticPureExploration,degenne_2020_GamificationPureExploration,jourdan_2021_EfficientPureExploration} were using $\beta\left(t,\delta\right) = \ln \left(\frac{1 + \ln(t)}{\delta}\right)$ as heuristic for the stopping threshold. Interestingly, using this coarser stopping threshold for $\varepsilon$-BAI leads to an empirical error which is higher than $\delta$, violating the $(\varepsilon, \delta)$-PAC property. Therefore, we need to be closer to the theoretically validated threshold.

An additional argument in favor of considering the same stopping rule for all BAI algorithm can be seen in the choice of the stopping threshold itself. DKM \citep{degenne_2019_NonAsymptoticPureExploration} uses the concentration results in \citet{garivier_2016_OptimalBestArm}, LinGame \citep{degenne_2020_GamificationPureExploration} the ones in \citep{lattimore_2020_BanditAlgorithms} and \hyperlink{algoLeBAI}{L$\varepsilon$BAI} the ones in \citep{kaufmann_2018_MixtureMartingalesRevisited}. Therefore, a fair comparison of the sampling rule requires using the same $\beta(t,\delta)$, as defined above.

\paragraph{Reproducibility} To assess our code and reproduce the experiments presented in this paper, you need to unzip the provided code by running \texttt{unzip code.zip}. All the algorithms and experiments are implemented in \texttt{Julia 1.6.3} (but also run on \texttt{Julia 1.1.1}). Plots were generated with the \texttt{StatsPlots.jl} package. Other dependencies are listed in the \texttt{Readme.md}. The \texttt{Readme.md} file provides detailed julia instructions to reproduce our experiments, as well as a \texttt{script.sh} to run them all at once.
The general structure of the code (and some functions) is taken from the \href{https://bitbucket.org/wmkoolen/tidnabbil}{tidnabbil} library.\footnote{This library was created by \cite{degenne_2019_NonAsymptoticPureExploration}, see https://bitbucket.org/wmkoolen/tidnabbil. No license were available on the repository, but we obtained the authorization from the authors.}

\subsection{Supplementary Experiments} \label{app:subsection_additional_experiments}

Supplementary experiments for the multiplicative $\varepsilon$-optimality are shown in Appendix~\ref{app:subsection_add_exp_multiplicative}. The equivalent experiments for the additive $\varepsilon$-optimality are displayed in Appendix~\ref{app:subsubsection_add_exp_additive}.

In the following, the hard/random instances that are considered are the same as in Section~\ref{sec:section_experiments}. We consider the same choice of parameters: $(\varepsilon, \delta) = (0.05, 0.01)$, discretization of the simplex with $500$ (resp. $10000$) vectors when $K=2$ (resp. $K=4$), average (resp. standard deviation) on $5000$ runs (resp. sub-samples of size $100$). Unless specified otherwise, the recommendation rule and the $\cZ$-oracle return an instantaneous furthest answer, i.e. $z_{t} = \tilde{z}_{t} \in z_{F}(\mu_{t-1}, N_{t-1})$, and the stopping-recommendation pair is updated/evaluated at each time $t$.

\subsubsection{Multiplicative Optimality} \label{app:subsection_add_exp_multiplicative}

Below, we present experiments on the multiplicative $\varepsilon$-optimality that were conducted to highlight algorithmic choices.

\begin{figure}[ht]
	\centering
	\includegraphics[width=0.5\linewidth]{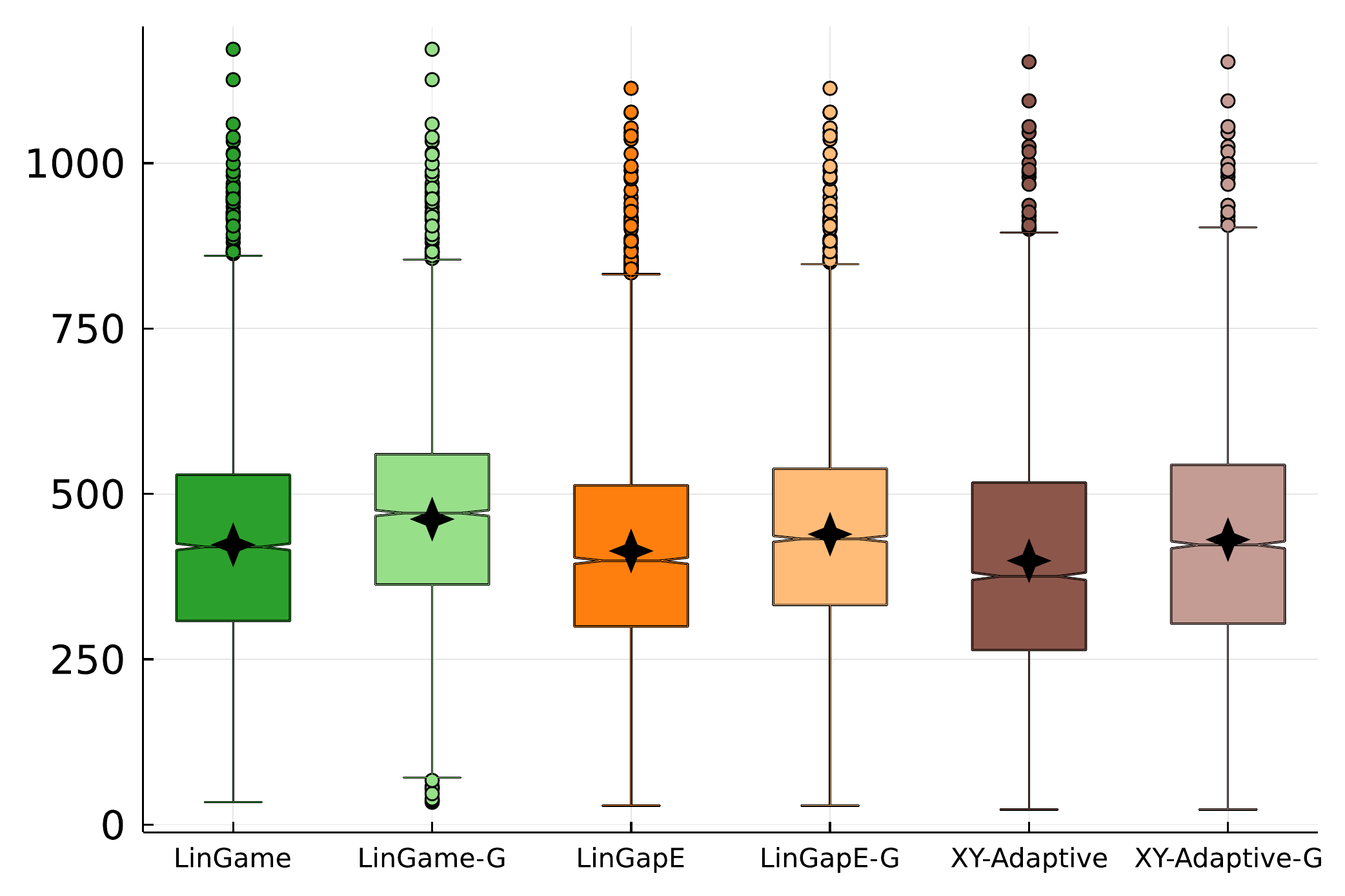}
	\caption{Empirical stopping time of the modified BAI algorithms with $z_t \in z_{F}(\mu_{t-1}, N_{t-1})$ on the hard instance (star equals mean). ``-G'' denotes when $z_t \in z^{\star}(\mu_{t-1})$.}
	\label{fig:hardinst_empirical_stop_modified_bai_mul}
\end{figure}

\paragraph{Modified BAI} Figure~\ref{fig:hardinst_empirical_stop_modified_bai_mul} conveys the same message as Figure~\ref{fig:hardinst_empirical_stop_modified_bai_add}. For all modified BAI using (\ref{eq:definition_stopping_criterion}), considering an instantaneous furthest answer instead of a greedy answer leads to lower empirical stopping time. Their ratio is $0.928$ on average (coherent with Figure~\ref{fig:theoretical_study_greedy_vs_furthest_mul}(b)). Note that the extension of LinGapE to the multiplicative $\varepsilon$-optimality is not obtained directly as was done for the additive setting, hence we didn't displayed it.

\begin{table}[ht]
\caption{Empirical stopping time ($\pm$ $\sigma$) with their original stopping rule or with ours (\ref{eq:definition_stopping_criterion}) on the hard instance ($\cK=\cZ$). The modified BAI algorithms use (\ref{eq:definition_stopping_criterion}) with $z_t \in z_{F}(\mu_{t-1}, N_{t-1})$.}
\label{tab:comparison_stopping_rule_BAI_algos_mul}
\begin{center}
\begin{tabular}{c c c c c}
  \toprule
  & LinGame & LinGapE & $\cX\cY$-Adaptive	\\
  \cmidrule(l){2-4}
  Original        & $151473$ $(\pm18082)$ & $121716$ $(\pm14771)$ & $263806$ $(\pm36896)$ \\
  Modified        & $423$ $(\pm 51)$ & $414$ $(\pm 51)$ & $399$ $(\pm 52)$ \\
  \bottomrule
\end{tabular}
\end{center}
\end{table}

\paragraph{Original Stopping Rule} In Table~\ref{tab:comparison_stopping_rule_BAI_algos_mul}, we displayed the difference between the empirical stopping time of BAI algorithms using their original stopping rule and ours (\ref{eq:definition_stopping_criterion}). Their stopping time is $438$ times higher when using their original stopping rule which was designed for an harder problem, i.e. identifying the unique best-answer which is also an $\varepsilon$-optimal answer.

\begin{table}[ht]
\caption{Average number of pulls per arm and empirical stopping time ($\pm$ $\sigma$) on the hard instance ($\cK=\cZ$). The modified BAI algorithms use (\ref{eq:definition_stopping_criterion}) with $z_t \in z_{F}(\mu_{t-1}, N_{t-1})$.}
\label{tab:average_number_pulls_per_arm_mul}
\begin{center}
\begin{tabular}{c r r r r c}
  \toprule
  & $a_1$ & $a_2$ & $a_3$ & $a_4$ & \textbf{Total} \\
  \cmidrule(l){2-6}
 \hyperlink{algoLeBAI}{L$\varepsilon$BAI} & $115$ & $247$ & $ 19$ & $  3$ & $384$ $(\pm 17)$ \\
LinGame         & $120$ & $239$ & $ 52$ & $ 12$ & $423$ $(\pm 16)$ \\
DKM             & $174$ & $223$ & $173$ & $175$ & $745$ $(\pm 29)$ \\
LinGapE         & $ 63$ & $349$ & $  1$ & $  1$ & $414$ $(\pm 16)$ \\
G-Static        & $204$ & $227$ & $  7$ & $ 19$ & $458$ $(\pm 18)$ \\
$\cX\cY$-Static & $223$ & $224$ & $  1$ & $  1$ & $449$ $(\pm 17)$ \\
$\cX\cY$-Adaptive & $ 82$ & $315$ & $  1$ & $  1$ & $399$ $(\pm 18)$ \\
Fixed           & $ 96$ & $271$ & $  1$ & $  1$ & $370$ $(\pm 16)$ \\
Uniform         & $214$ & $214$ & $214$ & $213$ & $856$ $(\pm 33)$ \\
  \bottomrule
\end{tabular}
\end{center}
\end{table}

\paragraph{Empirical Allocation} Table~\ref{tab:average_number_pulls_per_arm_mul} details the empirical allocations of pulls. Uniform and DKM sample all arms similarly. Therefore, they stop with almost twice as many samples as the other algorithms whose empirical allocations are close to the oracle allocation used in the fixed algorithm. As expected, $\cX\cY$-Adaptive outperforms $\cX\cY$-Static and G-Static. We see that \hyperlink{algoLeBAI}{L$\varepsilon$BAI} slightly outperforms LinGapE and $\cX\cY$-Adaptive, performs better than LinGame and $\cX\cY$-Static and is on par with the ``oracle'' \textit{fixed} algorithm.

\begin{figure}[ht]
	\centering
	\includegraphics[width=0.32\linewidth]{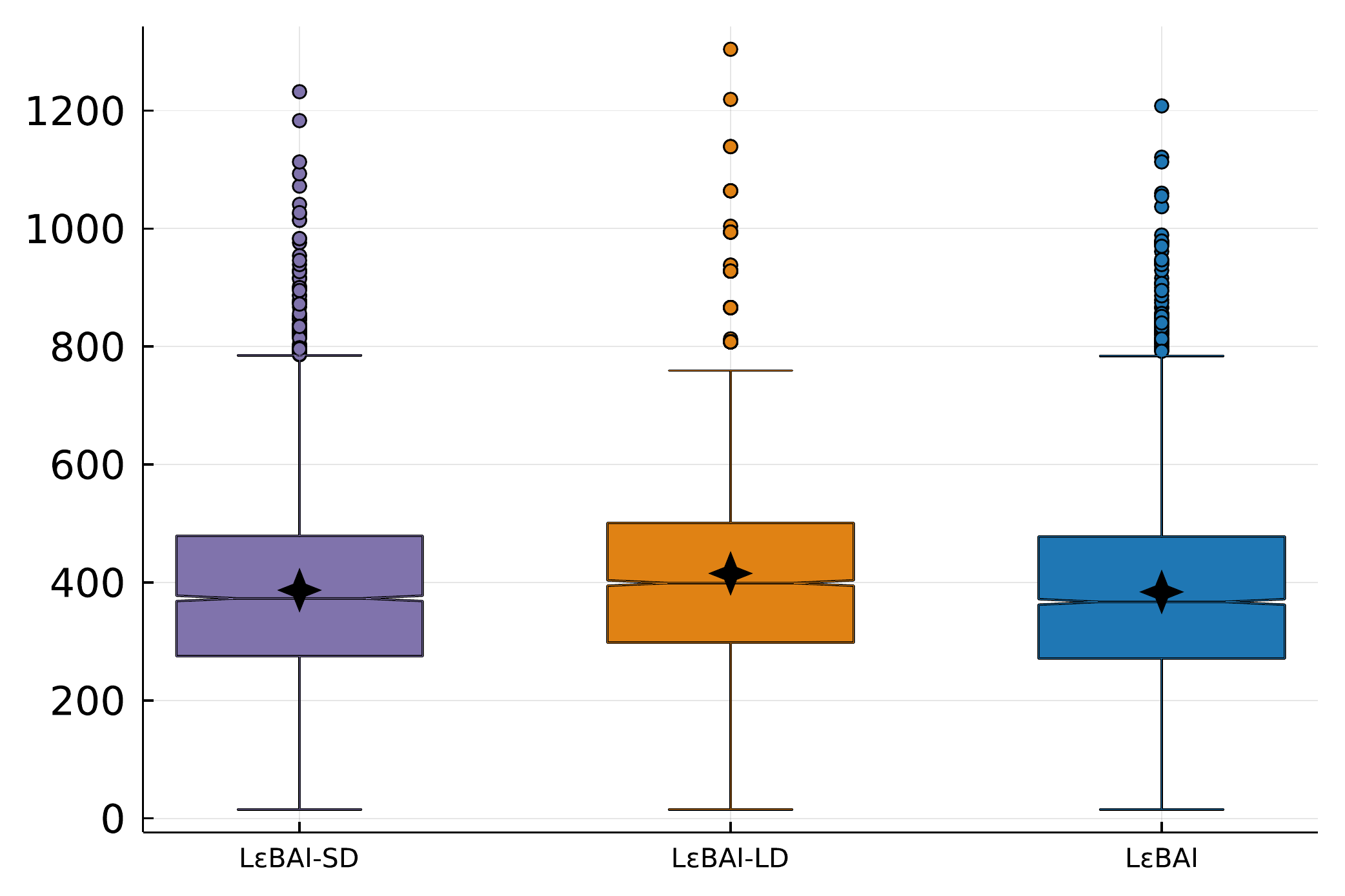}
	\includegraphics[width=0.32\linewidth]{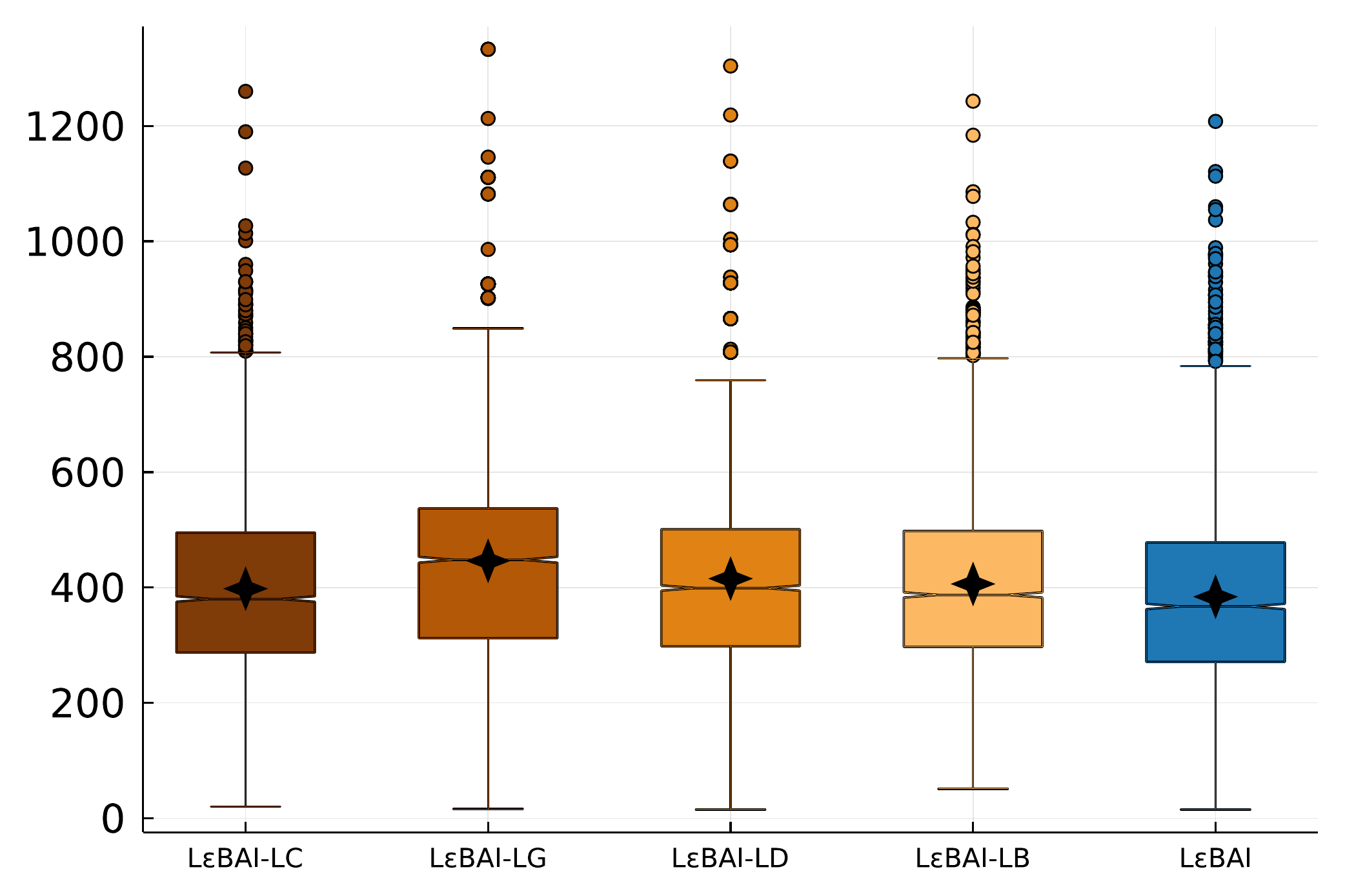}
	\includegraphics[width=0.32\linewidth]{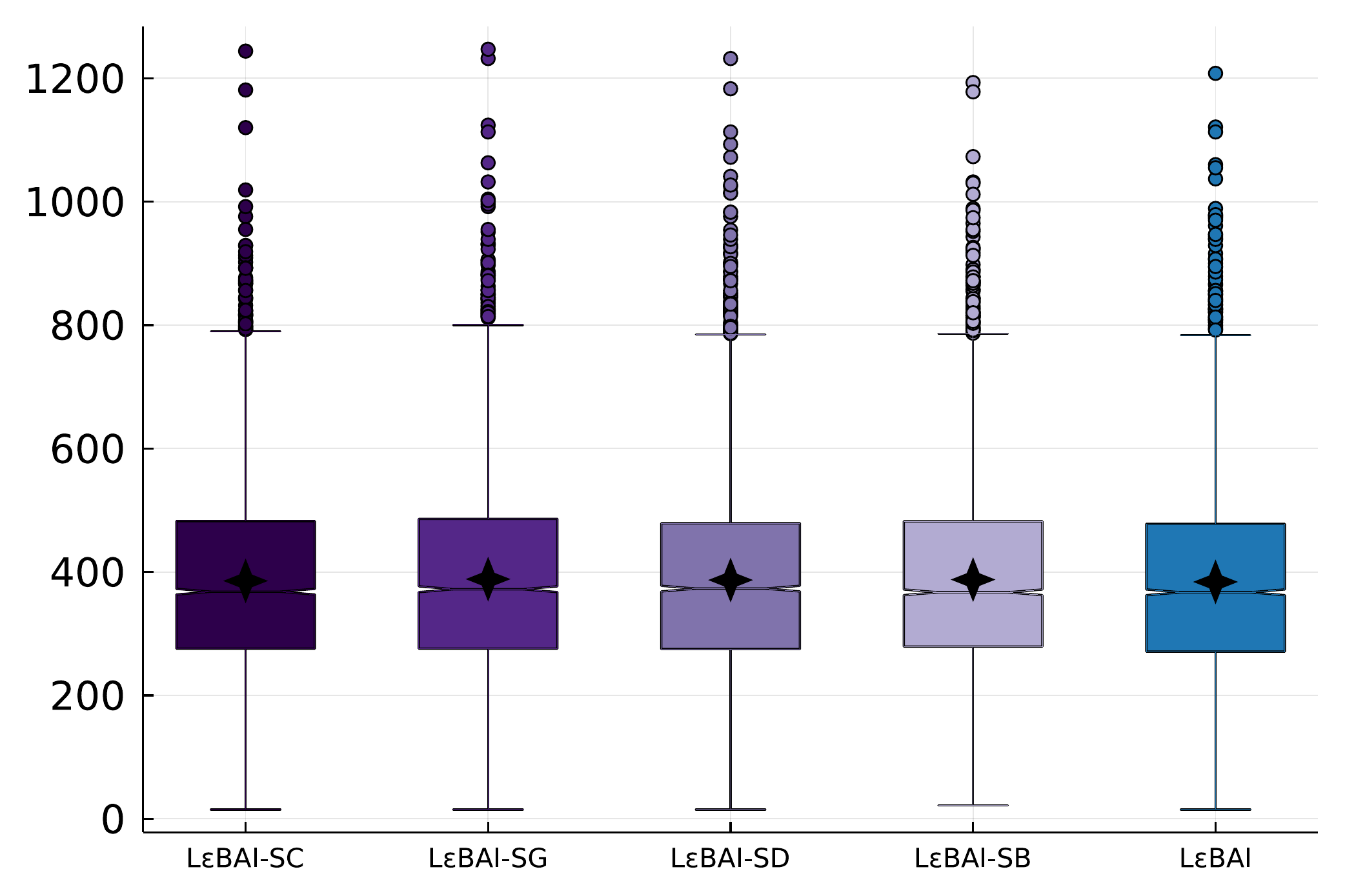}
	\caption{Empirical stopping time on the hard instance ($\cK = \cZ$) for (a) the lazy and sticky update, and different implementations of (b) the lazy scheme and (c) the sticky scheme. ``-S'' denotes the sticky scheme and ``-L'' the lazy one. The notations for implementations are: ``-C'' for the constant one with $T_{0} = 10$, ``-G'' for the geometric one with $(T_{0}, \gamma)= (10, 0.2)$, ``-D'' for geometrically decreasing one with $(T_{0}, \gamma)= (10, 0.2)$ and ``-B'' for the Bernoulli one with parameter $p=0.1$.}
	\label{fig:hardinst_stopreco_update_schedule_mul}
\end{figure}

\paragraph{Computational Relaxations} In Appendix~\ref{app:subsubsection_other_update_schemes}, we introduced the sticky and the lazy schemes. Instead of updating the candidate answer and evaluating the stopping criterion at each time $t$, the lazy scheme only does it on all time $t \in \cT \subseteq \Natural$. The sticky scheme evaluates the stopping criterion at each time $t \geq n_{0}+1$, but update the candidate answer when $t \in \cT \subseteq \Natural$. With a good choice of $\cT$, using the sticky/lazy schemes with \hyperlink{algoLeBAI}{L$\varepsilon$BAI} yield an asymptotically optimal algorithm (Appendix~\ref{app:subsubsection_other_update_schemes}). To ensure that the strategy is $(\varepsilon, \delta)$-PAC, it is sufficient to consider $|\cT| = + \infty$ (Lemma~\ref{lem:delta_PAC_recommendation_stopping_pair}), property satisfied by all the implementations below.

In Figure~\ref{fig:hardinst_stopreco_update_schedule_mul}, we test several implementations of lazy/sticky schemes to assess their impact on the empirical stopping time. They differ by how the infinite grid of time $\cT$ is defined. The constant implementation uses an arithmetic grid of time with parameter $T_{0} > n_{0}$, i.e. $\cT = \left\{n_{0}+1\right\} \cup \left\{i T_{0} \right\}_{i \in \Natural^{\star}}$. The geometric implementation uses a geometric grid of time with parameter $T_{0} > n_{0}$ and $\gamma > 0$, i.e. $\cT \eqdef \left\{n_{0}+1\right\} \cup \left\{T_{i} \right\}_{i \in \Natural}$ where $T_{i} = \lceil(1 + \gamma) T_{i-1}\rceil$ for $i \in \Natural^{\star}$. The geometrically decreasing implementation uses a grid of time with parameter $T_{0} > n_{0}$ and $\gamma > 0$, i.e. $\cT \eqdef \left\{n_{0}+1\right\} \cup \left\{T_{i} \right\}_{i \in \Natural}$ where $T_{i} = \lceil(1 + \frac{\gamma}{\sqrt{i}}) T_{i-1}\rceil$ for $i \in \Natural^{\star}$. The Bernoulli implementation with parameter $p$ is slightly different as it adds internal randomness, hence it is not a deterministic strategy anymore. The idea is simple: draw $X \sim \cB(p)$, if $X=1$ we update the candidate answer, else we stick to it. The geometrically decreasing implementation is the only implementation ensuring that \hyperlink{algoLeBAI}{L$\varepsilon$BAI} is an asymptotically optimal algorithm (Appendix~\ref{app:subsubsection_other_update_schemes}).

In Figure~\ref{fig:hardinst_stopreco_update_schedule_mul}(a), we see that the sticky relaxation allows to perform on par with the algorithm updating the recommendation rule at each time $t$. However, when considering the lazy relaxation, we pay the price of not evaluating the stopping rule at each time $t$ by incurring a slightly higher empirical stopping time. Therefore, depending on the constraints of the practitioner, the sticky/lazy schemes allow to reduce the computational cost per time step while keeping similar sample complexity.

In Figure~\ref{fig:hardinst_stopreco_update_schedule_mul}(b), we observe that the empirical stopping time might be higher depending on the implementation of the lazy scheme. Overall, algorithms using a lazy scheme suffer from slightly worse sample complexity. This is the price to pay to drastically reduce the computational cost.

In Figure~\ref{fig:hardinst_stopreco_update_schedule_mul}(c), we see that the exact implementation of the sticky scheme has few consequences as regards the empirical stopping time since they all perform on par with the algorithm updating the recommendation rule at each time $t$. Therefore, the sticky scheme is the computational relaxation to adopt when one wishes to reduce the computational cost in a significant manner without damaging the empirical performance.

\begin{figure}[ht]
	\centering
	\includegraphics[width=0.5\linewidth]{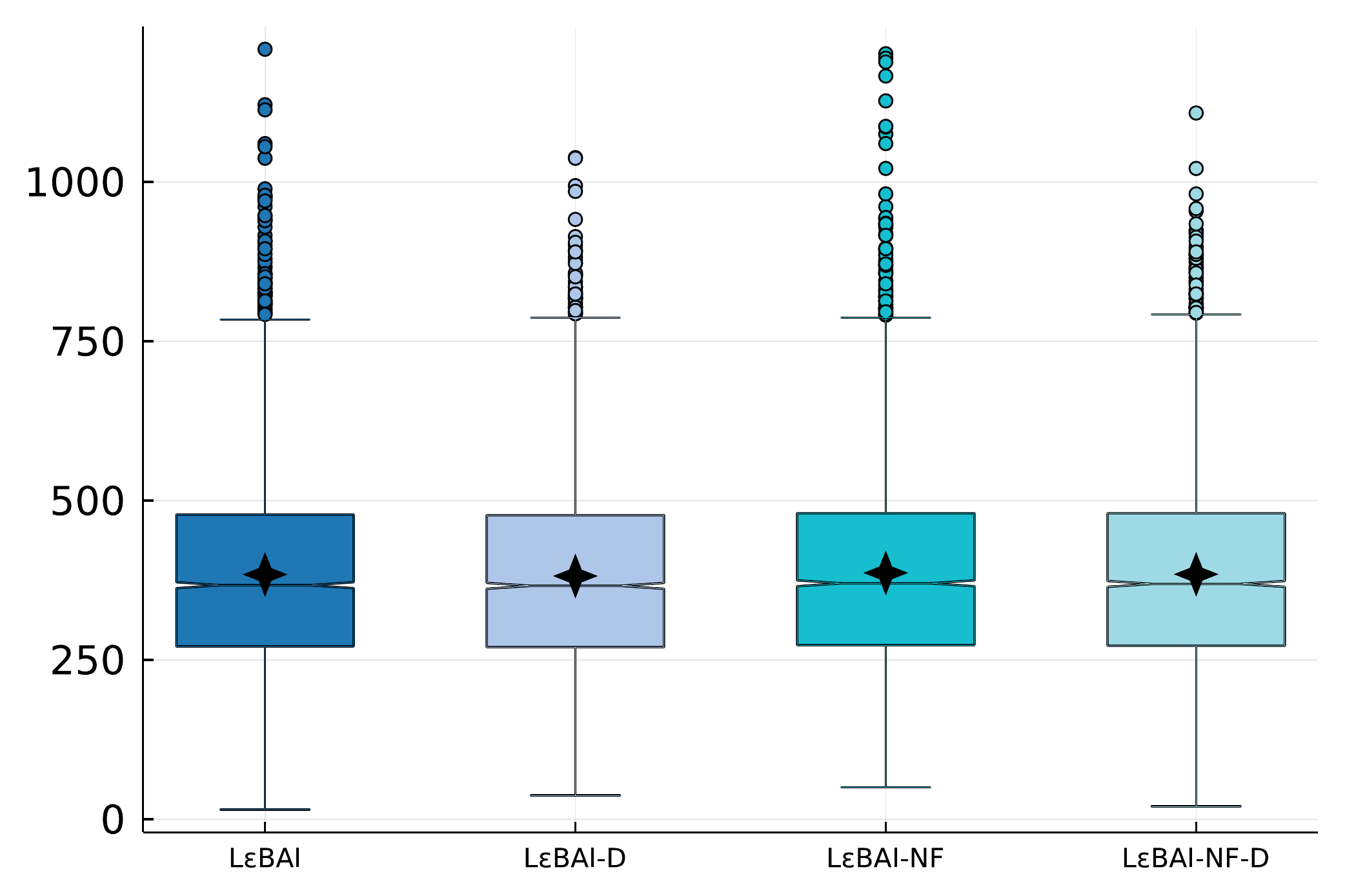}
	\caption{Empirical stopping time on the hard instance ($\cK = \cZ$). ``-D'' denotes when the D-Tracking is used instead of C-Tracking and ``-NF'' denotes the removal of forced exploration.}
	\label{fig:BAI_hardinst_tracking_fe_mul}
\end{figure}

\paragraph{Tracking And Forced Exploration} The tracking procedure that we use throughout the paper is referred as C-Tracking in the literature, since it tracks the cumulative sum of pulling proportions over arms played by the agent, i.e. $a_t \in \argmin_{a \in \cK} N_{t-1}^{a} - W_t^{a}$ where $W_t = \sum_{s=n_{0}+1}^{t} w_s$. Another commonly considered tracking procedure is D-Tracking \citep{garivier_2016_OptimalBestArm}, in which the pulling proportion over arms played by the agent at time $t$ is directly tracked, i.e. $a_t \in \argmin_{a \in \cK} N_{t-1}^{a} - (t-n_{0}) w_t^a$. In \citet{degenne_2019_PureExplorationMultiple}, it is shown that D-Tracking might fail when several correct answer exist. This explains why we only considered C-Tracking in this paper. In Figure~\ref{fig:BAI_hardinst_tracking_fe_mul}, we observe that considering D-Tracking instead of C-Tracking leads to similar empirical stopping time.

As discussed in Appendix~\ref{app:subsubsection_key_assumptions}, we need a logarithmic forced exploration to conclude the proof, which was not needed by previous game-based approaches \citet{degenne_2020_GamificationPureExploration}. In Figure~\ref{fig:BAI_hardinst_tracking_fe_mul}, we see that removing the forced exploration has almost no impact on the empirical stopping time.

\subsubsection{Additive Optimality} \label{app:subsubsection_add_exp_additive}

When comparing the following plots and their equivalent in the main content of the paper, we observe that the characteristic time and the empirical stopping time in the additive $\varepsilon$-optimality is always smaller than for the multiplicative one (all other parameters being identical). If there isn't a natural choice of $\varepsilon$-optimality, this fact might influence the choice of the practitioner. Overall the messages conveyed by the following plots are the same as the ones highlighted in the multiplicative $\varepsilon$-optimality case, we report them for the sake of completeness.

\begin{figure}[ht]
	\centering
	\includegraphics[width=0.485\linewidth]{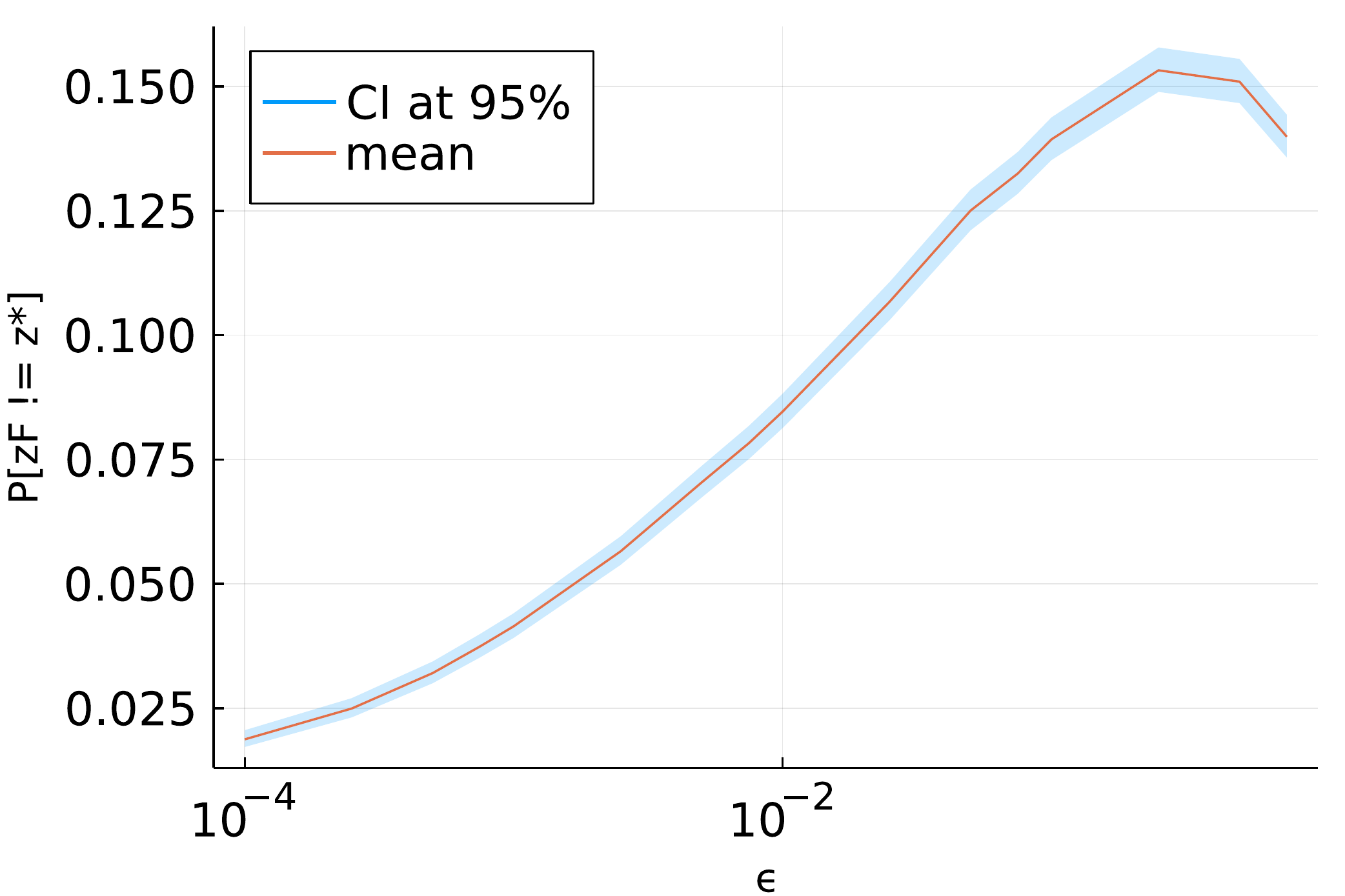}
	\includegraphics[width=0.485\linewidth]{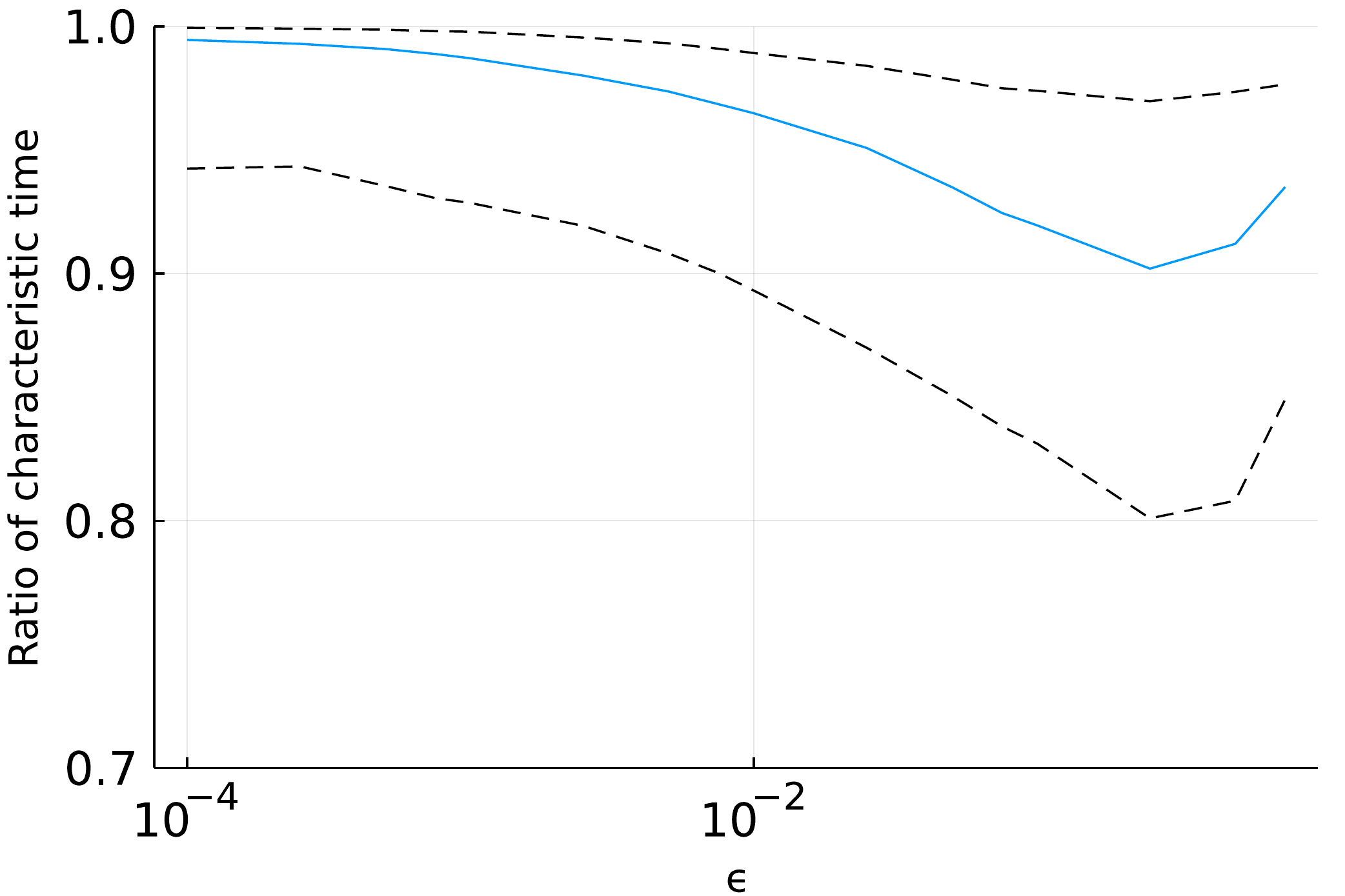}
	\caption{Influence of $\varepsilon$ on (a) the proportion of draws where $z_{F}(\mu) \notin z^{\star}(\mu)$, (b) the median (and first/third quartile) of $\frac{T_{\varepsilon}^{\add}(\mu)}{T_{g,\varepsilon}^{\add}(\mu)}$, when $z_{F}(\mu) \notin z^{\star}(\mu)$.}
	\label{fig:theoretical_study_greedy_vs_furthest_add}
\end{figure}

\paragraph{Furthest Answer} The results observed in Figure~\ref{fig:theoretical_study_greedy_vs_furthest_add} are similar to the ones in Figure~\ref{fig:theoretical_study_greedy_vs_furthest_mul}, while the discrepancy is less noteworthy. The proportion of draws where $ z_{F}(\mu) \notin z^{\star}(\mu)$ is not negligible, reaching on average $10 \%$. The impact on the characteristic time is visible with a ratio on average at $0.95$, but it is very slim when considering $\varepsilon \leq 0.01$. Up to $\varepsilon \approx 0.25$ the proportion of draws where $z_{F}(\mu) \notin z^{\star}(\mu)$ has a logarithmic increase with $\varepsilon$. Therefore, when they are different, the furthest answer still outperforms a greedy answer achieving $T_{g,\varepsilon}^{\add}(\mu)$. We note that this difference is smaller in the additive $\varepsilon$-optimality than for the multiplicative $\varepsilon$-optimality.

\begin{table}[ht]
\caption{Empirical stopping time ($\pm$ $\sigma$) with their original stopping rule or with ours (\ref{eq:definition_stopping_criterion}) on the hard instance ($\cK=\cZ$). The modified BAI algorithms use (\ref{eq:definition_stopping_criterion}) with $z_t \in z_{F}(\mu_{t-1}, N_{t-1})$.}
\label{tab:comparison_stopping_rule_BAI_algos_add}
\begin{center}
\begin{tabular}{c  c  c  c }
  \toprule
  & LinGame & LinGapE & $\cX\cY$-Adaptive\\
\cmidrule(l){2-4}
Original        & $151473$ $(\pm18082)$ & $121716$ $(\pm14771)$ & $263806$ $(\pm36896)$ \\
Modified        & $393$ $(\pm 49)$ & $297$ $(\pm 39)$ & $314$ $(\pm 38)$ \\
\bottomrule
\end{tabular}
\end{center}
\end{table}

\paragraph{Original Stopping Rule} In Table~\ref{tab:comparison_stopping_rule_BAI_algos_add}, the \textit{Original} line is unchanged compared to Table~\ref{tab:comparison_stopping_rule_BAI_algos_mul} since BAI algorithms consider $\varepsilon = 0$, hence no multiplicative/additive notions of $\varepsilon$-optimality. Similarly, their stopping time is $545$ times higher when using their original stopping rule which was designed for an harder problem, i.e. identifying the unique best-answer which is also an $\varepsilon$-optimal answer.

\begin{table}[ht]
\caption{Empirical stopping time ($\pm$ $\sigma$) for different combinations of sampling rule and recommendation rule on the hard instance with $\cK=\{e_1,e_2\}$.}
\label{tab:average_empirical_stopping_time_add}
\begin{center}
\begin{tabular}{c  c  c  c }
  \toprule
  & $z^{\star}(\mu_{t-1})$ & $z_F(\mu_{t-1})$ & $z_F(\mu_{t-1}, N_{t-1})$\\
\cmidrule(l){2-4}
\hyperlink{algoLeBAI}{L$\varepsilon$BAI} & $350$ $(\pm 14)$ & $318$ $(\pm 12)$ & $318$ $(\pm 13)$ \\
$\varepsilon$-TaS & $323$ $(\pm 12)$ & $298$ $(\pm 13)$ & $298$ $(\pm 13)$ \\
Fixed           & $324$ $(\pm 10)$ & $300$ $(\pm 11)$ & $300$ $(\pm 11)$ \\
Uniform         & $477$ $(\pm 15)$ & $434$ $(\pm 16)$ & $434$ $(\pm 16)$ \\
\bottomrule
\end{tabular}
\end{center}
\end{table}

\paragraph{Choosing Answers} Table~\ref{tab:average_empirical_stopping_time_add} conveys the same messages as Table~\ref{tab:average_empirical_stopping_time_mul}. It shows that using greedy is consistently worse than an (instantaneous) furthest answer, the ratio of their stopping time is $0.917$ on average (coherent with Figure~\ref{fig:theoretical_study_greedy_vs_furthest_add}(b)) and that instantaneous furthest answer and furthest answer have the same empirical performance. While \hyperlink{algoLeBAI}{L$\varepsilon$BAI} consistently outperform uniform sampling ($73 \%$), it performs slightly worse than $\varepsilon$-TaS and the ``oracle'' \textit{fixed} algorithm (tracking the optimal allocation $w_{F}(\mu)$).

\begin{table}[ht]
\caption{Average number of pulls per arm and empirical stopping time ($\pm$ $\sigma$) on the hard instance ($\cK=\cZ$). The modified BAI algorithms use (\ref{eq:definition_stopping_criterion}) with $z_t \in z_{F}(\mu_{t-1}, N_{t-1})$.}
\label{tab:average_number_pulls_per_arm_add}
\begin{center}
\begin{tabular}{c r r r r c}
  \toprule
  & $a_1$ & $a_2$ & $a_3$ & $a_4$ & \textbf{Total} \\
\cmidrule(l){2-6}
 \hyperlink{algoLeBAI}{L$\varepsilon$BAI} & $ 77$ & $228$ & $ 13$ & $  3$ & $321$ $(\pm 13)$ \\
LinGame         & $112$ & $221$ & $ 49$ & $ 11$ & $393$ $(\pm 15)$ \\
DKM             & $169$ & $217$ & $168$ & $169$ & $723$ $(\pm 27)$ \\
LinGapE         & $ 47$ & $248$ & $  1$ & $  1$ & $297$ $(\pm 13)$ \\
G-Static        & $196$ & $217$ & $  7$ & $ 18$ & $438$ $(\pm 18)$ \\
$\cX\cY$-Static & $215$ & $217$ & $  1$ & $  1$ & $434$ $(\pm 17)$ \\
$\cX\cY$-Adaptive & $ 76$ & $236$ & $  1$ & $  1$ & $314$ $(\pm 12)$ \\
Fixed           & $ 48$ & $251$ & $  1$ & $  1$ & $300$ $(\pm 11)$ \\
Uniform         & $211$ & $211$ & $211$ & $210$ & $844$ $(\pm 33)$ \\
\bottomrule
\end{tabular}
\end{center}
\end{table}

\begin{figure}[ht]
	\centering
	\includegraphics[width=0.5\linewidth]{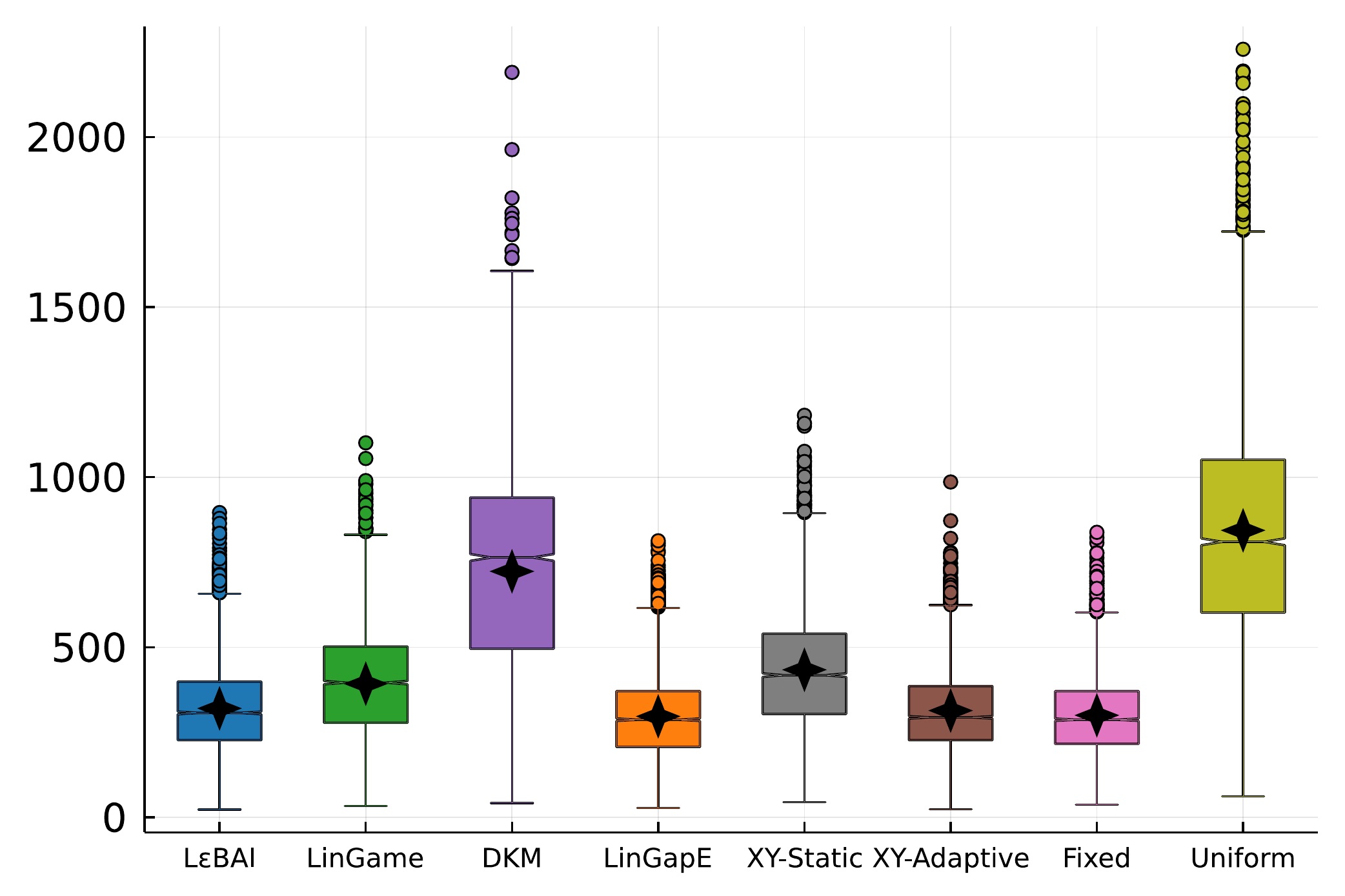}
	\caption{Empirical stopping time on the hard instance ($\cK=\cZ$), the star represents the mean. The modified BAI algorithms use (\ref{eq:definition_stopping_criterion}) with $z_t \in z_{F}(\mu_{t-1}, N_{t-1})$.}
	\label{fig:hardinst_empirical_stop_deltas_add}
\end{figure}

\paragraph{Modified BAI Algorithms and Empirical Allocation} The detailed empirical allocations presented in Table~\ref{tab:average_number_pulls_per_arm_add} bare similarity with the results shown in Table~\ref{tab:average_number_pulls_per_arm_mul}. Likewise, Figure~\ref{fig:hardinst_empirical_stop_deltas_add} shows similar results as Figure~\ref{fig:hardinst_empirical_stop_lebai_vs_modified_bai_mul}. Uniform and DKM sample all arms equally. Their stopping time is twice as high as the other algorithms whose empirical allocations are close to the oracle allocation (fixed algorithm). $\cX\cY$-Adaptive outperforms $\cX\cY$-Static and G-Static. We see that \hyperlink{algoLeBAI}{L$\varepsilon$BAI} slightly outperforms LinGapE and $\cX\cY$-Adaptive, performs better than LinGame and $\cX\cY$-Static and is on par with the ``oracle'' \textit{fixed} algorithm.

\begin{figure*}[ht]
	\centering
	\includegraphics[width=0.485\linewidth]{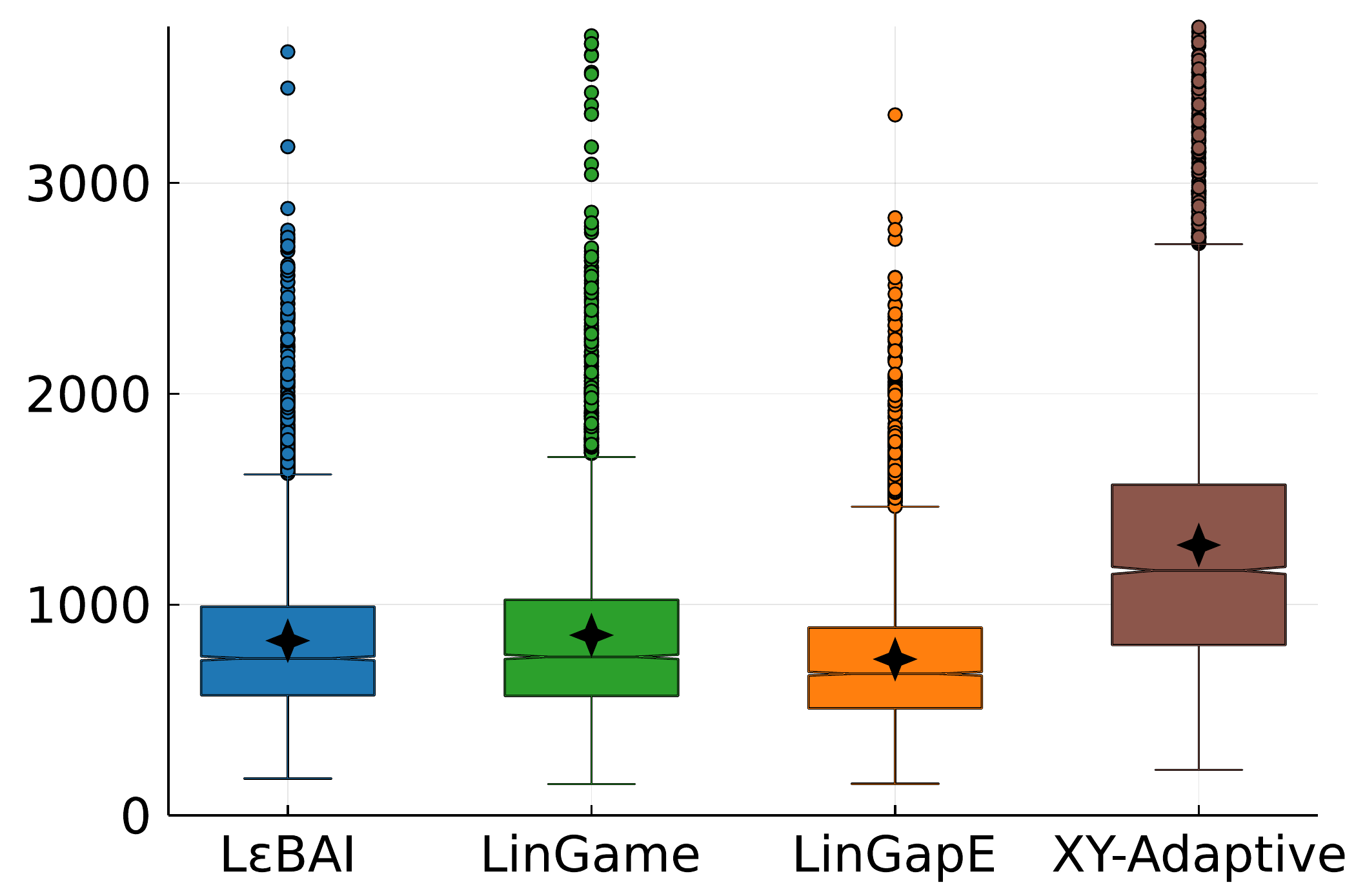}
	\includegraphics[width=0.485\linewidth]{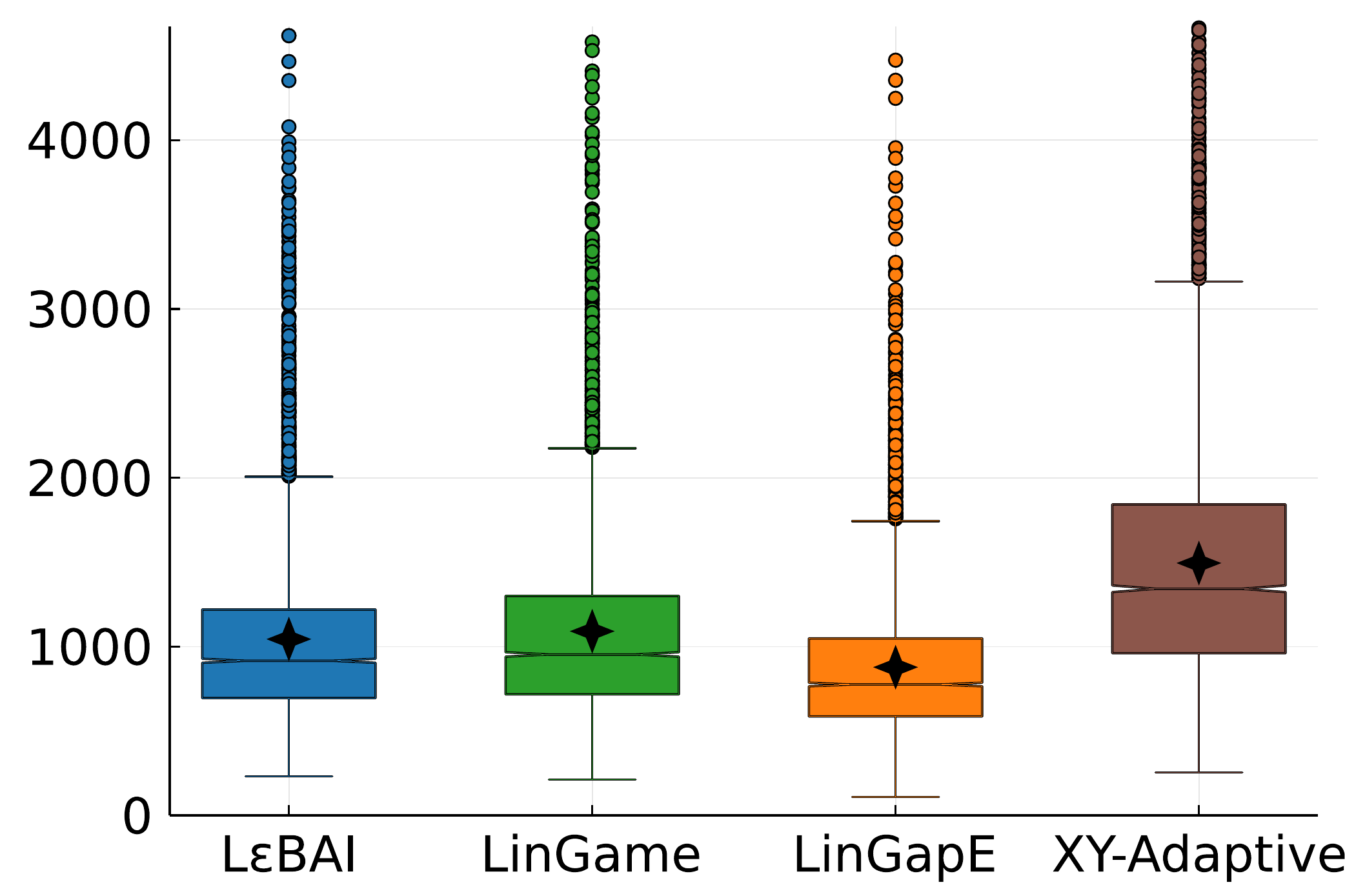} \\
	\includegraphics[width=0.485\linewidth]{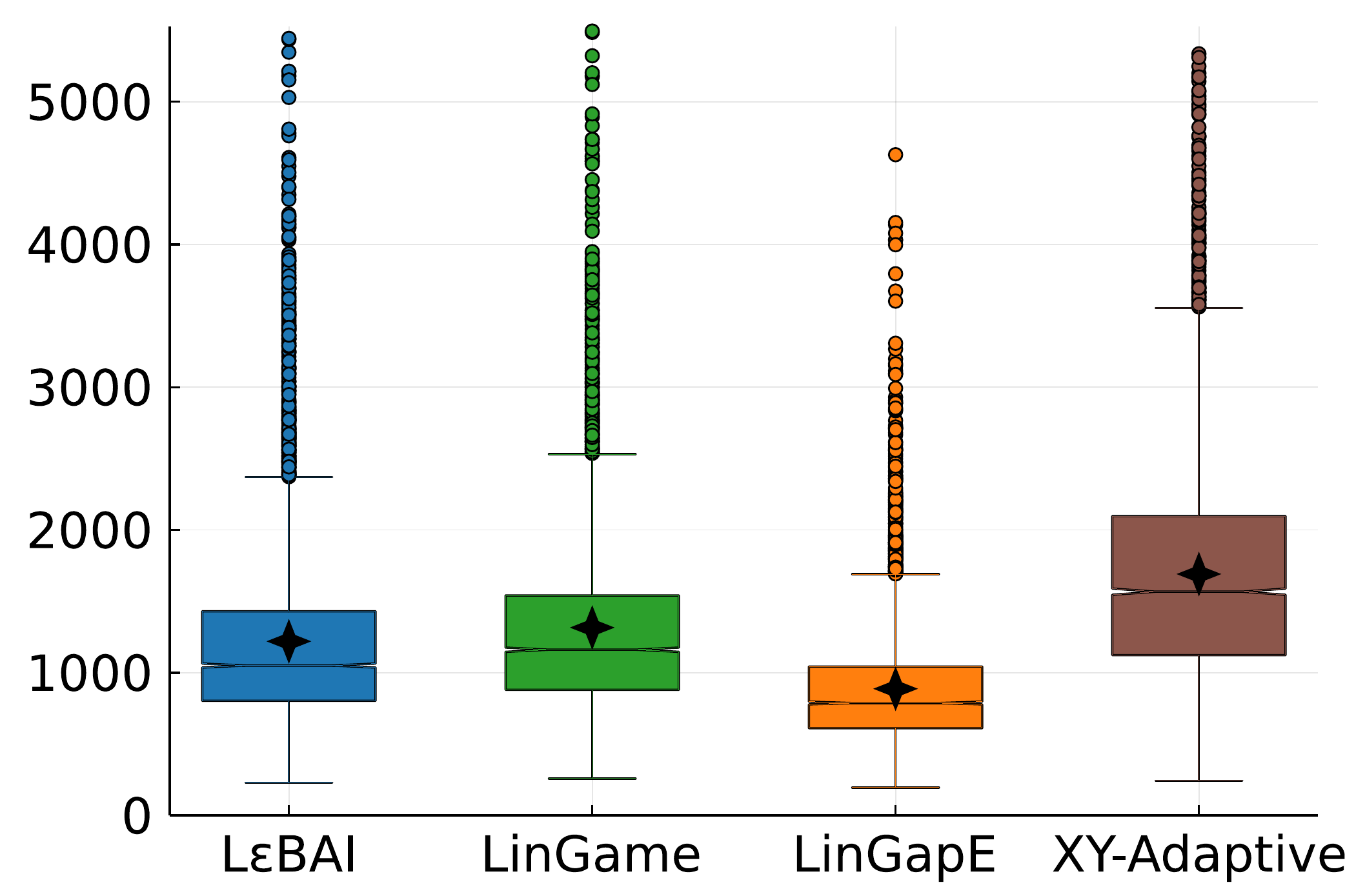}
	\includegraphics[width=0.485\linewidth]{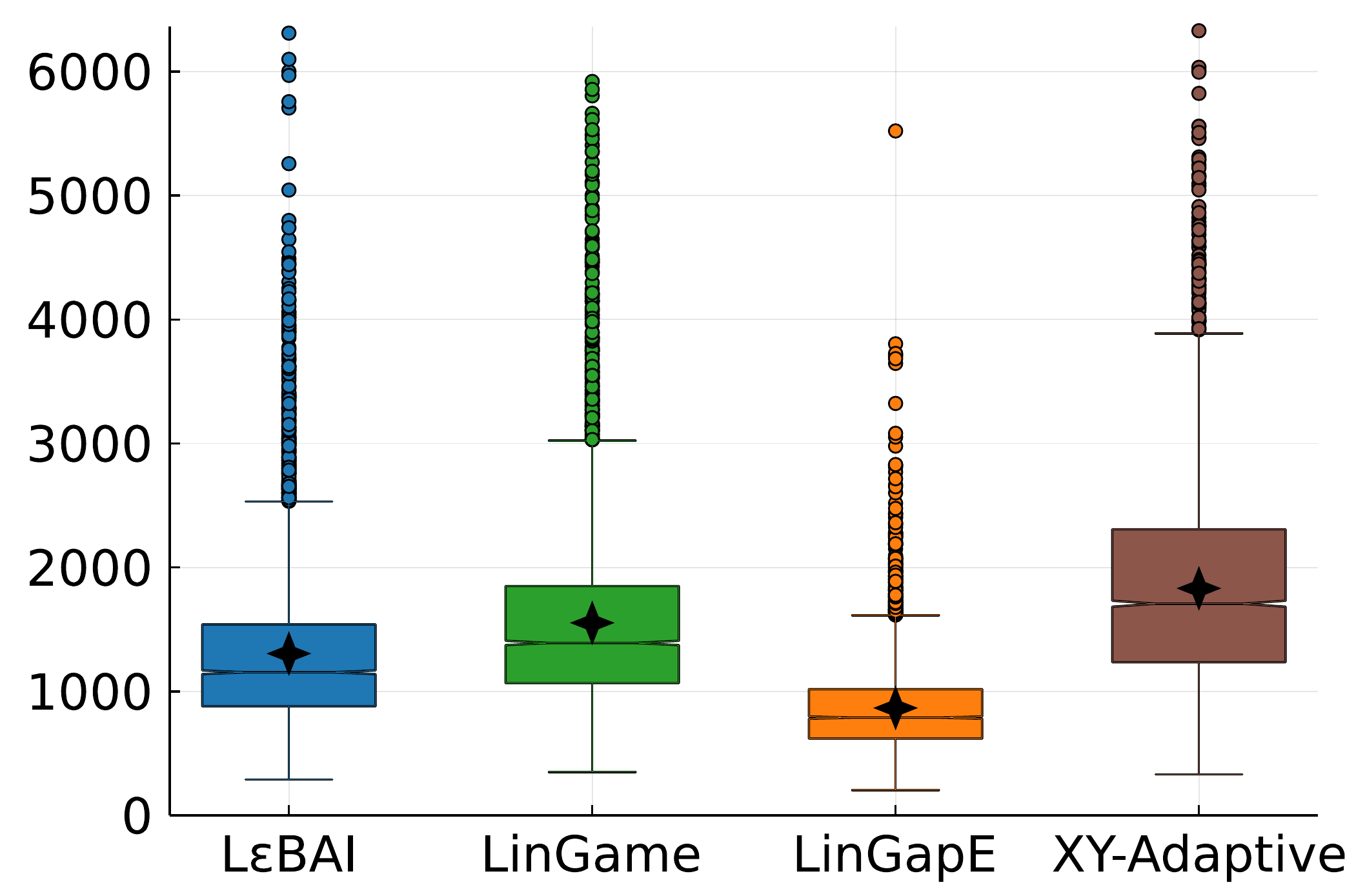}
	\caption{Empirical stopping time on random instances ($\cK=\cZ$) for $d \in \{6,8,10,12\}$ (from top left to bottom right). The modified BAI algorithms use (\ref{eq:definition_stopping_criterion}) with $z_t \in z_{F}(\mu_{t-1}, N_{t-1})$.}
	\label{fig:randinst_empirical_stop_dimensions_add}
\end{figure*}

\paragraph{Random Instances In Higher Dimensions} For random instances and increasing dimension, Figure~\ref{fig:randinst_empirical_stop_dimensions_add} reveals the same trends as Figure~\ref{fig:randinst_empirical_stop_dimensions_mul}. \hyperlink{algoLeBAI}{L$\varepsilon$BAI} shows competitive empirical performance with modified BAI algorithms. Even though it is outperformed by LinGapE, \hyperlink{algoLeBAI}{L$\varepsilon$BAI} is almost twice as fast as $\cX\cY$-Adaptive and appears to be slightly more robust than LinGame to increasing dimension.

\vfill


\end{document}